\journalname{ }
\tikzset{%
	robot/.style={
		circle,
		inner sep=0pt,
		minimum size=5pt,
		fill=black,
		draw=black
	},
	active_robot/.style={
		robot,
		fill=red!50,
		draw=red,
		thick		
	},
	configuration/.style={
		draw=gray,
		rectangle,
		rounded corners,
		minimum height=1.4cm,
		minimum width=2cm,
		very thin
	},
	transition/.style={
		->,
		>=latex,
		draw=black,
		rounded corners
	},
	move/.style={
		->,
		>=stealth,
		densely dashed,
		thick,
		red
	},
	conf/.style={
		draw=black,
		ellipse,
		thin
	},
	trans/.style={
		->,
		>=stealth,
%		shorten >=2pt,
		looseness=.5,
		auto,
		draw=black
	},
	point/.style={
		draw=gray,
		cross out,
		scale=0.75,
		minimum height=1pt,
		minimum width=1pt,
		ultra thin		
	}
}
\newcommand{\remove}[1]{}
\newcommand{\Adv}{\ensuremath{\mathit{Adv}}\xspace}
\newcommand{\SEC}[1][implicit]{\ensuremath{\ifstrequal{#1}{implicit}{\mathit{SEC}}{\mathit{SEC}_{#1}}}\xspace}
\newcommand{\Conv}[1]{\ensuremath{\mathit{Conv}({#1})}\xspace}
\newcommand{\Vor}[1]{\ensuremath{\mathit{Voronoi}({#1})}\xspace}
\newcommand{\Vcell}[1]{\ensuremath{\mathit{Vcell}({#1})}\xspace}
\newcommand{\conf}[1][implicit]{\ensuremath{\ifstrequal{#1}{implicit}{\gamma}{\gamma_{#1}}}\xspace}
\newcommand{\CONF}[1][implicit]{\ensuremath{\ifstrequal{#1}{implicit}{\Gamma}{\Gamma_{#1}}}\xspace}
\newcommand{\exec}{\ensuremath{e}\xspace}
\newcommand{\fragment}[3][\exec]{\ensuremath{{#1}[\mbox{${#2}${\normalfont :}${#3}$}]}\xspace}
\newcommand{\AG}[1][implicit]{\ensuremath{\ifstrequal{#1}{implicit}{\mathit{AG}}{\mathit{AG}({#1})}}\xspace}
\newcommand{\dist}[2]{\ensuremath{\mathrm{dist}\left({#1},{#2}\right)}\xspace}
\newcommand{\val}[1]{\ensuremath{\mathit{val}({#1})}\xspace}
\newcommand{\mul}[1]{\ensuremath{\mathit{mul}({#1})}\xspace}
\newcommand{\MaxMult}[1]{\ensuremath{\mathit{MaxMult}({#1})}\xspace}
\newcommand{\mulmax}[1]{\ensuremath{\mu({#1})}\xspace}
\newcommand{\rbetween}[2]{\ensuremath{\#\mathit{onSegment}({#1},{#2})}\xspace}
\newcommand{\nCastles}[1]{\ensuremath{\#\mathit{castle}({#1})}\xspace}
\newcommand{\nBlocked}[1]{\ensuremath{\#\mathit{blocked}({#1})}\xspace}
\newcommand{\sumDist}[1]{\ensuremath{\Sigma({#1})}\xspace}
\newcommand{\E}{\mathbb E}
\newcommand{\PP}{\mathbb P}
\newcommand{\set}[1]{\ensuremath{\mathbf{#1}}\xspace}
\newcommand{\multiset}[1]{\ensuremath{{#1}}\xspace}
\newcommand{\point}[1]{\ensuremath{\mathit{#1}}\xspace}
\newcommand{\algo}[1]{\ensuremath{\mathcal{#1}}\xspace}
\newcommand{\castle}{\point}
\begin{document}

\title{Fault and Byzantine Tolerant Self-stabilizing Mobile Robots Gathering
\thanks{This manuscript considerably extends preliminary results presented as an extended
 	abstract at the DISC 2006 conference \cite{DGM+06}. The current version is under review at Distributed Computing Journal  since February 2012 (in a previous form) and since 2014 in the current form. The most important  results have been  also presented in MAC 2010 organized in Ottawa from August 15th to 17th 2010 http://people.scs.carleton.ca/~santoro/MAC/MAC-2010.html} 
}
\subtitle{--- Feasibility Study ---}

\author{Xavier D\'efago
  \and Maria Gradinariu Potop-Butucaru
  \and Julien Cl\'ement 
	\and St\'ephane Messika
  \and Philippe Raipin-Parv\'edy}

\institute{
	X. D\'efago \at
		School of Information Science, JAIST, Ishikawa, Japan
       \\\email{defago@jaist.ac.jp}
  \and
  	M. Potop-Butucaru \at
   		LIP6, Universit\'e Pierre et Marie Curie, Paris 6, France
       \\\email{maria.potop-butucaru@lip6.fr}
  \and
  	J. Clement, S. Messika \at
  		LRI/Universit\'e Paris Sud, France
       \\\email{jclement, messika@lri.fr}
  \and
  	P. Raipin-Parv\'edy \at
  		France Telecom R\&D, France
       \\\email{philippe.raipin@orange-ft.com}}

\maketitle

\begin{abstract}
  Gathering is a fundamental coordination problem in cooperative
  mobile robotics.  In short, given a set of robots with \emph{arbitrary}
  initial locations and no initial agreement on a global coordinate
  system, gathering requires that all robots, following their
  algorithm, reach the exact same but not predetermined location.
  Gathering is particularly
  challenging in networks where robots are oblivious (i.e.,~stateless)
  and direct communication is replaced by observations on their
  respective locations. Interestingly any algorithm that solves
  gathering with oblivious robots is inherently self-stabilizing
  if no specific assumption is made on the initial distribution of the robots.

  In this paper, we significantly extend the studies of deterministic 
  gathering feasibility under different assumptions 
  related to synchrony and faults (crash and Byzantine). Unlike prior work, we
  consider a larger set of scheduling strategies, such as bounded
  schedulers.
  %, and derive interesting lower bounds on these schedulers.
  In addition, we extend our study to the 
  feasibility of probabilistic self-stabilizing gathering in both fault-free 
  and fault-prone environments. 
\end{abstract}

% !TEX root = gathering-journal.tex

\section{Introduction}

% group of mobile robots
Many applications of mobile robotics envision groups of mobile robots
self-organizing and cooperating toward the resolution of common
objectives. In many cases, the group of robots is aimed at being
deployed in adverse environments, such as space, deep sea, or after
some natural (or unnatural) disaster. It results that the group must
self-organize in the absence of any prior infrastructure (e.g., no global
positioning), and ensure coordination in spite of the presence of faulty robots and
unanticipated changes in the environment.

% gathering problem: what is
The \emph{gathering problem}, also known as the
\emph{Rendez-Vous} problem, is a fundamental coordination problem in
cooperative mobile robotics.  In short, given a set of robots with
arbitrary initial location and no initial agreement on a global
coordinate system, gathering requires that all robots, following their
algorithm, reach the exact same location---one not agreed upon
initially---within a \emph{finite} number of steps, and remain there.

Similar to the Consensus problem in conventional distributed systems,
gathering has a simple definition but the existence of a solution
greatly depends on the synchrony of the systems as well as the nature
of the faults that may possibly occur. In this paper, we investigate
some of the fundamental limits of deterministic and probabilistic
gathering in the face of various synchrony and fault assumptions.

% gathering problem: model
To study the gathering problem, we consider a system model first
defined by Suzuki and Yamashita
\cite{SY99}, and some variants with
various degrees of synchrony. The model represents robots as
points that evolve on a plane. At any given time, a robot can be
either idle or active. In the latter case, the robot observes the
locations of the other robots, computes a target position, and moves
toward it. The time when a robot becomes active is governed by an
activation daemon (scheduler). In the original definition 
of Suzuki and Yamashita,
called the \emph{SYm model}, activations (i.e.,~look--compute--move) are
atomic, and the scheduler is assumed to be fair and distributed, meaning
that each robot is activated infinitely often and that any subset of
the robots can be active simultaneously.  In the CORDA model of Prencipe
\cite{Pre01}, activations are completely asynchronous, for
instance allowing robots to be seen while moving. 
Flocchini~\emph{et~al.}
\cite{FPS12} provide an excellent overview on the subject.

Suzuki and Yamashita
\cite{SY99} proposed a gathering algorithm for
non-oblivious robots in the SYm model. They also proved that gathering can be
solved in systems with three or more oblivious robots, but not in systems with only
two.%
\footnote{With two robots, all configurations are symmetrical and may
  lead to robots endlessly swapping their positions. In contrast, with
  three or more robots, an algorithm can be made such that, at each
  step, either the robots remain symmetrical and they eventually reach
  the same location, or symmetry is broken and this is used to move
  one robot at a time into the same location.
}
%Prencipe
% impossibility of gathering without multiplicity (oblivious)
Prencipe
\cite{Pre05} studied the problem of gathering in both SYm
and CORDA models. He showed that the problem is impossible without
additional assumptions such as being able to detect the multiplicity
of a location (i.e.,~knowing the number of robots that may
simultaneously occupy that location).
%Flocchini et al.
% gathering with oblivious robots, limited visibility, asynchronous model + compasses
Flocchini~\emph{et~al.}
\cite{FPS+05} proposed a solution to gathering, for
oblivious robots with limited visibility in the CORDA model, where
robots share the knowledge of a common direction (e.g.,~as given by a
compass).  Based on that work, Souissi~\emph{et~al.}~\cite{SDY09}
considered a system in which compasses are not necessarily consistent
initially. Ando~\emph{et~al.}~\cite{AOS+99} proposed a gathering 
algorithm for the SYm model with limited visibility. 
Cohen and Peleg~\cite{CP06} studied
the problem when robots' observations and movements are subject to
errors.

None of the studies mentioned above address the feasibility of gathering 
in fault-prone environments. One of the first steps in this direction was done by
Agmon and Peleg~\cite{AP06}.
They proved that gathering of correct robots (called \emph{weak gathering}
in this paper)  can be achieved in the SYm model even in the face of the
crash of a single robot. Furthermore, they proved
that no deterministic gathering algorithm exists in the SYm model that
can tolerate a Byzantine%
\footnote{A Byzantine robot is a faulty robot that behaves
  arbitrarily, possibly in a way to deliberately prevent the other robots from
  gathering in a stable way.}
robot.  Finally, they considered a stronger model, called \emph{fully
synchronous}, in which all robots are always activated simultaneously,
and showed that weak gathering can be solved in that 
model provided that less than one third of the robots are Byzantine.

\paragraph{Contribution.}
In this paper, we study further the feasibility of gathering in the SYm model
in both fault-free and fault-prone (crash and, to some extent, Byzantine) environments.
In particular,
%impossibility
%of gathering with Byzantine robots, 
we consider centralized
schedulers%
\footnote{The rationale for considering a centralized scheduler is that,
  with communication facilities, the robots can synchronize by running
  a mutual exclusion algorithm, such as token passing.  }
(i.e.,~activations occur in mutual exclusion) and bounded schedulers
(i.e.,~between any two consecutive activations of a robot, no other robot is activated more than $k$-times for some finite~$k$).

More specifically, we obtain the following important results.

Firstly, we strengthen an important impossibility result of
Prencipe~\cite{Pre05} by showing that it also holds in strictly stronger models.
In particular, in oblivious fault-free environments without multiplicity,
Prencipe~\cite{Pre05} proved the impossibility of \emph{distinct}%
\footnote{
	\emph{Distinct gathering} is a tighter definition of the gathering problem,
	in which robots are required to have distinct positions initially.
	In contrast, \emph{self-stabilizing gathering} puts no such requirements on
	initial configurations. 
}
gathering \emph{under a fair scheduler}.
We considerably strengthen this result by proving that the same problem remains
impossible under more restrictive schedulers, even under \emph{a 2-bounded centralized scheduler}.
We further prove that the problem of \emph{self-stabilizing} gathering
is impossible even under a \emph{round-robin scheduler}, and this is also conjectured
for distinct gathering.

Secondly, still without multiplicity, we prove that self-stabilizing gathering
can be solved probabilistically under a fair bounded scheduler (with arbitrary
by finite bound) when $n\geq3$ and under an unfair scheduler when $n=2$, by
exhibiting a simple algorithm that solves the problem.

Thirdly, given multiplicity, we prove that gathering can be solved
\emph{deterministically} under a \emph{fair centralized scheduler} even if up
to $n-1$ robots can crash. We then extend the algorithm to prove that
gathering can also be solved \emph{probabilistically} even if the scheduler
is not centralized.

Fourthly, we study the case of Byzantine-tolerance
by extending the range of impossibility results.
Most notably, Agmon and Peleg~\cite{AP06} proved that $(3,1)$-Byzantine gathering is impossible
\emph{deterministically} under a \emph{fair scheduler}. We extend the result
by showing that even \emph{probabilistic} gathering is impossible under
a \emph{round-robin scheduler}. We also prove other impossibility results.
%In spite of this, results on Byzantine processes still remain far from
%exhaustive.

%
% XD: We don't have any of the bounds on k anymore, therefore we have to
%     remove this contribution.
%
%Secondly, we outline the limits under which
%Byzantine and crash-tolerant gathering become possible.
%In particular, we propose interesting lower bounds on the 
%value~$k$ that a $k$-bounded scheduler
%must take for the problem to become solvable.
%
More generally, we show in what situations randomized algorithms can help solve
the problem, and when they cannot. To the best of our knowledge our work%
\footnote{%
	An extended abstract of this work was presented at DISC \cite{DGM+06} in 2006,
	although it has been considerably extended since. Meanwhile, some authors have
	published very insightful results on the problem \cite{IIK+13}.
} 
was the first to investigate the feasibility of probabilistic gathering in both fault-free and
fault-prone systems.%
%Additionally we present evaluations for the convergence time of
%our gathering algorithms. 

%The convergence time of 
%our algorithms is polynomial in the size of the network in both fault-free 
%and crash-prone environments under fair bounded schedulers.
%We conjecture that our bounds are optimal and 
%hold for the case of Byzantine-prone systems.  

%In this paper, we consider $k$-bounded daemons, that is, daemons
%ensuring that between any two consecutive activations of a robot, no
%other robot is activated more than $k$~times. The higher the value of
%$k$, the weaker the daemon. At one end of the spectrum, the fully
%synchronous daemon is a special case of a bounded daemon where
%$k\,=\,1$. At the other end, the (unbounded) fair distributed daemon
%is weaker than a bounded one for any value of~$k$. Thus, the value
%of~$k$ can be seen as the degree of synchrony of the system.

% oblivious robots -> leads to inherently self-stabilizing

% related work
%\paragraph*{Related work}
%In addition to the work mentioned earlier, there as been several other
%studies on the gathering problem.

% SY99:
% non-oblivious gathering: ok;
% oblivious gathering: ok <=> n>=3

%Ando et al.
% convergence gathering, limited visibility, SYm

% structure
\paragraph{Structure of the paper.}
The rest of the paper is structured as follows. Section~\ref{sec:model} describes the system model and basic terminology. 
Section~\ref{sec:problem} formally defines 
the gathering problem and recalls important lemmas found in the literature.
Section~\ref{sec:ff-gathering} proposes 
possibility and impossibility results for deterministic 
and probabilistic gathering in fault-free environments. 
Section~\ref{sec:ft-gathering} and~\ref{sec:byzt-gathering} extend the study 
to crash and Byzantine prone environments.
Section~\ref{sec:summary} summarizes the results, and Section~\ref{conclusion} concludes the paper.
%Section \ref{sec:conclusion} presents a {\it resum\'e} of 
%our contributions and briefly describes some open problems. 
%Due to space 
%limitations, most of the proofs are omitted, but they are included in
%the full version~\cite{TR:DGM+06}.

% !TEX root = gathering-journal.tex

\section{Model}
\label{sec:model}
We define the system model used in the paper, as well as define important terminology.
The model we consider is based on the SYm model \cite{SY99}, and most
definitions are due to various authors \cite{SY99,Pre01,AP06}. 

% Based on SYm model
%

\subsection{Robot network}
% System
%   * n robots (anonymous, oblivious)
%   * 2D Euclidean space
%   * no coordinate system
%
A robot network consists of a finite set $\set{R}= \{r_1, \cdots, r_n\}$ of $n$~dimensionless robots
evolving in a boundless 2D Euclidean space, devoid of any landmarks or
obstacles.

Robots cannot communicate with each other and do not share any notion of a global
coordinate system.
In particular, they have no agreement on a common origin, unit distance, or
directions and orientations of the axis.

\subsection{Robot}
% Robot
%   * I/O Automaton
%     - Oblivious: no explicit state (except position)
%     - action :: precondition -> effect
%     - all precondition false means robot is not enabled
%
%   * Actions
%     * Observe
%       - Multiset of points (multiplicity) or set of points (no multiplicity)
%       - expressed in local coordinate system:
%         - origin at it's own position
%         - orientation of axis can change between activations
%     * Compute
%       - Deterministic or probabilistic
%     * Move
%       - straight line to destination
%       - reachable distance
%
A robot is modeled as an I/O automaton%
\footnote{
	In the CORDA model \cite{Pre01}, a robot exhibits a
	continuous behavior that can be modeled by an hybrid
	I/O automaton (\cite{LSV03}).
}
(\cite{Lyn96}).

Robots are \emph{oblivious} which means that they do not retain any information
on past actions and observations.
The state of a robot consists only of its current position in the
environment, which is neither directly readable%
\footnote{
	The current position is exclusively available in local coordinates.
}
nor directly writable%
\footnote{
	A robot can change its position only through move operations.
}
by the robot's algorithm.

Robots are \emph{anonymous} in that they are not aware of any distinctive
identity and all of them execute the same algorithm consisting of
cycles of the operations: Observe, Compute, Move.
In the SYm model, the three operations are executed atomically. Thus, for
simplicity, an algorithm is expressed as one or more $\mathit{Observe}$ input
actions with effects Compute and Move, and guarded by a possible precondition.
\begin{center}\small
	$\underbrace{\mathit{Observe}(\multiset{\Pi})}_{\mbox{input}}$
	:: $\left<\mathrm{precondition}\right>$
	$\longrightarrow$ $\underbrace{\left<\mathrm{compute}\right> ; \left<\mathrm{move}\right>}_{\mbox{effect}}$
\end{center}
In this paper, actions being always enabled, the precondition is always set to true.
%A robot can perform an action only if the corresponding guard evaluates to true. 
%A robot is said to be \emph{enabled} if it has at least one input action with a
%precondition that evaluates to true.

\begin{itemize}
\item \emph{Observe (input action)}.\\
	The parameter to the action is a set~\set{P} or multiset~\multiset{\Pi} of points
	representing the positions occupied by all robots, as expressed in the private
	coordinate system of the robot making the observation.
	The origin of the private coordinate system corresponds to the current
	position of the robot with arbitrary unit distance and orientation.
	
	When the system is said to be \emph{with multiplicity},%
	\footnote{
		Our definition of multiplicity is sometimes called ``strong multiplicity'', in
		contrast to a weaker definition where robots are only able to distinguish whether
		a given location is occupied by one or by several robots \cite{IIK+13}.
	}
	The observation is a multiset~\multiset{\Pi} of points and the multiplicity of an
  element in \multiset{\Pi} corresponds to the number of robots sharing that location.
	Conversely, when the system is said to be	\emph{without multiplicity}, then
	the observation is a set~\set{P}.
	
	In this paper, robots are assumed to have \emph{unlimited visibility}, in that
	all	robots are part of each other's observation regardless of their respective
	distance.
		
\item \emph{Compute}.\\
  A stateless computation returning a target destination in the private
  coordinate system.
    
  If the algorithm is deterministic, the computation is deterministic and
  depends only on the observation \set{P} (or \multiset{\Pi}).
  In contrast, if the algorithm is probabilistic, the output may additionally
  depend on random choices.
  
\item \emph{Move (effect)}.\\
	Directs the actual motion of the robot toward a designated target destination.
	
  The robot may or may not reach this destination. For every robot~$r$, there
  exists a \emph{reachable distance}~$\delta_r>0$ unknown to $r$, such that, any
  target destination computed within a distance $\delta_r$ from the current
  position is reached in that step. Conversely, if the target is not reachable,
  then $r$ travels at least a distance $\delta_r$.
  This condition is necessary to ensure progress. 
  
  We denote by $\delta=\min\limits_r \delta_r$ the minimal reachable distance.
  We often use $\delta$ in place of each individual $\delta_r$ for simplicity,
  but only as a worst case choice.
  
  When not explicitly specified, the trajectory of the robot is assumed to be
  a straight line to the destination.
\end{itemize}

\subsection{Activations and schedulers}
% Activations
%   * Schedulers
A scheduler decides, for every configuration, which
subset of the robots is active (i.e., allowed to perform their actions).
%A hierarchy of deterministic and 
%probabilistic schedulers can be found in \cite{Tix00,}.
%A scheduler is (weakly)%
%\footnote{Weak fairness requires that every
%  robot that is continuously ready to take an action is eventually activated.
%  But, since a robot is always ready to be activated in our model, a
%  distinction between weak and strong fairness is not relevant.}
%
%fair if, in an infinite execution, a robot is active
%infinitely often.
In this paper we consider the following schedulers:
\begin{itemize}
\item \emph{unfair arbitrary}:
	At each activation, a non-empty subset of robots is activated.
	A non-triviality condition ensures that, infinitely often, a non-faulty
	robot becomes active.
\item \emph{unfair centralized}: The scheduler is unfair (as described
  above) with the additional restriction that at most one
  (i.e.,~exactly one) robot is activated at each activation.
\item \emph{fair arbitrary}: At each activation, any non-empty subset of the
  robots is activated, with the guarantee that every robot becomes
  active infinitely often in an infinite execution.
\item \emph{fair centralized}: The scheduler is fair (see above) with
  the additional guarantee that no more than one (i.e.,~exactly one)
  robot is activated at each activation.
\item \emph{fair $k$-bounded}: The scheduler is fair with the additional
  guarantee that there exists some bound $k$ 
  such that between any two consecutive activations of some robot, no
  other robot is activated more than $k$~times.
  The bound may be known or unknown to the robots. 
  In the sequel we assume that robots do not know the scheduler bound.
%\item \emph{fair $k$-bounded}: The scheduler is fair bounded, but the
% bound~$k$ is fixed and known to the robots.
%\item \emph{fair bounded regular}: The scheduler is fair $k$-bounded
 % with $k=1$.
\item \emph{round-robin}:
	The scheduler is fair
  1-bounded and centralized. This implies that the robots are activated always in the same sequence.
\item \emph{fully synchronized}: Every robot is active at every activation.
\end{itemize}
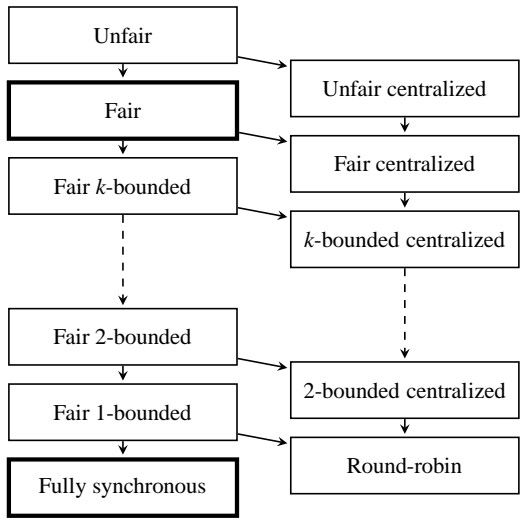
\begin{figure}
		\centering
		\begin{tikzpicture}[
					-stealth,shorten >=1pt,auto,node distance=1cm,on grid,%
	        semithick,inner sep=2pt,bend angle=10,%
	        sched/.style={ state, rectangle, minimum width=3cm }
			]
	    \node[sched] (unfair) {Unfair};
	    \node[sched,ultra thick] (fair)      [below of=unfair]    {Fair};
%	    \node[sched] (fbounded)  [below of=fair]      {Fair bounded};
%	    \node[sched] (fkbounded) [below of=fbounded]  {Fair $k$-bounded};
	    \node[sched] (fkbounded) [below of=fair]      {Fair $k$-bounded};
	    \node (gap) [below of=fkbounded] {};
	    \node[sched] (f2bounded) [below of=gap] {Fair $2$-bounded};
	    \node[sched] (freg)      [below of=f2bounded] {Fair 1-bounded};
	    \node[sched,ultra thick] (fsync)     [below of=freg]      {Fully synchronous};
	    
	    \node[sched] (ucentral)  [below right of=unfair,right=3cm of unfair] {Unfair centralized};
	    \node[sched] (central)   [below of=ucentral] {Fair centralized};
	    \node[sched] (bcentral)  [below of=central]  {$k$-bounded centralized};
	    
	    \node[sched] (b2central) [below right of=f2bounded,right=3cm of f2bounded] {$2$-bounded centralized};
	    \node[sched] (robin)     [below right of=freg,right=3cm of freg] {Round-robin};
	    
	    \path (unfair)    edge (fair)
	    						      edge (ucentral);
	    \path (fair)      edge (fkbounded)  %edge (fbounded)
	    								  edge (central);
%	    \path (fbounded)  edge (fkbounded)
%	    									edge (bcentral);
	    \path[dashed] (fkbounded) edge (f2bounded);
	    \path (fkbounded) edge (bcentral);
	    \path (f2bounded) edge (freg)
	    						      edge (b2central);
	    \path (freg)      edge (fsync)
	    									edge (robin);
	    
	    \path (ucentral) edge (central);
	    \path (central)  edge (bcentral);
			\path[dashed] (bcentral) edge (b2central);
			\path (b2central) edge (robin);
			
		\end{tikzpicture}
		\caption{\label{fig:schedulers}%
			Relationships between scheduler classes. Conventional models are highlighted: SYm \cite{SY99} and CORDA \cite{Pre01} are fair, and the fully synchronous model \cite{AP06} is its namesake.
		}
\end{figure}
Figure~\ref{fig:schedulers} summarizes the relationships between the schedulers presented
above. Given two schedulers $A$ and $B$, $A\supset B$ means that the
set of all possible executions allowed by scheduler~$A$ strictly
contains the set of all executions allowed by scheduler~$B$. As a
result, any algorithm that is correct under scheduler~$A$ is also
correct under scheduler~$B$. Likewise, any impossibility proven under
scheduler~$B$ also holds under scheduler~$A$.

%\remove{
%\begin{itemize}
%\item 
%unfair $\supset$ fair
%\item 
%unfair $\supset$ unfair centralized
%\item 
%unfair $\supset$ fair centralized
%\item 
%fair $\supset$ fair centralized
%\item 
%fair $\supset$ fair k-bounded
%%\item 
%%fair bounded $\supset$ fair $k$-bounded
%%\item 
%%fair $k$-bounded $\supset$ fair bounded regular
%\item 
%fair centralized $\supset$ round-robin
%\item 
%fair 1-bounded $\supset$ round-robin
%\item
%fair 1-bounded $\supset$ fully synchronized
%\end{itemize}
%}

%moved to section btss
%The schedulers presented so far are deterministic in the sense that a
%scheduler can make any choice according to an algorithm that meets the
%constraints set by its type. In contrast, we consider the following
%\emph{probabilistic} scheduler:

%\begin{itemize}
%\item \emph{probabilistic}: At each activation the robots that become
%  active are selected randomly (see below).
%\end{itemize}
%There are many variants of probabilistic schedulers (e.g., fair,
%centralized, $\ldots$), similar to the deterministic ones. In the
%paper, we only consider a \emph{fair probabilistic} scheduler, which
%ensures that, in an infinite execution, a robot is activated
%infinitely often with probability~1. More specifically, we consider a
%uniform random selection among the robots.

\subsection{Executions and configurations}
% Execution
%   * finite or infinite
%   * execution fragment
% Configurations
%   * define
%   * valence
A \emph{configuration} is the union of
the local states of the robots in the system at some discrete time~$t$.
An \emph{execution} $e=(\conf[0], \ldots, \conf[t], \ldots)$ of the system is a
sequence (finite or infinite) of configurations, where \conf[0] is an
initial configuration of the system,
and every transition $\conf[t] \rightarrow \conf[t+1]$ corresponds to the
activation of a subset of the robots, according to the scheduler.
An \emph{execution fragment} is any non-empty subsequence of an
execution.

\medskip
%\paragraph{Valence.}
%
The \emph{valence} of a configuration~\conf denotes the number of distinct
locations occupied by some robot in~\conf.
Thus, a \emph{$q$-valent} configuration has $q$ distinct locations (where
$1 \leq q \leq n$ is the valence and $n$ the number of robots in the
system).

A \emph{univalent} configuration is a configuration in which all robots share the same location (valence~1). A univalent configuration~\conf is said to be \emph{centered at $p$} if $p$ is the
location occupied by the robots in \conf.

A \emph{multivalent} configuration is a configuration that is not univalent ($q > 1$).

A \emph{bivalent} configuration is a multivalent configuration with valence~2.

A \emph{1-bivalent} configuration is a bivalent configuration in which one of the two locations is occupied by a single robot.

A \emph{distinct} configuration is a configuration in which all robots have distinct positions (valence~$n$).
%If $c$ is distinct, then $val(c)=n$ and $mul_{\max}(c)=1$.

\subsection{Fault models}
\label{sec:model:faults}
%
% Fault Models
%   * faulty / correct
%   * (n,f)-crash model
%   * (n,f)-Byzantine model
The behavior of a \emph{correct} robot never deviates from its
specification. In contrast, a robot is considered \emph{faulty}
if its behavior deviates from its specification in some executions.
In this paper, we consider two classes of faults: crash and Byzantine.

\paragraph{$(n,f)$-crash model:}
The system consists of $n$ robots, among which up to $f$ faulty robots may
fail by crashing. To rule out the trivial case, $f<n$, so there is at least one
correct robot.

A crash may occur at any time. A robot that crashes permanently stops performing
any action. In particular, it no longer moves from the position it crashed.
A crash cannot be detected by other robots.

\paragraph{$(n,f)$-Byzantine model:}
The system consists of $n$ robots, among which up to $f<n$ faulty robots may
exhibit an arbitrary behavior.

Byzantine robots are controlled by an adversary. The activations of Byzantine
robots are subject to the restrictions imposed by the scheduler. The behavior of
the Byzantine robots can however be based on a global awareness of the environment,
including all past actions and the current state of all robots.

Since a Byzantine robot may elect to stop performing actions, the Byzantine
model is a strict generalization of the crash model.

%\paragraph{Faulty robots and unfair schedulers:}	
%	
%	The unfair scheduler is required to activate \emph{at least one correct} robot
%	at every activation.
%	
%
%However, if several robots are always enabled, an
%unfair scheduler may still always single out the same enabled robot,
%thus leading to starvation for the other robots. 

%
% SYm / CORDA
\subsection{Computational Models}
The literature proposes mainly two computational models, namely, SYm and CORDA.
The SYm model was introduced by Suzuki and Yamashita~\cite{SY99}. In this
model each robot performs, once activated by the scheduler, a
\emph{computation cycle} consisting of the following three actions:
observation, computation and motion.  The atomic action performed by a
robot in this model is a computation cycle.  The execution of the
system can be modeled as an infinite sequence of rounds. In a round
one or more robots are activated and perform a computation cycle.

The CORDA model, introduced by Prencipe~\cite{Pre01},
refines the atomicity of actions, by decoupling observe and move actions, as well
as separate the beginning and the end of a move as distinct events.
Robots may be interrupted by the scheduler halfway through a computation
cycle. Moreover, while a robot performs an observation, another robot
may be partway through a movement.

As stated before, in this paper we consider the SYm model,%
\footnote{
	Note that all impossibility results proven in the SYm model
	necessarily hold in the CORDA model.
}
refined with the above scheduling strategies.
We focus our study on the case of oblivious robots, i.e., 
robots do not conserve any
information between two computational cycles.
A major motivation for considering oblivious robots
is that, as observed by Suzuki and Yamashita~\cite{SY99}, 
algorithms designed for that model are inherently self-stabilizing \cite{Dol00}.

\subsection{Notation}
\label{sec:model:notation}
Let \conf be a configuration, then \val{\conf} denotes the valence of configuration~\conf.

Let \multiset{\Pi} be a multiset of points representing the locations of robots in configuration~\conf,
and let \point{p} be a location in \multiset{\Pi}.
Then, \mul{\point{p}} is the multiplicity of point \point{p} and corresponds to the number of robots
located at \point{p} in configuration \conf.

The maximal multiplicity \mulmax{\multiset{\Pi}} (resp. \mulmax{\conf}) of a multiset (resp. configuration)
is $\mulmax{\multiset{\Pi}} = \max\limits_{\point{p}\in \multiset{\Pi}} \left(\mul{\point{p}}\right)$.

We now define the set of points with maximal multiplicity as
$\MaxMult{\multiset{\Pi}}=\left\{ \point{p} \in \multiset{\Pi} ~|~ \mul{\point{p}} = \mulmax{\multiset{\Pi}} \right\}$.
A point in \MaxMult{\conf} is called a point of maximal multiplicity.

For convenience, we introduce the following additional terminology.
A \emph{tower} is a location occupied by at least two robots. 
A \emph{castle} is a tower with maximal multiplicity.

\subsection{Geometry Definitions}
\label{sec:model:geometry}
Given a set of points $\set{P}$, we have the following definitions.

\paragraph{Convex Hull:}
The \emph{convex hull}, denoted \Conv{\set{P}}, is defined as the smallest convex
set that contains \set{P}. The convex hull is unique. A point~\point{p} in \set{P} is a vertex of the convex
hull if and only if \point{p} is outside of \Conv{\set{P}\setminus\{\point{p}\}}.

\paragraph{Smallest Enclosing Circle:}
The \emph{smallest enclosing circle}, denoted $\SEC(\set{P})$, is defined as the smallest
circle that contains all point in \set{P}. It is unique and can be computed in linear
time~\cite{Meg83}. It is defined either by two points which form a diameter, or by three or
more points located on its circumference and forming no angle greater than $\pi$.
Any point in \set{P} on the circumference of $\SEC(\set{P})$ is also a vertex of the convex hull.
The diameter of $\SEC(\set{P})$ provides an upper bound on the distance between any pair of points in \set{P}.

\paragraph{Voronoi Diagram:}
The \emph{Voronoi diagram} \Vor{\set{P}} is a division of the space into
cells, one for each point in \set{P}, such that the Voronoi cell \Vcell{\point{p}} of point \point{p} contains
all points whose distance to \point{p} is smaller or equal to its distance to any other points in \set{P}.
The Voronoi diagram is unique and maps the entire space. All Voronoi cells are convex polygons.
Given a point \point{p} in \set{P}, \Vcell{\point{p}} has vertex at infinity if and only if \point{p} is a vertex of
the convex hull \Conv{\set{P}}.

%
% REMOVE
%
%\remove{
%	%%% Seems pretty obvious; I'm not sure it is necessary to state these
%\medskip
%We can now derive simple properties related to the valence of the configurations.
%
%If configuration~\conf is univalent ($\val{\conf}=1$), then we have $\mulmax{\conf}=n$.
%%
%If configuration~\conf is bivalent ($\val{\conf}=2$), then $\mulmax{\conf}\geq\frac{n}{2}$.
%%
%If configuration~\conf is 1-bivalent, then we have $\mulmax{\conf}=n-1$.
%%
%If configuration~\conf is distinct, then we have $\val{\conf}=n$ and $\mulmax{\conf}=1$.
%}
%
%
%\remove{
%
%\subsubsection{Adversary Power:}
%An adversary may define:
%\begin{itemize}
%\item The initial positions of the robots
%\item The private coordinate system and orientation of each robot
%\item The order in which robots are activated (as compatible with the scheduler)
%\item The faulty robots, when they fail, and how.
%\item The behavior of every Byzantine robot. That behavior can rely on complete
%  knowledge of the state of the other robots, and must abide by the scheduler.
%\end{itemize}
%What the adversary cannot do is:
%\begin{itemize}
%\item anticipate random choices made by an algorithm.
%\item violate the rules of the scheduler or other assumptions on the system.
%\end{itemize}
%}

% !TEX root = gathering-journal.tex

%
% TODO:
% - change "distinct gathering" -> "strict gathering" (ask Maria first)
%

\section{The Self-Stabilizing Gathering Problem}
\label{sec:problem}
In the gathering problem, robots are required to eventually reach a configuration
in which they all share the same location.
There are several variants to the
problem.

\subsection{Strong gathering}
We define the \emph{self-stabilizing strong gathering} problem as follows.
\begin{description}
\item[\textbf{Convergence:}]
	Any execution starting in an arbitrary configuration reaches a univalent configuration
	after a finite number of steps.
	
\item[\textbf{Closure:}]
	Any execution suffix that starts in a univalent configuration contains only univalent
	configurations.
\end{description}

The problem is called \emph{point formation} with an equivalent definition by Suzuki
and Yamashita \cite{SY99}.

\begin{note}
	Other authors, such as Prencipe \cite{Pre05}, define gathering as the problem of reaching
	a univalent configuration when starting from any \emph{distinct} configuration rather than
	arbitrary ones. Let us call that definition ``distinct gathering.''
	Distinct gathering is however not self-stabilizing because, solving the problem
	with oblivious robots does not readily make the algorithm self-stabilizing.
		
	Distinct gathering is covered by self-stabilizing gathering. In other words, an algorithm
	that solves self-stabilizing gathering also solves distinct gathering. Conversely, if
	distinct gathering is impossible in a given system, then self-stabilizing is also impossible
	in that system.
	
	In the paper, we consider the self-stabilizing definition, except in
	Section~\ref{sec:ff-gathering} when we extend impossibility results that were originally
	proved for distinct gathering.
\end{note}

\subsection{Weak gathering}
The definition of strong gathering and univalent does not distinguish between correct robots
and faulty ones. In fault-tolerant contexts, a weaker definition of the problem is often
desirable.

Let us define a \emph{gathered configuration} as a configuration in which all \emph{correct}
robots are located at a unique point of maximal multiplicity.
\begin{description}
\item[\textbf{Convergence:}]
	Any execution starting in an arbitrary configuration reaches a \emph{gathered}
	configuration after a finite number of steps.
	
\item[\textbf{Closure:}]
	Any execution suffix that starts in a \emph{gathered} configuration contains
	only \emph{gathered} configurations.
\end{description}

In a fault-free system, univalent and gathered configurations are identical. Consequently, the
distinction between strong and weak gathering is irrelevant in that context.

\subsection{Convergence}
Gathering is difficult to achieve in most environments. And thus, weaker forms
of gathering were studied so far. An interesting version of this problem requires
robots to \emph{converge} toward a single location rather
than reach that location in a finite time. Convergence is however considerably
easier to deal with. For instance, with unlimited visibility, it can be
achieved trivially by having robots moving toward the barycenter of the
network \cite{SY99}.

\subsection{Existing Results}
We now present a few lemmas proved previously by others, that are
related to our study. When appropriate, the lemmas have been rephrased
in order to keep the terminology consistent. First, the following two
lemmas have been proved by Suzuki and Yamashita~\cite{SY99} and refer
to oblivious robots under a fair scheduler.

\begin{theorem}[\cite{SY99}; Th.~3.1]
  \label{th:pre:nodet2robots}
  There is no deterministic algorithm that solves gathering for $n=2$ robots under a fair scheduler.
\end{theorem}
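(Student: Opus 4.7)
The plan is to proceed by contradiction: assume that some deterministic algorithm $\mathcal{A}$ solves gathering for $n=2$ under a fair scheduler, and show that the adversary can always find a fair schedule on which $\mathcal{A}$ fails. I would split the argument in two: first use one fair schedule to pin down the form of $\mathcal{A}$, then use a different fair schedule to defeat the resulting canonical candidate.

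First I would analyze $\mathcal{A}$ under the scheduler that activates both robots simultaneously at every step, which is a valid fair schedule. Given any distinct configuration with robots at $p_1 \ne p_2$, and letting $m$ denote the midpoint of $\overline{p_1 p_2}$, the adversary is free to choose each robot's local frame; choosing these frames so that they are related by a $180^\circ$ rotation about $m$ makes the two robots' local observations coincide. By anonymity and obliviousness, both robots then compute the same local target $\vec{t}=\mathcal{A}(\vec{u})$, and their induced global displacements are opposite about $m$. Consequently the next configuration remains symmetric about the same $m$, and a direct calculation shows that the two new positions coincide only when $\mathcal{A}(\vec{u})=\vec{u}/2$. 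Exhausting the adversary's remaining freedom over rotations, chiralities, and unit-length scalings of the frames pins $\mathcal{A}$ down to exactly this ``move to the observed midpoint'' rule: any other $\mathcal{A}$ keeps the two robots in a strictly distinct, mirror-symmetric configuration forever under this adversarial schedule.

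Second I would defeat the unique remaining candidate by switching to a round-robin scheduler, which is fair (indeed, it is fair $1$-bounded and centralized, hence a special case of fair). Starting from distance $d>0$, the initial activations each drive the active robot by at least $\delta_r$ toward the midpoint; hence after finitely many steps the inter-robot distance drops below $\min(\delta_{r_1},\delta_{r_2})$. From then on, each activation lands the active robot exactly on the current midpoint and strictly halves the inter-robot distance in a single step. The resulting sequence of distances is a geometric contraction tending to $0$ but strictly positive at every finite time, so the two robots never occupy a single location, contradicting the finite-time convergence required by gathering.

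The main obstacle will be the first step: carefully enumerating the adversary's freedom in choosing local frames so as to rule out, in a unified way, all algorithms other than ``move to the observed midpoint'', including those with chirality-dependent or off-axis components. Once $\mathcal{A}$ is forced into this canonical form, the round-robin attack in the second step is essentially mechanical.
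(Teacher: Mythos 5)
The paper does not reprove this result---it imports it verbatim from Suzuki and Yamashita \cite{SY99}---and your two-phase argument (a symmetric simultaneous-activation schedule forces any surviving algorithm to be the move-to-the-observed-midpoint rule, which an alternating schedule then defeats because the inter-robot distance is halved but stays strictly positive forever) is precisely the classical proof of that theorem, and it is correct. Note also that both schedules you use are fair $1$-bounded, which matches the paper's remark immediately after the theorem that the impossibility execution is compatible with a fair bounded scheduler with $k=1$.
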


%Maria 
%I modified  the phrase below 
Notice that, although the above theorem is expressed according to a fair
scheduler (SYm model), the execution used in the proof to show the
impossibility is compatible with a fair bounded scheduler with the bound $k=1$. It
follows that the result also applies to a system based on that scheduler.

\begin{theorem}[\cite{SY99}; Th.~3.4]
  \label{th:pre:pos3robots}
  Gathering of $n\geq 3$ robots can be solved deterministically under a fair scheduler
  with multiplicity detection.
\end{theorem}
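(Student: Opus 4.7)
The plan is to exhibit a deterministic algorithm and verify that it achieves gathering in finite time under any fair schedule. The algorithm branches on whether the observed multiset \multiset{\Pi} has a unique point of strictly maximal multiplicity, which we call the \emph{target castle} when it exists.

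\textbf{Mode A (unique target castle \castle{c}).} Every activated robot not already located at \castle{c} moves directly toward \castle{c}. The main invariant is that \castle{c} persists as the unique point of strictly maximal multiplicity throughout the execution: robots at \castle{c} never move, so \mul{\castle{c}} is non-decreasing; a moving robot either lands on \castle{c}, reinforcing the castle, or lands on some other point whose resulting multiplicity is still strictly less than \mul{\castle{c}}, since \mul{\castle{c}} was already strictly dominant and can only grow. Fairness then guarantees every robot is activated infinitely often, and the reachability condition $\delta_r>0$ forces strict progress on each activation, so after finitely many activations per robot the configuration becomes univalent at \castle{c}.

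\textbf{Mode B (no unique castle).} The task is to reach a Mode A configuration in finitely many steps. When $\val{\conf}=n$, there is no tower at all, and the algorithm uses rotation- and reflection-invariant constructions on \set{P} --- namely the smallest enclosing circle \SEC(\set{P}), its center, and the convex hull \Conv{\set{P}} --- to designate a common target computed identically from any private coordinate system. A typical design moves every robot not on \SEC(\set{P}) toward a canonical vertex of \Conv{\set{P}} (for example, the vertex incident to a uniquely longest edge when one exists), which forces two robots to coincide after finitely many activations and thereby creates the first tower. When $|\MaxMult{\conf}|\geq 2$, the castle set itself is a canonical geometric object on which one applies an analogous invariant-based selection, and non-castle robots migrate toward the designated castle, eventually producing a unique strictly-dominating castle and handing control to Mode A.

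\textbf{Main obstacle.} The delicate case consists of highly symmetric configurations --- regular $n$-gons and the like --- where no geometric invariant singles out a unique target and a fair scheduler may keep activating symmetric robots simultaneously, preserving the symmetry. The resolution exploits multiplicity detection together with the hypothesis $n\geq 3$: such configurations admit a canonical symmetry center, namely the center of \SEC(\set{P}), and directing symmetric robots toward this center monotonically decreases the radius of \SEC, a well-founded geometric potential. Either the scheduler eventually activates an asymmetric subset and produces a configuration handled by the invariant-based rule above, or the symmetric trajectories converge until two or more robots meet at the center and form the first tower, transferring control to Mode A. Combining the two modes yields a univalent configuration in finite time, proving the theorem.
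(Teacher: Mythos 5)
The paper does not prove this statement: it is recalled verbatim from Suzuki and Yamashita \cite{SY99} (their Theorem~3.4) in the ``Existing Results'' section, so there is no in-paper proof to compare against. Judged on its own merits, your sketch follows the folklore two-phase structure (create a dominant tower, then absorb everyone into it), but it has two concrete gaps. First, the Mode~A invariant is false under a fair \emph{arbitrary} (non-centralized) scheduler. The adversary chooses where each moving robot stops (anywhere at least $\delta_r$ along its segment), and two segments aimed at \castle{c} overlap whenever their origins lie on the same ray through \castle{c}. By simultaneously activating several robots collinear with \castle{c}, the adversary can pile them up at a single intermediate point whose multiplicity reaches or exceeds $\mul{\castle{c}}$, destroying the ``unique strictly maximal'' property you rely on. This is exactly the failure mode this paper spends considerable effort neutralizing: the side-move machinery of Algorithm~\ref{alg:ft-gathering}, the counterexample in Appendix~\ref{apx:need:side-move}, and the restriction of the deterministic algorithm to a \emph{centralized} scheduler all exist because straight moves toward a target do not preserve your invariant when robots move concurrently.

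Second, Mode~B cannot work as stated. If the configuration has two castles of equal multiplicity $n/2$ (a balanced bivalent configuration, $n$ even), every deterministic, oblivious, anonymous algorithm is defeated by a symmetry argument: the adversary activates both groups simultaneously and the configuration remains balanced bivalent forever. The paper itself records this boundary as Theorem~\ref{th:dp:detgathposs} (\cite{DP12}): deterministic self-stabilizing gathering is possible if and only if $n$ is odd. So your claim that the multi-castle branch ``eventually produc[es] a unique strictly-dominating castle'' is false in general. The cited result of \cite{SY99} survives because it assumes initial configurations that exclude this case and because their algorithm is engineered never to \emph{create} it --- precisely the property your Mode~A fails to guarantee (the collinear pile-up above can manufacture a second castle of equal multiplicity). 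The remaining assertions in Mode~B (tie-breaking via a ``uniquely longest edge,'' termination of the chase toward a moving canonical vertex, and the regular-$n$-gon analysis) are the actual technical content of Suzuki--Yamashita's proof and are asserted here rather than established.
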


The next theorem, proved by Prencipe
\cite{Pre05}, considers distinct gathering (i.e., gathering starting from any
\emph{distinct} configuration) and also applies to oblivious robots under a
fair scheduler.

\begin{theorem}[\cite{Pre05}; Th.~2]
  \label{th:pre:nomultimposs}
  Under a fair scheduler,
  There is no
  deterministic algorithm that solves distinct gathering for $n\geq 2$
  robots without additional assumptions (e.g., multiplicity detection).
\end{theorem}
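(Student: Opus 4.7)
The plan is to proceed by contradiction: suppose some deterministic algorithm $\algo{A}$ solves distinct gathering for $n \geq 2$ under a fair scheduler without multiplicity detection. The case $n = 2$ is immediate from Theorem~\ref{th:pre:nodet2robots}, since dropping multiplicity detection only weakens the model and preserves the impossibility. Hence the real content lies in $n \geq 3$.

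The central leverage point is an indistinguishability lemma, established first: if two configurations $\conf[1]$ and $\conf[2]$ induce the same set of occupied points and a robot $r$ occupies the same location in both, then $\algo{A}$ prescribes exactly the same move for $r$ in both configurations. This follows immediately from the fact that, without multiplicity detection, a robot's observation is a set (not a multiset) and thus entirely determined by the set of occupied points, regardless of how the $n$ robots are distributed over them.

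Using this lemma, the next step is a case analysis on the algorithm's response to bivalent observations $\{A,B\}$: let $q_A$ and $q_B$ denote the targets prescribed for a robot at $A$ and at $B$ respectively. For each case, I would design a fair adversarial schedule that exploits indistinguishability. The canonical scenarios are: (i)~if $\algo{A}$ never merges from a bivalent observation (e.g., $q_A = A$ and $q_B = B$), then once the execution enters a bivalent configuration---as it must, in order to reach univalence from a distinct start---no activation decreases the valence, contradicting convergence; (ii)~if $\algo{A}$ does merge, say $q_B = A$, then starting from a distinct initial configuration that is driven through a bivalent state in which the would-be tower already holds at least two robots, the adversary activates one robot at a time and uses the lemma to show that any activation of a tower member either reverses the merge or yields a multivalent configuration already visited, producing a cycle. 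A symmetric argument handles $q_A, q_B$ landing at third locations, reducing each case to the $n=2$ obstruction of Theorem~\ref{th:pre:nodet2robots} lifted to the $n$-robot setting.

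The principal obstacle I anticipate is the complete enumeration of algorithmic responses to multivalent observations, and for each the construction of a schedule that is genuinely \emph{fair}: the adversary must eventually activate every robot, so the schedule cannot simply freeze some subset. I would handle this by interleaving tower-breaking activations with rotations over all robots, taking care that rotations performed in multivalent states only produce moves that the indistinguishability lemma guarantees to preserve multivalence. The conceptual heart of the argument is that, without multiplicity detection, the algorithm cannot tell a tower apart from a single robot at the same location, so any tower can be either prevented from forming or immediately dismantled by a single-robot activation whose move is dictated by the same bivalent observation.
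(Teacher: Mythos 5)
A preliminary remark: the paper does not prove this statement at all --- it is Theorem~2 of Prencipe~\cite{Pre05}, quoted verbatim in the ``Existing Results'' section, so there is no in-paper proof to compare against. The closest material in the paper is its own strengthening of this result, namely Theorem~\ref{th:detimpossibility-ff} and Theorem~\ref{th:detimpossibility:k2} together with Lemmas~\ref{lem:centralized:valence-one} and~\ref{lem:centralized:bivalent}, and your strategy is essentially the same as theirs: have the adversary activate one robot at a time (which a fair scheduler permits) so that any converging execution must pass through a 1-bivalent configuration, invoke the set-versus-multiset indistinguishability of configurations with the same footprint, and then close the cyclic schedule of Figure~\ref{fig:impossibility-fault-free} in which every robot is activated once per cycle, so fairness is preserved. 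Your worry about fairness is thus already resolved by that construction, and your reduction of $n=2$ to Theorem~\ref{th:pre:nodet2robots} is fine.

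The genuine gaps are in your case analysis of the algorithm's response to a two-point observation $\{A,B\}$. First, you never argue that the computed destination must lie on the line through $A$ and $B$; this follows from the robots' lack of a common orientation (the observation is invariant under reflection across that line, so a deterministic, coordinate-free algorithm cannot break the symmetry), and without it your dismissal of ``third locations'' by a ``symmetric argument'' does not go through --- a destination off the line would increase the valence and escape your dichotomy. Second, your dichotomy omits the case where the prescribed destination is strictly between the two points: the algorithm then neither ``never merges'' nor ``merges,'' the valence stays at two forever under centralized activation, and gathering (as opposed to convergence) is never achieved; this case needs its own one-line contradiction with the finite-time requirement. Third, in case~(ii) you assert that activating a tower member ``either reverses the merge or yields a multivalent configuration already visited,'' but the actual danger is that activating the last robot at a location completes the gathering; the cycle only works because the adversary activates $n-2$ of the $n-1$ co-located robots at once and keeps one behind, exactly as in Figure~\ref{fig:impossibility-fault-free}. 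Once these three points are patched, your argument is sound and coincides with the paper's own treatment of the strengthened statements.
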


Finally, the following two theorems, proved by Agmon and Peleg
\cite{AP06}, refer to models with the presence of faulty robots. 
These theorems state positive results.

\begin{theorem}[\cite{AP06}; Th.~3.5]
  \label{th:ap:crashpos}
  Weak gathering can be solved deterministically in a $(3,1)$-crash model
  under a fair scheduler with multiplicity detection.
\end{theorem}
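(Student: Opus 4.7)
My plan is to exhibit an explicit deterministic algorithm and then prove convergence and closure for the $(3,1)$-crash model with multiplicity detection. Because there are only three robots and multiplicity is observable, the algorithm reduces to a small case analysis on the observed configuration, using the fact that every non-univalent configuration of three points admits some geometric feature that is invariant under renumbering of the robots.

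First, if the configuration contains a tower (a point of multiplicity at least two), then with $n=3$ this point is automatically the unique element of $\MaxMult{\conf}$, and every robot is instructed to head straight for it. A short case analysis on the location of the (at most one) crashed robot shows that this rule either leaves the configuration already gathered or reaches a univalent, hence gathered, configuration after one further correct-robot activation. Closure holds because any correct robot already sitting on $\MaxMult{\conf}$ computes a null displacement.

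If the three positions are distinct I split further into (i) collinear, (ii) scalene or non-equilateral isoceles, and (iii) equilateral. In (i) the middle point is uniquely distinguished, and the two outer robots are sent to it; the first arrival produces a tower and reduces to the previous case. In (ii) a unique apex is singled out, either as the vertex opposite the strictly longest side or as the vertex incident to the two equal sides, and the two base robots head toward that apex; again the first arrival produces a tower. In (iii) every robot heads toward the centroid; fairness guarantees that every correct robot is eventually activated, and unless both correct robots move simultaneously and arrive together at the centroid (in which case a tower of correct robots forms there and the configuration is already gathered), any strict subset of motions breaks the symmetry and moves the configuration into (i) or (ii).

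The main obstacle I foresee is proving progress in the face of the uncooperative crashed robot within subcases (i) and (ii): a correct robot instructed to move toward a geometric target that coincides with the faulty robot must still produce monotone progress, and the intermediate configurations produced by a $\delta$-bounded step must not accidentally re-enter the equilateral subcase (iii). I would handle this by exhibiting a potential function---for example the diameter of $\SEC$ restricted to the correct robots, or the sum of pairwise distances among the correct robots---and verifying, independently of which robot is faulty, that the potential strictly decreases on every activation that leaves the tower case. Combined with the fairness of the scheduler this excludes infinite starvation and delivers convergence; closure for gathered configurations then follows exactly as in the tower case.
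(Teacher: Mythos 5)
This statement is not proved in the paper at all: it is Theorem~3.5 of Agmon and Peleg~\cite{AP06}, quoted verbatim in the ``Existing Results'' section, so there is no proof of the authors' own to compare yours against. Your attempt therefore has to stand on its own, and judged that way it is a plausible plan in the spirit of the known solution (case analysis on multiplicity, then on the geometry of the triangle), but it is not yet a proof.

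The genuine gap is in the convergence argument for the distinct-configuration cases. Because robots are oblivious and moves are only $\delta$-bounded, the distinguished target (middle point, apex, centroid) is recomputed from scratch after every activation, and it is \emph{not} stable under partial moves of an adversarially chosen subset of robots: a single robot advancing part of the way toward the apex can change which side is longest and hence which vertex is the apex, so ``the first arrival produces a tower'' does not follow without an argument that the designation stabilizes or that some quantity still makes progress. The potential functions you name do not supply that argument: both the diameter of the smallest enclosing circle of the \emph{correct} robots and the sum of pairwise distances among the \emph{correct} robots can strictly \emph{increase} on a legal step, namely whenever a correct robot is instructed to move toward the crashed robot and thereby moves away from the other correct robot (which your own case~(ii) forces when the crashed robot is the apex). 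Even the perimeter over all three robots is only non-increasing, not strictly decreasing, when the configuration is collinear. Until you exhibit a quantity that provably decreases by a fixed amount on every activation outside the tower case --- uniformly over which robot has crashed and over which subset the fair scheduler activates --- the possibility of an infinite execution in which the apex designation oscillates and no tower ever forms has not been excluded, and that is precisely the hard part of the theorem.
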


\begin{note}
	\label{note:imp:crash:(n,1)}
	Agmon and Peleg~\cite{AP06} also show (Th.~3.8) that weak gathering can be solved
	by a deterministic algorithm in an $(n,1)$-crash model for any $n \geq 3$, but under
	the restriction that the system is never in a configuration with more than
	one point of multiplicity. 
	
	When $n=3$, there cannot be more than one point of multiplicity, so this is not an
	issue. But, for $n>3$, although their algorithm does solve the \emph{distinct}
	gathering problem, it fails to solve self-stabilizing gathering.
	The definition of the latter problem indeed requires that any configuration leads
	to gathering,	including any one with several points of multiplicity.
\end{note}

%They define the notion of a \emph{hyperactive} algorithm as one in which a
%robot is never instructed to remain stationary unless gathering is already
%achieved, and prove the following lemma.
%
%\begin{lemma}[\cite{AP06}; Th.~4.1]
%  \label{th:aphyp:byzimposs}
%  In the SYm model, any deterministic non-hyperactive algorithm fails to
%  solve gathering in a $(3,1)$-Byzantine model.
%\end{lemma}
They also present also two highly relevant results relating to Byzantine models.

\begin{theorem}[\cite{AP06}; Th.~4.4]
  \label{th:ap:byzimposs}
  There is no deterministic
  algorithm that solves weak gathering in a $(3,1)$-Byzantine model under
  a fair scheduler.
\end{theorem}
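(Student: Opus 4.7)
My plan is a proof by contradiction leveraging the two-robot impossibility of Theorem~\ref{th:pre:nodet2robots}. Suppose a deterministic algorithm $\algo{A}$ solves weak gathering in the $(3,1)$-Byzantine model under a fair scheduler, and let $r_1, r_2$ be the two correct robots and $r_3$ the Byzantine robot controlled by the adversary. The central idea is that $r_3$ can ``simulate absence'' by maintaining a bilateral symmetry about the perpendicular bisector of $\overline{r_1 r_2}$, forcing $\algo{A}$ to behave on the two correct robots as an algorithm for two-robot gathering, which by assumption cannot succeed.

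Concretely, I would start from an initial configuration $\conf[0]$ in which $r_1$ and $r_2$ are at distinct positions $p_1, p_2$ and $r_3$ lies on the perpendicular bisector of $\overline{p_1 p_2}$. The adversary fixes the private coordinate systems of $r_1$ and $r_2$ as mirror images across that bisector, so that their local observations of the global configuration are identical. A fair schedule that always activates $r_1, r_2$ jointly then forces them, by determinism of $\algo{A}$, to compute identical local targets and hence mirror-symmetric destinations in the global frame. Between each joint step of the correct robots, $r_3$ is activated and repositions itself onto the new perpendicular bisector of $r_1, r_2$, preserving the bilateral symmetry invariant. While this invariant holds, $r_1$ and $r_2$ occupy distinct mirror-image positions and the configuration is not gathered.

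The main obstacle is the subcase in which $\algo{A}$, applied to a symmetric configuration, would move both correct robots to a common point $q$ on the axis of symmetry, potentially producing a gathered configuration. I would argue by a dichotomy on the adversary's choice of $r_3$'s position along the axis: if for \emph{every} such placement $\algo{A}$ brings $r_1, r_2$ together on the axis, then the restriction of $\algo{A}$ to the two correct robots (with $r_3$'s axial position entering only as a symmetric parameter) would solve gathering for two symmetric robots under a fair schedule, contradicting Theorem~\ref{th:pre:nodet2robots}. Otherwise, some placement of $r_3$ on the axis makes $\algo{A}$ compute non-coincident targets for $r_1$ and $r_2$, and the adversary selects that placement, preserving the invariant indefinitely and violating convergence. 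Formalizing this dichotomy while respecting fairness of the scheduler through all adversarial interleavings, and handling degenerate cases such as $\algo{A}$ keeping the correct robots stationary in symmetric configurations (which yields non-convergence directly), is the technical crux where the greatest care is required.
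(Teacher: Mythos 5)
First, note that the paper does not actually prove this statement: Theorem~\ref{th:ap:byzimposs} is quoted from Agmon and Peleg~\cite{AP06} (their Theorem~4.4), so your attempt has to be judged against that proof, which proceeds by a case analysis on how an algorithm behaves in distinct configurations versus configurations containing a tower, with the Byzantine robot acting \emph{asymmetrically} (e.g., luring a correct robot by joining or forming a point of multiplicity and then abandoning it). Your symmetry-based adversary is a genuinely different route, but it has a gap that I do not think can be repaired.

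The problem is the first branch of your dichotomy. You claim that if, for every placement of $r_3$ on the axis, \algo{A} drives $r_1$ and $r_2$ to a common point, then \algo{A} restricted to the correct robots would solve two-robot gathering, contradicting Theorem~\ref{th:pre:nodet2robots}. That reduction is invalid: the two-robot impossibility concerns a system in which each observation contains exactly two points and the configuration admits a symmetry that no deterministic rule can break. In your scenario each correct robot observes \emph{three} points, and the third point --- even though it lies on the axis of symmetry --- is precisely the external reference that breaks the two-robot symmetry. Concretely, the rule ``if all positions are distinct, move to the position of the robot equidistant from the other two; if there is a unique point of maximal multiplicity, move to it and never leave it'' defeats your adversary: both correct robots target $r_3$'s (absolute, non-receding) position, a fair scheduler activating them one at a time lets one of them arrive and create a tower, and the other then joins it, achieving weak gathering regardless of what $r_3$ subsequently does. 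So the first branch of the dichotomy produces no contradiction, and your adversary --- which voluntarily confines the Byzantine robot to the perpendicular bisector --- is too weak. Any correct proof must allow the Byzantine robot to exploit such ``follow the tower / follow the median'' rules, which is exactly what the bait-and-abandon construction in \cite{AP06} does; keeping $r_3$ symmetric forfeits that power.
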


%\fbox{TODO: check the claim}
%The above impossibility %result of Lemma~\ref{th:ap:byzimposs}
%also holds in a $(n,f)$-Byzantine model for any number of robots
%$n\geq 3$ and in the presence of any number of faulty robots $1\leq
%f\leq n$.

In contrast, they state a positive result in the fully-syn\-chronous
model---a model in which all robots are activated at every step.
\begin{theorem}[\cite{AP06}; Th.~5.3]
	\label{th:ap:byz3syncposs}
	Weak gathering can be solved deterministically in a $(3,1)$-Byzantine
	system in the fully-synchronous model.
\end{theorem}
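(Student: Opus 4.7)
The plan is to exploit the fully-synchronous scheduler: at every round all three robots observe the same global configuration simultaneously, so a deterministic algorithm can make both correct robots compute identical world-coordinate destinations. Assuming multiplicity detection (as in Theorem~\ref{th:ap:crashpos}), I propose the algorithm: in a univalent configuration, stay; in a 1-bivalent configuration, stay if the robot is on the tower, otherwise target the tower location; in a distinct configuration, move toward the centroid $g=(a+b+c)/3$ of the observed multiset. The tower location and the centroid are invariant under translation, rotation, reflection, and uniform scaling of the local frame, so both correct robots target the \emph{same} world point in every case.

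The first step is the one-round analysis. Univalent configurations are trivially stable. In a 1-bivalent configuration, whichever correct robot is on the tower stays, while any correct robot off the tower targets the same tower location; once the tower is within the reachable distance $\delta$, a single round completes the colocation. In a distinct configuration where $g$ is within the reachable distance of both correct robots, they both land exactly at $g$ in a single round, yielding either a gathered 1-bivalent configuration or a univalent one. Closure from any gathered configuration follows because correct robots on a tower both execute the ``stay'' branch, and the Byzantine robot alone cannot create a second point of maximal multiplicity.

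The second step is the convergence analysis for distinct configurations in which $g$ is not reachable in a single round. Here I would introduce a composite Lyapunov function, e.g.\ a weighted combination of $\lvert a-b\rvert$ and the radius of the smallest enclosing circle of the three observed positions. The central observation is that the Byzantine contributes only weight $1/3$ to $g$, whereas the two correct robots jointly contribute $2/3$ and pull $g$ toward their midpoint; combined with the bound $\delta_C$ on the Byzantine's per-round motion, this restricts how fast the Byzantine can drift $g$. I would use this to argue that the potential decreases by at least a $\delta$-dependent amount at every round, so that after finitely many rounds the configuration enters the favorable one-round regime handled above.

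The principal obstacle is precisely this convergence argument. A naive potential like $\lvert a-b\rvert$ need not strictly decrease in every round, because the Byzantine can place $c$ on the perpendicular bisector of $\overline{ab}$ so that both correct robots move orthogonally to $\overline{ab}$ and its length is preserved. Overcoming this will require a composite potential that couples the separation of the correct robots with an enclosing-radius term, together with a case analysis exploiting the fact that the Byzantine's bounded motion cannot sustain the orthogonal configuration indefinitely without also shrinking the enclosing circle.
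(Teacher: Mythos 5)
This statement is not proved in the paper at all: it is Theorem~5.3 of Agmon and Peleg~\cite{AP06}, quoted verbatim in the ``Existing Results'' subsection as imported background, so there is no in-paper proof to compare yours against. Your proposal should therefore be judged on its own merits as a reconstruction of the \cite{AP06} argument, and on those merits it has a genuine gap --- precisely the one you flag yourself as the ``principal obstacle'' and leave unresolved, namely convergence from distinct configurations.

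The gap is not just a missing computation; the centroid rule itself appears unsalvageable in this model. First, your repair strategy relies on ``the bound $\delta_C$ on the Byzantine's per-round motion,'' but the model provides no such bound: $\delta_r$ is a \emph{lower} bound on progress toward an unreachable target, not an upper bound on displacement, and a Byzantine robot behaves arbitrarily, so it may jump arbitrarily far each round (and may start arbitrarily far away). Second, with the target $g=(a+b+c)/3$ the adversary controls a full third of the destination: placing $c$ on the perpendicular bisector of $\overline{ab}$ at distance $d$ from its midpoint forces both correct robots to move in directions nearly orthogonal to $\overline{ab}$, so the separation contracts only by roughly $\delta\,|a-b|/d$ in that round; by letting $d$ grow sufficiently fast the infinite product of contraction factors $\prod_k\bigl(1-\delta/d_k\bigr)$ converges to a positive limit and the two correct robots never colocate. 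Adding the smallest-enclosing-circle radius to the potential does not help, since that radius is dominated by the Byzantine's position, which the algorithm cannot influence. What is needed --- and what the construction in \cite{AP06} actually supplies --- is a target-selection rule that does not grant the adversarial robot proportional influence over the common destination (it is driven by multiplicity points and by the discrete relative geometry of the three observed positions rather than by an average of them). As stated, your convergence step fails, so the proposal does not constitute a proof.
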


\begin{theorem}[\cite{AP06}; Th.~5.10]
	\label{th:ap:byzfsyncposs}
	Weak gathering can be solved deterministically in an $(n,f)$-Byzantine
	system in the fully-synchronous model for any $n\geq 3f+1$.
\end{theorem}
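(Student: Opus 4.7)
The plan is to design a deterministic algorithm in which, at every round, every correct robot computes a Byzantine-robust aggregation point $p^\star$ from its observation $\multiset{\Pi}$ and moves along the straight segment toward it. Because the model is fully synchronous, at every round all correct robots observe the same global configuration (modulo their private coordinate frames) and, provided that the computation of $p^\star$ is invariant under isometries, they all aim at the same absolute point in the plane. The core geometric ingredient is a ``safe point'' lemma: when $n\geq 3f+1$, any centerpoint of $\multiset{\Pi}$ (which is guaranteed to exist in $\mathbb{R}^2$) lies in $\Conv{\set{C}}$, where $\set{C}$ denotes the positions of the $n-f$ correct robots. Indeed, by the centerpoint theorem every closed halfspace containing $p^\star$ contains at least $\lceil n/3\rceil\geq f+1$ points of $\multiset{\Pi}$; if $p^\star$ lay outside $\Conv{\set{C}}$, the hyperplane-separation theorem would exhibit a closed halfspace containing $p^\star$ and no correct robot, so populated only by Byzantine robots, which is impossible since only $f$ such robots exist.

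Given this safe point, convergence follows by showing that the multiset of correct positions at round $t+1$ is contained in $\Conv{\set{C}_t}$: each correct robot moves along a segment from a point of $\set{C}_t$ to a point $p^\star\in\Conv{\set{C}_t}$, and such segments stay inside the convex hull. A sharper argument controlling the displacement of the farthest correct robot toward the interior of the hull then yields $D_t\to 0$, where $D_t$ denotes the diameter of $\set{C}_t$. Finite-time closure exploits the reachable-distance property of the model: as soon as $D_t<\delta$, with $\delta$ the minimum reachable distance over the correct robots, every correct robot reaches $p^\star$ exactly in one step, producing a configuration in which all $n-f$ correct robots share a single location of multiplicity at least $n-f\geq 2f+1>f$. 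Since no other location can host more than $f$ Byzantine robots, this location remains the unique point of maximum multiplicity, and a deterministic selection rule that gives priority to the maximum-multiplicity point re-elects it at every subsequent round, so the correct robots stay put.

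The main obstacle is engineering strict contraction of $D_t$: the centerpoint argument alone only ensures $p^\star\in\Conv{\set{C}_t}$, and a Byzantine adversary may conspire to keep $p^\star$ on the boundary of $\Conv{\set{C}_t}$, so a naive move yields only $D_{t+1}\leq D_t$. Overcoming this requires moving correct robots only a controlled fraction of the way toward $p^\star$, or combining the centerpoint with auxiliary geometric structures such as $\SEC(\multiset{\Pi})$ in order to obtain a contraction factor depending only on $n$ and $f$ and independent of the Byzantine positioning. This is where the bulk of the technical effort of the proof would concentrate.
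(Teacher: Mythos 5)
You should first be aware that the paper does not prove this statement: Theorem~\ref{th:ap:byzfsyncposs} is imported verbatim from Agmon and Peleg~\cite{AP06}, so there is no in-paper proof to compare yours against, and the algorithm of \cite{AP06} is in any case quite different from yours (it is built around creating and then exploiting locations of multiplicity larger than $f$, not around centerpoints). Your safe-point lemma itself is correct: with $n\geq 3f+1$, every closed halfplane through a centerpoint $p^\star$ contains at least $\lceil n/3\rceil\geq f+1$ observed positions, and the separation argument forces $p^\star\in\Conv{\set{C}}$. Two details would still need fixing to turn this into an algorithm: the centerpoint is not unique, so you must commit to a canonical, isometry-equivariant representative (e.g., the centroid of the centerpoint region, which is a convex polygon); and your closure rule (``give priority to the maximum-multiplicity point'') must be reconciled with the move-to-$p^\star$ rule into a single deterministic program, keeping in mind that before gathering completes the adversary can stack all $f$ Byzantine robots on one correct robot and manufacture a unique tower of multiplicity $f+1$ that is \emph{not} where the correct robots should gather.

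The genuine gap is the one you name yourself: you establish only $D_{t+1}\leq D_t$ and defer the strict contraction to ``the bulk of the technical effort.'' Without that step the argument yields neither asymptotic convergence nor finite-time gathering, so as written this is not a proof. The gap is closable, and the paper itself contains the tool you need: the limaçon argument in Part~1 of the proof of Theorem~\ref{th:probpossibility} shows that when every robot moves a distance at least $\delta$ toward a \emph{common} target contained in the smallest enclosing circle of the correct positions, those positions end up inside a circle whose diameter has shrunk by at least $\delta$; in the fully synchronous model all correct robots do move toward the common target $p^\star$ in every round, so none of the probabilistic machinery is needed. Tracking the smallest enclosing circle of the correct positions (which is what that argument controls) rather than the diameter of $\Conv{\set{C}_t}$ gives a per-round decrease by the constant $\delta$, hence all correct robots become mutually reachable within $O(D_0/\delta)$ rounds, after which your reachability and multiplicity-based closure arguments apply.
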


%\fbox{TODO: clean up following theorems}

\begin{theorem}[\cite{DP12}; Th.~1]
\label{th:dp:detgathposs}
With strong multiplicity detection, there exists a deterministic algorithm solving self-stabilizing 
gathering in the semi-synchronous model for $n$ robots if, and only if, $n$ is odd.
 \end{theorem}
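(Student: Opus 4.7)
The claim is a parity characterisation and the plan is to prove the two directions separately. The impossibility for even $n$ will be a symmetry-preservation argument against any deterministic algorithm, whereas the possibility for odd $n$ requires constructing an explicit algorithm that exploits the fact that odd parity rules out the equi-multiplicity bivalent configurations that drive the impossibility.

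\emph{Impossibility for even $n$.} I would take as adversarial initial configuration the balanced bivalent state with $n/2$ robots at a point $a$ and $n/2$ at a distinct point $b$, and let $\rho$ be the reflection across the perpendicular bisector of $ab$. The adversary fixes the initial local coordinate systems so that all $a$-robots share one frame $F$ and all $b$-robots share its $\rho$-image; anonymity and determinism then force every $a$-robot to compute a common global destination $p$ while every $b$-robot computes $\rho(p)$. The scheduler then reacts to $p$: if $p \neq \rho(p)$, activate both sides synchronously to obtain a new balanced bivalent configuration with locations $p$ and $\rho(p)$; if $p = \rho(p)$ lies on the axis of symmetry, activate only the $a$-side, giving the balanced bivalent configuration with locations $p$ and $b$, and in the canonical midpoint case the new reflection $\rho'$ shares its linear part with $\rho$ so the frame-alignment invariant survives. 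A careful induction on the frame alignment (handling the remaining off-midpoint subcases by alternating which side is activated next) keeps the execution balanced bivalent forever, contradicting convergence.

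\emph{Possibility for odd $n$.} I would design an algorithm branching on the observed multiset \multiset{\Pi} and its multiplicity profile via strong multiplicity detection. In a univalent configuration, robots remain in place (closure). If $\MaxMult{\multiset{\Pi}}$ is a singleton, every robot moves in a straight line toward this unique castle, whose multiplicity is non-decreasing under any SYm activation and strictly grows whenever a non-castle robot completes the last $\delta$ of its travel. If $\MaxMult{\multiset{\Pi}}$ contains several points, odd $n$ excludes the balanced bivalent case and, more generally, rules out the symmetric equi-multiplicity partitions that would obstruct selection; a canonical castle is then selected using $\SEC(\multiset{\Pi})$ and $\Conv{\multiset{\Pi}}$ together with the multiplicity profile, with residual ties broken by parity. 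In a distinct configuration, the algorithm canonically picks two robots (again via $\SEC$ and the convex hull) and routes one onto the other to create a tower, reducing to the previous cases.

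The main obstacle will be the correctness of the canonical castle-selection rule in the multi-castle case and the proof that intermediate configurations reached under any SYm schedule cannot regenerate a balanced symmetric configuration of the sort used in the even-case impossibility. I would handle this via a lexicographic potential (maximal multiplicity, then negative number of towers) that is weakly monotone under every activation and strictly improves whenever a non-castle robot advances at least $\delta$ along its prescribed route; odd parity forbids the symmetric tie that blocks the even case, so the potential is well-defined on all reachable configurations. Atomicity of the look--compute--move cycle together with the reachable-distance $\delta$ prevents the adversary from stalling this potential indefinitely, yielding convergence in finitely many activations.
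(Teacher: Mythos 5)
First, note that the paper does not prove this statement at all: Theorem~\ref{th:dp:detgathposs} is recalled verbatim from \cite{DP12} as background, so there is no in-paper proof to compare against; your proposal has to stand on its own. Your impossibility direction for even $n$ is essentially the standard (and correct) argument: maintain a balanced bivalent configuration by giving the two towers mirror-image local frames, activating both towers when the two computed destinations are distinct and only one tower when the common destination lies on the axis of symmetry. Two small caveats: you should state explicitly that the adversary may re-select local frames at every cycle (otherwise the mirror-symmetry of observations is not guaranteed to persist after the towers move), and that when a tower is stopped short of its target the adversary halts all of its robots at the same intermediate point so that exactly two locations survive. With those details the induction closes and the execution never reaches a univalent configuration.

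The possibility direction, however, has a genuine gap. Your central claim --- that odd $n$ ``rules out the symmetric equi-multiplicity partitions that would obstruct selection,'' so that a canonical castle can always be elected from $\SEC$, the convex hull, and the multiplicity profile --- is false. Odd $n$ only excludes the single configuration in which \emph{all} robots split into two towers of equal size; it does not exclude, e.g., $n=9$ as three towers of three robots at the vertices of an equilateral triangle, or $n=7$ as two maximal towers of three plus a singleton on their perpendicular bisector. In such configurations the points of maximal multiplicity form an orbit of a nontrivial symmetry group of the configuration, so no deterministic anonymous rule can single one of them out, and your lexicographic potential (maximal multiplicity, then negative number of towers) cannot be shown to make progress because the prescribed move is not even well defined there. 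The actual construction in \cite{DP12} resolves exactly these cases by directing robots toward a point that is \emph{invariant} under the configuration's symmetry group (rather than electing a tower), together with a lengthy case analysis ensuring that the bad balanced bivalent configuration is never created along the way; this is the heart of the odd-$n$ algorithm and is missing from your sketch. As written, the ``only if'' half is acceptable in outline but the ``if'' half is a plan rather than a proof, and its key enabling claim does not hold.
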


The following theorem synthetizes the recent results related to the probabilistic gathering under various 
multiplicity conditions. In particular,  \cite{IIK+13}, introduces the notions of local-weak and local-strong multiplicity.
Local multiplicity means that a robot is able to detect the multiplicity only for its current position. Local-weak multiplicity 
means that a robot can detect if at its local position there are one or more than one robots. 
Local-strong multiplicity means that a robot can detect the exact number of robots at its location.
\begin{theorem}[\cite{IIK+13}]
\label{th:iik:probgathposs}
	Probabilistic self-stabilizing gathering is possible in constant expected time 
	with local-strong multiplicity and exponential expected time with local-weak multiplicity.
	Probabilistic distinct gathering is possible in constant expected time with local-weak multiplicity.
\end{theorem}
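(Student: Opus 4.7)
The plan is to design three separate algorithms, one for each regime stated in the theorem, and to analyse their expected convergence times. In every case the algorithm combines a deterministic geometric primitive---identifying a canonical target such as the \SEC\ centre, a unique castle, or a canonical vertex of the convex hull---with a randomized symmetry-breaking step that is invoked only when the deterministic rule fails to single out a target.

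For local-strong, self-stabilizing gathering, I would first observe that any multivalent configuration either already possesses a unique castle (in which case each robot moves deterministically toward it along a trajectory that avoids creating spurious towers, e.g.\ via Voronoi edges) or is symmetric under a nontrivial isometry permuting the castles. In the latter case each robot at a non-castle location picks uniformly at random one of the castles and moves one step toward it; because local-strong multiplicity reveals the exact count at the robot's own location, after a single random round each robot can test whether a unique maximum multiplicity has emerged. A potential argument on the pair (number of castles, maximum multiplicity) shows that each round reduces the potential by a constant factor in expectation, giving the claimed $O(1)$ expected time.

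For local-weak, self-stabilizing gathering the main difficulty is that a robot at a tower cannot tell whether it sits on the unique castle or on one of several equivalent towers. The algorithm must therefore operate without leader information and progress relies on a joint random event involving many robots: singleton robots attempt to join a deterministically designated candidate tower while every robot already on a tower stays put with some fixed probability. One then shows that an unfavourable symmetric configuration is escaped only when $\Theta(n)$ independent coin flips agree, yielding the exponential upper bound. Closure requires that the coin-flip branch be guarded by a predicate false on univalent configurations, so that once all correct robots coincide they remain so.

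For local-weak distinct gathering the initial observation is a set of $n$ distinct points from which the \SEC\ centre can be deterministically computed. Each robot moves to this centre with probability $\tfrac{1}{2}$; with constant probability at least two robots arrive in the same round, forming a seed tower. Thereafter every remaining robot observes a geometrically distinguished attractor (the unique point at which the \SEC\ degenerates) and moves to it deterministically. The expected time is plainly $O(1)$. The hardest part of the overall proof is the local-weak self-stabilizing case: one must characterise the class of symmetric configurations reachable under the algorithm and show that every such configuration has an escape sequence of coin flips of probability only $2^{-\Theta(n)}$, so that the exponential expected time of the theorem is matched both as an upper bound (by a geometric-series argument over rounds) and as a tight lower bound (by exhibiting a worst-case symmetric family).
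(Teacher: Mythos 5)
This statement is not proved in the paper at all: Theorem~\ref{th:iik:probgathposs} is imported verbatim from \cite{IIK+13} and appears in the ``Existing Results'' section purely for reference, so there is no in-paper proof to compare your proposal against. Judged on its own terms, your sketch has a gap that is fatal rather than cosmetic. The paper defines, immediately before the theorem, that \emph{local} multiplicity (whether weak or strong) means a robot can detect multiplicity \emph{only at its own current position}. All three of your algorithms, however, require robots to identify a castle, a ``unique castle,'' or a ``deterministically designated candidate tower'' at a \emph{remote} location and move toward it. With only local multiplicity detection a robot observes the other locations as a bare set of points with no multiplicity information, so it cannot tell which remote point is the castle; your deterministic ``move toward the unique castle'' primitive is simply not computable in this model. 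The entire difficulty of \cite{IIK+13} is to make progress when towers are invisible from outside, typically by exploiting the local test ``am I alone here or not'' to decide whether to stay or to move, combined with purely geometric (multiplicity-free) target selection; your proposal sidesteps exactly this difficulty.

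Two further points would need repair even if the model issue were fixed. First, in the local-strong case you claim that ``after a single random round each robot can test whether a unique maximum multiplicity has emerged''---again impossible locally---and that a potential decreasing ``by a constant factor in expectation'' yields $O(1)$ expected time; a constant-factor decrease per round gives logarithmic, not constant, expected time unless you additionally show the success probability per round is bounded below by a constant independent of $n$ and of the initial configuration. Second, your closure argument for the local-weak self-stabilizing case (``guard the coin-flip branch by a predicate false on univalent configurations'') presupposes a robot can recognize that the configuration is univalent, which it can do locally only if it knows it is collocated with \emph{all} other robots; with local-weak detection it only knows it is collocated with at least one. None of these steps is salvageable without substantially redesigning the algorithms around the locality restriction.
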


The next result states the possibility of wait-free%
\footnote{%
	An algorithm is said to be wait-free if it tolerates the crash of up to $n-1$ robots.
}
distinct 
gathering (i.e., the initial configuration must exclude balanced bivalent configurations) in the semi-synchronous model, when robots have 
strong multiplicity detection and chirality knowledge.
\begin{theorem}[\cite{BDT13}]
\label{th:bdt:waitfreegathposs}
	In the semi-synchronous model, wait-free gathering is possible with fair scheduler, under the following assumptions: chirality knowledge and strong multiplicity detection. 
\end{theorem}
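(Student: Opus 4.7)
The plan is to establish the claim by constructive algorithm design, exploiting both chirality and strong multiplicity detection as symmetry-breaking and progress-measuring devices respectively. The overall strategy is to turn ``presence of a unique castle'' into a stable invariant, so that once such a castle has formed it attracts every non-crashed robot regardless of subsequent crashes, and to ensure that from any starting configuration (that is not balanced bivalent), a unique castle is eventually created.

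First, I would split the algorithm into two regimes, distinguished via the strong multiplicity observation. If the current configuration admits a unique point of maximal multiplicity~$c$, i.e., $|\MaxMult{\conf}|=1$, then every activated robot not located at~$c$ is instructed to move straight toward~$c$. Crashed robots not at~$c$ remain fixed but are irrelevant, because the closure rule only requires correct robots to end up at the unique castle; crashed robots at~$c$ only reinforce the castle. Fairness of the scheduler ensures that every correct robot is eventually activated and, by the progress condition via $\delta$, reaches~$c$ in finitely many steps. The only subtle point is that a single move must not create a \emph{second} point with multiplicity equal to~$\mul{c}$; this is handled by moving along the straight line toward~$c$ and observing that the contribution to the castle is monotonic while no other point accumulates robots.

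Second, when $|\MaxMult{\conf}|>1$ or no tower is present at all, I would use chirality together with \SEC to deterministically designate one target location that, once reached by at least one more robot than any competing location, yields a unique castle. A standard way is to order the candidate locations (e.g., the vertices of $\Conv{\multiset{\Pi}}$ or $\SEC(\multiset{\Pi})$) using the oriented cyclic sequence provided by chirality, and designate the first tower (or, if none, the first vertex) in that order as the destination of all \emph{other} robots. The exclusion of balanced bivalent configurations in the initial state is essential here: in a balanced bivalent setting, chirality alone cannot single out one of the two towers, but our precondition rules this case out, and the movement rules ensure that it cannot be re-entered from any reachable configuration.

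The key invariant to maintain across both regimes is: \emph{a unique castle, once created, persists forever}, even under an adversarial scheduler that may crash up to $n-1$ robots at arbitrary times. The hardest step is showing that in the transition regime (before a unique castle exists) no sequence of partial activations and crashes can produce two equal-maximum towers. This is argued by case analysis on how a move increments multiplicities: only the designated target of the current phase can see its multiplicity strictly grow, so the pre-move maximal multiplicity is strictly exceeded after the move, which is detectable via strong multiplicity at the next cycle and triggers the first regime. Combining this closure property with the progress guarantee under the fair SYm scheduler (every correct robot activated infinitely often, each move of at least~$\delta$) yields convergence to a gathered configuration in finitely many steps, which is the wait-free gathering specification as defined in Section~\ref{sec:problem}.
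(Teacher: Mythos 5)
This statement is Theorem~\ref{th:bdt:waitfreegathposs}, which the paper imports from \cite{BDT13} and states \emph{without proof} in its survey of existing results; there is therefore no in-paper proof to compare your argument against, and your proposal has to stand on its own as a reconstruction of the construction of \cite{BDT13}. As such a reconstruction it captures a plausible high-level shape (a two-regime algorithm in which a unique maximal-multiplicity point, once formed, absorbs all correct robots), but it has two concrete gaps at exactly the places where the real work lies.

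First, the symmetry-breaking step does not go through as described. Chirality gives all robots a common orientation, hence a common \emph{cyclic} order on the vertices of the convex hull or on the set of maximal towers, but a cyclic order has no canonical first element: in a rotationally symmetric configuration (e.g., $n$ robots at the vertices of a regular polygon, or several equal-multiplicity towers arranged symmetrically around the center of \SEC), different robots will designate different ``first'' locations, and your deterministic election of a single target fails. This is not a technicality---symmetry is the fundamental obstruction to deterministic gathering in the semi-synchronous model (cf.\ Theorem~\ref{th:dp:detgathposs})---so a correct proof must say how the designated point is defined invariantly. Second, your claim that while robots converge on the designated point ``no other point accumulates robots'' is false for the naive straight-line move: robots that stop after travelling only $\delta_r$ along a common ray toward the target can pile up at intermediate locations and create new towers of maximal multiplicity. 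The paper's own Appendix~\ref{apx:need:side-move} exhibits precisely this failure producing an infinite cycle, which is why Algorithm~\ref{alg:ft-gathering} needs the side move and why concurrent activation additionally requires the disambiguated side move of Section~\ref{sec:side-move:disamb} together with an argument in the style of Proposition~\ref{prop:ft-prob:castle++} to rule out the creation of a second castle. Your invariant ``a unique castle, once created, persists forever'' is the right target, but the argument you give does not establish it.
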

%Propose algorithm that relies on computing the Weber point.
The following results refer to the possibility and impossibility of convergence and, by consequence, of gathering, when some robots in the system have Byzantine behavior.

\begin{theorem}[\cite{BPT10}]
\label{th:bpt:convimposs-fs}
	Byzantine-resilient convergence in one-dimensional robot networks is impossible under a fully-syn\-chronous scheduler when $n\leq 2f$.
\end{theorem}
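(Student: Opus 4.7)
The plan is to proceed by contradiction. Suppose an algorithm $A$ solves Byzantine-resilient convergence in $1$-dimensional robot networks under a fully-synchronous scheduler with $n\leq 2f$. The strategy is to combine a partitioning/indistinguishability argument---exploiting that, when $n\leq 2f$, both ``sides'' of a bivalent configuration could a priori play the role of the correct set---with an adversarial drift strategy for the Byzantine robots.

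Since $n\leq 2f$, I first partition the robots into two groups $G_1$ and $G_2$ of sizes $n-f$ and $f$ respectively, each of cardinality at most $f$. Because every robot's local coordinate system has arbitrary origin, unit distance, and orientation, $A$ is necessarily translation-, scale-, and reflection-invariant. Consequently, on any bivalent observation of the form $\{x^a,y^b\}$ with $y>x$, a correct robot at $x$ must compute a target of the form $x+(y-x)\,g_{a,b}$, where $g_{a,b}=A(\{0^a,1^b\})$ is a scalar depending only on the multiplicities; symmetrically a correct robot at $y$ targets $y-(y-x)\,g_{b,a}$. I abbreviate $g_1=g_{|G_1|,|G_2|}$ and $g_2=g_{|G_2|,|G_1|}$.

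Next I would examine two adversarial scenarios starting from the configuration $C_0$ in which $G_1$ sits at position $0$ and $G_2$ at position $1$. In scenario $E_a$, $G_1$ is correct and $G_2$ is Byzantine; each Byzantine robot repeatedly targets $+\infty$, which under the motion clause of the model forces motion of at least $\delta$ per activation, so the Byzantine positions $y_t$ diverge linearly to $+\infty$. The correct robots' common position evolves as $x_{t+1}=x_t+(y_t-x_t)\,g_1$, clipped as necessary by the reachable-distance rule. If $g_1\neq 0$, then once $|y_t-x_t|\,|g_1|$ exceeds $\delta$ the motion rule drives $x_t$ monotonically by at least $\delta$ per step in a fixed direction, so $x_t$ has no finite limit and the correct set fails to converge; this forces $g_1=0$. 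A mirror scenario $E_b$ (roles swapped, Byzantine drifting in the opposite direction) forces $g_2=0$ by the symmetric argument.

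Finally I would instantiate the fault-free execution from the same $C_0$, which is a legitimate $0$-Byzantine instance and therefore must be handled by $A$. With $g_1=g_2=0$, every robot at $0$ and every robot at $1$ computes its current position as target and does not move, so the configuration is invariant forever and the correct robots never converge---contradicting the hypothesis on $A$. The main obstacle is making the divergence step genuinely robust in the presence of bounded reachable distances $\delta_r$: since the correct robot's update can be clipped below the ideal value $(y_t-x_t)g_1$, one must carefully invoke the ``at least $\delta$ toward an unreachable target'' clause to conclude the monotone drift, and also argue that $x_t$ cannot be rescued by overshooting Byzantine positions (which would amount to the correct robots tracking a diverging attractor). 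A secondary subtlety is rigorously justifying the translation/scale/reflection invariance of $A$ that collapses the algorithm to the two scalars $g_1,g_2$; this rests on the paper's explicit freedom in the choice of local coordinate systems, so it reduces to a careful anonymity argument rather than a substantial technical step.
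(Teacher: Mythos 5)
First, note that the paper never proves this statement: Theorem~\ref{th:bpt:convimposs-fs} is quoted verbatim from \cite{BPT10} in the ``Existing Results'' section, so there is no in-paper proof to compare against and your attempt must stand on its own.

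Your reduction of the algorithm to the two scalars $g_1,g_2$ via translation/scale/reflection invariance is the standard move and is fine, as is the final step (a frozen fault-free bivalent configuration violates convergence). The genuine gap is in the step that is supposed to force $g_1=g_2=0$, and it comes from the definition of convergence. In both of your adversarial scenarios $E_a$ and $E_b$ the \emph{entire} set of correct robots sits at a single point at every instant: they start collocated and, being anonymous, oblivious and deterministic, they compute the same target, so (as your recurrence $x_{t+1}=x_t+(y_t-x_t)g_1$ makes explicit) you treat them as one moving cluster. Under the convergence specification of \cite{BPT10} --- for every $\epsilon>0$ there is a time after which all \emph{correct} robots are within distance $\epsilon$ of \emph{each other} --- such an execution satisfies convergence vacuously no matter how far the cluster drifts, so $E_a$ imposes no constraint on $g_1$ and $E_b$ none on $g_2$. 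Concretely, the ``move to the midpoint'' rule $g_1=g_2=\tfrac12$ passes both of your scenarios and also converges in the fault-free execution from $C_0$, so your argument derives no contradiction for it. Your chain of deductions only closes if convergence is read as ``the positions of the correct robots admit a common finite limit point,'' which is not the property the cited theorem is about (and under the pairwise reading your scenarios are degenerate for $n=2$, $f=1$, where a lone correct robot converges trivially).

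A correct proof has to start from a configuration in which the correct robots occupy at least two distinct locations and show that the $f\geq n-f$ Byzantine robots can keep those two correct sub-groups separated forever --- e.g., by making the view from each correct location indistinguishable from a mirror-image world in which the \emph{other} side is the faulty one, so that no deterministic rule can make the two sides commit to approaching each other. That is precisely where the hypothesis $n\leq 2f$ does its work, and it is absent from your construction, which never exploits the count $f\geq n-f$ beyond legitimizing the role swap between $E_a$ and $E_b$.
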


\begin{theorem}[\cite{BPT10}]
\label{th:bpt:convimposs-kbounded}
	Byzantine-resilient convergence In one-dimentional robot networks is impossible under a fair $k$-bounded scheduler ($k>1$) when $n\leq 3f$.
\end{theorem}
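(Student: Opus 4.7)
The plan is to prove the theorem by contradiction using a classical partition-and-indistinguishability argument, analogous to the $n\geq 3f+1$ lower bound for Byzantine approximate agreement. Assume a deterministic algorithm $\mathcal{A}$ solves Byzantine-resilient convergence on the line under a fair $k$-bounded scheduler with $k>1$ for some $n\leq 3f$. It suffices to treat the tight case $n=3f$: smaller values of $n$ reduce to it by letting the absent robots be additional Byzantine ones that never move, so the impossibility for $3f$ implies it for every smaller $n$.

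I would partition the $3f$ robots into three groups $G_1, G_2, G_3$, each of size exactly $f$, and place them initially at three distinct positions $p_1<p_2<p_3$ on the line, say $-1,0,1$. From this common initial configuration I construct three executions $E_1,E_2,E_3$ of $\mathcal{A}$; in $E_i$ the group $G_i$ plays the Byzantine role while the other two groups are correct. The Byzantine strategy in each $E_i$ is a mimicry of how $G_i$ would behave in another scenario in which it is correct, so that the two correct groups in $E_i$ collect the same sequence of observations as the corresponding correct groups in $E_j$ for $j\neq i$. The hypothesis $k>1$ is essential at this step: it grants the scheduler the temporal slack to activate groups at uneven rates within each bounded window, which is exactly what the Byzantine group needs in order to replay a chosen trajectory in lock-step with the correct groups; a round-robin ($k=1$) schedule does not afford this flexibility, which is why the statement excludes $k=1$.

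Invoking the convergence guarantee of $\mathcal{A}$ inside each $E_i$ yields a limit point $q_i$ for the two correct groups in $E_i$. The pairwise indistinguishability between scenarios, viewed from the group that is correct in both, then forces $q_1=q_2=q_3=q^{\ast}$. The contradiction is then geometric: each group must, in the scenario where it is correct, travel from its initial position to $q^{\ast}$; but the three groups start from three distinct locations, and comparing the trajectories that a ``witness'' group observes across the three executions produces inconsistent displacements once $p_1,p_2,p_3$ are placed far enough apart relative to the single-step motion budget $\delta$.

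The main obstacle, as I see it, is the rigorous construction of the Byzantine mimicking schedule and the verification that it is admissible: the schedule must respect the fair $k$-bounded constraint at all times, the prescribed Byzantine motions must fit within the reachable-distance bound in every step, and the indistinguishability between scenarios must persist over the entire infinite execution rather than just a finite prefix. Reconciling fairness with the adversary's need to interleave activations is the technically delicate heart of the argument; the concluding geometric contradiction, once indistinguishability has been secured, follows along routine lines.
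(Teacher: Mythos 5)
The first thing to note is that the paper does not prove this statement at all: Theorem~\ref{th:bpt:convimposs-kbounded} is imported verbatim from \cite{BPT10} and listed among the ``Existing Results'' of Section~\ref{sec:problem}, so there is no in-paper proof to compare yours against. Judged on its own terms, your proposal is a plausible high-level plan in the style of the classical three-group partition argument for Byzantine approximate agreement, but as written it is a sketch with the decisive steps missing. You yourself defer ``the technically delicate heart of the argument,'' namely the construction of the Byzantine mimicking schedule together with the verification that it is simultaneously fair, $k$-bounded, consistent with the reachable-distance semantics, and indistinguishability-preserving over an entire infinite execution; without that construction there is no proof, only a statement of intent.

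Two further points would need repair even if that construction were supplied. First, your explanation of why $k>1$ is needed does not identify the actual mechanism: the slack that $k>1$ buys the adversary is the ability to activate one correct robot twice between consecutive activations of another, which is what lets the Byzantine group repeatedly drag a correct robot before the rest of the network can react; under a $1$-bounded schedule the activation rates are locked together and this leverage disappears. Saying the adversary needs ``temporal slack to replay a trajectory in lock-step'' does not pinpoint where a $1$-bounded schedule would break your construction, which it must, since the theorem is false for $k=1$ in this range of $n$. Second, your concluding contradiction --- inconsistent displacements once $p_1,p_2,p_3$ are placed far apart relative to the single-step motion budget $\delta$ --- does not work against an asymptotic convergence property: convergence imposes no bound on the number of steps, so a correct group may travel arbitrarily far from its initial position, and separating the initial positions by a large multiple of $\delta$ costs the algorithm only time, not correctness. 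The contradiction has to come instead from exhibiting an admissible infinite execution in which the diameter of the correct robots' positions stays bounded away from zero forever, which is a different and more delicate invariant than the one you gesture at.
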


\begin{theorem}[\cite{BPT09}]
\label{th:bpt:convimposs-async}
	Starting from a trivalent configuration, no cautious algorithm is able to achieve byzantine-resilient convergence in uni-dimensional networks under an asynchronous scheduler when $3f < n \leq 5f$.
\end{theorem}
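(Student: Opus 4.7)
I would proceed by contradiction: assume that some cautious algorithm $\mathcal{A}$ achieves Byzantine-resilient convergence from every trivalent initial configuration in the one-dimensional asynchronous model with $3f < n \le 5f$. The plan is to exhibit a trivalent initial configuration, a Byzantine adversary, and an asynchronous schedule whose combined execution keeps the valence equal to three at every step, thereby contradicting convergence.

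The pivotal tool is the standard one-dimensional characterization of cautious moves: given an observation whose sorted positions are $x_1 \le x_2 \le \cdots \le x_n$, a cautious destination must lie in the interval $[x_{f+1},\,x_{n-f}]$, since this is the intersection of the convex hulls of all subsets of $n - f$ potentially-correct robots. The range $3f < n \le 5f$ (equivalently $2f < n - f \le 4f$) provides enough Byzantine robots to manipulate both endpoints of this interval simultaneously once the correct robots are spread over three clusters of moderate size. Concretely, I would place the $n - f$ correct robots into three clusters at collinear positions $L < M < R$ with sizes $c_L, c_M, c_R$ each chosen inside the window allowed by the hypothesis, and park the $f$ Byzantine robots at a fourth auxiliary location from which they may be dispatched freely. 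The adversary's strategy, exploiting asynchrony, is then to relocate the Byzantine robots between any two successive correct-robot observations so that the activated robot sees an observation whose cautious interval is confined to its own cluster, forcing it to remain essentially there.

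The principal obstacle, which I expect to take up the bulk of the proof, is establishing the invariant that the adversary can maintain three non-empty clusters indefinitely. This will require a careful case analysis on which cluster the next activated correct robot belongs to, verifying in each case that the Byzantine robots can manufacture an observation whose cautious interval is a subset of that cluster, and then checking that the resulting configuration after the correct move and the next Byzantine relocation is again a trivalent configuration of the same type (up to relabelling of $L, M, R$ and a possible small translation). Because the scheduler is asynchronous, Byzantine robots may perform their relocations in arbitrarily long stretches between the Observe and Move phases of any correct robot, which is what makes the strategy realizable; the delicate point is a potential-function argument bounding the minimum inter-cluster distance from below throughout the iteration. Once this invariant is established, no execution of $\mathcal{A}$ under this adversary reaches a univalent or even bivalent configuration, and the impossibility follows.
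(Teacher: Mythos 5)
The paper does not actually prove this statement: Theorem~\ref{th:bpt:convimposs-async} is quoted verbatim from \cite{BPT09} in the survey of existing results, so there is no in-paper proof to compare yours against. I can therefore only assess your sketch on its own terms, and there it has a genuine gap.

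Your central invariant --- that before each activation the adversary can reposition the $f$ Byzantine robots so that the activated robot's cautious interval $[x_{f+1},x_{n-f}]$ is confined to its own cluster --- is arithmetically impossible when $n>3f$, which is precisely the regime of the theorem. To pin a robot of the left cluster $L$ you need $x_{n-f}$ to fall at or left of $L$ even after donating all $f$ Byzantine positions to the left side, which forces $c_L\geq n-2f$; symmetrically pinning the right cluster forces $c_R\geq n-2f$; and pinning the middle cluster requires $c_L\leq f$ and $c_R\leq f$. Together these give $n-2f\leq f$, i.e.\ $n\leq 3f$, contradicting $3f<n$. So the ``bulk of the proof'' that you defer to a case analysis is exactly the part that cannot be carried out: with $n>3f$ there is always at least one cluster whose robots, upon a fresh observation, are entitled (indeed, may be required) to move toward another cluster, and no static or per-observation Byzantine placement prevents this. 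This is consistent with the companion result (Theorem~\ref{th:bpt:convimposs-kbounded}) that under $k$-bounded schedulers the impossibility threshold is $n\leq 3f$: beating $3f$ requires something more than Byzantine repositioning. That something is the asynchrony itself --- the decoupling of Observe from Move, so that correct robots can be held with \emph{pending} moves computed from stale observations and released at adversarially chosen moments, letting a vacated location be re-populated and producing a perpetual oscillation among three points. Your sketch mentions the Observe/Move decoupling only as a window for Byzantine relocation, not as the mechanism that sustains trivalence, so the essential idea of the known argument is missing and the strategy as described would fail.
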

% !TEX root = gathering-journal.tex

\section{Gathering in Fault-Free Environments}
\label{sec:ff-gathering}
In this section, we refine results showing the impossibility of
gathering~\cite{Pre05,AP06} by proving first that these results hold
even under more restrictive schedulers.
Interestingly, we also prove that some of these impossibility results
hold even in probabilistic settings.  Additionally, to circumvent
these impossibility results, we propose a probabilistic algorithm that
solves the fault-free gathering, under a bounded scheduler. 

First, we introduce two support lemmas that apply to any gathering
algorithm (deterministic or probabilistic) under any form of centralized
scheduler.

\begin{lemma}
	\label{lem:centralized:valence-one}
	Under a centralized scheduler and in any execution,
	the valence of two consecutive configurations
	differs by at most one.
\end{lemma}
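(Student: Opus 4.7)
The plan is to argue by direct case analysis on the single step \conf[t] \rightarrow \conf[t+1]. Under a centralized scheduler, exactly one robot $r$ is activated during this transition; every other robot keeps its position. Hence the multiset of positions in \conf[t+1] differs from that of \conf[t] only by the removal of $r$'s old position $p$ and the insertion of its new position $p'$.

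I would then observe that the change in the \emph{set} of occupied points (and therefore in the valence) is governed by two independent binary conditions: whether $r$ was the only occupant of $p$ in \conf[t] (call this being a singleton), and whether $p'$ was already occupied in \conf[t] by some other robot (call this landing on a tower). Enumerating the four combinations:
\begin{itemize}
\item $r$ singleton and $p'$ empty: the point $p$ disappears while $p'$ is new, so $\val{\conf[t+1]} = \val{\conf[t]}$.
\item $r$ singleton and $p'$ occupied: $p$ disappears and $p'$ was already counted, so $\val{\conf[t+1]} = \val{\conf[t]} - 1$.
\item $r$ not singleton and $p'$ empty: $p$ remains occupied by the other robots and $p'$ is new, so $\val{\conf[t+1]} = \val{\conf[t]} + 1$.
\item $r$ not singleton and $p'$ occupied: both $p$ and $p'$ were already counted, so $\val{\conf[t+1]} = \val{\conf[t]}$.
\end{itemize}
The degenerate sub-case $p' = p$ (the robot stays in place) falls into either the first or fourth bullet and also yields no change.

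In each case $|\val{\conf[t+1]} - \val{\conf[t]}| \leq 1$, which is what we had to show. I do not expect any real obstacle here; the only thing worth stating carefully is that the centralized assumption is exactly what forbids several robots from leaving distinct points or populating new points simultaneously, which is what would otherwise allow the valence to jump by more than one in a single step.
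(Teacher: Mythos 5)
Your proposal is correct and follows essentially the same argument as the paper: the paper's proof also reduces the step to the single active robot's move and classifies it by the multiplicities of the departure and destination points (its three categories ``distinct $\rightarrow$ multiple'', ``multiple $\rightarrow$ multiple or distinct $\rightarrow$ distinct'', and ``multiple $\rightarrow$ distinct'' are exactly your four cases with the two no-change cases merged). Your explicit handling of the degenerate case $p'=p$ is a minor addition that the paper leaves implicit.
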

\begin{proof}
	The scheduler being centralized, at most one robot is active at each
	step. Regardless of the algorithm, the movement of the active robot
	falls into one of three categories, depending on the respective
	multiplicities of the departure and destination locations of the
	movement:
	\begin{description}
	\item[\textbf{Move~1:}] distinct $\rightarrow$ multiple.\\
	  The valence decreases by one.
	\item[\textbf{Move~2:}] multiple $\rightarrow$ multiple or distinct $\rightarrow$ distinct.\\
		The valence is unchanged.
	\item[\textbf{Move~3:}] multiple $\rightarrow$ distinct.\\
	  The valence increases by one.
	\end{description}
	Therefore, when the scheduler is centralized, the valence between any
	two consecutive configurations differs by at most one.
	\qed
\end{proof}

\begin{lemma}
	\label{lem:centralized:bivalent}
	Under a centralized scheduler, every execution fragment that starts in
	a multivalent configuration and ends in a univalent
	configuration contains a 1-bivalent configuration.
\end{lemma}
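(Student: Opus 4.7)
The plan is to leverage Lemma \ref{lem:centralized:valence-one} directly, together with a short case analysis of the final transition into the univalent configuration. Since valences differ by at most one between consecutive configurations under a centralized scheduler, and the execution fragment starts with valence $\val{\conf} \geq 2$ and ends with valence $1$, the integer-valued sequence of valences must pass through the value $2$ immediately before reaching $1$. In particular, there exists a transition $\conf[t] \rightarrow \conf[t+1]$ inside the fragment such that $\val{\conf[t]} = 2$ (bivalent) and $\val{\conf[t+1]} = 1$ (univalent).

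The remaining step is to argue that this bivalent configuration $\conf[t]$ must in fact be $1$-bivalent. First I would appeal to the centralized assumption: at step $t$ exactly one robot $r$ is activated, and its move is the only one producing the transition $\conf[t] \rightarrow \conf[t+1]$. Let the two occupied locations in $\conf[t]$ be $\point{p}$ and $\point{q}$, with multiplicities $m_p$ and $m_q$ satisfying $m_p + m_q = n$. For $\conf[t+1]$ to be univalent, one of the two locations must become unoccupied, and no third location may be created. Hence $r$ must depart from one of $\{\point{p},\point{q}\}$ (say $\point{p}$) and land exactly on the other $\point{q}$; moreover, after $r$ leaves $\point{p}$, nobody else remains there, which forces $m_p = 1$. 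Consequently $\conf[t]$ is $1$-bivalent, and the fragment contains the desired configuration.

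The main obstacle I anticipate is making sure no hidden case escapes the case analysis of the final transition — in particular, ruling out the possibility that the moving robot goes to a brand new location (which would keep valence at $2$ rather than reduce it to $1$) or remains on $\point{p}$ (which would give $\val{\conf[t+1]} = 2$, contradicting our choice of transition). Both are excluded by the univalence of $\conf[t+1]$, so the argument is tight. Apart from this, the proof is essentially a two-line consequence of Lemma \ref{lem:centralized:valence-one} and the definition of $1$-bivalent configuration.
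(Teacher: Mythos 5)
Your proposal is correct and follows essentially the same route as the paper's own proof: invoke Lemma~\ref{lem:centralized:valence-one} to locate a bivalent configuration immediately preceding the univalent one, then use the fact that only a single robot moves under a centralized scheduler to conclude that the evacuated location held exactly one robot, making that configuration 1-bivalent. Your explicit case analysis of the final transition merely spells out the step the paper compresses into its last sentence, so nothing further is needed.
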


\begin{proof}
  By Lemma~\ref{lem:centralized:valence-one} and the centralized scheduler,
  we know that the valence between any two consecutive configurations differs
  by at most one.
        Since the execution fragment ends in a univalent configuration, the
        last multivalent configuration in the fragment must be bivalent. This
        configuration necessarily exists since the fragment starts in a
        multivalent configuration.

        Furthermore, since only one robot moves between any two configurations
        (centralized scheduler), the last bivalent configuration is 1-bivalent
        with the distinct robot doing the last move.
        \qed
\end{proof}

\subsection{Deterministic Gathering}
We begin by proving a theorem that strengthen the impossibility result of
Prencipe~\cite{Pre05} (Lemma~\ref{th:pre:nomultimposs}), as applied to the
problem of self-stabilizing gathering.
The theorem proves that the impossibility not only holds under a fair
scheduler, but also under a round-robin scheduler.

	\begin{figure*}
		\centering
		\begin{tabular}{*{3}{>{\centering}p{0.22\textwidth}<{}}@{}>{\centering}p{0.17\textwidth}<{}}
		%
		% Create empty space for the returning transition C3->C1
			~ & ~ & ~ & ~
		\tabularnewline
			~ & ~ & ~ & ~
		\tabularnewline
		%
		% Draw the three configurations
			\begin{tikzpicture}[remember picture,on grid,node distance=-0.25mm]
				\node[coordinate] (Lloc) at (0,0) {};
				\node[coordinate] (Rloc) at (1,0) {};
				\node[robot,       label=above:{$r_1$}] (r1) at (Lloc) {};
				\node[robot,       label=below:{$r_2$}] (r2) [below=of Rloc] {};
				\node[active_robot,label=above:{$r_{3\cdots n}$}] (r3) [above=of Rloc] {};
				\draw[move] (r3) -- (r1);
				\node[configuration] (Config1) at (0.5,0) {};
			\end{tikzpicture}
		&
			\begin{tikzpicture}[remember picture,on grid,node distance=-0.25mm]]
				\node[coordinate] (Lloc) at (0,0) {};
				\node[coordinate] (Rloc) at (1,0) {};
				\node[robot,       label=below:{$r_{3\cdots n}$}] (r3) [below=of Lloc] {};
				\node[robot,       label=below:{$r_2$}] (r2) at (Rloc) {};
				\node[active_robot,label=above:{$r_1$}] (r1) [above=of Lloc] {};
				\draw[move] (r1) -- (r2);
				\node[configuration] (Config2) at (0.5,0) {};
			\end{tikzpicture}
		&
			\begin{tikzpicture}[remember picture,on grid,node distance=-0.25mm]]
				\node[coordinate] (Lloc) at (0,0) {};
				\node[coordinate] (Rloc) at (1,0) {};
				\node[robot,       label=above:{$r_{3\cdots n}$}] (r3) at (Lloc) {};
				\node[robot,       label=below:{$r_1$}] (r1) [below=of Rloc] {};
				\node[active_robot,label=above:{$r_2$}] (r2) [above=of Rloc] {};
				\draw[move] (r2) -- (r3);
				\node[configuration] (Config3) at (0.5,0) {};
			\end{tikzpicture}
		&
			%
			% The legend
			%
			\begin{tikzpicture} [
				column sep=2mm,
				column 1/.style={anchor=base},
				column 2/.style={anchor=west,font=\itshape}
			]
			\matrix {
				\node [robot] {};
				& \node {robot};
				\\
				\node [active_robot] {};
				& \node {active robot};
				\\
				\node [coordinate] (x1) {};
				\draw [move] ($(x1)-(.25,0)$) -- ($(x1)+(.25,0)$);
				& \node {move};
				\\
				\node [coordinate] (x2) {};
				\draw [transition] ($(x2)-(.25,0)$) -- ($(x2)+(.25,0)$);
				& \node {transition} ;
				\\
				\node [configuration,rounded corners=2pt,minimum width=5mm,minimum height=4mm] {};
				& \node {configuration};
				\\
			};
			\end{tikzpicture}
		\tabularnewline
		%
		% Print the labels for the configurations
%			$r_1$ active & $r_2$ active & $r_3$ active
%		\tabularnewline
		%
		% Print the labels for the configurations
			\CONF[1] & \CONF[2] & \CONF[3] & ~
		\end{tabular}
		%
		% Draw the transitions between configurations
		\begin{tikzpicture}[remember picture,overlay,very thin,node distance=5mm]
			\node[coordinate] (a) [above=of Config3] {};
			\node[coordinate] (b) [above=of Config1] {};

			\draw[transition] (Config1) -- node[below]{$r_{3\cdots n}$ move} (Config2); 
			\draw[transition] (Config2) -- node[below]{$r_1$ moves} (Config3);
			\draw[transition] (Config3) -- (a) -- node[above]{$r_2$ moves} (b) -- (Config1);
		\end{tikzpicture}
		\caption{Proof of Theorem~\ref{th:detimpossibility-ff}:
			From a 1-bivalent configuration~\CONF[1] with robot~$r_1$ in a distinct
			location, an activation schedule $r_{3\cdots n},r_1,r_2,r_{3\cdots n},\ldots$
			generates a cycle of equivalent configurations.
			$r_{3\cdots n}$ represent the remaining robots $r_3$ to $r_n$.
			}
    \label{fig:impossibility-fault-free}
	\end{figure*}
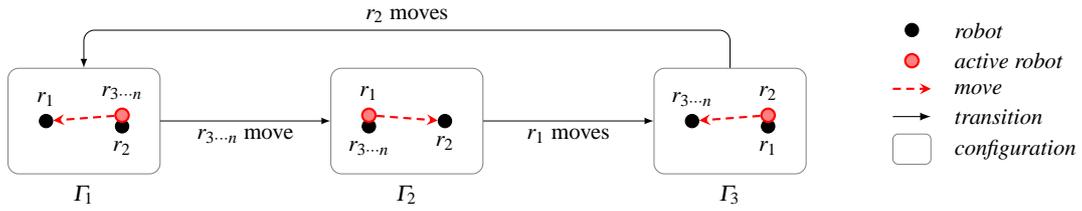
\begin{theorem}
  \label{th:detimpossibility-ff}
  Under a round-robin scheduler, there is no deterministic algorithm
  that solves self-stabilizing gathering for
  $n \geq 3$,
  without additional assumptions (e.g.,~multiplicity knowledge).
\end{theorem}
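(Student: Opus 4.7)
I argue by contradiction: assume a deterministic self-stabilizing algorithm $A$ gathers $n \geq 3$ robots without multiplicity detection under a round-robin scheduler, and exhibit an infinite execution under $A$ that never reaches a univalent configuration. Since round-robin is centralized, Lemma~\ref{lem:centralized:bivalent} says any execution that eventually gathers must cross a 1-bivalent configuration; self-stabilization lets me take this configuration as the initial one. I therefore start from $\Gamma_1$, where $r_1$ is alone at a point $p_a$ and $r_2,\ldots,r_n$ all coincide at a distinct point $p_b$.

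Next I analyze the deterministic response of $A$ to this view. Without multiplicity detection, every robot in $\Gamma_1$ observes the same set of two occupied points --- itself at the origin of its private frame and the other occupied location at some local vector. Because $A$ is anonymous and the adversary fixes each robot's local frame, I can arrange that all robots compute the same target in world coordinates when they see this 2-point view. Three subcases arise for this common response: stay at the current location, move to the other occupied location, or move to a third point. The stay subcase contradicts convergence immediately, since $\Gamma_1$ is then an absorbing state.

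For the second subcase I follow the cycle depicted in Figure~\ref{fig:impossibility-fault-free}. Fix the round-robin order $r_3,r_4,\ldots,r_n,r_1,r_2$. While $r_3,\ldots,r_n$ are activated one after another, the set of occupied positions remains $\{p_a,p_b\}$, so every tower activation again sees a 2-point view and relocates the activated robot to $p_a$. After $n-2$ activations we reach $\Gamma_2$ (everyone at $p_a$ except $r_2$ at $p_b$); activating $r_1$ yields $\Gamma_3$; activating $r_2$ yields a configuration isomorphic to $\Gamma_1$ with the roles of $r_1$ and $r_2$ exchanged. Round-robin then iterates the same pattern forever, so no univalent configuration is ever reached, contradicting convergence.

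The third subcase, where the 2-point view induces motion to a point $q \notin \{p_a,p_b\}$, is the main obstacle. The first activation breaks bivalence and the cycle argument no longer applies verbatim. I would close this case by invoking Lemma~\ref{lem:centralized:valence-one} to bound the per-step valence change by one, and then reapplying Lemma~\ref{lem:centralized:bivalent} to the suffix: any convergent continuation must re-enter a 1-bivalent configuration, at which point the earlier case split can be reinstated. The delicate point is showing that the adversary's frame and order choices, fixed once at the outset, remain compatible with blocking convergence across every such recurrence, without letting $A$ exploit the geometric structure of the intermediate higher-valence configurations to steer a unique convergent trajectory.
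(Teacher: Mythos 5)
Your overall route is the paper's: exploit self-stabilization to start in an adversarially chosen 1-bivalent configuration and run the cyclic schedule of Figure~\ref{fig:impossibility-fault-free}; your cases ``stay'' and ``move to the other occupied point'' are handled exactly as the paper intends (the paper in fact leaves this case analysis implicit and only draws the cycle). The genuine gap is your third case, and your proposed repair is circular: if the common response to a 2-point view is to move to a third point, then re-entering a 1-bivalent configuration and ``reinstating the case split'' just lands you in the same unresolved case again, and you would additionally have to control what happens in the higher-valence configurations created in between.

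The way to close it --- and the fact that makes the paper's figure legitimate --- is that case 3 contradicts convergence directly, with no execution tracing needed. Because the observation is a \emph{set} (no multiplicity) and the adversary fixes each oblivious robot's private unit and orientation afresh at every activation, every robot in every bivalent configuration can be made to observe literally the same local view, say $\{(0,0),(1,0)\}$; determinism then forces a single response $t$ to that view, uniform over all robots and all bivalent configurations of the execution. By Lemma~\ref{lem:centralized:valence-one}, a centralized execution can only enter a univalent configuration from a bivalent one by a single robot leaving a singleton location for the tower, i.e.\ by answering a 2-point view with ``move to the other occupied point.'' If $t$ is the origin or a third point, that transition is impossible in every reachable bivalent configuration, so no execution ever gathers --- an immediate contradiction. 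Hence $t$ must be the other occupied point, which is exactly your case 2, and the cycle finishes the proof. (A minor slip: after one full round the two \emph{locations} swap while $r_1$ remains the singleton; this does not affect the argument.)
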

\begin{proof}
	Assume, by contradiction, that such an algorithm exists.
	Let \algo{A} be a deterministic algorithm that solves (distinct)
	gathering under a round-robin scheduler.
	
%	Consider an arbitrary execution~$e$ under algorithm~$\mathcal{A}$.
%	By the Convergence property, $e$ starts in some distinct configuration and
%	reaches a univalent configuration after a finite number of steps.
%	Let~$c_0$ be the initial configuration, and $c_u$ the first univalent
%	configuration in $e$.
%	
%	Since $n \geq 3$ and $c_0$ is distinct, $c_0$ is necessarily multivalent.
%	From Lemma~\ref{lem:centralized:bivalent}, we know that the execution fragment
%	starting in $c_0$ and ending in $c_u$ contains a 1-bivalent configuration.
%	Let $c_b$ be this 1-bivalent configuration.
%		
%	Only one robot, say $r$, is active in $c_b$ due to the centralized scheduler.
%	It follows that:
%	\begin{enumerate}[a)]
%	\item $r$ is on the distinct location in $c_b$; and
%	\item Algorithm~$\mathcal{A}$ instructs $r$ to move to the other location (i.e.,
%				the point of multiplicity $n-1$).
%	\end{enumerate}
	
%	\fbox{TODO: must fix argument on criteria, or put as conjecture.}
%
%	Algorithm~$\mathcal{A}$ is deterministic. It follows that, when a robot is activated,
%	it computes a destination by applying a deterministic criteria that can be represented
%	as a function~$g$ mapping a set of points~$P$ (observation)
%	to a point (destination).
%
%	The scheduler is centralized, so each time the valence decreases, the active robot has
%	moved to the location of another robot. It follows that for some values of $P$,
%	$g(P)$ returns a point in $P$.
%	
%	\fbox{TODO: must fix argument on criteria}

	Without loss of generality, let the reachable distance of the robots be so large that
	the robots can reach each others' location in a single step.

%	Let $r_1$ be the robot chosen as target location by the other robots. From $C_0$, let 
%	the robots $r_{3\cdots n}$ be activated in turn ($r_{3\cdots n}$ is a shorthand notation
%	for robots $r_3$ to $r_n$) so that they move to the location of $r_1$.
%	%
%	Because of the criteria, they all reach the location of $r_1$, resulting in a 1-bivalent
%	configuration in which $r_2$ is the distinct robot.
%	%
%	Let $r_1$ become active. By the same argument developed earlier in the proof,
%	$r_1$ must move to the position of $r_2$, resulting in a bivalent configuration
%	that is	no longer 1-bivalent.
%	%
%	Then, let $r_2$ become active which, by the same argument, moves to location now
%	occupied by $r_{3\cdots n}$. Call the resulting configuration $C_1$.
%

	Consider the initial configuration \CONF[1] as described below (see Fig~\ref{fig:impossibility-fault-free}).
	\CONF[1] is a 1-bivalent configuration such that all robots are at one location, except one
	robot, $r_1$, which is at a distinct location.
	
	Consider an execution~$e$ under algorithm~\algo{A} that starts in \CONF[1] and follows
	the round-robin activation schedule given by the sequence
	$\sigma = r_{3},\cdots, r_{n}, r_1, r_2$.
	The application of $\sigma$ to configuration \CONF[1] leads to a cycle of bivalent
	configurations, as illustrated in	Figure~\ref{fig:impossibility-fault-free}.
	
	Therefore, a univalent configuration is never reached in execution~$e$, which contradicts
	the fact that every execution under algorithm~\algo{A} satisfies the Convergence
	property.
	
	Thus, algorithm~\algo{A} does not exist.
	\qed
%	Let the adversary define $C_0$ such that, all robots are mutually reachable from
%	each other, and the criteria used by robots in $\mathcal{A}$ to select a destination lead
%	them to select the same target location when activated in turn.
%	Such a criteria must exist since $\mathcal{A}$ is deterministic, robots are oblivious,
%	and occupied locations cannot be distinguished due to the lack of multiplicity detection.
%	Also, there must be a situation under which the target is the location of one of the robots,
%	or else, $\mathcal{A}$ cannot possibly achieve gathering.  
\end{proof}

The impossibility of Lemma~\ref{th:pre:nomultimposs} is for the
distinct gathering problem, namely when initial configurations are
restricted to distinct ones.

In order to extend that result, we must prove that the impossibility
also holds under the same conditions, namely, starting from distinct
configurations.
As stated earlier, an impossibility for distinct gathering implies an
impossibility for self-stabilizing gathering.

\medskip
We introduce an additional theorem below, which stricto senso extends the impossibility
of Lemma~\ref{th:pre:nomultimposs}.
We show that, under a $k\geq2$-bounded scheduler, \emph{every} multivalent
configuration (this includes every distinct configuration) can lead to a non-terminating
execution.
In other words, this proves the impossibility of distinct gathering.
% and, by extension,
%self-stabilizing gathering, under a centralized 2-bounded scheduler, without multiplicity.

\begin{theorem}
  \label{th:detimpossibility:k2}
  Under a centralized $k\geq2$-bounded scheduler, there is no deterministic
  algorithm that solves distinct gathering for $n \geq 3$,
  without additional assumptions (e.g.,~multiplicity knowledge).
\end{theorem}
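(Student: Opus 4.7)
\emph{Approach.} I plan to argue by contradiction and construct a $2$-bounded centralized execution that fails to converge, starting from a \emph{distinct} initial configuration. The cycling mechanism already supplied by Theorem~\ref{th:detimpossibility-ff} gives, from any $1$-bivalent configuration, a round-robin schedule that loops forever through bivalent configurations. My plan is to \emph{splice} an ordinary round-robin \emph{prefix} (which drives the hypothetical algorithm from a distinct configuration into a $1$-bivalent one) together with the \emph{cycling suffix} of Theorem~\ref{th:detimpossibility-ff}.

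\emph{Key steps.} First, I would assume for contradiction that a deterministic algorithm \algo{A} solves distinct gathering under every centralized $k$-bounded scheduler with $k \geq 2$, and pick an arbitrary distinct initial configuration $\conf[0]$. Second, I would run \algo{A} under the ordinary round-robin schedule $r_1, r_2, \ldots, r_n, r_1, \ldots$ (which is $1$-bounded and hence $2$-bounded); by the assumed convergence of \algo{A}, the execution reaches a univalent configuration in finitely many steps, and by Lemma~\ref{lem:centralized:bivalent} it must pass through some $1$-bivalent configuration $\conf[t]$. Let $r_\ell$ denote the lonely robot in $\conf[t]$. Third, I would define an alternative schedule $\sigma'$ that agrees with the round-robin schedule on steps $1, \ldots, t$ (thus producing the same prefix, terminating in $\conf[t]$), and from step $t+1$ onward follows the cyclic order of Theorem~\ref{th:detimpossibility-ff}, namely $r_{\alpha_1}, \ldots, r_{\alpha_{n-2}}, r_\ell, r_m$ repeated indefinitely, where $r_m$ is any robot located at the group position in $\conf[t]$. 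By anonymity and the argument of Theorem~\ref{th:detimpossibility-ff}, the suffix of the execution under \algo{A} cycles through equivalent $1$-bivalent configurations and never becomes univalent, contradicting the assumed convergence of \algo{A}.

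\emph{Main obstacle.} The crux is showing that the spliced schedule $\sigma'$ is $2$-bounded (and not just some larger bound). Both the prefix and the suffix are $1$-bounded round-robin schedules, each activating every robot exactly once in any window of $n$ consecutive steps. The two cyclic orders differ, however: the suffix is rotated so as to place $r_\ell$ in position $n-1$, so at the transition at step $t$ a single robot might be activated at the very end of its prefix cycle and only at the very end of the first suffix cycle, giving a gap of up to $2n-1$ steps. A careful accounting shows that during any such gap, every other robot is activated at most once within the prefix portion and at most once within the suffix portion, yielding an overall bound of exactly~$2$. This is precisely why the hypothesis requires $k \geq 2$, and why extending the impossibility to the $k = 1$ (round-robin) case for distinct gathering demands a different argument---and, as the paper notes, remains only conjectured.
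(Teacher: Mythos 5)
Your proposal is correct and follows essentially the same route as the paper's proof: use a round-robin prefix to drive the hypothetical algorithm into a 1-bivalent configuration (via Lemma~\ref{lem:centralized:bivalent}), then exploit the slack afforded by $k\geq 2$ to rearrange the activation order so that the cycling schedule of Theorem~\ref{th:detimpossibility-ff} takes over forever. The only difference is in the bookkeeping: the paper waits until the lonely robot is the next one to be activated and performs a single transposition in the cyclic order (swapping it with the one-before-last robot), whereas you replace the entire cyclic order at the first 1-bivalent configuration encountered; your accounting that splicing two round-robin orders yields a 2-bounded centralized schedule is sound.
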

\begin{proof}
	With no loss of generality, assume that the scheduler is 2-bounded since this is the
	most restrictive case for the adversary given the hypotheses of the lemma.
	
	Let the adversary select an arbitrary sequence $\sigma$ of the robots and activate them
	according to a round-robin policy over~$\sigma$.
	The scheduler is centralized, so Lemma~\ref{lem:centralized:bivalent} holds and thus,
	from any initial multivalent configuration, in particular any distinct configuration,
	if an algorithm exists, it must necessarily lead the system to a 1-bivalent configuration.
	Let \conf[x] be this 1-bivalent configuration and let $r$ denote the distinct robot.
	
	If robot~$r$ is not the next robot in $\sigma$, then continue the activations in a
	round-robin	fashion until another 1-bivalent configuration is reached, and repeat the argument.
	
	If robot~$r$ is the next robot in $\sigma$, then apply the following permutation.
	Let $r'$ be the robot in the one-before-last
	position in $\sigma$ and $r''$ the robot at the last position.
	The adversary updates $\sigma$ by letting $r$ swap positions with $r'$.
	This leads to the configuration~\CONF[1] depicted in Fig.~\ref{fig:impossibility-fault-free},
	where $r_1=r$, $r_2=r''$, and $r_3=r'$, and the cycle follows.
	
	The swap is valid under the 2-bounded scheduler because no further swap occurs, and no
	robot is activated more than twice between the last activation of $r$ before the swap and
	the first one after.	
	\qed
\end{proof}

We strongly believe that the impossibility under a round-robin scheduler applies not
only to self-stabilizing gathering, but also to distinct gathering.
We state the following conjecture under which the impossibility holds.
\begin{conjecture}
	\label{cj:criteria}\itshape
	Given a system with no additional assumptions (e.g, multiplicity knowledge).
	For every gathering algorithm under a round robin scheduler, 
	there exists an execution starting in a distinct configuration such that a gathered
	configuration is reached by activating exactly once, every robot except one.
\end{conjecture}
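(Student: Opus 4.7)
The plan is to establish the conjecture by a continuity-style argument that reduces the distinct-configuration case to the 1-bivalent cycle already exploited in Theorem~\ref{th:detimpossibility-ff}. Intuitively, if an algorithm \algo{A} gathers under round-robin, then starting from a 1-bivalent configuration with $n-1$ robots at some point $p$ and a lone robot at $q\neq p$, the algorithm must drive the system to the univalent configuration centered at $q$; otherwise one could replay the rotation of Theorem~\ref{th:detimpossibility-ff} (with the lone robot playing the role of $r_1$) to trap \algo{A} in a non-terminating cycle. My strategy is to perturb the $n-1$ collocated robots into a genuinely distinct cloud around $p$, argue that \algo{A} still directs each of them to $q$ (the location of the lone robot), and conclude that activating the $n-1$ perturbed robots in round-robin order gathers the system in exactly $n-1$ steps, leaving the lone robot untouched.

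Concretely, I would proceed as follows. First, fix an arbitrary gathering algorithm \algo{A} and an arbitrary round-robin sequence $\sigma$; without loss of generality, let the last robot in $\sigma$ be the designated ``unactivated'' robot of the conjecture. Second, use Lemma~\ref{lem:centralized:bivalent} together with the cycle construction of Theorem~\ref{th:detimpossibility-ff} to pin down the 1-bivalent configuration from which \algo{A} must converge to the location of the lone robot. Third, construct a one-parameter family of distinct configurations \conf[\epsilon] obtained by spreading the $n-1$ collocated robots inside a ball of radius~$\epsilon$ around $p$, keeping the lone robot at $q$. Fourth, analyze \algo{A}'s response to these perturbed configurations over successive round-robin activations and show that for sufficiently small $\epsilon$ the aggregate effect of the first $n-1$ activations is to place each perturbed robot exactly at $q$, thereby gathering the system without ever activating the lone robot.

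The hard part will be step four: \algo{A} is a deterministic function of the observation and need not vary continuously with the geometric placement of the robots, since the multiplicity disappears the moment the cluster is broken. A promising route is a compactness argument on the space of perturbations. The symbolic output of \algo{A} factors through finitely many discrete geometric invariants (vertices of \Conv{\set{P}}, faces of \Vor{\set{P}}, orderings of pairwise distances, and the diameter of $\SEC(\set{P})$); on each stratum of configurations producing the same symbolic output, the target destination varies continuously, so by extracting a convergent subsequence $\conf[\epsilon_k]$ one obtains a limiting ``1-bivalent'' trajectory to which the first-activation destination must converge. Iterating this reasoning across successive activations of the round-robin sequence, one should be able to select a specific $\epsilon_0$ and specific perturbation within the ball such that the first $n-1$ activations each deposit a single perturbed robot onto $q$.

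The principal obstacle — and almost certainly the reason the authors leave the statement as a conjecture rather than a theorem — is ruling out pathological algorithms that encode sharply non-monotone behavior across the stratum boundary separating the distinct and 1-bivalent cases, for example algorithms that repeatedly send perturbed robots away from $q$ before eventually corralling them back. If such behavior can be excluded (perhaps by a potential-function or diameter-monotonicity argument analogous to those used in the convergence literature cited in Theorem~\ref{th:pre:pos3robots}), then the perturbation plan described above should go through; if it cannot, then the conjecture might well need to be weakened or replaced by a strictly stronger geometric assumption on \algo{A}.
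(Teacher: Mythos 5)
The statement you are trying to prove is left as a \emph{conjecture} in the paper: the authors offer no proof at all, only informal substantiation by examining a few plausible selection criteria (nearest robot, farthest robot, centroid) and exhibiting, for each, a distinct configuration that gathers in one round-robin pass minus one robot. Your proposal is therefore not being measured against an existing argument, and indeed it does not close the gap either --- you say so yourself in the final paragraph. The honest verdict is that this is a strategy sketch with an unfilled hole, and the hole is essentially the entire content of the conjecture.

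The specific step that fails is your step four. In the SYm model without multiplicity, an oblivious deterministic algorithm is an \emph{arbitrary} function from finite point sets to destinations; nothing in the model forces it to factor through convex hulls, Voronoi diagrams, orderings of distances, or any other finite family of geometric invariants, and nothing forces any continuity or semi-algebraic stratification of its behavior. Worse, the observation itself changes discontinuously between the 1-bivalent configuration (a set of exactly two points) and any perturbed distinct configuration (a set of $n$ points): the ``limit'' object your compactness argument converges to is a two-point set, and the algorithm's values on $n$-point sets in any neighborhood of that limit are completely unconstrained by its value at the limit. So the behavior of \algo{A} on \conf[\epsilon] need bear no relation to its behavior on the 1-bivalent configuration, no matter how small $\epsilon$ is, and the cascade of $n-1$ successive activations (each changing the observed set for the next robot) only compounds the problem. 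This is precisely the pathology you flag at the end --- algorithms that behave sharply differently across the distinct/1-bivalent boundary --- and excluding it would require a structural restriction on \algo{A} that the model does not provide. The paper's authors sidestep this by arguing only for concrete, ``reasonable'' criteria, which is why the statement remains a conjecture rather than a theorem; your route, as written, does not improve on that.
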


To substantiate why the claim might be true, let us consider some examples.

First, say that the criteria applied by the algorithm is to select the location of the
nearest robot. Then, one distinct configuration that meets the requirement of the lemma
is to place a first robot $r_1$, and then all other robots such that their distance to
$r_1$ follows a geometric progression. Activating each robots except $r_1$ in the order
they were placed let them gather at $r_1$.

Second, if the criteria is to select the farthest robot, then gathering is obtained from
placing the robots along a line and activating them from one extremity to the next.

Third, if the criteria is to select a robot near a centroid, then by placing robots along
the circumference of a circle centered at $r_1$ and an interleaved activation of the
robots, $r_1$ can still remain near the centroid until the system reaches a bivalent
configuration.

This is not exhaustive, and more complex criteria can be made to change depending on the
valence of the observation.

\begin{theorem}
  \label{th:detimpossibility-ff:distinct}
  Under the hypothesis that Conjecture~\ref{cj:criteria} holds,
  There is no deterministic algorithm that solves distinct gathering for
  $n \geq 3$ under a round-robin scheduler,
  without additional assumptions (e.g.,~multiplicity knowledge).
\end{theorem}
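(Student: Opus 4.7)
My plan is to argue by contradiction, extracting the algorithm's bivalent rule from the conjecture and then constructing a round-robin schedule that induces a non-terminating cycle. Assume a deterministic algorithm $\mathcal{A}$ solves distinct gathering for $n \geq 3$ under a round-robin scheduler without multiplicity knowledge. By Conjecture~\ref{cj:criteria}, there exist a distinct initial configuration $\gamma_0$ and a round-robin sequence $\pi = (r_{i_1}, \ldots, r_{i_n})$ such that the first $n-1$ activations reach a univalent configuration, leaving $r^* = r_{i_n}$ never activated. Applying Lemma~\ref{lem:centralized:bivalent} to this execution, the penultimate configuration $\gamma_{n-2}$ must be 1-bivalent: its singleton is $r_{i_{n-1}}$ (the robot whose step-$(n{-}1)$ activation collapses the system), and the remaining $n-1$ robots (including $r^*$) form the tower at $r^*$'s initial position.

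The algorithm's behavior on bivalent configurations is thereby fully determined. Without multiplicity knowledge, a bivalent observation from any activated robot consists only of the origin and one other point at some distance $d$; by determinism and anonymity, the algorithm's decision on such an observation depends solely on $d$. The step-$(n{-}1)$ activation of $r_{i_{n-1}}$ at $\gamma_{n-2}$ reveals that this decision must be ``move to the other occupied point.'' Hence, as long as the system remains in a bivalent configuration with the two fixed locations $p$ and $q$ at separation $d$, every subsequent activation translates the active robot to the opposite location, preserving bivalence.

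I would then construct $\pi' = (r_{i_1}, \ldots, r_{i_{n-2}}, r^*, r_{i_{n-1}})$ by swapping the last two entries of $\pi$; any cyclic permutation is a valid round-robin order, so the adversary may choose $\pi'$. Starting from $\gamma_0$ under $\pi'$, the first $n-2$ activations coincide exactly with those under $\pi$, so $\gamma_{n-2}$ is reached unchanged. At step $n-1$, however, $r^*$ (a tower robot) is activated in place of the singleton; by the extracted rule it moves to the opposite side, producing an $(n{-}2,2)$-bivalent configuration. At step $n$, $r_{i_{n-1}}$ activates from the 2-side and returns to the $(n{-}2)$-side, yielding an $(n{-}1,1)$-bivalent configuration whose singleton is now $r^*$. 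A straightforward induction on subsequent round-robin rounds shows the execution is periodic of period $2n$, remains bivalent throughout, and that the singletons appearing at its 1-bivalent moments are drawn exclusively from $\{r_{i_{n-1}}, r^*\}$ while the robots scheduled immediately afterwards are drawn from $\{r^*, r_{i_1}\}$. Since the three indices $1, n-1, n$ are distinct, $r_{i_1}$, $r_{i_{n-1}}$, and $r^*$ are pairwise distinct robots, so the next scheduled robot is never the current singleton. Consequently the orbit never collapses to univalence, contradicting the Convergence property of $\mathcal{A}$.

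The main obstacle is the inductive invariant guaranteeing that the cycle never aligns a 1-bivalent singleton with the next round-robin activation. This invariant depends delicately on the specific swap that produced $\pi'$: only $r_{i_{n-1}}$ and $r^*$ ever appear as singletons along the orbit (due to the alternating migration pattern between the two sides), and only $r^*$ and $r_{i_1}$ are scheduled at the corresponding moments. Verifying this uniformly in $n$ requires an explicit tracking of which robot lands on which side through each of the $n$ steps of a round, but it becomes routine once the bivalent rule is in hand.
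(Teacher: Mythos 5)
Your proof takes essentially the same route as the paper's: use the conjecture's execution to pin down the algorithm's decision on a two-point observation (by indistinguishability it must be ``move to the other occupied location''), then swap the last two robots of the round-robin order so that the run falls into the perpetual bivalent cycle of Figure~\ref{fig:impossibility-fault-free}, with your version merely spelling out the periodicity argument that the paper compresses into ``and the rest follows.'' One small correction: for $n=3$ the intermediate $(n-2,2)=(1,2)$ configurations produced by $r^*$'s move are themselves 1-bivalent with singleton $r_{i_1}$, so your invariant that singletons come only from $\{r_{i_{n-1}},r^*\}$ and the subsequently scheduled robots only from $\{r^*,r_{i_1}\}$ is literally false there; the needed conclusion (the next scheduled robot, here $r_{i_{n-1}}$, is never the current singleton) still holds, so the induction is easily repaired, and you should also state explicitly, as the paper does, that the reachable distances are taken large enough for every move in the cycle to complete.
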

\begin{proof}
	Assume, by contradiction, that such an algorithm exists.
	Let \algo{A} be a deterministic algorithm that solves distinct
	gathering under a round-robin scheduler.

	Let the reachable distance of the robots be so large that the robots can reach
	each others' location in a single step.
	
	By assumption, Conjecture~\ref{cj:criteria} holds and there exists a
	configuration \CONF[0] such that, by activating every robot at most once,
	an execution~$e$ starting in \CONF[0] reaches a univalent configuration.
	Let us name the robots such that the successful activation sequence
	is $r_{3\cdots n}, r_2, r_1$.
	
	%\fbox{TODO}
	%
	% ATTENTION: show that they must select the position of one of the
	% robots, select r_1 at this position. They deduce that they must all
	% select r_1's position.
	%
	Since a robot moves only once, all robots must select the location of
	$r_1$ as their target, and the configuration after activating robots
	$r_{3\cdots n}$ is 1-bivalent with $r_2$ at the distinct location.
	
	Consider execution $e'$ with the same prefix, but where $r_1$ is
	activated before $r_2$. Without multiplicity, what $r_1$ observes in
	$e'$ is the same as what $r_2$ observes in $e$. Therefore, $r_1$ moves
	to the location of $r_2$, leading to configuration~\CONF[3] of
	Figure~\ref{fig:impossibility-fault-free}.
	And the rest follows.
	\qed
\end{proof}

%
% TODO: Introductory text
%
Consider now the case when the system consists of two robots.
Suzuki and Yamashita~\cite{SY99} have proved that the deterministic gathering
of two oblivious robots is impossible under a fair scheduler (Lemma~\ref{th:pre:nodet2robots}).
The simple lemma below shows
that 2-gathering is however possible when the scheduler is centralized. 
\begin{lemma}
  \label{lem:sim:centr-2gathering}
  The 2-gathering problem can be solved deterministically under a
  centralized scheduler (fair or unfair).
\end{lemma}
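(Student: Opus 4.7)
My plan is to exhibit an explicit algorithm and then verify Convergence and Closure directly. The algorithm is: upon an \emph{Observe} action returning a set $\set{P}$ (with $|\set{P}| \in \{1,2\}$ since $n=2$), if $|\set{P}|=1$ the robot targets its own position (equivalently, does not move); otherwise, the robot targets the unique other point in $\set{P}$. The algorithm is deterministic, oblivious, and well-defined with no multiplicity knowledge.

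Closure follows immediately. In a univalent configuration both robots share a single location, so any active robot observes a single point and, by the algorithm, stays put. Hence the configuration remains univalent after every subsequent step.

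For Convergence, consider an execution starting in a bivalent configuration and let $d_t$ be the distance between the two robots at time~$t$. The scheduler being centralized, exactly one robot $r$ moves at step $t$; the other robot, sitting at some point $B$, does not reposition during that step. By definition of $r$'s reachable distance $\delta_r > 0$, if $d_t \le \delta_r$ then $r$ reaches $B$ and $d_{t+1} = 0$, yielding a univalent configuration; otherwise $r$ travels at least $\delta_r$ along the segment from its current position to $B$, so $d_{t+1} \le d_t - \delta$, where $\delta = \min_r \delta_r > 0$. Therefore the inter-robot distance strictly decreases by at least $\delta$ at every step in which a robot is activated, until it reaches~$0$. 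The non-triviality of the unfair centralized scheduler (infinitely many activations of some non-faulty robot, and here both robots are correct) and, a fortiori, fairness of the fair centralized scheduler, guarantee that at least $\lceil d_0/\delta \rceil$ activations occur, which suffices to reach a univalent configuration.

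The only subtle point—but also the point of the lemma—is that the argument crucially uses centralization: the target $B$ of the moving robot is fixed for the duration of the step, so progress toward $B$ translates directly into progress on $d_t$. This is what is prevented in Theorem~\ref{th:pre:nodet2robots}: a non-centralized fair scheduler can activate both robots simultaneously, letting them swap positions and keep $d_t$ constant. No such adversarial pairing is available to a centralized scheduler, and the lemma follows.
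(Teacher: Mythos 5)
Your proposal is correct and follows essentially the same argument as the paper: each (centralized) activation moves the single active robot toward the other, decreasing the inter-robot distance by at least $\delta$ until the target becomes reachable and gathering occurs in one step, with unfairness being harmless because either activation makes progress. The only additions are the explicit statement of the trivial algorithm and the Closure check, which the paper leaves implicit.
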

\begin{proof}
  Let $r_1$ and $r_2$ be the two robots. Consider the simple algorithm
  which consists for one robot to move to the location of the other
  robot.
  Given that the scheduler is centralized, at each step only one of the
  two robots, say $r_1$, is active.
  
  If $r_2$ is reachable from $r_1$, then gathering is achieved in that
  step. 
  If $r_2$ is not reachable from $r_1$, then the distance between both
  robots decreases by $\delta_{r_1}$.

  Thus, by repeating the argument, we see that the distance between the
  robots decreases monotonically, until they become reachable and then
  gathering is achieved in the next activation.
  \qed
\end{proof}

Note that, in the above proof, it does not matter which robot is activated
in each round. In particular, even if the scheduler is unfair, it must activate
either one of the two robots.

\newcommand{\IND}{\hspace*{5mm}}

\subsection{Probabilistic Gathering}
We now look at the case of probabilistic algorithms in a fault-free environment.
In the following, we prove that, for the
case of two robots, there exists a probabilistic solution for
gathering in the SYm model, under any type of scheduler.
\begin{algorithm}
  \caption{Probabilistic gathering for robot $p$.}
  \label{alg:ff-gathering}
  \begin{small}
    \noindent
    \textbf{Actions}:\\
    \noindent
    \IND Observe($P$) :: $\mathit{true}$ $\longrightarrow$ \\
    \IND\IND\IND \textbf{with} probability $\alpha=\frac{1}{|P|}$ \textbf{do}\\
    \IND\IND\IND\IND select location $q \in P$ uniformly;\\
    \IND\IND\IND\IND move towards $q$;\\
    \IND\IND\IND \textbf{else}\\
    \IND\IND\IND\IND stay;
 	\end{small}
\end{algorithm}

Algorithm~\ref{alg:ff-gathering} describes the probabilistic strategy
of a robot. When a robot becomes active, it decides, with
probability~$\alpha$, whether it will actually compute a location and
move whereas, with probability~$1-\alpha$, the robot will remain
stationary.  The following lemma shows that
Algorithm~\ref{alg:ff-gathering} reaches a univalent configuration in
constant expected steps.

\begin{lemma}
  \label{lem:prob-2gathering}
  Algorithm~\ref{alg:ff-gathering} probabilistically solves
  gathering for $n=2$ under an unfair scheduler.
\end{lemma}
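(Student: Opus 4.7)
The plan is to establish Closure trivially and Convergence by showing that, from any bivalent configuration, gathering occurs in finite expected time and hence with probability one. For Closure, observe that in a univalent configuration $|P|=1$ gives $\alpha=1$, and the unique target point $q$ coincides with the robot's own position, so no robot moves and the configuration remains univalent.

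For Convergence, I would fix an arbitrary bivalent configuration, let $d$ denote the distance between the two robots, and set $\delta=\min(\delta_{r_1},\delta_{r_2})>0$. The strategy is to prove two properties of every round (a single activation of a non-empty subset of $\{r_1,r_2\}$): (a) the new distance is at most $d$, and (b) with probability at least $1/4$, the system either becomes univalent or the new distance is at most $d-\delta$.

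Property (a) would follow from a short case analysis. Since the only possible targets are the two occupied locations, every move of a robot is either a non-move (staying in place, or ``moving'' to its own location) or brings it strictly closer to, or onto, the other robot; even in the worst case where both robots are activated simultaneously and each moves to the other's location (the ``swap'' case with both targets within reach), the two positions are merely exchanged and the distance is unchanged. For Property (b), each activated robot independently chooses to move toward the other with probability exactly $1/4$ (probability $1/2$ of deciding to move, times $1/2$ of selecting the other's position) and otherwise effectively stays with probability $3/4$. The event ``some activated robot moves toward the other, and no other activated robot moves toward the first robot's original position'' then has probability at least $1/4$ regardless of whether one or both robots are activated (when both are, the bound is $2\cdot(1/4)(3/4)=3/8$), and by the reachability condition this event either achieves gathering (when the target is within $\delta_{r_i}$) or shrinks $d$ by at least $\delta$.

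Combining (a) and (b), the sequence of inter-robot distances forms a monotonically non-increasing random process that either reaches $0$ or strictly decreases by at least $\delta$ with probability at least $1/4$ per round. Hence after at most $\lceil d_0/\delta\rceil$ such successful events gathering is achieved, and the expected number of rounds is at most $4\lceil d_0/\delta\rceil<\infty$, yielding gathering with probability one. The main obstacle is to show that the swap case---both robots activated, each moving to the other's location, with both targets reachable---cannot block progress, since it leaves the bivalent configuration exactly unchanged; the argument sidesteps this by restricting attention to the asymmetric good event above, whose $1/4$ lower bound on the probability of progress holds unconditionally in every round.
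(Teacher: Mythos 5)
Your proof is correct and follows essentially the same route as the paper's: both arguments rest on the inter-robot distance being non-increasing and strictly decreasing by at least $\delta$ (or reaching zero) with a constant positive per-round probability regardless of the scheduler's choice of activated subset, concluded via a negative-binomial bound on the expected number of rounds. The only differences are cosmetic: you merge the paper's two phases (far-to-reachable, then reachable-to-gathered) into a single induction, and you account for the uniform target selection more carefully (per-round progress probability $1/4 = \frac{1}{2}\cdot\frac{1}{2}$, whereas the paper's proof uses $\alpha=\frac{1}{2}$ directly as the probability of moving to the other robot's location, silently absorbing the uniform choice among the $|P|=2$ targets); your more conservative constant still yields the required finite expectation and hence convergence with probability one.
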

\begin{proof}
	Consider two robots $r_1$ and $r_2$, and an arbitrary initial configuration~\conf[0].
	If $r_1$ and $r_2$ are already gathered, the configuration is univalent and
	neither will move, regardless of activations and the probability~$\alpha$.
	
	Since there are two robots, every non-gathered configuration is bivalent, and thus
	$\alpha=\frac{1}{2}$.

	Assume that both robots have the same reachable distance $\delta$ (or, if they
	are different, define $\delta$ conservatively as their minimum).	
	
	Let us show how $r_1$ and $r_2$ reach a configuration in which they are mutually reachable,
	from one in which they are not. Let $D_0>\delta$ be the initial distance between $r_1$
	and $r_2$. At each successful move of either one of the robots, the distance between
	them is decreased by at least $\delta$ (by $2\delta$ if both move).
	Thus, it takes at most $x=\lceil\frac{D_0}{\delta}\rceil\!-\!1$ successful moves
	of either one robot, for them to be within reachable distance.
	Since the scheduler must activate at least one of the robots, the probability of a
	successful move at each trial is at least $\frac{1}{2}$. The number of failures until the
	$x^{th}$ success of a Bernoulli trial with success probability $\alpha$ is known to be
	a random variate that follows a negative binomial distribution $\mathit{NB}(x,\alpha)$.
	It follows that the expected number	of steps until both robots are within reachable
	distance~$\delta$ is at most
	\begin{align*}
		\E[\mbox{\footnotesize steps to reachable}]
		&\leq \lceil\frac{D_0}{\delta}\rceil\!-\!1
			 + \E\left[\mathit{NB}\left( \lceil\frac{D_0}{\delta}\rceil\!-\!1 ,\frac{1}{2} \right)\right]
		\\
		&\leq 2\left(\left\lceil\frac{D_0}{\delta}\right\rceil - 1\right)
		\\
	\end{align*}
%
%	\[
%		\E[\mbox{\footnotesize steps to reachable}]
%		\leq x + \E[\mathit{NB}(x,\alpha)]
%		=x+\frac{x(1\!-\!\alpha)}{\alpha}
%		=\frac{x}{\alpha}
%	\]
%	%
%	Replacing $x$ and $\alpha$ by their values yields,
%	\[
%		\E[\mbox{\footnotesize steps to reachable}]
%		\leq 2\left(\left\lceil\frac{D_0}{\delta}\right\rceil - 1\right)
%	\]	
	
	Let us now consider gathering from a configuration in which the two
	robots are reachable from each other.
	
	Consider some discrete time~$t$ when the two robots have distinct locations. If
	only one of the robots, say $r_1$, is activated by the scheduler, then
	there is a probability~$\alpha$ that $r_1$ moves, and thus both robots end up
	gathered in the next configuration (terminal). If both robots are
	activated at time~$t$, then they end up in a univalent configuration
	only if exactly one of them changes its position. This occurs with probability
	$2\alpha(1-\alpha)$.
	
	Consequently, the probability to reach gathering
	during at time~$t+1$ is at least
	$q=\min\left(\alpha,2\alpha\left(1-\alpha\right)\right)=\frac{1}{2}>0$, 
	regardless of the choice
	of the scheduler.
	The number of failures before first success is a random variate that follows
	a geometric distribution $G(q)$.
	This yields the expected number of steps until gathering as
	\[
	\E[\mbox{\footnotesize steps to gathering}] = 1+\E[G(q)] = 1+\frac{(1-q)}{q} = \frac{1}{q} = 2 
	\]
	Thus, gathering is achieved in at most $2\lceil\frac{D_0}{\delta}\rceil$ steps in
	expectation.
	\qed
\end{proof}

%We complement Lemma~\ref{lem:prob-2gathering} with a very simple
%lemma.

%\begin{lemma}
%  \label{lem:sim:centr-n-imposs}
%  There is no probabilistic or deterministic algorithm to solve
%  $n$-gathering with $n\geq 3$ under an unfair centralized scheduler without additional assumptions (e.g. %%multiplicity knowledge).
%\end{lemma}
%\begin{proof}
%  This is shown by a counter-example with three robots. Let
%  $r_1,r_2,r_3$ be the three robots and assume that they have distinct
%  initial positions. Consider simply an adversary that decides to
%  never activate $r_2$ and $r_3$ (the scheduler is unfair) and continously activate $r_1$. Obviously,
%  $r_1$ will make the ping-pong between $r_2$ and $r_3$ and the system never verifies the gathering specification.
%\end{proof}

%\begin{lemma}
%  \label{lem:sim:bound-2gathering}
%  The 2-gathering problem can be solved deterministically under a
%  fair bounded scheduler.
%\end{lemma}
%\begin{proof}
%  \fbox{XD: OOPS, je ne me souviens plus de l'algorithme qui permet \c{c}a!?}
%\end{proof}

The next lemma extends the impossibility result proved in 
Theorem~\ref{th:detimpossibility-ff} to probabilistic 
algorithms under a fair centralized scheduler.

\begin{lemma}
  \label{lem:probimpossibility-ff}
  There is no probabilistic algorithm that solves gathering
  for $n \geq 3$, under a fair centralized
  scheduler without additional assumptions (e.g.,~multiplicity
  knowledge).
\end{lemma}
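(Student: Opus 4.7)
The proof proceeds by contradiction, extending the cyclic construction of Theorem~\ref{th:detimpossibility-ff} to the probabilistic setting via an adaptive adversarial scheduler. Suppose a probabilistic algorithm $\algo{A}$ solves self-stabilizing gathering for some $n \geq 3$ under a fair centralized scheduler without multiplicity detection. By Lemma~\ref{lem:centralized:bivalent}, every execution converging to a univalent configuration must pass through a 1-bivalent one. Moreover, since multiplicity information is not available, the distinct robot and any tower robot in a 1-bivalent configuration observe the same set of two occupied points, so $\algo{A}$ prescribes identical move distributions for both (expressed in their respective local coordinates). In particular, the probability $q$ that an activated robot in such a configuration moves to ``the other'' occupied point is the same regardless of its role.

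I would then construct an adaptive fair centralized scheduler that never activates the currently distinct robot in any 1-bivalent configuration. Concretely, whenever the configuration is 1-bivalent, the scheduler activates a tower robot, choosing among them the one least recently activated; in any other non-univalent configuration of valence at least two, Lemma~\ref{lem:centralized:valence-one} allows the scheduler to pick an activation that cannot immediately bring the valence down to one. Fairness holds because, by the symmetry of observations, whenever a tower robot is activated it moves to the singleton's location with probability $q$ and thereby shifts the distinct role to some other robot; hence each specific robot occupies the tower infinitely often with probability one and is activated infinitely often by the least-recently-used rule. Under this adversary, gathering in a single step from a 1-bivalent configuration is impossible, since that would require the currently distinct robot (whom the adversary avoids) to move; passages through multivalent configurations do not help either, because Lemma~\ref{lem:centralized:bivalent} forces any successful convergence to re-enter a 1-bivalent configuration, where the same argument applies. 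Thus no univalent configuration is ever reached, contradicting the assumed convergence of $\algo{A}$.

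The main obstacle is making this plan robust to the corner case $q = 0$, in which no activation ever shifts the distinct role through a ``swap'' and the adaptive adversary cannot rely on rotation to guarantee fairness. In that case one must separately argue that either $\algo{A}$ leaves the 1-bivalent configuration unchanged forever (and so trivially fails the Convergence requirement), or that tower robots move to unoccupied third points with positive probability, producing a multivalent configuration in which the same adaptive strategy, aided by Lemma~\ref{lem:centralized:valence-one}, continues to block convergence. Handling this case carefully, and making the ``every robot eventually in the tower'' claim quantitatively precise (e.g., via a Borel--Cantelli style argument on the probability-$q$ swaps), is the technically delicate part of the proof.
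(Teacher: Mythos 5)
Your proposal is sound in outline but takes a genuinely different route from the paper. The paper's proof is a \emph{derandomization} argument: it splits randomness into random target selection versus a coin toss deciding whether to act at all, observes that the move probability $\alpha$ must be positive (else the robot is deterministic in that observation and Theorem~\ref{th:detimpossibility:k2} applies), and then lets the adversary repeatedly activate a \emph{single} chosen robot until its coin toss succeeds. Since $\alpha>0$, each such block terminates almost surely, the resulting schedule is fair (though unbounded), and the execution is forced to follow exactly the deterministic cyclic trap of Theorem~\ref{th:detimpossibility:k2}. Your adversary instead works by \emph{starvation of the distinct robot}: exploiting the fact that, without multiplicity, the singleton and the tower robots in a 1-bivalent configuration have identical observations, you never activate the currently distinct robot and rely on Lemmas~\ref{lem:centralized:valence-one} and~\ref{lem:centralized:bivalent} to show no single safe activation can produce univalence. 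Both strategies exploit the same loophole --- fairness without a bound $k$ --- but they pay for it in different places: the paper's repetition blocks make fairness essentially immediate and reuse the deterministic construction wholesale, whereas your blocking strategy makes non-gathering immediate (it holds for \emph{every} outcome of the coin flips, not just almost surely along the simulated cycle) but pushes all the difficulty into proving that the adaptive scheduler is almost surely fair.

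That fairness bookkeeping is exactly where your argument is thinnest, and you correctly identify it. Two points deserve care if you pursue your route. First, for $n\geq 4$ a tower robot moving onto the singleton produces a $(n-2,2)$-bivalent configuration, not a swap of the distinct role, so the ``role rotation'' picture should be replaced by the weaker (and sufficient) claim that the 1-bivalent-with-$r$-distinct state is left with probability one whenever the stay-probability is below one, after which $r$ is immediately activatable. Second, in the degenerate case where the algorithm stays with probability one on a two-point observation, your adversary yields an unfair execution; you must (as you note) switch to a round-robin scheduler on that same 1-bivalent configuration, where every robot stays forever and Convergence fails outright. With those repairs your proof goes through; the paper's reduction simply avoids having to make them.
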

% show by derandomization and with theorem \ref{th:detimpossibility:k2}
\begin{proof}
	A randomized algorithm can use randomization in two different ways.
	It can select random locations (case~A), or it can toss a coin before doing
	a move (case~B).
	
	With respect to the first case, the proofs of
	Theorem~\ref{th:detimpossibility-ff} and~\ref{th:detimpossibility:k2}
	still stand when destinations are based on random choices, except when
	the random choice of a robot is to select its current location.
	This is however equivalent to tossing a coin and stay still with some
	probability, which is in turn equivalent to the second case.
		
	Hence, we focus on the second case (case~B) and represent a randomized algorithm
	as one in which an active robot tosses a coin and, with some positive
	probability~$\alpha$, executes an action (and stays still otherwise). Note
	that, if the probability depends on the robot, $\alpha$ can be defined as the
	minimum.
	It must be positive because, since Theorem~\ref{th:detimpossibility:k2}
	shows that no algorithm exists based on deterministic choices, a robot
	cannot set the probability to zero based only on its observations.
	
	Consider an adversary that selects a robot~$r$ and activates $r$ until
	the coin toss is successful, and $r$ actually executes its action. Since
	$\alpha$ is positive, the activation is fair (albeit unbounded).
	By doing so, the adversary can actually ``\emph{derandomize}'' the algorithm
	with the remainder of the proof being the same as for Theorem~\ref{th:detimpossibility:k2}.
	\qed
\end{proof}

The key issue leading to the above impossibility is the freedom that
the scheduler has in selecting a robot~$r$ until its probabilistic
local computation allows $r$ to actually move. The scenario can
however no longer hold with systems in which the scheduler is
$k$-bounded.  That is, in systems where a robot cannot be activated
more than $k$~times before the activation of another robot.  In this
type of game, robots win against the scheduler and the system converges
to a gathered configuration.

\begin{theorem}
  \label{th:probpossibility}
  Algorithm~\ref{alg:ff-gathering} probabilistically solves gathering
  for $n\geq3$, under a fair
  bounded scheduler and without multiplicity knowledge.
\end{theorem}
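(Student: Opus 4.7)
My plan is to split the theorem into closure and convergence.

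\emph{Closure} is immediate: in a univalent configuration $|P|=1$, so an active robot has $\alpha=1$ and the unique element of $P$ is its own position; the robot ``moves'' to itself and stays. Hence univalent configurations are absorbing.

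\emph{Convergence} is the main work. I would establish a single-window lemma: starting from any configuration $\gamma$ of diameter $D$, with probability at least $p^\star = (1/n^2)^W > 0$, a univalent configuration is reached within $W = \lceil D/\delta\rceil \cdot (k(n-1)+1)$ activations. To do so, fix any occupied location $q$ in $\gamma$, held by some designated robot $r_\star$, and consider the \emph{favorable event} $F$ that every one of the next $W$ activations selects $q$ as its target. At each activation, the probability of selecting $q$ equals $\alpha\cdot(1/|P|) = 1/|P|^2 \geq 1/n^2$, and it depends only on the current set $P$ (not on the adversary's strategy); so $\Pr[F] \geq (1/n^2)^W = p^\star$. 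Under $F$ every activation of $r_\star$ moves it toward $q$ from $q$, so $r_\star$ stays at $q$ throughout the window and $q$ remains in $P$; meanwhile, each other robot $r_i$ advances by at least $\delta$ toward $q$ at each of its activations. Fair $k$-boundedness guarantees each robot is activated at least $\lceil D/\delta\rceil$ times within $W$ consecutive activations, so each $r_i$ reaches $q$ by the end of the window and the configuration is univalent at $q$.

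Since every move sends the active robot to a point on the segment between its current location and a point of $P$, the convex hull of the configuration is monotonically non-increasing across the whole execution, and so is $D$. Consequently, the single-window lemma applies with the same bounds $p^\star$ and $W$ across consecutive attempts. Partitioning the execution into successive windows of length $W$, each window independently succeeds with probability at least $p^\star > 0$; a standard geometric-trial argument then gives probability-$1$ convergence in finite expected time, settling the self-stabilizing gathering claim.

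The main obstacle, as I see it, is handling the adaptive scheduler carefully: the $k$-bounded adversary picks the next active robot based on the full history including past random outcomes, so one must verify the per-activation lower bound $1/|P|^2$ and the per-window bound $p^\star$ hold uniformly over all such strategies. This follows from the fact that the robot's coin toss and uniform choice depend only on its current observation $P$, with $|P|\leq n$, so the adversary cannot push the success probability below $1/n^2$ at any step. Once this uniformity is stated cleanly and combined with the non-increasing diameter, the Borel--Cantelli-style argument on successive windows is routine.
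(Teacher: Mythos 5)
Your proof is correct, but it takes a genuinely different route from the paper's. The paper splits convergence into two phases (arbitrary $\rightarrow$ all robots mutually reachable, then reachable $\rightarrow$ gathered) and uses the smallest enclosing circle as a potential function: its favorable event over a fragment of $nk$ activations is that every robot except one boundary robot $T$ moves \emph{exactly once} toward $T$ and then freezes, while $T$ never moves; a geometric lemma (the region $\Omega$ swept by such moves is contained in a circle of diameter $2R_t-\delta$) then shows the \SEC diameter drops by $\delta$ per successful fragment, giving an expected time linear in $\lceil D_0/\delta\rceil$. You instead condition on a single, much more demanding event --- \emph{every} activation in a window of $W=\lceil D/\delta\rceil(k(n-1)+1)$ robot-activations targets one fixed occupied point $q$ --- which collapses the whole convergence into one window and eliminates the geometry entirely (your non-increasing convex hull plays the role of the paper's non-increasing \SEC, and is needed for the same reason: to keep $W$ and $p^\star$ uniform across retries). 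The price is quantitative: your per-window probability $(1/n^2)^W$ is exponentially small in $D_0/\delta$, so your expected-time bound is exponentially worse than the paper's; since the theorem asserts only probabilistic solvability and not a rate, this is harmless here, but it would not recover the paper's closing claim of gathering in ``constant expected steps.'' Two things you do that the paper does not: you verify closure explicitly (the paper leaves it implicit), and you spell out why the adaptive $k$-bounded adversary cannot degrade the per-activation bound $1/|P|^2\geq 1/n^2$, which is indeed the point that must be checked for the chain-rule product to go through.
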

\begin{proof}
	Let $k$ denote the bound of the scheduler. The scheduler being fair, there
	are at most $k(n-1)$ steps between any two consecutive activations of any
	robot. Let $\delta$ be the reachable distance of the robots (or their minimum
	if they are different).
	
	The probability $\alpha$ depends on the valence of the current configuration.
	However, in multivalent configurations, it is bounded as follows:
	$
		\frac{1}{n}=\alpha_{\min} \leq \alpha \leq \alpha_{\max}=\frac{1}{2}
	$.

	\medskip
	For clarity, the proof has two parts. First, we show that, from an arbitrary configuration,
	the system reaches a configuration in which all robots are within reachable distance
	from each other. Second, we show that, with high probability, in a configuration
	where robots are reachable from each other, the valence of successive configurations
	decreases until gathering is reached.

	\paragraph{Theorem~\ref{th:probpossibility}; Part~1: from arbitrary to reachable.}
	Consider the smallest enclosing circle~\SEC[t] defined by the robots'
	locations in a configuration~\conf[t]. By definition of the smallest enclosing circle,
	and because a circle is convex,	all locations and all segments between them are
	either inside the circle or on its circumference. By Algorithm~\ref{alg:ff-gathering},
	a robot~$r$ selects a target $r'$ among the robots' locations, $r$ can move only to
	$r'$ or to some point in the segment between them. Thus, $r$ in \conf[t+1] is
	necessary enclosed by \SEC[t]. Thus, $\SEC[t+1] \subseteq \SEC[t]$. In other words,
	the smallest enclosing circle is non-increasing.

	To show the convergence, we now show that, with some positive probability~$p$, the
	diameter of the smallest enclosing circle decreases by at least $\delta$, which is
	a constant positive value.
		
	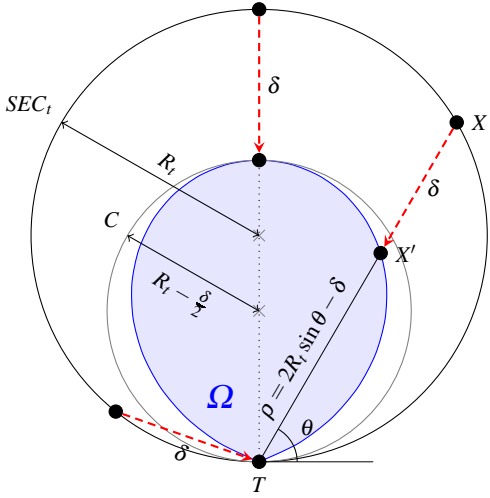
\begin{figure}
		\centering
		\begin{tikzpicture}
			\def\ddelta{2}
			\def\dradius{3}
			\def\thetaX{60}
			\def\diameter{(2*\dradius)}
			\def\asin{rad(asin(\ddelta/\diameter))}

			\draw [blue,fill=blue!10,smooth,domain=\asin:3.141-\asin,variable=\theta]
				plot (
					{(\diameter*sin(\theta r)-\ddelta)*cos(\theta r)},
					{(\diameter*sin(\theta r)-\ddelta)*sin(\theta r)}
				);
			
			\coordinate (Target) at (0,0) {};
			\coordinate (Top)    at (0,2*\dradius) {};
			\coordinate (Top')   at ($(Top)-(0,\ddelta)$) {};
			\coordinate (SEC-center) at (0,\dradius) {};
			\coordinate (C-center)   at ($(0,\dradius)-(0,\ddelta/2)$) {};
			\coordinate (Max)    at ($(Target)+({180-asin(\ddelta/\diameter)}:\ddelta)$) {};
			
			\node[blue,scale=1.5] at (120:1) {$\Omega$};
						
			\node[coordinate] (Xp) at ($({\diameter*sin(\thetaX)*cos(\thetaX)},{\diameter*sin(\thetaX)*sin(\thetaX)})$) {};
			\node[coordinate] (X'p) at ($({(\diameter*sin(\thetaX)-\ddelta)*cos(\thetaX)},{(\diameter*sin(\thetaX)-\ddelta)*sin(\thetaX)})$) {};
			
			\draw            (SEC-center) circle (\dradius);
			\draw[thin,gray] (C-center)   circle (\dradius-\ddelta/2);
			
			\draw[dotted] (Target) -- (Top');
			
			\draw[very thin] (X'p) -- node[above,sloped] {$\rho = 2R_t\sin\theta-\delta$} (Target);
			\draw[very thin] (Target) -- +(\dradius/2,0);
			
			\path (Top) -- node[right] {$\delta$} (Top');
			\path (Xp)  -- node[right] {$\delta$} (X'p);

			\node[robot,label=below:{\point{T}}]  (T) at (Target) {};

			\node[robot] (TopR)  at (Top) {};
			\node[robot] (TopR') at (Top') {};
			\draw[move]  (TopR) -- (TopR');
	
			\node[robot,label=right:{\point{X}}]  (X) at (Xp)   {};
			\node[robot,label=right:{\point{X'}}] (X') at (X'p) {};
			\draw[move] (X) -- (X');
			
			\node[robot] (MaxR) at (Max) {};
			\draw[move]  (MaxR) -- (T);
			\path (MaxR) -- node[below,sloped] {$\delta$} (T);
			
			\draw (5mm,0) arc (0:\thetaX/2:5mm)	node[anchor=south west] {$\theta$} arc (\thetaX/2:\thetaX:5mm);

			% centers of the SECs
			\node[point] at (SEC-center) {};
			\node[point] at (C-center) {};
			\draw[thin,<->] (SEC-center)
				 -- node[above,sloped] {$R_t$}
				 		+(150:\dradius)
						node[anchor=south east] {\SEC[t]};
			\draw[thin,<->] (C-center)
				 -- node[below,sloped] {$R_t-\frac{\delta}{2}$}
				 		+(150:{\dradius-\ddelta/2})
						node [anchor=south east] {$C$};
			
		\end{tikzpicture}
		\caption{Proof of Theorem~\ref{th:probpossibility}; Part~1: Strict decrease of
		the smallest enclosing circle \SEC[t] (of radius~$R_t$) occurs with positive
		probability.
		% $\delta$ is the minimum distance reachable by any robot.
		After all robots
		select \point{T} as a target and move once (dashed lines), they are all contained inside area $\Omega$,
		which is itself contained within a circle~$C$ of radius $R_t-\frac{\delta}{2}$.
		Positions are expressed in polar coordinates centered at \point{T}.}
		\label{fig:SEC:decrease}
	\end{figure}
	
	Let \conf[t] be a configuration and \SEC[t] the smallest enclosing circle in \conf[t].
	Let \point{T} be the position of a robot located on the boundary of \SEC[t].

	Consider the situation in which all other robots take \point{T} as their target for a
	successful move and let	\conf[x] be the resulting configuration
	(Fig.~\ref{fig:SEC:decrease}).
	We show that the
	smallest enclosing circle \SEC[x] in configuration \conf[x] is smaller than \SEC[t]
	in diameter by at least $\delta$.
	
	To characterize the movement of the robots, we consider polar coordinates
	$(\theta,\rho)$ centered at \point{T}. The smallest enclosing circle \SEC[t] is given by
	\[
		\theta \in \left[0 ; \pi\right]
		\\
		\rho = 2R_t\sin\theta
	\]
	Let $\Omega$ be the area that the robots will reach after moving toward $T$ of
	a distance at least $\delta$ (Fig.~\ref{fig:SEC:decrease}). This area can be
	characterized as follows%
	\footnote{
		Note that $\Omega$ is not a circle; it is best described as the inner loop of a limaçon of Pascal.
	}
	\[
		\theta \in \left[ \arcsin\frac{\delta}{2} ; \pi\!-\!\arcsin\frac{\delta}{2} \right]
		\\
		\rho = 2R_t\sin\theta - \delta
	\]
	Let $C$ be a circle with diameter $2R_t-\delta$ and anchored at \point{T}.
	\[
		\theta \in \left[ 0 ; \pi \right]
		\\
		\rho = (2R_t-\delta)\sin\theta
	\]
	Let us show that $\Omega$ is contained within $C$. When
	\[
		\theta \in \left[ \arcsin\frac{\delta}{2} ; \pi-\arcsin\frac{\delta}{2} \right]
	\]
	this holds if the following inequality always holds
	\begin{align*}
		(2R_t-\delta)\sin\theta &\geq 2R_t\sin\theta - \delta
		\\
		-\delta\sin\theta &\geq - \delta
		\\
		\sin\theta &\leq 1
	\end{align*}
	which is always true. Since all robots are contained within $C$, it follows
	that \SEC[x] is contained within $C$, and thus its diameter is at most
	$2R_t-\delta$.
	
	Notice that selecting \point{T} on the boundary is a worst case	for convergence; the
	best case occurs when \point{T} is located near the center of \SEC[t] and the decrease
	in diameter then becomes $2\delta$.

	Let us now show that, with positive probability~$p$, \SEC decreases by at least
	$\delta$ in diameter in a constant number of activation steps.
	
	Note that we do not need to calculate the probability $p$ accurately. It is
	sufficient to show that $p$ has a positive lower bound. For this reason, we
	do not need to consider all cases.
		
	Consider an	execution fragment~\fragment{t}{t+K}
	of $K=nk$ successive configurations	starting in \conf[t].
	The scheduler is fair $k$-bounded so, regardless of its choices, every robot
	is activated at least once and at most $nk$ times in fragment \fragment{t}{t+K}.
	
	We need to calculate the probability that (1)~every robot except one (at \point{T})
	makes exactly one successful move toward \point{T} at first trial and takes no
	further move in the $K-1$ remaining steps, and (2)~one robot (at \point{T}) makes
	no successful move in $K$ steps. 
	
	For one of the robots (except one at \point{T}), the probability that it makes a
	successful move toward \point{T} at first trial is
	\[
		\PP[\mbox{1 robot moves to \point{T} at first trial}]
	  	\geq \alpha_{\min} \frac{1}{n}
			= \frac{1}{n^2}
	\]
	Assuming a worst case situation when the scheduler activates the robot every time after the
	move, the probability for that robot to take no successful move in $K-1$ steps is
	\[
		\PP[\mbox{1 robot stays for $K-1$ steps}]
	  	\geq (1-\alpha_{\max}) ^ {K-1} = \frac{1}{2^{K-1}}
	\]
	For the $n-1$ robots, we combine and obtain
	\[
		\PP[\mbox{$n\!-\!1$ robots move once then stay}]
			\geq \frac{1}{ \left(n^2 2^{K-1}\right) ^ {n-1} }
	\]
	Assuming again a worst case scheduling decision, the probability that the
	robot located at \point{T} takes no move in all $K$ steps is
  \[
  	\PP[\mbox{$T$ stays for $K$ steps}]
    	\geq (1-\alpha_{\max}) ^ K = \frac{1}{2^K}
	\]
	and we combine all this to obtain the probability that one robot (at \point{T}) takes
	no move and that all other robots move exactly once with \point{T} as their target
	\[
  	\PP[\cdots]
			\geq
			\frac{1}{ 2^{nK-n+1} n^{2(n-1)} }
%    	= (1-\alpha) ^ K \left( \frac{ \alpha (1-\alpha) ^ {K-1} }{n} \right)^{n-1}
	\]
	That probability $\PP[\cdots]$ is a strictly positive constant because $K$ and $n$
	are both positive constants.
	To sum up, the probability that all robots are contained in a circle with
	diameter decreased by constant $\delta>0$ in constant $K=nk$ steps is
	\emph{at least} $\PP[\cdots]$.

	An upper bound on the expected number of steps need for all robots to be
	reachable is now easily obtained from a negative binomial distribution, following
	the same method as in Lemma~\ref{lem:prob-2gathering}. With $D_0$ being the diameter
	of \SEC in the initial configuration, the number of successful progress necessary is
	$x=\lceil\frac{D_0}{\delta}\rceil\!-\!1$, and we obtain, after much simplification,
	\[
		\E[\mbox{steps to reachable}]
		\leq
		K\frac{x}{\PP[\cdots]}
		=
		 k
		 \frac{\lceil\frac{D_0}{\delta}\rceil\!-\!1}
		 {
		 	2 ^ {kn^2\!-\!n\!+\!1}
			n ^ {2n\!-\!3}
		 }
	\]
	which is constant since $D_0$, $k$, and $n$ are all constant.
	This completes the proof of the first part.

	%%%
	\paragraph{Theorem~\ref{th:probpossibility}; Part~2: From reachable to gathering.}
	Starting from a configuration in which all robots are mutually reachable, we show that
	we reach gathering with high probability.
	
	First, since the smallest enclosing circle is non increasing, once a reachable
	configuration has been achieved, all subsequent configurations are reachable.
	
	Taking an execution fragment of $nk$ steps starting in a reachable
	configuration, there is a positive probability that
	(1)~every robot except one (say \point{T'}) makes exactly one successful move toward \point{T'}
	at first trial and takes no	further move in the $nk-1$ remaining steps, and
	(2)~one robot (at \point{T'}) makes	no successful move in $nk$ steps. 

	This is the same probability as one fragment of $nk$ steps considered in the first part,
	so one additional successful fragment will reach gathering. This yields,
	\[
		\E[\mbox{steps from arbitrary to gathering}]
		\leq
			k
		 \frac{\lceil\frac{D_0}{\delta}\rceil}
		 {
		 	2 ^ {kn^2\!-\!n\!+\!1}
			n ^ {2n\!-\!3}
		 }
	\]
	Thus, gathering	is achieved in constant expected steps.	
	\qed
\end{proof}

% !TEX root = gathering-journal.tex

\section{Crash-Tolerant Self-Stabilizing Gathering}
\label{sec:ft-gathering}
We now extend the study on the feasibility of gathering to fault-prone
environments. In this section, we consider the family of $(n,f)$-crash
models, where $n$ is the total number of robots and up to $f<n$ of them
are faulty and may possibly crash.

Recall the two definitions for self-stabilizing gathering, namely,
strong and weak. Let us first state a simple impossibility about strong
gathering in systems with multiple faults.
\begin{lemma}
	\label{lem:strong:2f:imp}
	No algorithm can possibly solve strong gathering in a crash-prone system with $f>1$.
\end{lemma}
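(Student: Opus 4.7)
The plan is a straightforward adversarial argument exploiting the fact that strong gathering, as defined, requires \emph{all} robots (including crashed ones) to eventually occupy the same location, together with the fact that a crashed robot never moves again.

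First, I would fix $f \geq 2$ and pick an arbitrary initial configuration \conf[0] in which at least two robots, say $r_i$ and $r_j$, occupy two distinct locations \point{p_i} and \point{p_j}. Such a configuration exists because self-stabilizing gathering must handle \emph{arbitrary} initial configurations, and in particular multivalent ones; if no such \conf[0] existed the problem would already be trivially constrained to univalent inputs.

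Next, I would let the adversary declare both $r_i$ and $r_j$ faulty and have them crash before ever performing any Move action (the crash model allows a crash to occur at any time, and in particular at time $0$, and this is invisible to the other robots). From that moment on, $r_i$ remains forever at \point{p_i} and $r_j$ remains forever at \point{p_j}, regardless of what the remaining $n-2$ robots do and regardless of the algorithm.

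Consequently, in every configuration \conf[t] of every execution extending this prefix, both \point{p_i} and \point{p_j} are occupied, so $\val{\conf[t]} \geq 2$ and \conf[t] is never univalent. This directly contradicts the Convergence clause of strong gathering, which requires the execution to reach a univalent configuration in a finite number of steps. Since the argument holds for every candidate algorithm and every scheduler (we only used the ability of the adversary to crash two robots at distinct initial locations), no algorithm solves strong gathering when $f>1$. There is essentially no obstacle here; the only subtlety is to make explicit that strong gathering counts crashed robots in the valence, which is what distinguishes it from weak gathering and is exactly why the impossibility is immediate.
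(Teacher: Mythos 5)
Your proposal is correct and follows essentially the same argument as the paper: crash two robots at distinct locations before they move, so the valence never drops below two and Convergence fails. The paper's proof is the same three-line adversarial argument, just stated more tersely.
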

\begin{proof}
	Consider any multivalent initial configuration ($f>1$ thus $n>1$).
	Take any two robots with distinct locations and let them crash before they move.
	They cannot move, so they will never share the same location, and hence strong
	gathering is never achieved.
	\qed
\end{proof}
Notice that the above lemma holds regardless of the scheduler or additional
assumptions of any kind, and obviously applies to both deterministic and
probabilistic algorithms.

This leaves for study
the case of strong gathering in the face of a single faulty robot
(in Sect.~\ref{sec:crash:f=1-strong}),
and the case of weak gathering with multiple faulty robots
(in Sect.~\ref{sec:crash:f>1-weak}).

\subsection{Single Crash ($f=1$); Strong Gathering}
\label{sec:crash:f=1-strong}
We investigate the feasibility of strong gathering in the presence of a single
faulty robot. We express the impossibility lemmas to cover one or several
robots and simply refer to Lemma~\ref{lem:strong:2f:imp} for multiple robots
so that the proof can focus on the case of a single crash.
\begin{lemma}
	\label{lem:imp-ff-gathering}
  \label{(3,1)gathering}
  In an $(n,f)$-crash system with $n\geq3$ and $f\geq 1$,
  strong gathering is deterministically
  impossible under a round-robin scheduler, even with multiplicity knowledge.
\end{lemma}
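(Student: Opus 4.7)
The plan is to split on $f$. For $f\ge 2$, the claim is immediate from Lemma~\ref{lem:strong:2f:imp} (take two initially distinct robots and crash both before they move). So it suffices to treat $f=1$, $n\ge 3$. Assume for contradiction that a deterministic algorithm $\mathcal A$ solves strong gathering under a round-robin scheduler, and consider the 1-bivalent initial configuration $\gamma$ in which one robot $r_1$---the potential crash victim---sits alone at $p$ while the remaining $n-1$ robots are colocated at $q$ forming the castle. Since $\mathcal A$ is oblivious and deterministic, its behavior on $\gamma$ is fully described by a singleton action $a_s$ (for the robot at $p$) and a castle action $a_t$ (for any robot at $q$); the argument case-analyzes $a_t$.

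If $a_t$ does not move the castle robot, consider the execution in which $r_1$ is crashed. No castle robot ever leaves $q$, and $r_1$ cannot move, so the configuration stays at $\gamma$ forever. Strong gathering requires every robot---including $r_1$, which is immobilized at $p$---eventually to reach a common location, so convergence fails.

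If $a_t$ does prescribe a movement, I would construct a fault-free round-robin execution that cycles, contradicting convergence even in the absence of crashes. The adversary fixes a round-robin order that first activates a castle robot---for $n=3$, take $r_2,r_1,r_3$. When $a_t$ sends the castle robot to $p$, the first activation yields another 1-bivalent configuration with the roles of $p$ and $q$ swapped; by isomorphism of the observation the algorithm prescribes the same rule to the next activated robot (which now sits at the new castle), sending it back across. Iterating the fixed order then cycles the system through six 1-bivalent configurations and returns it to $\gamma$. Note that this cycle never activates the singleton, so the choice of $a_s$ is irrelevant. For $n\ge 4$ the first $a_t$-move produces a bivalent configuration of type $(2,n-2)$ rather than a 1-bivalent, and Lemma~\ref{lem:centralized:bivalent} forces any continuation that reaches a univalent to cross a 1-bivalent where the $(a_s,a_t)$ dilemma reappears; the adversary threads the fixed cyclic order through these intermediate multivalent configurations to assemble an analogous cycle. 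In the residual subcase where $a_t$'s destination is neither $p$ nor $q$, the configuration becomes trivalent and Lemma~\ref{lem:centralized:bivalent} again routes the execution through a 1-bivalent, bringing us back to the same obstruction.

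The hardest step is the construction for $n\ge 4$, since the intermediate configurations after the first castle move are not isomorphic to $\gamma$ and the cycle must be assembled from the algorithm's responses on a family of distinct multivalent configurations compatible with a single fixed cyclic activation order. The conceptual core underlying every case is that the observation of $\gamma$ carries no information that could distinguish a crashed $r_1$ from a correct one, so any deterministic choice of $(a_s,a_t)$ must fail either the crashed-singleton execution or some adversarially chosen fault-free round-robin execution.
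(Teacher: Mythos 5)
Your overall strategy is the same as the paper's: dispose of $f\geq 2$ via Lemma~\ref{lem:strong:2f:imp}, anchor the $f=1$ case on a 1-bivalent configuration, and play the two executions against each other --- a fault-free round-robin cycle if a castle robot ever heads for the singleton, and a crashed singleton otherwise. The ``stay'' branch (crash $r_1$, the configuration is frozen, strong gathering fails) and the $n=3$ cycle with the order $r_2,r_1,r_3$ are both fine and match the paper's intent.

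The genuine gap is in how you partition the castle robot's behaviour and what is left unproved in two of the three branches. First, for $n\geq 4$ your cycle construction in the ``moves to $p$'' branch is not actually carried out: after one castle robot crosses over, the configuration has multiplicities $(n-2,2)$, which under multiplicity detection is \emph{not} isomorphic to $\gamma$, so the algorithm owes you nothing about its behaviour there, and ``the adversary threads the fixed cyclic order through these intermediate multivalent configurations'' is an assertion, not an argument --- you flag this yourself as the hardest step, but it is precisely the step that needs to be done. Second, the ``moves elsewhere'' branch is handled by an unresolved recursion: Lemma~\ref{lem:centralized:bivalent} does force another 1-bivalent configuration later, but that configuration has a different geometry, the trichotomy reopens there, and nothing guarantees the regress terminates in a contradiction. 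The paper avoids both problems by cutting the cases differently: its primary dichotomy is ``the castle robot moves to $r$'s location'' versus ``it does not'' (the latter covering both your ``stay'' and ``move elsewhere''), and the entire negative branch is killed at once by crashing $r$ --- obliviousness and determinism force the same decision in the crashed and fault-free worlds, so the correct robots never reach the crashed robot's location. The cycle is then needed only to exclude the single remaining branch ``move to $r$'', and for that the paper reuses the already-constructed schedule of Theorem~\ref{th:detimpossibility-ff} (Figure~\ref{fig:impossibility-fault-free}) rather than assembling a new cycle from non-isomorphic intermediate configurations. To repair your proof you should either adopt that inverted case split or supply the missing $n\geq 4$ construction and a well-foundedness argument for the trivalent recursion.
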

\begin{proof}
	The case when $f>1$ is covered by Lemma~\ref{lem:strong:2f:imp},
	which leaves the case when $f=1$.

	By contradiction, assume that an algorithm \algo{A} solves gathering
	deterministically in an $(n,1)$-crash system under a round-robin scheduler.
	
	From Theorem~\ref{th:detimpossibility-ff}, \algo{A} must rely on multiplicity knowledge.
	The impossibility of Theorem~\ref{th:detimpossibility-ff} indeed applies here because
	a crash-free execution is valid in an $(n,1)$-crash system. Assuming that \algo{A}
	could solve gathering without multiplicity knowledge would imply that it can solve it
	in a fault-free execution; a contradiction.	

	A round-robin scheduler is centralized by definition, so
	Lemma~\ref{lem:centralized:bivalent} applies and the system must
	necessarily reach a 1-bivalent configuration before it achieves gathering.
	Let \conf[r] be such a configuration and let $r$ be the distinct robot.
	Consider also an arbitrary robot in the other location and call it $r'$.
	
	It is easy to see that $r'$ cannot chose to move to $r$ in a 1-bivalent
	configuration, or else,
	an adversary can lead a fault-free execution to the	same cyclic
	execution described in the proof of Theorem~\ref{th:detimpossibility-ff}
	(Fig.~\ref{fig:impossibility-fault-free}).
	
	Now, consider the case when $r$ crashes in \conf[r].
	The algorithm is deterministic and robots are oblivious, so $r'$ cannot
	distinguish that configuration from the fault-free one.
	The same decision must hence be applied and thus $r'$ and $r$ can never
	share the same location; a contradiction.
	\qed
\end{proof}

We now prove a similar impossibility for probabilistic algorithms, but
this time, under a fair centralized scheduler.
\begin{lemma}
  \label{lem:prob:(n,f)strong:imp}
  \label{lemma:(n,1)ip}
  \label{(3,1)pstrong-gathering}
  In an $(n,f)$-crash system with $n \geq 3$ and $f\geq1$,
  there is no probabilistic algorithm that solves strong gathering
  under a fair centralized scheduler,
  even with multiplicity knowledge.
\end{lemma}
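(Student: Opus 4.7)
The plan is to split along the value of $f$. For $f > 1$, the statement follows immediately from Lemma~\ref{lem:strong:2f:imp}: that impossibility is established by a purely combinatorial argument about two crashed robots at distinct locations, and is oblivious both to the scheduler and to the deterministic versus probabilistic nature of the algorithm. The remainder of the plan therefore concentrates on the case $f = 1$.

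For $f = 1$, I would assume by contradiction that a probabilistic algorithm $\algo{A}$ solves strong gathering in an $(n,1)$-crash system under a fair centralized scheduler, and then derandomize $\algo{A}$ in the spirit of the proof of Lemma~\ref{lem:probimpossibility-ff}. A fair centralized scheduler does not bound the number of consecutive activations of a single robot, so the adversary may re-activate a chosen robot $r$ as many times as required until $r$'s coin toss realizes any desired positive-probability outcome, while still meeting fairness by eventually activating every robot. Because the per-activation success probability is bounded below by a positive constant, each such ``derandomization burst'' terminates almost surely in finitely many activations.

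With derandomization at hand, I would reduce the problem to the deterministic impossibility of Lemma~\ref{lem:imp-ff-gathering}. The adversary starts in a 1-bivalent configuration in which the faulty robot $r$ is the distinct one, crashes $r$ immediately, and thereafter emulates the round-robin schedule used in Fig.~\ref{fig:impossibility-fault-free}, realizing each intended move through a derandomization burst. By Lemma~\ref{lem:imp-ff-gathering} the resulting execution never reaches univalence: either $\algo{A}$ assigns probability zero to the multiplicity robots moving, in which case no progress is possible (the only robot that could break the 1-bivalent configuration is the crashed $r$); or it assigns positive probability to their moving toward the distinct location, in which case the derandomized execution realizes precisely the cycle of Fig.~\ref{fig:impossibility-fault-free}, which never reaches univalence. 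Obliviousness guarantees that, in either branch, the crash of $r$ cannot be detected and the probabilistic behavior observed in the fault-free analogue transfers verbatim.

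The main obstacle is justifying rigorously that the infinite derandomization simulation happens with probability one while preserving fairness. Each individual burst succeeds almost surely, as a geometric trial with positive success probability, but the full adversarial strategy is a countable sequence of such bursts, interleaved with activations of other robots to honor the fairness constraint. A Borel--Cantelli-style argument, together with the observation that the ``bad'' trajectory furnished by Lemma~\ref{lem:imp-ff-gathering} is itself fair for the adversary (every robot is activated infinitely often along the cycle), closes the gap and delivers the contradiction.
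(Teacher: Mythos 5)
Your decomposition on $f$ and the appeal to Lemma~\ref{lem:strong:2f:imp} for $f>1$ match the paper exactly, and the derandomization device borrowed from Lemma~\ref{lem:probimpossibility-ff} is a legitimate tool under a fair centralized scheduler. The gap is in your case~(b). First, the cycle of Fig.~\ref{fig:impossibility-fault-free} cannot be emulated once $r$ is crashed: the transition $\Gamma_2\rightarrow\Gamma_3$ requires the distinct robot itself to move, and your adversary has just killed it. Second, that cycle is constructed for the \emph{no-multiplicity} setting, whereas the present lemma explicitly grants multiplicity knowledge; the configurations traversed by the cycle have distinguishable multiplicity profiles (e.g., $(1,n\!-\!1)$ versus $(2,n\!-\!2)$), so an algorithm may assign positive probability to a castle robot stepping toward the singleton from a 1-bivalent configuration and yet probability one to stepping back from the resulting $(2,n\!-\!2)$ configuration---your two-branch dichotomy does not cover such behavior. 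Third, and most damaging, if castle robots do drift one by one onto $r$'s position, nothing in your argument prevents them from all accumulating there: a crashed robot does not obstruct gathering \emph{at its own location}, so the execution you construct in case~(b) may simply end in a univalent configuration, and no contradiction is obtained.

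The paper closes exactly this hole with a different adversary. After crashing the distinct robot $r$ in the 1-bivalent configuration reached via Lemma~\ref{lem:centralized:bivalent}, whenever a correct robot $r'$ moves onto $r$'s location, the scheduler re-activates $r'$ repeatedly (exploiting the unboundedness of the fair centralized scheduler) until its coin tosses send it back to the large group $G$; such a return move must occur with positive probability, since otherwise the group $G$ could never have been assembled by oblivious, anonymous, disoriented robots in the first place. The resulting infinite schedule is still fair and centralized, and it never produces a univalent configuration. This bounce-back step is the essential ingredient missing from your proposal; your Borel--Cantelli concerns at the end are the right kind of bookkeeping, but they are applied to a reduction target that does not work.
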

\begin{proof}
	The case when $f>1$ is covered by Lemma~\ref{lem:strong:2f:imp},
	which leaves the case when $f=1$.

	By contradiction, assume that an algorithm \algo{A} solves gathering
	probabilistically in an $(n,1)$-crash system under a fair centralized scheduler.
	
	Lemma~\ref{lem:centralized:bivalent} applies since the scheduler is centralized,
	and any gathering execution must reach a 1-bivalent configuration~\conf[r].	
	Let $r$ be the distinct robot in \conf[r] and \point{G} the location of the other robots.
	
	We can now construct an adversary that prevents gathering.
	First, let $r$ be the faulty robot and let it crash in \conf[r] before it moves.
	Second, let the adversary activate the correct robots in turn. Each time a robot
	$r'$ moves to $r$, it is activated repeatedly until it moves back to \point{G}.
	The move to \point{G} must be possible or else \point{G} could not form in the first place
	(recall that robots are oblivious, anonymous, and disoriented).
	This activation is compatible with the fair centralized scheduler because every
	correct robot is activated infinitely often (fair) and in mutual exclusion
	(centralized). This leads to an infinite execution that holds no univalent
	configuration.
	Thus, \algo{A} violates the Convergence property of gathering; a contradiction.
	\qed
\end{proof}

We present now a simple lemma for a probabilistic algorithm in a system with two robots.
\begin{lemma}
	\label{lem:prob:(2,1)strong}
	In a $(2,1)$-crash system, Algorithm~\ref{alg:ff-gathering} solves
	the strong gathering problem probabilistically under an unfair scheduler.
\end{lemma}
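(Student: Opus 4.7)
The plan is to split the analysis into two cases according to whether the faulty robot actually crashes before gathering is reached. If the faulty robot never crashes (or crashes only after gathering is achieved), then Lemma~\ref{lem:prob-2gathering} already gives the conclusion, since that lemma established gathering of two correct robots under any scheduler (including an unfair one). So the only case that really needs work is the one in which some robot, call it $r_2$, crashes at a time $t_0$ while the configuration is still bivalent.

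From time $t_0$ onward, $r_2$ acts as a stationary target located at a fixed point~\point{G}. The correct robot~$r_1$ must be driven to \point{G} using Algorithm~\ref{alg:ff-gathering}. The first key observation is that the unfair scheduler, by its non-triviality clause, must activate a non-faulty robot infinitely often; since $r_1$ is the only non-faulty robot, $r_1$ is activated infinitely often after $t_0$. Each such activation sees a bivalent observation ($|P|=2$), so $\alpha=\tfrac{1}{2}$: with probability $\tfrac{1}{2}$, $r_1$ decides to move, and conditioned on moving, with probability $\tfrac{1}{2}$ it selects \point{G} as target (the other choice is its own location, a no-op). Therefore each activation of $r_1$ is an independent Bernoulli trial with success probability at least $\tfrac{1}{4}$ of producing a move toward \point{G}.

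From here the argument mirrors the reachable-phase and final-step analysis of Lemma~\ref{lem:prob-2gathering}. Let $D_0$ be the distance between $r_1$ and \point{G} at time $t_0$, and let $\delta$ be the reachable distance of $r_1$. Each successful move toward \point{G} either achieves gathering (if $r_1$ is already within $\delta$ of \point{G}) or decreases the remaining distance by at least $\delta$. Hence at most $\lceil D_0/\delta\rceil$ successful moves suffice, and by the negative binomial bound used in Lemma~\ref{lem:prob-2gathering}, the expected number of activations of $r_1$ needed is at most $4\lceil D_0/\delta\rceil$, which is finite. Once gathering is reached, the configuration is univalent, $|P|=1$, and neither robot can leave \point{G}, so Closure holds trivially.

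The main obstacle is not the probabilistic estimate itself but ruling out the pathological scheduling in which the unfair scheduler could try to starve $r_1$; this is precisely where the non-triviality condition on unfair schedulers is essential, and the proof must make explicit that once $r_2$ has crashed, all subsequent activations that occur at all must go to $r_1$. A small secondary subtlety is to note that the crash of $r_2$ cannot happen at a univalent configuration without already yielding gathering (since in a univalent configuration neither robot would move anyway), so nothing is lost by restricting attention to crashes that occur in a bivalent configuration.
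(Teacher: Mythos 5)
Your proposal is correct and follows essentially the same route as the paper's proof: a two-case split into the fault-free execution (deferred to the earlier two-robot lemma) and the post-crash execution, where the non-triviality clause of the unfair scheduler forces infinitely many activations of the sole correct robot, whose repeated probabilistic moves then reach the crashed robot's location. Your version merely spells out the per-activation success probability and the expected-time bound, which the paper leaves implicit.
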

\begin{proof}
	In a fault-free execution, the proof of	Lemma~\ref{lem:sim:centr-2gathering}
	applies as it stands.
	In an execution with one crash, gathering	is achieved through repeated
	activations of the correct robot. By the non-triviality condition of the
	unfair scheduler, it must activate the correct robot infinitely often.
	\qed
\end{proof}
%
% XD: Note that it also works with an unfair scheduler, so I changed it.
%

Based on the same proof, we obtain a lemma for the deterministic gathering of two robots under a centralized scheduler.
\begin{lemma}
	\label{lem:det:(2,1)strong}
	In a $(2,1)$-crash system, Algorithm~\ref{alg:ff-gathering} solves
	strong gathering probabilistically under an unfair centralized scheduler.
\end{lemma}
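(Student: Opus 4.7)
The plan is to reuse the structure of the proof of Lemma~\ref{lem:prob:(2,1)strong} almost verbatim, observing that the centralized restriction only simplifies matters: at each step exactly one robot is activated, so we never need to worry about the ``both robots move simultaneously'' case that appeared in the non-centralized proof. I would split the analysis into a fault-free phase and a crash phase, mirroring the earlier lemma.

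First, for a crash-free execution starting from an arbitrary configuration~\conf[0]: since there are only two robots, any non-gathered configuration is bivalent, so Algorithm~\ref{alg:ff-gathering} assigns $\alpha = 1/2$. Under the centralized scheduler, each step activates exactly one robot, which with probability $1/2$ selects the other robot's location as target and moves. If the two robots are within the reachable distance~$\delta$, one successful coin flip attains gathering; otherwise each successful move decreases the inter-robot distance by at least $\delta$. Treating successful flips as Bernoulli($1/2$) trials, a negative binomial argument identical to that in Lemma~\ref{lem:prob-2gathering} bounds the expected number of steps by $O(\lceil D_0/\delta \rceil)$, where $D_0$ is the initial distance.

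Second, for an execution in which one robot crashes: say the faulty robot crashes at location~$p$ and never moves again. The non-triviality condition on the unfair scheduler guarantees that a non-faulty robot is activated infinitely often, and since there is only one correct robot, it must itself be activated infinitely often. The configuration is bivalent at every activation, so Algorithm~\ref{alg:ff-gathering} uses $\alpha = 1/2$; with probability $1/2$ the correct robot attempts a move toward~$p$. The same distance/negative binomial estimate as above shows that the correct robot reaches~$p$ in finite expected time, producing a univalent configuration. Closure is immediate because once both robots share a location, the single point~$P = \{p\}$ makes every target computation self-directed.

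I do not expect a significant obstacle here. The only subtlety to double-check is that the unfair centralized scheduler cannot indefinitely delay the correct robot, but this is precisely the non-triviality condition built into the unfair-scheduler definition in Section~\ref{sec:model}. Compared with Lemma~\ref{lem:prob:(2,1)strong}, the centralized assumption removes the mixed case of simultaneous moves and replaces the $\min(\alpha, 2\alpha(1-\alpha))$ bound with a cleaner single Bernoulli trial per step, so the proof is strictly simpler than its non-centralized counterpart.
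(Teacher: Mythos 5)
Your proof is correct and follows essentially the same route as the paper, which gives no separate argument for this lemma and simply states that it follows ``based on the same proof'' as Lemma~\ref{lem:prob:(2,1)strong} (fault-free phase via the two-robot probabilistic argument, crash phase via the non-triviality condition forcing infinitely many activations of the lone correct robot). Your version merely spells out the details the paper leaves implicit, and your observation that centralization removes the simultaneous-move case is consistent with the paper's intent.
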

%The proof, a trivial adaptation from that of Lemma~\ref{lem:prob:(2,1)strong}, is omitted.

\medskip
In a system with more than two robots, strong gathering can still be achieved
with a probabilistic algorithm, but requires the scheduler to be fair bounded.
The algorithm does not need to rely on multiplicity knowledge.
\begin{lemma}
  \label{lemma:coralg-ff-gathering}
  In an $(n,1)$-crash system with $n\geq3$ and under a fair bounded scheduler,
  Algorithm~\ref{alg:ff-gathering} solves
  strong gathering probabilistically.
\end{lemma}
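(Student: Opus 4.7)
The plan is to reduce the argument to that of Theorem~\ref{th:probpossibility}, noting that a single crashed robot only constrains \emph{where} gathering happens, not \emph{whether} it happens. First I would split on whether a crash ever occurs. If it does not, the lemma follows directly from Theorem~\ref{th:probpossibility}. Otherwise, let $c$ be the faulty robot and let $\point{T}$ be its location at the moment it crashes; from that step onward, $c$ is permanently at $\point{T}$, so strong gathering amounts to driving every correct robot to $\point{T}$.

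Next, I would reuse the two-part structure of the proof of Theorem~\ref{th:probpossibility}, re-anchored at the crash time. For Part~1 (SEC shrinks toward a reachable configuration), observe that $\SEC[t]$ remains non-increasing after the crash: $c$ does not move, and every correct robot, by Algorithm~\ref{alg:ff-gathering}, chooses its target among currently observed locations and hence moves into the current convex hull, which lies inside $\SEC[t]$. The positive-probability bound used in Theorem~\ref{th:probpossibility} to decrease the SEC diameter by at least $\delta$ over a window of $K=nk$ activations transfers verbatim, since the fair $k$-bounded scheduler still activates each correct robot between $1$ and $K$ times per window, and the coin tosses of the (at most) $n-1$ correct robots are unaffected by $c$'s silence.

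For Part~2 (from reachable to gathered), the decisive observation is that $\point{T}$ is always a point of every correct robot's observation, so it is selected as the target with probability at least $\frac{1}{n}$ per successful activation. Once a reachable configuration is attained, I would consider any window of $K=nk$ consecutive activations and lower bound the probability that each correct robot (i)~is activated, (ii)~succeeds its coin toss on exactly one of its activations in the window, (iii)~selects $\point{T}$ as target on that successful trial, and (iv)~stays still on all other activations in the window. Because every correct robot is within distance $\delta$ of $\point{T}$, each such move actually reaches $\point{T}$, producing a univalent configuration at the end of the window, which is also a closed configuration since $c$ is immobile at $\point{T}$ and the remaining robots observe a univalent configuration and draw $\point{T}$ with probability one. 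A negative-binomial tail argument analogous to that in Lemma~\ref{lem:prob-2gathering} and Theorem~\ref{th:probpossibility} then gives constant expected steps to gathering.

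The main obstacle I expect is merely bookkeeping rather than a conceptual hurdle: one must verify that the bound $\PP[\cdots]$ from Theorem~\ref{th:probpossibility} still applies when the anchor of the gathering is forced to be $\point{T}$ (the crash location) rather than a freely chosen boundary point of $\SEC$. This is actually the easy case, since the probability lower bound in Theorem~\ref{th:probpossibility} did not depend on the identity of the anchor, and because $\point{T}$ lies inside every $\SEC[t]$ after the crash, the geometric argument of Figure (in the proof of Theorem~\ref{th:probpossibility}) applies in particular to $\point{T}$. The closure requirement for strong gathering is automatically satisfied once all $n$ robots, correct and crashed, share $\point{T}$: in a univalent configuration the algorithm prescribes moving to the unique observed location, so all subsequent configurations remain univalent.
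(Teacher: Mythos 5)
Your proposal is correct and follows essentially the same route as the paper: the paper's proof consists of a single sentence stating that the argument of Theorem~\ref{th:probpossibility} applies verbatim once the anchor point $\point{T}$ of Part~1 is chosen as the location of the faulty robot (crashed or not), with Part~2 unchanged. Your additional observations --- the case split on whether a crash occurs, the fact that $\point{T}$ lying inside $\SEC[t]$ is the easy (indeed favorable) case for the geometric argument, and the closure check --- are exactly the bookkeeping the paper leaves implicit.
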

\begin{proof}
	The proof is identical to that of Theorem~\ref{th:probpossibility}, where
	the target location $T$ of the first part is chosen as the location of the
	faulty robot (crashed or not), and the second part requires no adaptation.
	\qed
\end{proof}

%%%%
\subsection{Multiple Crashes ($f\geq2$); Weak Gathering}
\label{sec:crash:f>1-weak}
We now extend the study to the case of weak gathering in the presence
of multiple faulty robots.

We begin by proving the impossibility of deterministic and
probabilistic weak gathering under a round-robin scheduler
without additional assumptions.
\begin{lemma}
  \label{lemma:impweak}
%[impossibility (n,f)-gathering without multiplicity knowledge]
  In a $(n,f)$-crash system with $n \geq 3$ and $f \geq 2$, there
  is neither a probabilistic nor a deterministic algorithm that solves
  weak gathering under a round-robin scheduler, 
  without additional assumptions.
\end{lemma}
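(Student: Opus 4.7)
The plan is to proceed by contradiction. Assume $\mathcal{A}$ (deterministic or probabilistic) solves weak gathering in the $(n,f)$-crash model with $n \geq 3$ and $f \geq 2$ under round-robin, without multiplicity detection. I treat the two cases separately.

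For the deterministic case, I would observe that the $(n,f)$-crash model admits fault-free executions, in which the adversary simply declines to crash any robot. Restricted to such executions, weak and strong gathering coincide, so a deterministic $\mathcal{A}$ would also solve self-stabilizing gathering under round-robin in a fault-free system without multiplicity, directly contradicting Theorem~\ref{th:detimpossibility-ff}.

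For the probabilistic case, I would construct two configurations that any correct robot cannot distinguish by its observation, yet which demand contradictory actions. Let $A, B$ be distinct points and define
\begin{itemize}
\item $\Gamma_1$: one faulty robot crashed at $A$, one at $B$, and all $n-2$ correct robots located at $A$. The multiplicity at $A$ is $n-1$, uniquely maximal, so $\Gamma_1$ is \emph{gathered}.
\item $\Gamma_2$: the same two crashes, one correct robot at $A$, and the remaining $n-3$ correct robots at $B$ (possible once $n \geq 4$). For $n=4$ the two locations tie in multiplicity; for $n\geq 5$ the unique maximum lies at $B$ while a correct robot sits at $A$. Either way, $\Gamma_2$ is \emph{not gathered}.
\end{itemize}
In both configurations, the observation of every correct robot is the set $\{A, B\}$, so $\mathcal{A}$'s action distribution on this observation is identical.

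Since round-robin activates exactly one robot at a time, any move in $\Gamma_1$ transfers a single correct robot from $A$ to $B$, producing multiplicities $n-2$ at $A$ and $2$ at $B$. For $n \geq 4$ this breaks gathering (tie at $n=4$; the moved correct robot is off the unique maximum for $n\geq 5$), so Closure is violated. Hence $\mathcal{A}$ must respond ``stay'' with probability one on observation $\{A, B\}$. But Convergence in $\Gamma_2$ requires strictly positive probability of moving from $A$ to $B$ to consolidate the correct robots; the two requirements clash, ruling out $\mathcal{A}$.

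The main obstacle—and the reason two crashes are essential—is constructing the indistinguishable pair $\Gamma_1,\Gamma_2$ so that any single activated move provably breaks gathering in $\Gamma_1$; with only a single crash the trick fails, consistent with Agmon--Peleg's positive Theorem~\ref{th:ap:crashpos}. The construction is cleanest for $n \geq 4$, where two correct robots can be placed on the two sides of $\Gamma_2$. The borderline case $n=3$ (one correct, two faulty) would be handled by a separate refinement exploiting symmetric trivalent placements together with the adversary's adaptive crash timing to force mutually inconsistent local decisions.
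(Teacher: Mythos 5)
Your deterministic half is correct and is the natural reduction: a weak-gathering algorithm for the $(n,f)$-crash model must in particular handle fault-free executions, where gathered and univalent coincide, so Theorem~\ref{th:detimpossibility-ff} applies directly. Your probabilistic half is also sound for $n\geq 4$, and it is a genuine variant of the paper's argument rather than a copy: the paper lets a fault-free execution reach a 1-bivalent configuration (via Lemma~\ref{lem:centralized:bivalent}), crashes both the singleton $r_1$ and one robot $r_2$ of the large group there, notes that the resulting configuration is gathered, and then argues that fault-free \emph{Convergence} forces every robot seeing exactly two occupied points to move with positive probability, so round-robin activation of a correct survivor almost surely breaks \emph{Closure}. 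You run the implication the other way: Closure on the gathered $\Gamma_1$ forces ``stay with probability one'' on the two-point observation, and this contradicts Convergence on the non-gathered $\Gamma_2$. The two routes are contrapositives of one another; yours has the minor advantage of not needing the 1-bivalent reachability lemma, since a self-stabilizing specification legitimately lets you start executions at $\Gamma_1$ and $\Gamma_2$.

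The genuine gap is $n=3$. The lemma claims $n\geq 3$, your $\Gamma_2$ needs $n\geq 4$, and, more importantly, the Closure step in $\Gamma_1$ also fails at $n=3$: with one crashed robot at each of $A$ and $B$ and the single correct robot at $A$, a move from $A$ that reaches $B$ yields multiplicities $1$ at $A$ and $2$ at $B$ with the correct robot sitting on the unique maximum --- still a gathered configuration. So at $n=3$ a positive move probability does not by itself violate Closure, and ``stay with probability one'' cannot be extracted this way; the closing sentence about ``symmetric trivalent placements with adaptive crash timing'' is a placeholder, not an argument. (The paper's own proof silently has the same weakness at $n=3$: after its robot $r_x=r_3$ completes its move the configuration remains gathered.) The fix is cheap and you should state it: for $n=3$ let the adversary place the two crashed robots at distance greater than the correct robot's reachable distance $\delta_{r_3}$; then any move the robot makes with positive probability strands it at a fresh third location, all multiplicities become $1$, the unique point of maximal multiplicity disappears, and Closure is violated, after which your argument goes through.
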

\begin{proof}
	By contradiction, assume that such an algorithm exists and call it $\mathcal{A}$.

	Consider a fault-free execution~$e$.
	The scheduler being centralized (implied by round-robin)
	Lemma~\ref{lem:centralized:bivalent} holds and every execution under
	algorithm~\algo{A} reaches a 1-bivalent configuration~\conf[b].
	Let $r_1$ be the distinct robot, and $\{r_2,\cdots, r_n\}$ the robots
	in the other location.
	
	Consider an execution~$e'$ which differs from $e$ in that $r_1$ and $r_2$ both crash in
	configuration \conf[b],
	leading to the 1-bivalent configuration~$\conf[b]'$.
	In the absence of multiplicity, bivalent configurations are
	undistinguishable for the robots.
	
	So, some robot $r_x$ in $\{r_{3},\cdots, r_n\}$ is correct and has a positive
	probability of moving to the other location.
	To see this, consider that, otherwise, an adversary can generate an execution
	that is unable to transit from a 1-bivalent configuration to a univalent
	configuration in the fault-free case; a contradiction.

	Take the more general case and assume that \algo{A} is probabilistic.
	Assuming $\conf[b]'$ does not change, the scheduler ensures that $r_x$ is activated
	infinitely often.
	It follows that, with high probability, $r_x$ moves, violating the Closure property;
	a contradiction.
	\qed
\end{proof}

An immediate consequence of the previous lemma is the necessity of an
additional assumption, such as multiplicity knowledge, even for
probabilistic solutions and even under round-robin or bounded schedulers.
Accordingly, we now consider systems in which robots are aware of multiplicity.

\subsubsection{Deterministic weak gathering with multiple crashes}

Algorithm~\ref{alg:ft-gathering} is a deterministic algorithm that relies on
multiplicity detection. Roughly, when a robot $r$ becomes active, it considers
the castles in the current configuration. If there are castles to which $r$
does not belong, then it moves to the nearest one, say \castle{q}, with ties broken
arbitrarily.

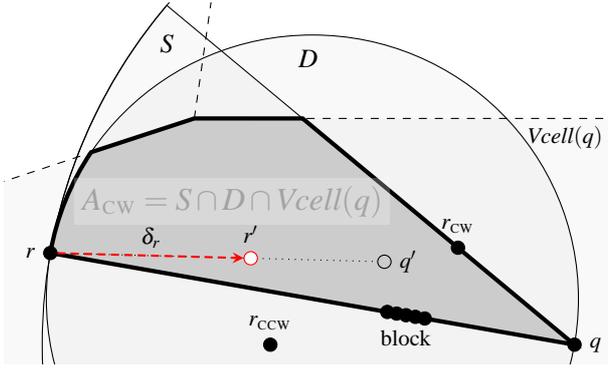
\begin{figure}
	\centering
	\begin{tikzpicture}
		\def\radius{7}
		\def\angleP{170}
		\def\angleS{140}
		
		\clip (-7.5,-0.25) rectangle (0.5,4.53);
		
		\coordinate (q) at (0,0);
		\coordinate (p) at (\angleP:\radius);
		\coordinate (rcw)  at (\angleS:2);
		\coordinate (rccw) at (180:4);
		\coordinate (blockS) at (\angleP:2);
		\coordinate (blockE) at (\angleP:2.5);
		
		% robots
		\node[robot,label=left:{$r$}] (np) at (p) {};
		\node[robot,label=right:{\castle{q}}] (nq) at (q) {};
		\draw[thick] (p) -- (q);
		\node[robot,label=above:{$r_{\textsc{cw}}$}]  (nrcw)  at (rcw) {};
		\node[robot,label=above:{$r_{\textsc{ccw}}$}] (nrccw) at (rccw) {};
		\node[robot,label=below:{block}] (nblock) at ($(blockS)!0.5!(blockE)$) {};
		\foreach \x in {0,0.25,...,0.49} {
			\node[robot] at ($(blockS)!\x!(blockE)$) {};
			\node[robot] at ($(blockE)!\x!(blockS)$) {};
		}
		
		% sector S
		\node [draw] at (q) [circle through={(p)}] {};
		\draw[name path=S] (q) -- (p) arc (\angleP:\angleS:\radius) -- (q);
		\begin{pgfonlayer}{background}
			\fill[gray,opacity=0.05] (q) -- (p) arc (\angleP:\angleS:\radius) -- (q);
		\end{pgfonlayer}
		\node at ($(\angleS:\radius)+(0,-.5)$) {\normalsize$S$};
		
%		% sector S'
%		\draw [blue] (q) -- (p) arc (\angleP:190:3.5) -- cycle;
		
		% disc D
		\path ($(p)!0.5!(q)$) circle (3.5);
		\node [draw] at ($(p)!0.5!(q)$) [circle through={(q)}] {};
		\begin{pgfonlayer}{background}
			\clip (-7.5,-0.25) rectangle (0.5,4.53);
			\node [fill=gray,opacity=0.05] at ($(p)!0.5!(q)$) [circle through={(q)}] {};
		\end{pgfonlayer}
		\node at (-3.5,3.8) {\normalsize$D$};
		
		% Vcell
		\draw[name path=Vcell,dashed] (-8,2) -- (-5,3) -- node[below,near end]{$\Vcell{q}$} (1.5,3);
		\draw[dashed] (-5,3) -- (-4,10);
		\begin{pgfonlayer}{background}
			\clip (-7.5,-0.25) rectangle (0.5,4.53);
			\fill[gray,opacity=0.05] (-8,2) -- (-5,3) -- (2,3) -- (2,-1) -- (-8,-1) -- cycle;
		\end{pgfonlayer}
		
		% Area A
		\path [name intersections={of=S and Vcell,by={dummy,C}}];
		\draw[ultra thick] (q) -- (p) arc (\angleP:146.25:3.5) -- (-5,3) -- (C) -- cycle;
		\begin{pgfonlayer}{background}
			\fill[gray!40] (q) -- (p) arc (\angleP:146.25:3.5) -- (-5,3) -- (C) -- cycle;
		\end{pgfonlayer}
		\node[fill=white,fill opacity=0.25,text opacity=1] at (-4.5,1.9) {\large $A_{\textsc{cw}} = S \cap D \cap \Vcell{q}$};
		
		% Move to q'
		\node[robot,fill=none,label=right:{\point{q'}}] (dest) at (-2.5,1.1) {};
		\draw[dotted] (np) -- (dest);
		\node[robot,red,fill=white,label=above:{$r'$}] (p') at ($(p)!0.6!(dest)$) {};
		\draw[move] (np) -- node[black,sloped,above] {$\delta_{r}$} (p');
		
	\end{tikzpicture}
	\caption{
		\emph{Side Move:}
		Robot $r$ selects maximal multiplicity point \castle{q} as its target.
		Some robots block the way between $r$ and \castle{q}.
		Robot $r_{\textsc{cw}}$ is the next robot clockwise (arbitrary choice) with respect
		to \castle{q}.
		Robot $r$ selects a point inside area $A_{\textsc{cw}}$, called \point{q'}, and moves
		toward it.
	}
	\label{fig:side-move}
\end{figure}

\begin{algorithm}
  \caption{Deterministic fault-tolerant weak gathering for robot $r$ at location \point{p}}
  \label{alg:ft-gathering}
  \label{alg:ft-det-gathering}
  \begin{small}
    \textbf{Functions}:\\
    \IND $\mulmax{\multiset{\Pi}}::$  the maximal multiplicity.\\
    \IND $\MaxMult{\multiset{\Pi}}::$ the set of elements with multiplicity \mulmax{\multiset{\Pi}}.\\
    \IND $\rbetween{\point{p}}{\point{q}}::$ the number of robots on segment $\overline{\point{p}\point{q}}$.\\

    \noindent
    \textbf{Actions}:\\
    \noindent
    \IND Observe(\multiset{\Pi}) :: $\mathit{true}$  $\longrightarrow$ \\
    \IND\IND\IND \textbf{if} $\exists \point{q}\not=\point{p} : \point{q} \in \MaxMult{\multiset{\Pi}}$ \textbf{then}\\
    \IND\IND\IND\IND select nearest $\point{q} \in \MaxMult{\multiset{\Pi}}\setminus \{\point{p}\}$;\\
%    \IND\IND\IND\IND \textbf{if} no collinear robot on segment $\overline{pq}$ \textbf{then}\\
		\IND\IND\IND\IND \textbf{if} $\rbetween{\point{p}}{\point{q}} < 2\mulmax{\multiset{\Pi}}$ \textbf{then}\\
    \IND\IND\IND\IND\IND \textsl{// Straight Move}\\
    \IND\IND\IND\IND\IND move toward \point{q};\\
    \IND\IND\IND\IND \textbf{else}\\
    \IND\IND\IND\IND\IND \textsl{// Side Move (see Fig.~\ref{fig:side-move})}\\
    \IND\IND\IND\IND\IND let $C$ := circle with center \point{q} and radius $\point{p}\point{q}$;\\
    \IND\IND\IND\IND\IND let $S$ := largest sector in $C$ from \point{p} with no robot;\\
    \IND\IND\IND\IND\IND let $D$ := disc with diameter $\point{p}\point{q}$;\\
    \IND\IND\IND\IND\IND select some \point{q'} inside $A = S \cap D \cap \Vcell{\point{q}}$;\\
    \IND\IND\IND\IND\IND move toward \point{q'};\\
    \IND\IND\IND \textbf{else}\\
    \IND\IND\IND\IND stay;
  \end{small} 
\end{algorithm}

%\medskip
When several robots occupy the space between $r$ and \castle{q}, moving over these
robots brings the risk of a cyclic behavior that can never converge (see details
in the appendix). To prevent this, $r$ is required to perform a \emph{side move}
(see Fig.~\ref{fig:side-move}), in which $r$ selects a destination within a zone
$A_{\textsc{cw}}$ (resp. $A_{\textsc{ccw}}$) constructed as the intersection of three
areas (boundaries excluded).
\begin{itemize}
\item \Vcell{\castle{q}}: cell of castle \castle{q} in the Voronoi diagram built from the set of castles.

\item $D$: the disc with segment $\overline{\point{r}\point{q}}$ as diameter.

\item $S$: the largest sector clockwise (resp. counter-clockwise) centered at $q$
	and starting from $r$ that contains no robot.
\end{itemize}
By moving within the zone, this ensures that
(1)~the distance between $r$ and \castle{q} decreases ($r$ moves within $D$),
(2)~\castle{q} remains the nearest castle from $r$ ($r$ moves within \Vcell{\castle{q}}),
and (3)~there are no robots between $r$ and \castle{q} ($r$ moves within $S$).

\medskip
Before proving the correctness of Algorithm~\ref{alg:ft-gathering}, we establish two
of its important properties.

%\remove{ %% No longer holds after introduction of the side move.
%The first proposition, which is independent of the scheduler, is an observation on
%the positions of the robots throughout executions of Algorithm~\ref{alg:ft-gathering},
%namely that, the convex hull of the set of occupied locations is non-increasing.
%\begin{proposition}
%	\label{prop:ft-det:convex-hull}
%	Let $e$ be an execution of Algorithm~\ref{alg:ft-gathering}. Let $c_x$ and $c_y$ be two
%	configurations in $e$ such that $x<y$. Then, $\mathit{Conv}(c_x) \supseteq \mathit{Conv}(c_y)$,
%	where $\mathit{Conv}(c)$ is the convex hull of occupied locations in configuration $c$.
%\end{proposition}
%\begin{proof}
%	In the algorithm, all robots select as their destination a location occupied by another robot.
%	Let $r$ be some robot that moves in configuration $c_x$.
%	%
%	By definition of the model, the new position of $r$ in $c_{x+1}$ must be on the segment
%	between $r$ and the location of another robot. 
%	%
%	Therefore, by definition of the convex hull, that segment and hence the location of $r$
%	in $c_{x+1}$ must be in $\mathit{Conv}(c_x)$.
%	%
%	The rest follows by induction.
%	\qed
%\end{proof}
%}

The first property is an observation that the maximal multiplicity of
configurations throughout executions of Algorithm~\ref{alg:ft-gathering}
is non-decreasing. It holds for any centralized scheduler.

\begin{proposition}
	\label{prop:ft-det:maxmul}
	The maximal multiplicity of configurations is non-decreasing
	over any execution of Algorithm~\ref{alg:ft-gathering},
	under an unfair centralized scheduler.
\end{proposition}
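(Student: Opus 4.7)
The plan is to prove the proposition by analyzing an arbitrary single transition $\conf[t] \to \conf[t+1]$ and showing $\mulmax{\conf[t+1]} \geq \mulmax{\conf[t]}$; the statement then follows by a straightforward induction along the execution. Under an unfair centralized scheduler, at most one robot is active per transition. If no robot is active, or if the active robot $r$ located at \point{p} executes the \emph{stay} branch (i.e.\ there is no $\point{q} \neq \point{p}$ in $\MaxMult{\multiset{\Pi}}$), then $\conf[t+1] = \conf[t]$ and the claim is trivial. Hence the only case I would have to examine is when exactly one robot $r$ at \point{p} performs a \emph{Straight Move} or a \emph{Side Move}.

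The key observation is that the precondition of the only moving branch explicitly witnesses a point $\point{q} \neq \point{p}$ with $\point{q} \in \MaxMult{\multiset{\Pi}}$, where \multiset{\Pi} is the observation in $\conf[t]$. Since the scheduler is centralized and only $r$ moves during the transition, every robot located at \point{q} in $\conf[t]$ is still located at \point{q} in $\conf[t+1]$. Consequently, letting $\mu = \mulmax{\conf[t]}$, the multiplicity of \point{q} in $\conf[t+1]$ is at least $\mu$, with strict increase precisely when $r$ happens to land at \point{q}. Thus
\[
\mulmax{\conf[t+1]} \;\geq\; \mul{\point{q}} \text{ in } \conf[t+1] \;\geq\; \mul{\point{q}} \text{ in } \conf[t] \;=\; \mu .
\]

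It remains to verify that this witness argument is indifferent to which movement branch is taken. In a \emph{Straight Move}, $r$ travels toward \point{q} along $\overline{\point{p}\point{q}}$ and either reaches \point{q} (joining the castle) or halts at an intermediate point; either way, no robot previously at \point{q} is displaced. In a \emph{Side Move}, $r$'s destination $\point{q'}$ lies inside the open region $A = S \cap D \cap \Vcell{\point{q}}$ with boundaries excluded, so in particular $\point{q'} \neq \point{q}$, and again the multiplicity at \point{q} is preserved. The only conceptual obstacle—hardly a real difficulty—is recognizing that the algorithm's precondition itself supplies the safe witness \point{q} whose multiplicity anchors the lower bound on $\mulmax{\conf[t+1]}$. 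Once this is noticed, the proof collapses into the short one-step analysis above and requires no case enumeration of where $r$'s destination might coincide with other robot locations; it also applies uniformly when $\mu = 1$, since then \point{q} is simply any other occupied location and remains occupied after the step.
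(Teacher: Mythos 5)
Your proof is correct and rests on the same two ingredients as the paper's: the guard of the moving branch of Algorithm~\ref{alg:ft-gathering} supplies a witness point $q\neq p$ of maximal multiplicity, and centralization guarantees that no robot located at $q$ moves in that step, so the maximum cannot drop. The paper organizes this as a case split on whether there is one or several points of maximal multiplicity (a unique castle immobilizes its own robots; with several castles only one can be broken per step), whereas you fold both cases into a single per-transition witness argument --- the content is essentially the same, and your version makes the role of the algorithm's precondition more explicit.
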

\begin{proof}
	In configurations with a single point of maximal multiplicity,
	the condition of the test in Algorithm~\ref{alg:ft-gathering} evaluates
	to false for any robot $r$ that is on the point of maximal multiplicity,
	and thus, $r$ does not move and the multiplicity does not change.
	
	When there are several points of maximal multiplicity, they can be
	destroyed (one of its robot leaving the location, its multiplicity
	decreases) only one at a time because the scheduler is centralized.
	\qed
\end{proof}

Another important property, which holds for any fair scheduler (i.e., not necessarily
a centralized one), states that, in distinct configurations, the minimum distance between
two robots is non-increasing.

\begin{proposition}
	\label{prop:ft-det:D:nonincreasing}
	Consider an execution of Algorithm~\ref{alg:ft-gathering} under a fair scheduler.
	Let $D(\conf)$ be a function defined as the shortest distance between a robot and its
	nearest neighbor in configuration~\conf.
	Then, $D(\conf)$ is non-increasing.
\end{proposition}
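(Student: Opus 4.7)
The plan is to establish the one-step inequality $D(\conf[t+1]) \leq D(\conf[t])$ for every transition $\conf[t]\to\conf[t+1]$, from which the proposition follows by induction on the execution. The structural observation behind the argument is that in a distinct configuration every occupied location lies in $\MaxMult{\Pi}$ with multiplicity $1$, so the target $q$ selected under Algorithm~\ref{alg:ft-gathering} by any activated robot $r$ at $p$ is just its nearest other robot. Since no third robot can lie strictly inside $\overline{pq}$ (otherwise it would be closer than $q$), we have $\rbetween{p}{q} = 0 < 2 = 2\mulmax{\Pi}$ and $r$ takes the Straight Move branch. The degenerate case $D(\conf[t]) = 0$ is handled separately: Proposition~\ref{prop:ft-det:maxmul} and the algorithm's refusal to move any robot from a unique maximal-multiplicity location jointly ensure that a tower persists, so $D(\conf[t+1])=0$ as well.

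For the main case, fix a pair $(u,v)$ realising $d:=D(\conf[t])>0$. If neither $u$ nor $v$ is activated at step $t$, the pair is preserved verbatim in $\conf[t+1]$ and $D(\conf[t+1])\leq d$. Otherwise, assume $u$ is activated and let $q_u$ be its selected target; since $v$ already lies at distance $d$ and $d$ is the global minimum, we must have $\dist{u}{q_u}=d$. The Straight Move then places $u_{\text{new}}$ on the segment $\overline{u\,q_u}$ strictly closer to $q_u$ (or at $q_u$), so $\dist{u_{\text{new}}}{q_u}<d$, and if $q_u$ is not itself activated this already yields $D(\conf[t+1])<d$.

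The delicate remaining case is when the chain $u\to q_u\to q_{q_u}\to\cdots$ of nearest-other-robots at distance $d$ stays entirely inside the set of activated robots. Since it lives in a finite functional graph, the chain either eventually exits the activated set---and the argument of the previous paragraph applies to its last edge---or it closes into a cycle $v_1\to v_2\to\cdots\to v_k\to v_1$ of activated robots with $\dist{v_i}{v_{i+1}}=d$ for every $i$. For such a cycle I would analyse the cycle perimeter $\sum_i \dist{v_{i,\text{new}}}{v_{i+1,\text{new}}}$ as a potential function and argue that its first-order variation in the $\delta_{v_i}$'s is strictly negative: the $k$ Straight-Move directions, each parallel to its outgoing cycle edge, cannot simultaneously preserve all edge lengths around a non-degenerate cycle while every $\delta_{v_i}>0$. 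Hence at least one consecutive pair contracts and $D(\conf[t+1])<d$. This cyclic situation under a non-centralised fair scheduler is the main obstacle I anticipate; every other sub-case reduces mechanically to the fact that a Straight Move strictly shrinks the distance from the moving robot to its chosen nearest neighbour.
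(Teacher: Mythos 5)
Your proposal reaches the right conclusion but by a genuinely different route from the paper. The paper argues by contradiction on a single transition: it fixes a pair $(r,r')$ realising $D(\conf[t])$, observes that if their distance grew then (say) $r'$ must have moved toward some neighbour $r''$ with $\dist{r'}{r''}\leq D(\conf[t])$, and concludes $D(\conf[t])\geq D(\conf[t+1])+\delta$ because, after closing in by at least $\delta$, the distance to $r''$ must still be at least the new minimum. That argument tacitly treats $r''$ as stationary and never confronts the situation you correctly single out as the crux: under a fair but non-centralised scheduler the whole chain $u\to q_u\to q_{q_u}\to\cdots$ of mutually-nearest robots at distance $d$ may be activated simultaneously and close into a cycle. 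Your direct decomposition --- unactivated pair, chain exiting the activated set, cycle --- is more honest about concurrency, and the cycle perimeter is exactly the right quantity: writing $e_i=v_{i+1}-v_i$ with $|e_i|=d$ and $t_i=a_i/d\in(0,1]$, the new edge vector is $(1-t_i)e_i+t_{i+1}e_{i+1}$, so the triangle inequality plus telescoping gives $\sum_i\dist{v_i'}{v_{i+1}'}\leq d\sum_i(1-t_i+t_{i+1})=kd$, hence some new edge has length at most $d$.

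Two corrections are needed. First, your claim that the perimeter variation is \emph{strictly} negative --- that the $k$ straight-move directions ``cannot simultaneously preserve all edge lengths'' --- is false: if every $v_i$ reaches its target, the robots cyclically permute their positions and every edge length is preserved exactly. The telescoping bound only yields $\leq$, which is all the proposition needs (it claims non-increase, not strict decrease), so the cycle case should conclude $D(\conf[t+1])\leq d$ rather than $D(\conf[t+1])<d$. Second, your treatment of the $D(\conf[t])=0$ case leans on Proposition~\ref{prop:ft-det:maxmul}, which is proved only for centralised schedulers, whereas the present proposition assumes a fair, possibly non-centralised one; since the proposition is invoked by the paper only along distinct configurations (Proposition~\ref{prop:ft-det:distinct2reachable}), the cleanest repair is to restrict the argument to that regime, as the paper's own proof implicitly does.
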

\begin{proof}
	Assume by contradiction that
	there is a configuration \conf[t] such that $D(\conf[t])<D(\conf[t+1])$. Let $r$ and $r'$ be
	two	robots with distance $D(\conf[t])$ from each other in \conf[t].
	If neither $r$ nor $r'$ move at time~$t$, then $D(\conf[t])=D(\conf[t+1])$. So assume that at
	least one	of them moves at time $t$.
	In \conf[t+1], the distance from $r$ to $r'$ must have increased to be at least $D(\conf[t+1])$
	so, one of the robots must have moved away from the other, say $r'$ moved away from $r$.
	This means that $r'$ had a neighbor $r''\not=r$ such that	the distance from $r'$ to $r''$ was:
	(1)~at most	$D(\conf[t])$, or else $r$ (not $r''$) would be the nearest neighbor of $r'$
	    and $r'$ must have moved toward $r$, and
	(2)~at least $D(\conf[t+1])+\delta$ since, after moving, the nearest distance from a correct
	robot to its nearest neighbor is at least $D(\conf[t+1])$.
	It follows that $D(\conf[t]) \geq D(\conf[t+1])+\delta$.
	A contradiction with $D(\conf[t])<D(\conf[t+1])$.
	
	Hence, $D(\conf)$ is non-increasing.
	\qed
\end{proof}

%According to Algorithm~\ref{alg:ft-gathering}, in every configuration,
%each robot has a nearest neighbor or nearest castle towards which it will
%move upon activation. We define the attractor graph as a kind of nearest
%neighbor graph induced by the algorithm.
%\begin{definition}
%	Let \exec be an execution under Algorithm~\ref{alg:ft-gathering} and \conf[t] a
%	configuration in \exec. We define the attractor graph \AG[{\conf[t]}] to be a
%	directed graph with the robots as vertices and, given two robots $r$ and $r'$,
%	there is an arc from $r$ to $r'$ if and only if $r'$ is the nearest robot that
%	$r$ will select upon activation in \conf[t]. The weight of the arc is the
%	distance between $r$ and $r'$.
%
%  It is easy to see that every vertex in \AG has an out-degree of exactly one,
%  that \AG is not necessarily connected, and that every path in the
%  graph has non-increasing weights and ends in a cycle.
%\end{definition}

We can now show that a distinct configuration eventually leads to a configuration
that contains a castle.

\begin{proposition}
	\label{prop:ft-det:distinct2reachable}
	In an $(n,f)$-crash system, where $n\!>\!f$, a fair centralized scheduler and
	multiplicity detection,
	let $e$ be an execution (or execution suffix) when robots move to the nearest robot.
	Starting from any distinct configuration, then $e$ contains a configuration with
	maximal multiplicity larger than one.
\end{proposition}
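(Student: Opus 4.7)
The plan is to argue by contradiction: assume the execution stays in distinct configurations forever, so that no two robots ever share a location. My first observation is that in any distinct configuration every location has multiplicity one, so $\MaxMult{\multiset{\Pi}}$ equals the full set of robot locations and each activated robot selects its nearest neighbor as its target. Moreover, by definition of nearest, no other robot can lie strictly on the open segment between $r$ and its nearest neighbor (such a robot would be closer), so $\rbetween{\point{p}}{\point{q}} = 0 < 2 = 2\mulmax{\multiset{\Pi}}$ and the executed action is always a straight move toward the chosen nearest neighbor.

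By Proposition~\ref{prop:ft-det:D:nonincreasing}, the minimum pairwise distance $D(\conf[t])$ is non-increasing along the execution; being bounded below by zero, it converges to some limit $D_\infty \geq 0$. The core of my argument is to fix a time $T$ large enough that $D(\conf[T]) \in [D_\infty, D_\infty + \delta/2)$, pick any pair $(r^*, r^{**})$ realizing $d(r^*, r^{**}) = D(\conf[T])$, and let $t' \geq T$ be the first time after $T$ at which either of them is activated. Fairness of the centralized scheduler guarantees that $t'$ is finite whenever at least one of $r^*, r^{**}$ is correct. Between $T$ and $t'$ neither of them moves, so their mutual distance stays at $D(\conf[T])$, and at time $t'$ the activated robot, say $r^*$, finds its nearest neighbor at some distance $d_{r^*}$ with $D_\infty \leq d_{r^*} \leq D(\conf[T])$.

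Upon activation, $r^*$ either reaches its nearest neighbor when $d_{r^*} \leq \delta$, producing a multiplicity point that contradicts the no-merger assumption, or it travels exactly $\delta$ toward it, so the new distance drops to $d_{r^*} - \delta \leq D(\conf[T]) - \delta < D_\infty - \delta/2$. The latter case forces $D(\conf[t'+1]) < D_\infty$, contradicting the non-increasing limit $D(\conf[t]) \geq D_\infty$. The edge case $D_\infty = 0$ is handled separately by choosing $T$ with $D(\conf[T]) < \delta$, which forces the merger branch directly. Either way a contradiction arises, so a collision must occur in finite time, yielding a configuration whose maximal multiplicity is at least two.

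The main obstacle I expect is handling crashed robots cleanly: if the chosen minimum pair happens to consist entirely of crashed robots, fairness alone does not ensure activation and the pair stays put forever. I plan to address this by exploiting the hypothesis $n > f$, which guarantees the existence of correct robots; each such robot is activated infinitely often by fairness and moves exactly $\delta$ toward its nearest neighbor (correct or crashed). If no correct robot ever enters the current minimum pair (so that $D$ would stay constant at $D_\infty$), then at each activation the moving correct robot must have its nearest at distance strictly greater than $D_\infty + \delta/2$, and iteration of the displacement-by-$\delta$ argument shows that this correct robot must eventually either collide with a (possibly crashed) neighbor or enter a minimum pair itself, after which the main dichotomy above applies and closes the proof.
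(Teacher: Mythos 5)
Your overall strategy is the same as the paper's: use the non-increasing potential $D(\conf)$ from Proposition~\ref{prop:ft-det:D:nonincreasing} and force quantum decreases of $\delta$ until a collision occurs. Your limit-point packaging of this idea is fine, and your main dichotomy is sound whenever the pair realizing the minimum contains a correct robot.

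The genuine gap is in your fallback for the case where the pair realizing $D(\conf[T])$ consists only of crashed robots. There you fix one correct robot and invoke ``iteration of the displacement-by-$\delta$ argument'' to conclude that it eventually collides or enters a minimum pair. This step fails as stated: the distance from a \emph{fixed} correct robot $r$ to its nearest neighbor is not monotone. If $r$'s nearest neighbor $r'$ is itself correct, $r'$ may be activated and move away from $r$ (toward its own nearest neighbor $r''$ with $\dist{r'}{r''}\leq\dist{r}{r'}$), so $r$'s nearest-neighbor distance can increase between two of $r$'s activations, and the per-activation gains of $\delta$ do not accumulate for $r$. The paper's proof handles exactly this in its three-case analysis: when $r'$ escapes, it creates a strictly closer pair $(r',r'')$ with $\dist{r'}{r''}\leq\dist{r}{r'}-\delta$, and $r'$ is correct; so the quantity that actually decreases by $\delta$ is the minimum, over \emph{correct} robots only, of the distance to the nearest neighbor. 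Replacing your global $D$ by this restricted minimum (which is non-increasing by the same reasoning as Proposition~\ref{prop:ft-det:D:nonincreasing}, and which either triggers a collision or strictly decreases by at least $\delta$ within finite time by fairness) repairs the argument and also removes the need for a separate crashed-pair case. A minor additional slip: the model only guarantees a move of \emph{at least} $\delta_r$ (possibly all the way to the target), not exactly $\delta$, but this only strengthens your inequalities.
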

\begin{proof}
	We show that, starting from any distinct configuration, a location with
	multiplicity~2 is eventually formed.

	Consider again the function $D(\conf)$ defined as the shortest distance from a robot to its
	nearest neighbor. We know already from Proposition~\ref{prop:ft-det:D:nonincreasing} that
	$D(\conf)$ is non-increasing.
	
	We now show that there is a configuration such that $D(\conf)$ decreases strictly.
	Consider some distinct configuration \conf[t], and let $r$ be a correct
	robot with distance $D=D(\conf[t])$ to its nearest neighbor $r'$ in \conf[t].
	Then, there must be a configuration~\conf[t'] ($t'>t$) during which one of the following
	situation occurs:
	\begin{enumerate}
	\item $r'$ moves away from $r$.
		This means that there is a robot $r''$, originally at distance $D$ or less from $r'$,
		toward which $r'$ moves. The distance from $r'$ to $r''$ is at most $D(\conf[t])-\delta$,
		so $D(\conf[t'])\leq D-\delta$.
				
	\item $r$ moves and $r'$ is still its nearest neighbor.
		$r$ moves toward $r'$ by distance $\delta$, so $D(\conf[t'])\leq D-\delta$. 
		
	\item $r$ has a robot $r''$ as nearest neighbor.
		There are three cases:
		\begin{enumerate}
		\item either $r$ or $r'$ have moved, then we have already encountered one of the two
			previous cases, or
		\item $r''$ has moved near $r$, then $\dist{r}{r''}\leq D$, or
		\item the criteria used by $r$ to break up ties among several of its nearest neighbors
			makes it select $r''$ instead of $r'$, then $\dist{r}{r''} = D$.
		\end{enumerate}
		So, $D(\conf[t'])\leq D$ and we can rename $r'$ to $r''$ when iterating over the argument.
	\end{enumerate}
	Since the scheduler is fair, there is a time after which $r$ moves and $D(\conf)$ decreases
	by at least $\delta$. The rest follows.
	\qed
\end{proof}

\begin{theorem}
  \label{th:ft-det-gathering}
  In an $(n,f)$-crash system, where $n\!>\!f$, Algorithm~\ref{alg:ft-gathering}
  deterministically solves weak gathering
  under a fair centralized scheduler if robots are aware of
  multiplicity.
\end{theorem}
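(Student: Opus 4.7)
I would split the argument into \textbf{Closure} and \textbf{Convergence}. Closure is immediate: in a gathered configuration, the unique point of maximal multiplicity~\point{p} holds all correct robots, and for each of them the guard $\exists \point{q} \neq \point{p} : \point{q} \in \MaxMult{\multiset{\Pi}}$ fails, so no correct robot moves; crashed robots never move either, so the configuration is stable. For Convergence I first dispatch the degenerate case: if the configuration is distinct (all multiplicities equal~$1$), then every location belongs to $\MaxMult{\multiset{\Pi}}$ and Algorithm~\ref{alg:ft-det-gathering} reduces to ``move to the nearest other robot'', so by Proposition~\ref{prop:ft-det:distinct2reachable} a castle (multiplicity~$\geq 2$) appears in finitely many steps. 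From then on I may assume that at least one castle is present.

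Next I would show that every correct robot~$r$ not at a castle reaches a castle in finitely many activations, and then that all correct robots coalesce at a single castle. By Proposition~\ref{prop:ft-det:maxmul} the maximum multiplicity is non-decreasing under the centralized scheduler, so castles persist. Whenever $r$ is activated with target castle~\castle{q}, the distance $\dist{r}{\castle{q}}$ strictly decreases: a straight move reduces it by at least~$\delta_r$; for a side move, the destination~\point{q'} lies in the disc~$D$ of diameter $\overline{\point{r}\castle{q}}$, so $\dist{\point{q'}}{\castle{q}} < \dist{r}{\castle{q}}$, while the Voronoi-cell restriction keeps \castle{q} as the nearest castle and the empty-sector~$S$ prevents $r$ from merging with another robot along the way. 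Fairness then brings $r$ to a castle in finitely many steps. Because the scheduler is centralized, each time a correct robot joins a castle, the castle's multiplicity strictly increases, making it the unique point of maximal multiplicity; every remaining correct robot (either at a lower-multiplicity location or at a formerly-tied castle) then targets this unique castle, and iterating yields a gathered configuration.

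The main obstacle is the side-move analysis. One must verify that the area $A = S \cap D \cap \Vcell{\castle{q}}$ is always non-empty when the straight-move guard fails, so that a valid destination exists; that the target~\castle{q} cannot persistently slip away from~$r$ between activations (handled by the monotone growth of the maximum multiplicity and the finiteness of the castle set); and that the progress measure $\dist{r}{\castle{q}}$ decreases by an amount bounded away from zero, so that $r$ truly reaches a castle rather than approaching it only asymptotically. Combining Propositions~\ref{prop:ft-det:maxmul} and~\ref{prop:ft-det:D:nonincreasing} with the centralized-scheduler invariants from Lemmas~\ref{lem:centralized:valence-one} and~\ref{lem:centralized:bivalent} resolves these issues.
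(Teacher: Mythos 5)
Your decomposition into closure and convergence, the use of Proposition~\ref{prop:ft-det:distinct2reachable} as the base case, and the idea of growing the maximal multiplicity one unit at a time are exactly the paper's strategy. However, there is a genuine gap in your convergence argument for the multi-castle phase. You argue that a correct robot~$r$ outside a castle reaches ``a castle'' in finitely many activations because $\dist{r}{\castle{q}}$ decreases at each of its activations, and you defer the worry that \castle{q} ``cannot persistently slip away'' to Propositions~\ref{prop:ft-det:maxmul} and~\ref{prop:ft-det:D:nonincreasing} and Lemmas~\ref{lem:centralized:valence-one} and~\ref{lem:centralized:bivalent}. None of these statements rules out the problematic scenario: between two activations of~$r$, robots sitting in castles may themselves move (they do so whenever $\left|\MaxMult{\multiset{\Pi}}\right|>1$), destroying the castle that $r$ was heading for; the new nearest castle can then be farther from $r$ than the old one was, so $r$'s distance to its current target is not a monotone progress measure over the whole execution. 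Proposition~\ref{prop:ft-det:maxmul} only says the maximal multiplicity does not decrease; it says nothing about the set of castles being stable.

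The missing ingredient, which the paper supplies, is the observation that under Algorithm~\ref{alg:ft-gathering} no new castle of the current maximal multiplicity~$M$ is ever created short of achieving multiplicity $M+1$ (a straight move that falls short and a side move both land on points of multiplicity one), so \nCastles{\conf} is non-increasing; moreover, whenever at least two castles exist and one of them contains a correct robot, fairness forces that robot to eventually leave, strictly decreasing \nCastles{\conf}. Hence the execution reaches a configuration in which either a single castle remains or all remaining castles consist solely of crashed robots; from that point on the castle set is frozen, the target \castle{q} of an outside correct robot is the same at every one of its activations, and your distance/blocked-count progress argument goes through. You should insert this castle-stabilization step before invoking fairness to bring $r$ into a castle; the rest of your outline then matches the paper's induction on the maximal multiplicity.
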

\begin{proof}
	Let us first prove that the Algorithm~\ref{alg:ft-gathering} satisfies the closure property of
	weak gathering, i.e.,
	to a gathered configuration follow only gathered configurations. Let $g_i$	be a gathered
	configuration.
	By definition of a gathered configuration, there is a unique point of maximal multiplicity that
	all correct robots occupy. Since, by construction of
	Algorithm~\ref{alg:ft-gathering}, the correct robots do not move,
	any subsequent configuration is gathered. This proves closure.

	\medskip
	By Proposition~\ref{prop:ft-det:maxmul}, the maximal multiplicity is nondecreasing.
	Let us now prove convergence by induction on the maximal multiplicity of configurations.
	
	%---- Stage 1 Basis
	Proposition~\ref{prop:ft-det:distinct2reachable} forms the basis of the induction by showing that
	a distinct configuration leads to a configuration with maximal multiplicity larger than one.
	
	%---- Stage 2 Step
	\paragraph{Theorem~\ref{th:ft-det-gathering}; Induction step.}
	We show that, starting from a non-gathered configuration~\conf[x] with maximal multiplicity
	$M=\mulmax{\conf[x]}>1$, a configuration~\conf[y] with maximal multiplicity $M+1$ is eventually
	reached.
	
  %For convenience, we call \emph{castle} a location with (maximal) multiplicity $\mulmax{\conf}$
  %in a configuration \conf.
  %
  We say that a robot $p_b$ is \emph{blocked} if there are at least $\mulmax{c}\!-\!1$
	robots on the segment between $p_b$ and $q_b$, where $q_b$ is the castle%
	\footnote{C.f., definition of \emph{castle} and \emph{tower} in Section~\ref{sec:model:notation}.}
	that $p_b$ selects if it is active in configuration~\conf.
	Let \nCastles{\conf}, \nBlocked{\conf}, and \sumDist{\conf} respectively denote the number of castles,
	the number of blocked robots,	and the sum of distances from each robot to its nearest castle
	in configuration \conf.

	We can characterize a configuration~\conf by the quantities \mulmax{\conf}, \nCastles{\conf}, \nBlocked{\conf},
	and \sumDist{\conf}.
	Consider a configuration \conf[t] with $x\leq t <y$, characterized by $\mulmax{\conf[t]}=M$,
	\nCastles{\conf[t]}, \nBlocked{\conf[t]},	and \sumDist{\conf[t]},
  and consider all possible transitions from \conf[t] to the next configuration \conf[t+1].
	Since the scheduler is centralized, one correct robot, say $r$, is active in
	configuration \conf[t].
	We can summarize the transitions as follows:
	\begin{enumerate}
	\item $r$ is in a castle.
		\begin{enumerate}
		\item $\nCastles{\conf[t]}=1$.\\
			$r$ does not move. No change.
		\item $\nCastles{\conf[t]}>1$.\\
			$r$ aims for castle $q$.
			\begin{enumerate}
			\item $r$ is blocked. $r$ takes a side move.\\
				$\nCastles{\conf[t+1]} = \nCastles{\conf[t]}-1$.
			\item $r$ does not reach $q$.\\
				$\nCastles{\conf[t+1]} = \nCastles{\conf[t]}-1$.
			\item $r$ reaches $q$.\\
				GOAL: $\mulmax{\conf[t+1]}=M+1$ and $\nCastles{\conf[t+1]} =1$.
			\end{enumerate}
		\end{enumerate}
	\item $r$ is not in a castle (aims for castle $q$).
		\begin{enumerate}
		\item $r$ is blocked. $r$ takes a side move.\\
			$\nBlocked{\conf[t+1]} = \nBlocked{\conf[t]}-1$\\
			and $\sumDist{\conf[t+1]} < \sumDist{\conf[t]}$.
		\item $r$ does not reach $q$.\\
			$\sumDist{\conf[t+1]} = \sumDist{\conf[t]} - \delta$.
		\item $r$ reaches $q$.\\
			GOAL: $\mulmax{\conf[t+1]}=M+1$ and $\nCastles{\conf[t+1]} =1$.
		\end{enumerate}
	\end{enumerate}
	
	A first observation is that no new castle is created, in other words,
	\nCastles{\conf} never increases.
	
	As long as there is a castle with at least one correct robot, that robot
	is eventually active since the scheduler is fair. Provided that there
	are at least two castles, the number of castles decreases. This happens
	until the system reaches a configuration \conf[t'] ($x\leq t'<y$) in which any
	one of the following two conditions hold:
	\begin{itemize}
	\item There is one single castle.
	\item There are several castles, all of which consist only of crashed robots.
	\end{itemize}
	In either case, no robot already located in a castle moves.
	
	If the configuration is already gathered, convergence is proved, so assume
	it is not. Then, there must be some correct robot located outside of a
	castle, and that robot must eventually become active as the scheduler is
	fair.
	
	Let $r'$ be a correct robot located outside of a castle. Each time $r'$
	is active, it selects one of the castles $q'$ as its destination.
	In case another robot reaches a castle, the induction step is proved. So
	again assume that this is not the case.
	It follow that the number of castles
	and their locations do not change, thus $q'$ is the same castle	across
	activations of $r'$.
	
	Let $\Delta'$ be the distance between $r'$ in configuration \conf[t'].
	If $r'$ is initially blocked, then it performs a side move and $r'$ is
	no longer blocked (\nBlocked{\conf} decreases).
	Recall that, by construction, performing a side move does not increase the
	distance between the robot and its destination.

	After $\left\lceil\frac{\Delta'}{\delta}\right\rceil$ activations of $r'$,
	it reaches castle $q'$ thus increasing its multiplicity and proving the step.

	\medskip
	The induction ensures that, as long as a gathered configuration is not reached,
	the maximal multiplicity increases. The multiplicity cannot possibly be
	larger than the number of robots, so it follows that a gathered configuration is
	eventually attained.
	
	This proves convergence and, since we have proved
	closure before, the fact that Algorithm~\ref{alg:ft-gathering} solves
	weak gathering.
	\qed
\end{proof}

% Note that in the previous lemma we considered unfair scheduling strategies with respect to the correct robots. Otherwise, the scheduler can choose infinitely often the crashed robot.

%The following lemma shows that the previous result holds even under a particular 
%form of bounded scheduler.

%\begin{lemma}
%Algorithm \ref{alg:ft-gathering} deterministically solves 
%(n,f) weak gathering problem,  $2\leq f$, under a fair even-
%bounded scheduler if robots are aware of the multiplicity.
%\end{lemma}

%\begin{proof}[directions]
%A regular fair bounded scheduler may prevent the algorithm 
%from converging in the following case. Assume a system with  
%two groups of equal size constructed around two faulty robots.
%Assume the scheduler each time when it chooses a correct robot in one of 
%the groups will chose a correct robot in the other group. 
%Hence a symmetrical configuration is formed.
%The symmetry can be broken if the scheduler, between two choices 
%of a correct robot, has to choose a even number of robots. 
%With the chosen robot, this makes an odd number of correct robot, hence 
%an unbalanced group is formed. This kind of scheduler is referred 
%fair even-bounded scheduler.
%\end{proof}

\subsubsection{Probabilistic weak gathering with multiple crashes}

In the remainder of this section, we show that weak gathering can be solved
probabilistically in an $(n,f)$-crash system (with $f < n$) under a fair scheduler.

Algorithm~\ref{alg:ft-prob-gathering} is a probabilistic algorithm constructed
on the deterministic Algorithm~\ref{alg:ft-gathering}. While the latter is for a
centralized scheduler, the former is for a fair scheduler, which allows robots
to be active simultaneously.

\begin{algorithm}
  \caption{Probabilistic fault-tolerant gathering for robot $p$ with
    multiplicity knowledge}
  \label{alg:ft-prob-gathering}
  \begin{small}
    \textbf{Functions}:\\
    \IND $\mulmax{\multiset{\Pi}}::$  the maximal multiplicity in \multiset{\Pi}.\\
    \IND $\MaxMult{\multiset{\Pi}}::$ the set of points with multiplicity \mulmax{\multiset{\Pi}}.\\

    \noindent
    \textbf{Actions}:\\
    \noindent
    \IND Observe(\multiset{\Pi}) :: $\mathit{true}$ 
    	$\longrightarrow$ \\
		\IND\IND\IND \textbf{if} $\point{p} \not\in \MaxMult{\multiset{\Pi}}$ \textbf{then} \hfill \emph{/* \point{p} not in a castle */}\\
%    \IND\IND\IND\IND select nearest $q \in \MaxMult{P}\setminus \{p\}$;\\
%    \IND\IND\IND\IND move toward $q$;
		\IND\IND\IND\IND execute extended Algorithm~\ref{alg:ft-gathering};\\
 		\IND\IND\IND \textbf{else if} $\left|\MaxMult{\multiset{\Pi}}\right| = 1$ \textbf{then} \hfill \emph{/* unique castle */}\\
    \IND\IND\IND\IND stay;\\
    \IND\IND\IND \textbf{else}\hfill \emph{/* several castles */}\\
% 		\IND\IND\IND \textbf{else if} $\left|\MaxMult{P}\right| > 1$ \textbf{then} \hfill \emph{/* several castles */}\\
    \IND\IND\IND\IND \textbf{with} probability $\alpha=\min\left(\frac{1}{\mulmax{\multiset{\Pi}}},\frac{1}{2}\right)$ \textbf{do}\\
%    \IND\IND\IND\IND\IND select nearest $q \in \MaxMult{P}$;\\
%    \IND\IND\IND\IND\IND move toward $q$;\\
		\IND\IND\IND\IND\IND execute extended Algorithm~\ref{alg:ft-gathering};\\
    \IND\IND\IND\IND \textbf{otherwise}\\
    \IND\IND\IND\IND\IND stay;\\
%    \IND\IND\IND \textbf{else}\hfill \emph{/* one castle */}\\
%    \IND\IND\IND\IND stay;\\
%    \IND\IND\IND\IND execute extended Algorithm~\ref{alg:ft-gathering}; \fbox{CHECK}\\ 
	\end{small}
\end{algorithm}

The idea of the algorithm is that, in some situations (several castles or
distinct configurations), the simultaneous activation of several robots could
lead to endless oscillations of the system. For instance, given two robots
which are reachable and nearest from each other, activating them together would
lead to them swapping their positions. To prevent this situation from occurring
endlessly, the robots are required to first toss a coin and actually move only
upon success.

In addition, the side move performed in Algorithm~\ref{alg:ft-gathering} defines
a region from which a target point is selected arbitrarily.
Due to concurrent moves under Algorithm~\ref{alg:ft-prob-gathering}, an arbitrary
choice is no longer adequate.
Therefore, it becomes necessary to extend Algorithm~\ref{alg:ft-gathering} such that
the side move prevents two simultaneously moving robots from reaching the same location.
The choice of an appropriate target is guided by the following requirements:
\begin{itemize}
\item
	Let two robots $r$ and $r'$, initially collinear with castle~\castle{q}, select
  target points \point{T} and \point{T'}. Then, segments $\overline{r\point{T}}$ and
  $\overline{r'\point{T'}}$ intersect if and only if $r$ and $r'$ are collocated.
\end{itemize}
A construction that satisfies this requirement is presented in the appendix
(Sect.~\ref{sec:side-move:disamb}).
The probabilistic algorithm relies on Algorithm~\ref{alg:ft-det-gathering} extended
with a side move meeting those requirements.

\medskip
We first show that the convex hull of positions is non-increasing. This simple
result is important as one factor to ensure that the system does not oscillate.
\begin{proposition}
	\label{prop:convexhull}
	In an $(n,f)$-crash system, where $n\!>\!f$, with any scheduler and multiplicity detection,
	let \exec be an execution under Algorithm~\ref{alg:ft-prob-gathering}.
	
	Let \conf[t] and \conf[t'] be two configurations of \exec,
	and \Conv{\conf[t]} (respectively \Conv{\conf[t']}) the convex hull of robot locations in \conf[t] (resp. \conf[t']).
	Then, $t'>t \implies \Conv{\conf[t']} \subseteq \Conv{\conf[t]}$.
\end{proposition}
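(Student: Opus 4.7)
My plan is to reduce the statement to a single-step claim by induction. Specifically, I would prove that $\Conv{\conf[t+1]} \subseteq \Conv{\conf[t]}$ for every consecutive pair of configurations, from which the full statement follows by transitivity of set inclusion.

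For a single step, the convex hull of a point set is determined entirely by those points, so $\Conv{\conf[t+1]} \subseteq \Conv{\conf[t]}$ holds if and only if every robot position in $\conf[t+1]$ lies inside $\Conv{\conf[t]}$. Inactive robots keep their positions, so the claim is trivial for them. For each active robot $r$ at location $p$, I would use convexity of $\Conv{\conf[t]}$ to reduce the claim to showing that $r$'s target destination lies in $\Conv{\conf[t]}$, since any intermediate point on the trajectory is then a convex combination of $p$ and the target. I would then proceed with a case analysis on $r$'s action under Algorithm~\ref{alg:ft-prob-gathering}: (i)~$r$ stays in place; (ii)~$r$ performs a straight move toward a castle $q$, in which case the target is a robot position and the trajectory lies on $\overline{pq} \subseteq \Conv{\conf[t]}$.

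The main obstacle is the side-move case, where the target $q'$ is drawn from the region $A = S \cap D \cap \Vcell{q}$. The argument would exploit the three restrictions simultaneously: the sector $S$ is bounded by the ray from $q$ through $p$ and by the ray through a neighboring robot $r'$, so it lies within an angular wedge between two robot positions; the disc $D$ of diameter $\overline{pq}$ forces the target to stay near segment $\overline{pq}$ (equivalently, $\angle pq'q \geq \pi/2$); and the Voronoi constraint $\Vcell{q}$ restricts $q'$ to the cell of the chosen castle in the Voronoi diagram on the set of castles. The delicate step is to show that, together with the refinement of the side move prescribed in the appendix (Sect.~\ref{sec:side-move:disamb}), these constraints force $A$ to lie within a polygonal region spanned by robot positions (in particular $p$, $q$, and $r'$), hence within $\Conv{\conf[t]}$. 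Establishing this containment cleanly is where the geometry of the algorithm bears the weight of the argument; once it is available, convexity finishes the single-step claim and the induction concludes the proof.
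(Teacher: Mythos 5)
Your decomposition is exactly the paper's: reduce to a single activation step, observe that the segment from a robot's current position to its target suffices by convexity, and split into the three cases stay / straight move / side move. The first two cases are handled identically in the paper, so up to that point you are on the same track.

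The difference is that you stop at the side-move case: you state the containment you would need (that $A = S \cap D \cap \Vcell{\point{q}}$ lies inside a polygon spanned by robot positions, in particular $p$, $q$, and $r'$) and explicitly defer its proof. That is the one step that carries any content, so as written the proposal is incomplete. Worse, the particular containment you aim for is not the right target and is not true in general: the sector $S$ can span a wide angle when no robot lies clockwise of $p$ for a long arc, and then $S \cap D$ bulges well outside the triangle $\point{p}\point{q}r'$ (in a nearly collinear configuration the region $A$ is two-dimensional while that triangle is almost degenerate). What the proposition actually needs is the weaker statement $A \subseteq \Conv{\conf[t]}$, i.e., containment in the hull of \emph{all} robot positions, and establishing even that requires an argument about where $A$ sits relative to the blocking robots and the other castles, not just relative to $p$, $q$, and $r'$. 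For what it is worth, the paper itself dispatches this case in a single sentence ("this holds from the definition of the side move"), so your proposal is no less rigorous than the published proof at the critical point --- but you should either supply the geometric containment for the (disambiguated) side-move target of Sect.~\ref{sec:side-move:disamb}, or note explicitly that the side move must be \emph{defined} so that its target stays inside the current convex hull, which is the property the proposition silently relies on.
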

\begin{proof}
	Let $r$ be an arbitrary robot that moves through Algorithm~\ref{alg:ft-prob-gathering}:
	it can stay, move toward another robot, or perform a side move.
	
	In all three cases, the entire segment between $r$'s location and its target
	destination must be contained within the convex hull. 
	When $r$ stays, this holds trivially. When $r$ moves toward another robot,
	this holds because of the convexity of the convex hull. When $r$ performs a
	side move, this holds from the definition of the side move.
	
	As a result, no move can possibly bring a robot outside of the convex hull,
	which is thus non-increasing.
	\qed
\end{proof}
Notice however that the convex hull is not necessarily decreasing since the robots
located at the vertices of the convex hull could be crashed robots.

\medskip
We continue by proving important properties of executions under
Algorithm~\ref{alg:ft-prob-gathering}. The first proposition shows that, if the
number of castles can increase from one configuration to the next, then the maximal
multiplicity must necessarily have decreased.

\begin{proposition}
	\label{prop:ft-prob:castle++}
	In an $(n,f)$-crash system, where $n\!>\!f$, a fair scheduler and multiplicity detection,
	let $e$ be an execution under Algorithm~\ref{alg:ft-prob-gathering}.
	Let \conf[t] and \conf[t+1] be any two consecutive configurations in $e$.
	The number of castles increases in \conf[t+1] only if the maximal multiplicity decreases
	in \conf[t+1].
\end{proposition}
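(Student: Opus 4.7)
The plan is to prove the contrapositive: assuming that $\mulmax{\conf[t+1]} \geq \mulmax{\conf[t]}$, show that $\nCastles{\conf[t+1]} \leq \nCastles{\conf[t]}$. Set $M = \mulmax{\conf[t]}$ and $M' = \mulmax{\conf[t+1]}$, and let $A$ and $A'$ denote the sets of castles in $\conf[t]$ and $\conf[t+1]$, with cardinalities $k$ and $k'$ respectively. The first step is to split on whether $M' > M$ or $M' = M$.

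In the case $M' > M$, every castle $P \in A'$ has multiplicity $M' > M$, while no point of $\conf[t]$ had multiplicity exceeding $M$, so each such $P$ must have gained at least $M' - M \geq 1$ robots during the step. Because Algorithm~\ref{alg:ft-prob-gathering} makes every moving robot target some castle of $A$, the only locations where such accumulation can happen are either elements of $A$ themselves (which can grow by absorbing incoming robots) or, exceptionally, intermediate landing points along straight moves toward $A$. A new castle in $A' \setminus A$ would have had multiplicity at most $M-1$ in $\conf[t]$ and would therefore require at least $M' - M + 1 \geq 2$ fresh arrivals; a pigeonhole count over the robots leaving non-castle locations, combined with the fact that each activated robot contributes at most one arrival to any given straight-move segment, bounds $|A' \setminus A|$ by the number of castles of $A$ that fail to remain in $A'$, yielding $k' \leq k$.

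The case $M' = M$ is the delicate one. Each new castle $P \in A' \setminus A$ has multiplicity $M$ in $\conf[t+1]$ and at most $M-1$ in $\conf[t]$, so at least one robot must have arrived at $P$ during the step. By the algorithm, that arriving robot was targeting some $\castle{q} \in A$ and landed at $P$ as an intermediate point of the segment from its source to $\castle{q}$ (or of its side-move trajectory). I plan to set up a bipartite accounting that matches each new castle in $A' \setminus A$ with at least one castle of $A$ whose multiplicity is depleted below $M$ to make the arrival possible, using that a robot sitting alone at a unique castle does not move at all and that the disambiguated extended side move specified just before Algorithm~\ref{alg:ft-prob-gathering} prevents multiple concurrent arrivals from coinciding at the same fresh point. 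Conservation of robots across $A$ and $A' \setminus A$ then forces $k' \leq k$.

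The main obstacle is exactly this second case, where the adversary enjoys the most freedom: an intermediate landing during a straight move can in principle coincide with a pre-existing tower of multiplicity $M-1$ and promote it to a new castle without disturbing any castle of $A$. Overcoming this hurdle will require the accounting to exploit the invariants preserved by Algorithm~\ref{alg:ft-prob-gathering}, notably the monotonicity of the convex hull (Proposition~\ref{prop:convexhull}) and the side-move disambiguation, to argue that every such promotion is in fact paid for by the loss of a castle in $A$, so that $k'$ cannot strictly exceed $k$ while $M'$ remains at $M$.
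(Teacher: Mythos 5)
Your contrapositive restatement is fine, but the proposal stops exactly where the proof has to start: both of your cases end with an accounting argument that is announced (``a pigeonhole count \ldots bounds $|A'\setminus A|$'', ``I plan to set up a bipartite accounting'') but never carried out, and in the second case you concede that you cannot yet rule out the decisive scenario. That scenario --- a straight-moving robot stopped by the adversary on a pre-existing tower of multiplicity $M-1$, promoting it to a castle without depleting any castle of $A$ --- is precisely the crux, and the paper excludes it not by accounting but by the structure of the moves themselves, which your proposal never invokes. Concretely: a robot at \point{p} takes a straight move toward castle \point{q} only if $\rbetween{\point{p}}{\point{q}} < 2\mulmax{\multiset{\Pi}}$; a tower of multiplicity $M-1$ lying on that segment would already put at least $1+(M-1)+M = 2M$ robots on it, so the robot would instead be forced into a side move, whose target region by construction contains no robot, is disjoint from every ray carrying straight movers, and (in the extended, disambiguated version required by Algorithm~\ref{alg:ft-prob-gathering}) can be reached simultaneously only by robots that were already collocated. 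These three facts are what the paper's proof uses to show directly that no location of multiplicity at most $M-1$ in \conf[t] can attain multiplicity $M$ in \conf[t+1]; without them, no bipartite matching between new castles and depleted old ones can be constructed, because, as you yourself observe, robot conservation alone does not forbid a promotion that costs no old castle anything.

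A secondary remark: the split into $M'>M$ and $M'=M$ buys you nothing. In both cases the pigeonhole over the $K+1$ castles of \conf[t+1] against the $K$ locations of multiplicity $M$ in \conf[t] yields at least one castle of \conf[t+1] sitting at a location that was not a castle in \conf[t], and the same straight-/side-move analysis disposes of it uniformly. So the gap is not one of organization but of substance: the missing ingredient is the blocking test and the geometry of the side move, which is the only algorithm-specific content the proposition actually rests on.
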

\begin{proof}
	Let $K$ denote the number of castles in \conf[t], and $M$ the maximal multiplicity in \conf[t].
	Suppose that there are $K+1$ castles in \conf[t+1].
		Now, assume by contradiction that the maximal multiplicity in \conf[t+1] is $M$ or more.
			Since there are $K+1$ castles in \conf[t+1], at least one castle of multiplicity $M$ or
			more must	have been created from $M$ independent robots (i.e., robots that did not
			belong to a castle in \conf[t]).
			
			Consider one of the robots, call it $r$, independent in \conf[t] and forming the new castle
			in \conf[t+1]. There are three possible cases.
			\begin{itemize}
			\item
				If $r$ did not move, then there must be another robot $r'$ that has moved, or else
				$r$ would not have been independent in \conf[t]. Then, consider the case of $r'$ instead.

			\item $r$ performed a straight move.
				By construction of the algorithm, there are less
				than $M$ independent robots on the segment between $r$ and its nearest castle, $r$
				included. But, by construction, no robot performing a side move can reach a ray
				containing robots performing straight moves.
			  
			\item $r$ performed a side move.
				The area targeted by the side move is convex, does not contain any robot, and does
				not contain any point reachable with a straight move.
				
				Hence, all robots collocated with $r$ in \conf[t+1] must have performed a side move.
				
				With the extended construction of the side move, every robot $r'$ collocated with
				$r$ in \conf[t+1] must have been collocated with $r$ in \conf[t]. Thus, $r$ was forming
				a castle in \conf[t]. A contradiction.
			\end{itemize}
		The maximal multiplicity in \conf[t+1] is not $M$ or more.
	The number of castles increases in \conf[t+1] only if the maximal multiplicity decreases
	in \conf[t+1].
	\qed
\end{proof}

\begin{proposition}
	\label{prop:ft-prob:one-castle}
	In an $(n,f)$-crash system, where $n\!>\!f$, a fair scheduler and multiplicity detection,
	let $e$ be an execution under Algorithm~\ref{alg:ft-prob-gathering}.
	If a configuration has a unique castle, then all configurations after
	that have only one castle and the maximal multiplicity is nondecreasing.
\end{proposition}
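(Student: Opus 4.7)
The plan is to prove the statement by induction on the number of steps past the first configuration with a unique castle, using Proposition~\ref{prop:ft-prob:castle++} as the main lever together with a direct reading of Algorithm~\ref{alg:ft-prob-gathering}.

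First, I would establish the base observation that, in any configuration with a single castle, every correct robot located at that castle executes the branch
\emph{``\textbf{else if} $\left|\MaxMult{\multiset{\Pi}}\right|=1$ \textbf{then} stay''}
of Algorithm~\ref{alg:ft-prob-gathering}, and faulty (crashed) robots do not move by definition. Consequently, the multiplicity of the unique castle in \conf[t] cannot decrease in \conf[t+1]: no robot leaves it. In particular, $\mulmax{\conf[t+1]} \geq \mulmax{\conf[t]}$, so the maximal multiplicity is nondecreasing across this one-step transition.

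Second, I would apply Proposition~\ref{prop:ft-prob:castle++} to control the number of castles. Since $\mulmax{\conf[t+1]} \geq \mulmax{\conf[t]}$ (the maximal multiplicity does not strictly decrease), the contrapositive of Proposition~\ref{prop:ft-prob:castle++} gives that the number of castles does not increase from \conf[t] to \conf[t+1]. On the other hand, the unique castle of \conf[t] still exists in \conf[t+1] with at least the same multiplicity, and by the previous paragraph this multiplicity equals $\mulmax{\conf[t+1]}$ (since any new tower elsewhere must have multiplicity at most $\mulmax{\conf[t+1]}$, and if another location also attained this value the number of castles would have increased). Hence \conf[t+1] has exactly one castle.

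Third, a straightforward induction on $t \geq t_0$, where \conf[t_0] is the first configuration with a unique castle, closes the argument: at every step the single-castle invariant is preserved, and $\mulmax{\conf[t]}$ is monotone nondecreasing in $t \geq t_0$. The only delicate point is the matching of castle counts and maximum multiplicity in the induction step, but this is exactly what Proposition~\ref{prop:ft-prob:castle++} handles, so there is no real obstacle.
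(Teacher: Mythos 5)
Your proposal is correct and follows essentially the same route as the paper: observe that no robot in the unique castle moves (so the maximal multiplicity cannot decrease), invoke Proposition~\ref{prop:ft-prob:castle++} in contrapositive form to conclude that the number of castles cannot increase, and close by induction on configurations. Your extra remark justifying that the surviving castle is still the unique point of maximal multiplicity is a slight elaboration of a step the paper leaves implicit, but it does not change the argument.
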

\begin{proof}
	Let \conf[t] be a configuration in $e$ with a unique castle. 
	
	  By construction of Algorithm~\ref{alg:ft-prob-gathering}, no robots in the castle move
	  when activated.
	  Thus, the maximal multiplicity does not decrease in configuration \conf[t+1].
		From Proposition~\ref{prop:ft-prob:castle++}, 
		the number of castles increases in \conf[t+1] only if the maximal multiplicity decreases
		in \conf[t+1].
		Therefore, the number of castles does not increase in \conf[t+1].
		
	The rest follows by induction on configurations.
	\qed
\end{proof}

An important consequence of these two propositions is that, when a configuration with a unique
castle is reached, then only configurations with a unique castle can follow. In other words,
distinct configurations or configurations with several castles can no longer occur.
%
%Proposition~\ref{prop:ft-prob:one-castle} means that the system cannot transition from a
%configuration with a single castle back to one with several castles.
We now additionally show that the system progresses deterministically to a gathered
configuration. As a result, we can later consider the formation of a unique castle to be
final, as it deterministically leads to gathering in finite steps.
\begin{proposition}
	\label{prop:ft-prob:single2gather}
	In an $(n,f)$-crash system, where $n\!>\!f$, a fair scheduler and multiplicity detection,
	any execution~$e$ (or execution suffix) under Algorithm~\ref{alg:ft-prob-gathering} that
	starts in a configuration with a unique castle leads to a gathered configuration in
	finite steps.
\end{proposition}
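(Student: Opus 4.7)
The plan is to combine Proposition~\ref{prop:ft-prob:one-castle} with a potential argument based on the total distance from correct robots to the unique castle. Let $q$ denote the location of the single castle in the first configuration of the suffix. By Proposition~\ref{prop:ft-prob:one-castle}, every subsequent configuration has exactly one castle and a nondecreasing maximal multiplicity; combined with the fact that Algorithm~\ref{alg:ft-prob-gathering} keeps robots of a unique castle immobile, this forces the single castle to remain fixed at $q$ forever. The proof therefore reduces to showing that every correct robot not located at $q$ eventually arrives at $q$.

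The first step is to observe that Algorithm~\ref{alg:ft-prob-gathering} acts deterministically on any correct robot $r$ located at some point $p \neq q$: because $p \notin \MaxMult{\multiset{\Pi}}$, control flows to the first branch and invokes the (extended) Algorithm~\ref{alg:ft-gathering} directly, with no coin toss. Since $q$ is the only castle, $r$ selects $q$ as its target and either performs a straight move toward $q$, covering at least $\min(\delta,\dist{r}{q})$, or a side move to a point $q'$ inside the open region $A = S \cap D \cap \Vcell{q}$, where $D$ is the open disc with $\overline{pq}$ as diameter. Since every point of $D \setminus \{p\}$ is strictly closer to $q$ than $p$ is, both kinds of moves strictly decrease $\dist{r}{q}$.

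The second step is to apply the potential $\Phi(\conf) = \sum_{r \text{ correct}} \dist{r}{q}$. Because $q$ is stationary and each activated correct robot either stays or moves to a point closer to $q$, $\Phi$ is non-increasing. Moreover, whenever a correct robot $r$ with $\dist{r}{q}>0$ is activated, we can lower-bound its progress by a constant $\epsilon > 0$: a straight move contributes at least $\min(\delta,\dist{r}{q})$; a side move, by the planar geometry of a chord of length at least $\delta$ anchored at the boundary of $D$, contributes at least $\dist{r}{q} - \sqrt{\dist{r}{q}^2 - \delta^2} \geq \delta^2/(2\,\dist{r}{q})$, which is uniform since $\dist{r}{q}$ is bounded above by the diameter of the convex hull (non-increasing by Proposition~\ref{prop:convexhull}). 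Fairness guarantees that every correct robot not at $q$ is activated infinitely often, so $\Phi$ decreases by at least $\epsilon$ infinitely often; combined with $\Phi \geq 0$, this forces $\Phi$ to reach $0$ in finitely many steps, i.e., all correct robots arrive at $q$ and the configuration becomes gathered.

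The main obstacle will be the rigorous treatment of the side move under a fair (non-centralized) scheduler. The choice of $q'$ inside $A$ is non-deterministic in Algorithm~\ref{alg:ft-gathering}, and under concurrent activations several robots may select overlapping target regions; one must therefore invoke the extended side-move construction from the appendix (Sect.~\ref{sec:side-move:disamb}) to ensure both that each robot actually travels a distance of at least $\delta$ along a chord of $D$ (the key to the $\delta^2/(2d)$ bound) and that two concurrently moving robots cannot land at the same point, which would otherwise create a spurious tower and potentially invalidate Proposition~\ref{prop:ft-prob:one-castle}. Once these geometric safeguards are in place, the potential argument closes the proof.
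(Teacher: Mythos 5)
Your proof is correct in its overall architecture but follows a genuinely different route from the paper's. The paper proves the proposition by induction on the maximal multiplicity: from a configuration with a unique castle $\castle{Q}_t$ of multiplicity $M$, it tracks one specific independent correct robot $r$, shows that $r$ performs at most $\frac{n}{M}-2$ side moves before it is permanently unblocked, and then reaches $\castle{Q}_t$ in $\lceil D_t/\delta_r\rceil$ straight moves, so the maximal multiplicity eventually reaches $M+1$; since multiplicity is bounded by $n$, gathering follows. You instead introduce a single global potential $\Phi(\conf)=\sum_{r}\dist{r}{q}$ and argue a uniform per-activation decrement. Both arguments rest on the same preliminaries (Proposition~\ref{prop:ft-prob:one-castle} to freeze the castle, the observation that independent robots execute the deterministic branch with no coin toss, and fairness to guarantee activations), and your potential gives, as a bonus, an explicit bound of order $n\cdot\mathrm{diam}/\epsilon$ on the number of steps, which the paper's induction does not make explicit.

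There is, however, one step that does not hold as you state it: the uniform lower bound $\epsilon$ on the decrease of $\Phi$ for a \emph{side} move. Your chord estimate $\dist{r}{q}-\sqrt{\dist{r}{q}^2-\delta^2}\geq \delta^2/(2\dist{r}{q})$ is valid only when the robot actually travels a distance of at least $\delta$ inside the disc $D$, i.e., when the target $\point{q'}$ is \emph{not} reachable. When $\point{q'}$ is reachable, the robot stops exactly at $\point{q'}$, and nothing in the side-move construction (including the disambiguated version of Sect.~\ref{sec:side-move:disamb}, whose target depends on the trisected angle $\theta^+$ and can lie arbitrarily close to the robot's current position when another robot sits at a small clockwise angle) forces $\dist{\point{p}}{\point{q'}}\geq\delta$ or a decrease of $\dist{r}{q}$ bounded away from zero. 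So ``$\Phi$ decreases by at least $\epsilon$ at every activation of a non-gathered correct robot'' is false in general, and you cannot conclude termination from $\Phi\geq 0$ alone. The repair is exactly what the paper does: do not ask side moves for metric progress at all, but instead bound their \emph{number} (the paper argues that each round of side moves reduces the count of blocked robots by $M$, so a robot performs at most $\frac{n}{M}-2$ of them before all its remaining moves are straight moves, each worth at least $\min(\delta,\dist{r}{q})$). With that finiteness argument spliced in, your potential argument closes; without it, a robot that is re-blocked infinitely often by concurrently moving peers could, as far as your proof shows, make infinitely many vanishingly small side moves.
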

\begin{proof}
	Let \conf[t] be a configuration with a single castle~$\castle{Q}_t$ and maximal multiplicity
	$M=\mulmax{\conf[t]}\geq 2$. We prove that, either \conf[t] is gathered or there exists a
	configuration \conf[t'] in $e$ with $t<t'$ such that $\mulmax{\conf[t']}>M$.
	
		Since \conf[t] has a single castle, a robot located in the castle does not move when
		activated. Thus, only independent robots can move when active. In
		Algorithm~\ref{alg:ft-prob-gathering}, independent robots execute the first clause
		of the test, so the execution is deterministic and depends only on the activations
		of the scheduler.
		
		If there are no independent correct robots, then the configuration is	already
		gathered. Let us now consider the case when some correct robot is independent.
		Let $r$ be one such robot and let $D_t$ be the distance from $r$ to $\castle{Q}_t$ in
		configuration~\conf[t].
		
		The scheduler being fair, it must activate $r$ eventually. We consider two cases, depending
		whether $r$ is blocked or not in configuration \conf[t].
		\begin{itemize}
		\item If $r$ is not blocked, then it takes $\lceil D_t/\delta_r \rceil$ activations
			of $r$ to reach $\castle{Q}_t$, thus increasing the multiplicity of $\castle{Q}_t$, and hence maximal
			multiplicity, by at least one.
		\item
			If $r$ is blocked in \conf[t], then it performs a side move when it is activated.
			It is possible that all other robots blocked on the same ray as $r$ are activated
			at the same time, performing a side move in the same direction. Let $B_t$
			be the number of robots blocked on the same ray as $r$. $B_t$ is at
			most $n-2M$ because $M$ robots form castle $\castle{Q}_t$ and $M$ robots block the others
			on the ray (those may have crashed, so they will not necessarily move).

			Let all blocked robots move together with $r$, with $r$ being the farthest robot
			on the ray.
			After completing a side move, the number of blocked robots decreases by $M$. So,
			after at most $\frac{n}{M}-2$ side moves, $r$ is no longer blocked, and the rest
			follows from the first case.
		\end{itemize}
	
	\noindent
	This proves the claim and the remainder of the proof follows by induction on the
	maximal multiplicity.
	\qed
\end{proof}

%\begin{proposition}
%	\label{prop:ft-prob:distinct2castle}
%	In an $(n,f)$-crash system, where $n\!>\!f$, a fair scheduler and multiplicity detection,
%	let \fragment{t}{} be an execution suffix under Algorithm~\ref{alg:ft-prob-gathering}
%	starting in a distinct configuration \conf[t], then \fragment{t}{} contains a configuration
%	with maximal multiplicity larger than one.
%\end{proposition}

%First, notice that, the way Algorithm~\ref{alg:ft-prob-gathering} is constructed over
%Algorithm~\ref{alg:ft-det-gathering}, means that Proposition~\ref{prop:ft-det:D:nonincreasing}
%holds also for Algorithm~\ref{alg:ft-prob-gathering}. The proof holds in the latter case
%as it stands.
%
% distinct -> multiple castles
% distinct unreachable -> distinct reachable
% distinct reachable -> distinct reachable: P = 0 (from proposition)
% distinct reachable -> single castle
%   1 robot moves + all other stay P >= (1/n)(1-1/n)^(n-1)
%
% multiple castles -> single castle
%
We now show that the shortest distance between a robot and its nearest neighbor is also
non-increasing for Algorithm~\ref{alg:ft-prob-gathering}.

\begin{proposition}
	\label{prop:ft-prob:D:nonincreasing}
	Consider an execution of Algorithm~\ref{alg:ft-prob-gathering} under a fair scheduler.
	Let $D(\conf)$ be a function defined as the shortest distance between a robot and its
	nearest neighbors in configuration~\conf.
	Then, $D(\conf)$ is non-increasing.
\end{proposition}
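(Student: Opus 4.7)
The plan is to mirror the proof of Proposition \ref{prop:ft-det:D:nonincreasing}, arguing by contradiction, while carefully handling the concurrent moves now allowed by Algorithm \ref{alg:ft-prob-gathering} under a fair (non-centralized) scheduler. First I would assume there exists a step $\conf[t] \to \conf[t+1]$ with $D(\conf[t+1]) > D(\conf[t]) = D$, and fix two robots $r, r'$ realizing the minimum distance $D$ in $\conf[t]$. If neither $r$ nor $r'$ moves, their mutual distance is preserved and $D(\conf[t+1]) \leq D$, a contradiction; so at least one of them, say $r'$, moves.

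Next I would exploit the geometry of the move. By Algorithm \ref{alg:ft-prob-gathering}, whenever a robot actually moves, it selects a target castle $q'$ (its nearest castle at a location different from its own) and proceeds either along the segment $\overline{r' q'}$ (straight move) or inside the extended side-move region of Section \ref{sec:side-move:disamb}. Both trajectories are contained in the closed disc with diameter $\overline{r' q'}$, and combined with the lower bound $\delta$ on the step length, this yields $\dist{r'_{\text{new}}}{q'} \leq \max(0, \dist{r'}{q'} - \delta)$. A case distinction analogous to the centralized proof then gives $\dist{r'}{q'} \leq D$ (otherwise $r$ would be a strictly nearer target than $q'$ and the algorithm would have driven $r'$ toward $r$ instead of away from it), so $\dist{r'_{\text{new}}}{q'} \leq D - \delta$.

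The main new difficulty, and the main obstacle of the proof, is that $r$ and the robots making up $q'$ may themselves be active during the same step, so both endpoints of our witness pair could move simultaneously. I would resolve this with two ingredients: the extended side-move construction (Section \ref{sec:side-move:disamb}), which guarantees that two simultaneously moving robots cannot cross paths unless already collocated, and Proposition \ref{prop:ft-prob:castle++}, which implies that the castle at $q'$ cannot vanish in one step without a strict decrease of the maximal multiplicity. Together these produce a robot $x \in \conf[t+1]$ satisfying $\dist{r'_{\text{new}}}{x} \leq D - \delta < D$, contradicting $D(\conf[t+1]) > D$. A symmetric analysis handles the case where $r$ is the mover, completing the proof.
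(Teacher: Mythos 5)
The paper's proof is a two-line reduction that you missed: under Algorithm~\ref{alg:ft-prob-gathering}, every activated robot either executes (extended) Algorithm~\ref{alg:ft-det-gathering} or stays. Proposition~\ref{prop:ft-det:D:nonincreasing} is already stated and proved for a \emph{fair} (hence possibly concurrent, non-centralized) scheduler, so the first case is covered verbatim, and a robot that stays trivially cannot increase $D$. All of the concurrency machinery you deploy is therefore unnecessary --- the earlier proposition already absorbs exactly the difficulty you identify as ``the main obstacle.''

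Beyond being longer than needed, your re-derivation has two concrete problems. First, Proposition~\ref{prop:ft-prob:castle++} does not say what you use it for: it states that the \emph{number of castles can increase} only if the maximal multiplicity decreases; it says nothing about a castle at $q'$ ``vanishing,'' and in the distinct configurations where $D(\conf)$ is the quantity of interest there are no castles at all (every location has multiplicity one, so the algorithm's ``nearest point of maximal multiplicity'' is simply the nearest occupied location). Second, your final step --- ``together these produce a robot $x$ with $\dist{r'_{\text{new}}}{x} \leq D - \delta$'' --- is asserted rather than derived: if $q'$ moves simultaneously, the non-crossing property of the extended side move gives no bound on $\dist{r'_{\text{new}}}{q'_{\text{new}}}$; you would have to chase $q'$'s own target (again at distance exactly $D$, since $D$ is the global minimum) and iterate, which is precisely the role played by the robot $r''$ in the proof of Proposition~\ref{prop:ft-det:D:nonincreasing}. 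The clean fix is simply to cite that proposition instead of re-proving it.
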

\begin{proof}
	In distinct configurations, all robots execute only the third clause of the test of
	Algorithm~\ref{alg:ft-prob-gathering}.
	So, a robot either
	(1) executes Algorithm~\ref{alg:ft-det-gathering} and $D(\conf)$ is
	non-increasing by Proposition~\ref{prop:ft-det:D:nonincreasing},
	or (2) stays and $D(\conf)$ is non-increasing trivially.
	\qed
\end{proof}

%
% SHOW? \exists \Delta(\conf) s.t.
%					 \Delta(\conf) > max_{r\in Corr} dist(r,attractor(r))
%					 and \Delta(\conf) non-increasing
%

%
% SHOW: (distinct, unreachable) -> (distinct, some correct reachable) in finite expected steps
% SHOW: P[(distinct, some correct reachable) -> one castle] >= c > 0
% SHOW: P[several castles -> one castle] >= c' > 0
%

\begin{proposition}
	\label{prop:ft-prob:distinct2castle}
	In an $(n,f)$-crash system, where $n\!>\!f$, with a fair scheduler and multiplicity
	detection, let \exec be an execution under Algorithm~\ref{alg:ft-prob-gathering}, and
	let \fragment{t}{} be any execution suffix of \exec starting in a distinct
	configuration~\conf[t].

	Then, with high probability, \fragment{t}{} contains a configuration %\conf[\tau]
	with maximal multiplicity larger than one.
\end{proposition}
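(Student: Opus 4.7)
The plan is to establish almost-sure tower formation from any distinct configuration via a Borel--Cantelli argument over repeated ``progress events'' driven by a single correct robot. First, note that in a distinct configuration with $n \geq 2$ the algorithm simplifies as follows: every robot lies in $\MaxMult{\multiset{\Pi}}$ and $\mulmax{\multiset{\Pi}} = 1$, so the third clause of Algorithm~\ref{alg:ft-prob-gathering} is executed with $\alpha = 1/2$; moreover, since no robot lies strictly between a robot and its nearest neighbor, $\rbetween{\point{p}}{\point{q}} = 0 < 2$, and Algorithm~\ref{alg:ft-gathering} is invoked in its straight-move branch. Consequently, each activated robot moves straight toward its nearest neighbor with probability $1/2$ and stays with probability $1/2$. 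By Propositions~\ref{prop:ft-prob:D:nonincreasing} and~\ref{prop:convexhull}, both $D(\conf)$ and the convex hull are non-increasing along \exec, so positions remain confined to a bounded region of diameter at most some $R_0$.

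Since $n > f$, fix a correct robot $r$. The key progress event at a step $t' \geq t$ is: $r$ is activated (guaranteed by fairness), and the joint coin tosses realize ``$r$ moves, every other robot in the activated set $S \ni r$ stays''. Its probability is $\tfrac{1}{2} \cdot (\tfrac{1}{2})^{|S|-1} \geq 2^{-n}$ independently of the past. In this event, since only $r$ moves, its distance to its chosen target $r^\ast$ (its current nearest neighbor) strictly decreases by $\delta$, or $r$ reaches $r^\ast$ when $\dist{r}{r^\ast} \leq \delta$ and a tower of multiplicity~$2$ is formed.

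To conclude, by fairness $r$ is activated at an infinite sequence of steps $t_1 < t_2 < \cdots$. At each $t_i$, conditional on no tower having yet formed (hence the configuration is still distinct), the progress event has probability at least $2^{-n}$, independently of previous coin tosses. A Borel--Cantelli argument then yields infinitely many successful events almost surely. Since the distance from $r$ to any other robot is bounded above by $R_0$, only finitely many $\delta$-reductions can occur without $r$ reaching its target; thus, with probability one, some \conf[t''] with $t'' \geq t$ has maximal multiplicity at least $2$.

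The main obstacle lies in controlling the dynamics between successive activations of $r$: other correct robots may move and shift $r$'s nearest neighbor, so the scalar ``distance from $r$ to its nearest'' is not pointwise non-increasing. The workaround is to combine the single-robot analysis with the global non-increase of $D$ from Proposition~\ref{prop:ft-prob:D:nonincreasing}, treating two subcases separately: (i)~the closest pair contains a correct robot $r$---then choosing $r$ in that pair makes the progress event strictly decrease $D$ by $\delta$ (or forms a tower), and (ii)~the closest pair consists of two crashed robots---then $D$ is stuck, but a correct robot $r$ is eventually close to some fixed crashed robot or, by subcase~(i) applied to the closest pair among correct robots, will collide with another correct robot; in either regime, the single-robot ``only $r$ moves'' progress event drives $r$ monotonically toward a fixed target in finitely many successes. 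The bookkeeping required to verify monotonicity under the chosen potential, and to show that the fixed target does not drift away between activations, constitutes the main technical effort.
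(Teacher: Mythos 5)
Your core device --- a Borel--Cantelli argument over ``only robot $r$ moves'' events, each of probability at least $2^{-n}$ --- is sound as far as it goes, and it is essentially the same probabilistic ingredient the paper uses. The gap is exactly where you locate it, and your proposed workaround does not close it. The step ``only finitely many $\delta$-reductions can occur without $r$ reaching its target'' is invalid because the quantity being reduced, the distance from $r$ to its \emph{current} nearest neighbor, is not monotone: between two progress events the nearest neighbor of $r$ may itself move away (toward its own, closer, attractor), so $\min_{r'\neq r}\dist{r}{r'}$ can increase even though the global $D(\conf)$ of Proposition~\ref{prop:ft-prob:D:nonincreasing} does not. Your subcase~(i) is recoverable, since a successful progress event for a correct robot currently in the closest pair does decrease the global, genuinely non-increasing $D(\conf)$ by $\delta$. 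But subcase~(ii) is where the proof actually lives and it is not established: when the closest pair consists of two crashed robots, the minimum distance restricted to correct robots is \emph{not} non-increasing (a correct robot chasing a crashed attractor can move away from every other correct robot), so ``subcase~(i) applied to the closest pair among correct robots'' has no monotone potential behind it; and the claim that some correct robot is ``eventually close to some fixed crashed robot'' ignores that its target may switch back to a moving correct robot before it arrives. You acknowledge that the bookkeeping is the main technical effort, but that bookkeeping is precisely the content of the proposition.

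The paper resolves this with a construction you would need some substitute for: the attractor graph $\AG[{\conf}]$, whose arcs point from each non-crashed robot to its chosen target, with weights given by distances. Along any path the weights are non-increasing, and every path ends either at a crashed robot or in a cycle of mutually nearest robots at equal distance. The potential tracked is $\Delta(\conf)$, the minimal weight of a terminal arc; the case analysis shows that under every possible event (target reached, robot crashes, attractor switches to a robot at distance no greater than the old one, cycle broken by a proper subset of its members moving) this potential never increases, and that it strictly decreases by at least $\delta$ with positive probability within bounded time. That is the monotonicity argument your sketch is missing; without it, or an equivalent potential that survives attractor switching, the proof is incomplete.
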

\begin{proof}
	Given a distinct configuration \conf, let us first define its attractor
	graph \AG[{\conf}] to be a weighted directed graph in which each robot is a vertex,
	and such that, there is an arc from robot $r$ to robot $r'$ if and only if $r$ is
	not crashed in \conf and, upon activation in \conf, $r$ will select $r'$ as its
	target destination according
	to Algorithm~\ref{alg:ft-prob-gathering} (and by extension Algorithm~\ref{alg:ft-gathering}).
	The weight of an arc is given by the distance separating the two robots.
	We say that $r'$ is the attractor of $r$ in configuration \conf.
	
	Each path in \AG[{\conf}] has non-increasing weights and ends either in a cycle
	of equal weights or with a crashed robot. Since $n\!>\!f$, there is at least one
	robot that never crashes, and hence at least one path exists in every configuration
	of~\exec.
	
	Consider the execution fragment \fragment{t}{} starting in distinct configuration \conf[t],
	and take the extremity of one path in \AG[{\conf[t]}] such that the weight of the last
	arc(s) is minimal.
	Let us denote this weight by $\Delta(\conf[t])$, and consider independently the two
	possible situations regarding the extremity of the path.
	
	\begin{enumerate}
	\item The path ends with a crashed robot.\\
		Let $r'$ be the crashed robot and $r$ the last correct robot on the path, with $r'$ as attractor.
		Then, $\Delta(\conf[t])$ is the distance separating $r$ and $r'$ in \conf[t].
		
		Over successive activations in fragment \fragment{t}{}, three situations may occur.
		When we say that a robot gets close to $r$, we mean that there is a third robot $r''$ such
		that $\dist{r}{r''} \leq \dist{r}{r'}$ and hence $r$ changes its attractor from $r'$ to $r''$.

		\begin{enumerate}
		\item Robot $r$ does not crash and no other robot gets close to $r$.\\			
			After $\left\lceil \frac{\Delta(\conf[t])}{\delta_r} \right\rceil$ successful moves of $r$,
			$r$ reaches the location of $r'$, resulting in a configuration with	maximal multiplicity
			larger than one.
			
			The number of activations follows a binomial distribution, and hence this occurs after
			constant expected number of activations of $r$. The scheduler being fair, \fragment{t}{}
			contains a configuration with	maximal multiplicity larger than one with high probability.
			
		\item Robot $r$ crashes in configuration \conf[t'] with $t'>t$.\\
			We apply the same argument starting with configuration \conf[t'] and other robots.
			This happens at most $f-1$ times.

		\item Robot $r$ changes its attractor to another robot $r''$ in a configuration \conf[t'].\\
			In \conf[t'], the distance $\dist{r}{r''} \leq \dist{r}{r'}$. Take the new path in which
			$r$ is now involved and continue applying the argument over its extremity with
			$\Delta(\conf[t'])$ such that:
			\[
			\Delta(\conf[t']) \leq \dist{r}{r''} \leq \dist{r}{r'} \leq \Delta(\conf[t])
			\]
		\end{enumerate}
	
	\item The path ends in a cycle.\\
		The cycle involves $q$ non-crashed robots ($2\!\leq\!q\!\leq\!n$), all at distance $\Delta(\conf[t])$ to
		their attractor.
		
		Over successive activations in fragment \fragment{t}{}, there are several situations that may occur.

		\begin{enumerate}
		\item No robots crash and no external robot gets close.\\
			Each time some of the robots involved in the cycle are activated, the following
			situations may occur.
			\begin{enumerate}
			\item Some other robot in the cycle is not activated.\\
				Let $r$ be an activated robot with attractor $r'$, such that $r'$ is not activated.
				
				With probability at least $\frac{1}{n}$ robot $r$ moves (while $r'$ does not),
				and the cycle is broken. We apply the argument again starting with the new configuration
				\conf[t'], with $\Delta(\conf[t'])$ such that:
				\[
				\Delta(\conf[t']) \leq \dist{r}{r'} - \delta_r \leq \Delta(\conf[t]) - \delta_r
				\]
			
			\item All robots in the cycle are activated.\\
				There are three sub-cases:
				\begin{itemize}
				\item With probability $\left(1-\frac{1}{n}\right)^q$ no robots move.
					The situation does not change.
					
				\item With probability $\left(\frac{1}{n}\right)^q$ all robots move.
					The situation remains if and only if (1)~all robots $r_i$ involved in the cycle
					have the same reachable distance $\delta_{r_i}$,
					and (2)~$\Delta[{\conf[t]}] = \delta_{r_i}$.
					
					In all other cases, $\Delta[{\conf[t+1]}] < \Delta[{\conf[t]}]$.

				\item With remaining probability, a strict subset of the robots move and the other don't.
					This is identical to the previous case, when some robot is not activated.
					This results in the cycle being broken, and we apply the argument again
					starting with the new configuration \conf[t'], with $\Delta(\conf[t'])$ such that:
					\[
					\Delta(\conf[t']) \leq \dist{r}{r'} - \delta_r \leq \Delta(\conf[t]) - \delta_r
					\]
				\end{itemize}
					
			\end{enumerate}

		\item Some robot $r$ in the cycle crashes in configuration \conf[t'] with $t'>t$.\\
			The path no longer ends in a cycle, and we apply the argument starting in configuration \conf[t'] and with $\Delta(\conf[t'])$ such that
			$\Delta(\conf[t']) \leq \Delta(\conf[t])$.
			
		\item Some robot $r$ changes its attractor to another robot $r''$ in a configuration \conf[t'].\\
			This breaks the cycle and defines a new path involving $r$. We apply the argument over the
			extremity of this path, with $\Delta(\conf[t'])$ such that
			\[
			\Delta(\conf[t']) \leq \dist{r}{r''} \leq \Delta(\conf[t])
			\]
		\end{enumerate}

	\end{enumerate}
	Regardless of scheduler choices, the minimal distance from a non-crashed robot
	to its attractor eventually decreases and, with high probability, the system
	reaches a configuration	with multiplicity larger than one.
	\qed
\end{proof}

%\fbox{Stable to here.}

\begin{lemma}
	\label{prop:ft-prob:multiple2single}
	In an $(n,f)$-crash system, where $n\!>\!f$, with a fair scheduler and
	multiplicity detection, let $e$ be an execution under
	Algorithm~\ref{alg:ft-prob-gathering}.
	%when robots move to the nearest robot.
	Let $e'$ be any execution suffix starting in a configuration with multiple castles.
	Then, with high probability, $e'$ contains a configuration with a single castle.
%	\begin{enumerate}
%	\item either a configuration with	maximal multiplicity larger than one,
%	\item or a distinct configuration in which the nearest robot to any correct robot is
%		within reachable distance from $r$.
%	\end{enumerate}
\end{lemma}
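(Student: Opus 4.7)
The plan is to combine two observations. First, the pair $(\mulmax, |\MaxMult|)$ ranges in a finite discrete set, and Proposition~\ref{prop:ft-prob:castle++} already constrains how it can evolve: any increase of the castle count must be accompanied by a strict decrease of the maximum multiplicity. Second, from every multi-castle configuration, a ``merger event'' of bounded-below positive probability causes either $\mulmax$ to strictly grow or $|\MaxMult|$ to strictly drop. Once a single-castle configuration is reached, Proposition~\ref{prop:ft-prob:one-castle} makes it absorbing, so it suffices to establish almost-sure arrival at such a configuration in finite time.

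For the merger event, let \conf\ be a configuration with $K \geq 2$ castles of multiplicity $M$, pick two mutually nearest castles $\castle{Q}_1, \castle{Q}_2$, and let $r \in \castle{Q}_1$ be any robot. By fairness $r$ is activated at some finite time $t$. At $t$, the castle-robot coin flips are independent with success probability $\alpha = \min(1/M, 1/2) \geq 1/n$ and failure probability $\geq 1/2$, and at most $n$ castle robots are simultaneously active. Hence the event $E_r$ that ``$r$ succeeds its flip while every other activated castle robot fails'' has conditional probability at least $p_0 = \frac{1}{n}\bigl(\frac{1}{2}\bigr)^{n-1} > 0$. Under $E_r$, the castles $\castle{Q}_2, \ldots, \castle{Q}_K$ lose no robot (so retain multiplicity $M$, possibly boosted if a simultaneously activated non-castle robot reaches one of them), while $\castle{Q}_1$ drops to multiplicity $M-1$. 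Two sub-cases follow: either $r$ reaches $\castle{Q}_2$ in one step, making $\mulmax$ strictly grow to $M+1$, or $r$ lands at a new intermediate singleton, in which case $\castle{Q}_1$ is no longer a castle and $|\MaxMult|$ strictly drops.

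By fairness $r$ is activated infinitely often, so $E_r$ occurs almost surely in finite time; iterating the argument after each successful merger, and bounding total regressions using Proposition~\ref{prop:ft-prob:castle++} (each strict increase of $|\MaxMult|$ is paid for by a strict decrease of $\mulmax \in [2,n]$, which itself can only decrease finitely often between merger events, since $\mulmax$ is pinned by $E_r$), the system almost surely reaches a configuration with $|\MaxMult| = 1$ in finite time. The main obstacle is to rule out, between successive merger events, the possibility that a synchronous activation of several castle robots from \emph{distinct} castles creates a ``phantom castle'' at a newly reached intermediate point, which would reinstate $K \geq 2$ at the current $\mulmax$ without any coin-flip gating being decisive. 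This scenario is precisely excluded by the extended side-move construction of Section~\ref{sec:side-move:disamb}: the target subregions selected by robots belonging to distinct castles never intersect, so no synchronous multi-robot move can instantiate a new tower at a previously empty location, and the argument closes.
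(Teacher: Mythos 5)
Your strategy --- a potential argument on the pair $(\mulmax{\conf},\,|\MaxMult{\conf}|)$ driven by ``merger events'' in which a castle robot wins its coin flip while all other castle robots lose theirs --- is genuinely different from the paper's, which instead bounds from below the one-step probability $\PP_{inc}(K'\!=\!1,x\!=\!1)$ that exactly one castle gains an \emph{incoming} robot while all other arrivals and departures cancel, and shows this bound is positive once some correct robot is within reachable distance of a castle. The difference is not cosmetic, and it is where your proof breaks: your event $E_r$ is gated on a robot $r\in\castle{Q}_1$ succeeding its flip and moving, which presupposes that some castle contains a \emph{correct} robot. The model only guarantees $n>f$, i.e.\ one correct robot somewhere in the system; a configuration with $K\geq 2$ castles consisting entirely of crashed robots, with every correct robot strictly outside all castles, is perfectly admissible (take $n=5$, $f=4$, two crashed pairs at two locations and one correct robot elsewhere). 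In such a configuration $E_r$ has probability zero forever, and your only source of progress vanishes. The incoming-robot mechanism --- a correct independent robot eventually becoming ``near'' a castle and joining it, making it the unique castle of multiplicity $M+1$ --- is exactly what the paper uses to cover this case; you mention incoming robots only as an incidental ``boost,'' never as a driver of progress.

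A second, independent gap is the termination step. The pair $(\mulmax{\conf},\,|\MaxMult{\conf}|)$ is not lexicographically monotone: whenever all castles at the current level are simultaneously destroyed, $\mulmax{\conf}$ drops, and Proposition~\ref{prop:ft-prob:castle++} then \emph{permits} the castle count to jump back up, so the potential regresses. Your parenthetical claim that $\mulmax{\conf}$ ``can only decrease finitely often between merger events'' is not justified: a fair scheduler may defer $r$'s activation arbitrarily long, and destruction/reformation cycles can repeat indefinitely in the interim without violating fairness. In a probabilistic setting one does not need monotonicity, but one does need a positive lower bound, uniform over all multi-castle configurations, on the probability of reaching a single-castle configuration within a bounded horizon, followed by a geometric-trials argument; this is in effect what the paper's $\PP_{inc}$ computation together with the absorbing Markov chain of Theorem~\ref{th:wg} provides, and it is the piece your proof would need to state and prove in place of the monotonicity claim.
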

\begin{proof}
	Let \conf be a configuration with $K>1$ castles of multiplicity $M$ in $e'$, and let us
	calculate the probability to reach a configuration $\conf'$ with $K'$ castles of
	multiplicity $M'$ after the next activation.
	
	Let \set{K} denote the set of castles in \conf. For each castle $\castle{k}\in\set{K}$,
	let $\mathit{i}_k$ denote the number of (incoming) robots that can enter castle \castle{k}
	upon activation. To be counted, a robot must be correct, located outside castle \castle{k},
	activated by the scheduler in configuration \conf, have castle \castle{k} as its destination,
	and be able to reach \castle{k} in one step.
	Similarly, let $\mathit{o}_k$ denote the number of (outgoing) robots that can leave
	castle \castle{k} upon activation. To be counted, a robot must be correct, located inside
	castle \castle{k}, and activated by the scheduler.
	Note that, when castles are near, a single robot may be counted simultaneously as
	an outgoing robot of some castle and an incoming robot of another castle.
	
	We now define a function $\textsc{Balance}(i,o)$ to calculate the probability that
	the movement of $i$ incoming robots and $o$ outgoing robots exactly compensate
	each other. This is given by the probability that the same number of incoming and
	outgoing robots move, so that every departure of an outgoing robot is compensated
	by the arrival of an incoming one.
	\begin{align*}
%		\textsc{Balance}(i,o)
		&\textsc{Balance}(i,o) = \PP[\mbox{none move}] + \sum_{m=1}^{\min(i,o)}\PP[\mbox{$m$ arrive/depart}]%\ldots
		\\
		&=
			(1\!-\!\frac{1}{M})^{i+o}
		 	+ \sum_{m=1}^{\min(i,o)}
			\left[
				\begin{array}{ll}
					\displaystyle
		  		{i \choose m} (\frac{1}{M})^m (1\!-\!\frac{1}{M})^{i-m}
					\\
					\displaystyle
		  		\times {o \choose m} (\frac{1}{M})^m (1\!-\!\frac{1}{M})^{o-m}
				\end{array}
			\right]
		\\
		&=
			(1\!-\!\frac{1}{M})^{i+o}
		 	+ \sum_{m=1}^{\min(i,o)}
			\left[
		  	{i \choose m}{o \choose m}
				(      \frac{1}{M})^{2m}
				(1\!-\!\frac{1}{M})^{(i\!+\!o\!-\!2m)}
			\right]
		\\
 		&=
			(1\!-\!\frac{1}{M})^{i+o}
		\left(1 + 
		 	\sum_{m=1}^{\min(i,o)}
		  {i \choose m} {o \choose m}
		  (  \frac{1}{M-1})^{2m}
		\right)
		\\
	\end{align*}
	
	We define the function $\textsc{Increase}(i,o,x)$ to return the probability that
	the multiplicity of a castle increases by $x$ in the presence of $i$ incoming robots
	and $o$ outgoing robots. This is the probability that $x$ incoming robots move with
	the remaining incoming and outgoing robots compensating each other's movements.
	\begin{equation*}
		\textsc{Increase}(i,o,x) =
		  {i \choose x} (\frac{1}{M})^x \cdot \textsc{Balance}(i-x, o)
	\end{equation*}
	
	Let $\PP_{inc}(K',x)$ return the probability that configuration $\conf'$ has exactly
	$K'$ castles of multiplicity $M'=M+x$. That probability can be expressed as the
	probability that any subset \set{K'} of $K'$ castles increase their multiplicity
	by $x$ and all remaining castles do not increase multiplicity to any value $x'$
	larger or equal to $x$. Let $\set{P}_{=K'}(\set{K})$ denote the set of subsets
	of \set{K} of cardinality~$K'$, and we can express $\PP_{inc}(K',x)$ as follows.
	\[
		\PP_{inc}(K',x)
		\!=\!
		\left[
		\begin{array}{l}
			\displaystyle
			\prod_{\set{K'}\in\set{P}_{=K'}(\set{K})} ~
				\prod_{k'\in\set{K'}}
					\textsc{Increase}(i_{k'}, o_{k'}, x)
		\\
			\displaystyle
			\times\!
			\prod_{k''\in\set{K}\setminus\set{K'}} ~
				\prod_{x'=x}^{i_{k''}}
					\left( 1\!-\!\textsc{Increase}(i_{k''},\,o_{k''},\,x') \right)
		\end{array}
		\right]
	\]
	
	The probability of having a single castle in the next
	configuration is obviously at least as high as having
	a single castle by increasing the multiplicity of one
	of them by one.	So, we can state the following inequality
	\[
		\PP\left[\mbox{$\conf'$ has one single castle}\right]
		\geq
		\PP_{inc}(K'=1,x=1)
	\]
	The exact probability must consider	increases of the multiplicity
	by more than one, and the change in number of castles due to a
	decrease of the multiplicity. However, this is sufficient for the
	proof since, as we are not concerned here with measuring an actual
	convergence rate, the mere existence of a transition with positive
	probability is sufficient.
	
	Let us consider the configurations for which $\PP_{inc}(K'=1,x=1)$
	is zero. From the formula obtained for $\PP_{inc}$, we see that it
	is zero when, for all castle $k$, $i_k$ is zero.
	This can occur in several situations.
	\begin{itemize}
	\item All robots have crashed.\\
		This contradicts the assumption that $f<n$ which implies that
		there is at least one correct robot (i.e., a robots that never
		fails).
	\item The ``near'' robots are never activated.\\
		This contradicts the assumption that the scheduler is fair. If
		some ``near'' and correct robots exist, they must be activated
		eventually.
	\item There are no ``near'' robots.\\
		When a ``far'' robot $r$ is activated, its distance to the nearest
		castle decreases by $\delta_r$ (to simplify the discussion
		we omit the case of the side move). Thus, either the
		configurations of castle change or $r$ becomes a ``near'' robot.
	\end{itemize}
%	We now have to show that, regardless of activations, we reach a
%	configuration in which $i_k>0$ for some castle $k$ in expected
%	constant steps.
%	\fbox{... to be continued.}
	So, with high probability, $\exec'$ contains a configuration with a single castle.
	\qed
\end{proof}

\begin{theorem}
  \label{th:wg}
  In an $(n,f)$-crash system, where $n\!>\!f$, Algorithm~\ref{alg:ft-prob-gathering}
  probabilistically solves weak gathering
  under a fair scheduler if robots are aware of
  multiplicity.
\end{theorem}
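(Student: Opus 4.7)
The plan is to prove the theorem by separately establishing closure and convergence, relying on the chain of propositions already developed for Algorithm~\ref{alg:ft-prob-gathering}.

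For the closure property, I would argue as follows. A gathered configuration has all correct robots at a single point of maximal multiplicity, which is in particular a unique castle. By Proposition~\ref{prop:ft-prob:one-castle}, any subsequent configuration also has a unique castle and nondecreasing maximal multiplicity. Moreover, in Algorithm~\ref{alg:ft-prob-gathering}, when a robot is at the unique castle it executes the ``stay'' branch, so no correct robot ever leaves. Combined with the fact that the set of correct robots is preserved, every subsequent configuration is still gathered.

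For the convergence property, I would perform a case analysis on the type of reached configuration, showing that the system passes with probability~1 through the pipeline (arbitrary) $\to$ (tower present) $\to$ (unique castle) $\to$ (gathered). Concretely: (i) if the current configuration is distinct, Proposition~\ref{prop:ft-prob:distinct2castle} guarantees that with high probability a future configuration has maximal multiplicity strictly greater than one, i.e.,~some tower is formed; (ii) if several castles are present (or more generally, more than one tower of maximal multiplicity), Lemma~\ref{prop:ft-prob:multiple2single} ensures that with high probability a configuration with a single castle is eventually reached; (iii) once a unique castle exists, Proposition~\ref{prop:ft-prob:one-castle} guarantees it remains the unique castle forever, and Proposition~\ref{prop:ft-prob:single2gather} guarantees that from such a configuration the system reaches a gathered configuration in finitely many steps, deterministically. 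Since the composition of finitely many ``with high probability'' steps followed by a deterministic step still yields probability~1 of reaching a gathered configuration in an infinite fair execution, convergence holds.

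The main obstacle will be handling the transition structure rigorously: one must ensure that the categorisation of configurations into ``distinct'', ``several castles'', and ``single castle'' is exhaustive and that the pipeline does not backtrack in a way that prevents progress. The key fact that rules out backtracking once a unique castle is formed is Proposition~\ref{prop:ft-prob:one-castle} (the single-castle property is absorbing); what requires more care is that the transition from the distinct regime into the ``tower present'' regime may produce a multi-castle configuration rather than immediately a single castle, but then Lemma~\ref{prop:ft-prob:multiple2single} takes over. A secondary subtlety is that the number of castles can temporarily grow only if the maximal multiplicity decreases (Proposition~\ref{prop:ft-prob:castle++}); this bounds how much oscillation can occur before the multi-to-single step of Lemma~\ref{prop:ft-prob:multiple2single} eventually succeeds, ensuring the probabilistic argument composes correctly to yield convergence with probability~1.
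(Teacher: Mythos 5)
Your proposal is correct and follows essentially the same route as the paper: closure via the observation that correct robots on the unique castle never move, and convergence by chaining Propositions~\ref{prop:ft-prob:distinct2castle}, \ref{prop:ft-prob:castle++}, \ref{prop:ft-prob:one-castle}, \ref{prop:ft-prob:single2gather} and Lemma~\ref{prop:ft-prob:multiple2single}. The one place where the two arguments diverge is how the final probabilistic composition is closed out. The paper does not argue ``finitely many whp steps compose''; instead it sets up an explicit adversary and an absorbing Markov chain over the number of castles (Fig.~\ref{fig:ft-prob:markov}), with states for distinct, $K$ castles, all-destroyed, single castle, and gathered, and concludes by the standard fact that an absorbing chain whose only absorbing state is ``gathered'' is absorbed with probability~1. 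This matters because your justification for why the pipeline cannot backtrack forever is not quite right: Proposition~\ref{prop:ft-prob:castle++} says the castle count grows only when the maximal multiplicity drops, but the multiplicity can drop (all castles destroyed) and then rebuild, so there is \emph{no bound} on the number of oscillations. What actually rescues the argument is not a bound on oscillations but the absorption property --- from every reachable state there is a uniformly positive probability of hitting the single-castle state, which is itself absorbing into gathering by Propositions~\ref{prop:ft-prob:one-castle} and~\ref{prop:ft-prob:single2gather}. Since you already invoke Lemma~\ref{prop:ft-prob:multiple2single} as a black box (which internally absorbs the oscillation issue), your conclusion stands, but you should replace the appeal to Proposition~\ref{prop:ft-prob:castle++} as an oscillation bound with either the Markov-chain absorption argument or a Borel--Cantelli-style statement that an event of uniformly positive probability recurring infinitely often occurs almost surely.
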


\begin{proof}
	Closure is satisfied by Algorithm~\ref{alg:ft-prob-gathering} because,
	in a gathered configuration, all correct robots are by definition located
	on a unique castle, and hence do not move when activated. Thus, a gathered
	configuration always follows after a gathered configuration and closure is
	satisfied.
	
	\medskip
	To show convergence, let us consider an adversary~\Adv, as defined by
	(1)~an initial configuration,
	(2)~an activation strategy,
	and (3)~control of robot crashes.
	However, \Adv	has no control on random choices made by robots, and no prior
	knowledge of their outcomes.
	The goal of \Adv is then to construct an infinite execution $\bar\epsilon$
	that contains no gathered configurations, and such that $\bar\epsilon$
	occurs with non-zero probability.
	
	From Proposition~\ref{prop:ft-prob:single2gather}, the formation of a single
	castle leads to a gathered configuration. So, \Adv must prevent the formation
	of a single castle.
	Let us now focus on the number of castles in each configuration and look at
	the transitions when this changes. Figure~\ref{fig:ft-prob:markov} depicts a
	Markov chain that represents the changes in the number of castles. The chain
	provides a conservative estimation by integrating simplifications that
	systematically favor \Adv.
	Since we are not concerned here with measuring the actual convergence rate, the mere
	existence of transitions with a positive probability is sufficient.
	We now describe its construction.
	
	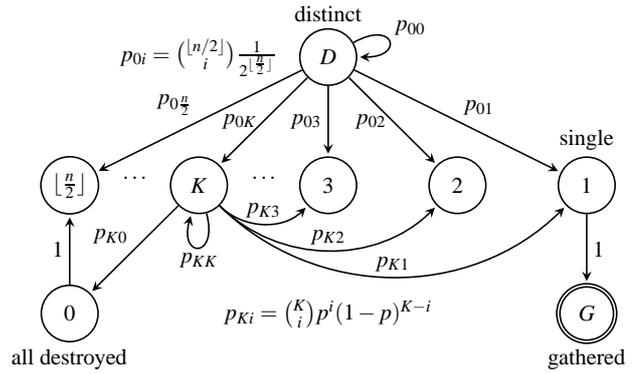
\begin{figure}
		\centering
		\begin{tikzpicture}[->,>=stealth,shorten >=1pt,%
	                    auto,node distance=1.7cm,on grid,semithick,
	                    inner sep=2pt,bend angle=10]
		%[->,>=stealth',shorten >=1pt,auto,node distance=2.8cm,semithick]
			\node[state,label=above:{single}] (1) {$1$};
			\node[state] (2)   [left of=1] {$2$};
			\node[state] (3)   [left of=2] {$3$};
			\node[state] (K)   [left of=3] {$K$};
			\node[state] (n/2) [left of=K] {$\lfloor\frac{n}{2}\rfloor$};
			
			\node[state,label=above:{distinct}]           (D) [above of=3]   {$D$};
			\node[state,label=below:{all destroyed}]      (0) [below of=n/2]  {$0$};
			\node[state,accepting,label=below:{gathered}] (G) [below of=1] {$G$};
			
			\node (formula)  [below of=3] {$p_{Ki}={K \choose i}p^i(1-p)^{K-i}$};
			\node (formula2) [left of=D]  {$p_{0i}={{\lfloor n/2\rfloor} \choose i} \frac{1}{2^{\lfloor\frac{n}{2}\rfloor}}$};
			
			\path[->] (0) edge node[left]  {$1$} (n/2);
			\path[->] (1) edge node[right] {$1$} (G);
			
			\path (K)   -- node{$\ldots$} (3);
			\path (n/2) -- node{$\ldots$} (K);
			
			\path[->,bend angle=45]
								(K) edge[bend right] node[above] {$p_{K3}$} (3)
										edge[bend right] node[above] {$p_{K2}$} (2)
										edge[bend right] node[above] {$p_{K1}$} (1)
										edge[loop below] node[below] {$p_{KK}$} (K)
										edge             node[above left] {$p_{K0}$} (0);
			
			\path[->] (D) edge[in=0,out=30,loop] node {$p_{00}$} (D)
										edge node {$p_{01}$} (1)
										edge node[left] {$p_{02}$} (2)
										edge node[left] {$p_{03}$} (3)
										edge node[left] {$p_{0K}$} (K)
										edge node[above left] {$p_{0\frac{n}{2}}$} (n/2);
												
		\end{tikzpicture}
		\caption{%
			Markov chain representing the transitions of changes in the number of castles.
			A number represents the number of castles in the configurations. For every
			$K\in\{2,\ldots,\lfloor\frac{n}{2}\rfloor\}$, outgoing transitions follow a binomial
			distribution (only $K$ depicted). Transitions from $K$ to distinct are ignored
			because transitioning to state $0$ instead favors an adversary.
			State~$1$ (single castle) leads to a gathered configuration, which is absorbing.
			When all castles are destroyed, a worst-case choice leads to
			$\lfloor\frac{n}{2}\rfloor$ castles (e.g., with lower multiplicity) in the next
			configuration.
		}
		\label{fig:ft-prob:markov}
	\end{figure}

	\medskip
	Assume first that the system is in a distinct configuration. From
	Proposition~\ref{prop:ft-prob:distinct2castle}, \Adv cannot prevent
	the formation of castles. It can however control activations so that
	several castles are formed simultaneously.
	To maximize the chance of creating multiple castles, \Adv can	postpone the
	activations of every robot that can reach its nearest neighbor,	until all
	robots form pairs%
	\footnote{
		Situations in which robots form a chain or a cycle result in the creation
		of fewer castles, which is less favorable to the adversary~\Adv.
	}
	of mutually nearest neighbors. Then, all robots are activated and move with
	probability $\frac{1}{2}$, resulting in a number of castles that follows
	a binomial distribution $B(\lfloor\frac{n}{2}\rfloor,\frac{1}{2})$.
	\begin{equation*}
		\PP\left[\mbox{$x$ castles created}\right] = p_{0x} =
		{\lfloor\frac{n}{2}\rfloor \choose x} \frac{1}{2^{\lfloor\frac{n}{2}\rfloor}}
	\end{equation*}
	When no castles are created, the resulting configuration is distinct and the
	process repeats itself.
	
	\medskip
	Assume now that the system is in a configuration with $K>1$ castles. The number
	of castles can change in two possible ways: (1)~independent robots moving
	inside a castle, or (2)~robots leaving a castle thus destroying it.
	
%	\fbox{if correct in castle, it must be activated eventually, and (2) occurs}
%	\fbox{if no correct in castle, then at least one independent correct and (1) occurs eventually.}
	
	When independent robots move inside a castle, no additional castle can be
	created in the next configuration (from Proposition~\ref{prop:ft-prob:castle++}).
	Looking at the best case (for \Adv)	when one independent robot is ready to
	move inside every castle, we obtain that the probability of
	castle creation	follows a binomial distribution $B(K,\frac{1}{M})$.
	\[
		\PP\left[\mbox{$x$ castles created}\right] = p_{Kx} =
		{K \choose x} (\frac{1}{M})^x (1-\frac{1}{M})^{K-x}
	\]
	When no castles are created, the resulting configuration is identical and the
	situation is repeated.
%	Again, this is an estimation that favors the adversary.

	When robots leaving castles result in their destruction, there can be three
	possible outcomes in the configuration that follows:
	\begin{itemize}
	\item Several castles remain. There can be no more than $K$ castles.
	\item A single castle remain. This is the situation that \Adv must avoid.
	\item All castle destroyed. The next configuration has a lower maximal
		multiplicity, and can result in a larger number of castles of
		multiplicity lower than $M$.
	\end{itemize}
	The probability that a given castle is not destroyed by some robot moving
	outside has the following probability
	\[
		\PP\left[\mbox{castle not destroyed}\right] = \alpha^M = (\frac{1}{M})^M = p(M)
	\]
	For a configuration to have multiple castles, $M$ must necessarily be between $2$
	and	$\lfloor\frac{n-1}{2}\rfloor$. For any of these values, both $p(M)$ and $1-p(M)$
	are strictly positive. To simplify the model (Fig.~\ref{fig:ft-prob:markov}), we
	assume that it is a positive constant $p$ with $0<p<1$, chosen to be the value that
	favors the adversary most, and that does not depend on multiplicity. Again, its exact
	value is secondary, as long as it is strictly positive for any value of $K$, $M$,
	and $n$ finite.

%	Since we are not concerned here with measuring the actual convergence rate, the mere
%	existence of a positive probability is sufficient.
	
	The number of castles in the following configuration follows a binomial
	distribution $B(K,\alpha^M)$
	\[
		\PP\left[\mbox{$x$ castles size $M$ remain}\right]
		= p_{Kx} = {K \choose x} p^x (1-p)^{K-x}
	\]
	When no castles remain, the maximal multiplicity decreases and a larger number of castles
	of lower multiplicity may be created. Unless the next configuration is distinct, there can
	be no more than $\lfloor\frac{n}{2}\rfloor$ castles in the resulting configuration.
	We observe that, from the viewpoint of the adversary \Adv, the best case is when the maximal
	number of castles are formed. So, we assume that the destruction of all castles in a
	configuration always leads to a configuration with the maximal number of castles.
	
	\medskip
	Putting this together gives us the Markov chain depicted in Fig.~\ref{fig:ft-prob:markov}.
	For configurations with multiple castles, the figure shows only the transitions from state~$K$.
	According to Proposition~\ref{prop:ft-prob:single2gather}, configurations with a single castle
	lead to a gathered configuration with probability~1.
	
	The resulting Markov chain contains a single absorbing state $G$. It is	a well-known result
	that, in an absorbing Markov chain, the process will be absorbed with	probability~1. Since
	the only absorbing state is $G$ (gathered), convergence is satisfied with probability~1.
	\qed
\end{proof}

\section{Byzantine Tolerant and Self-stabilizing Gathering}
\label{sec:byzt-gathering}
We study now the feasibility of gathering in systems prone
to Byzantine failures.

Agmon and Peleg~\cite{AP06} proved the impossibility of weak
gathering in a $(3,1)$-Byzantine system under a fair scheduler 
(Theorem~\ref{th:ap:byzimposs}). The result applies to both SYm
and CORDA models. The following lemma proves that the impossibility
still holds under a round-robin scheduler, and even if the algorithm
is probabilistic.

\begin{lemma}
  \label{lemma:impbyz}
  In a Byzantine-prone system, there is no deterministic or probabilistic algorithm
  that solves $(n,f)$-weak gathering, $f\geq 1$ and $n>f+1$, under a round-robin scheduler
  without additional assumptions.
\end{lemma}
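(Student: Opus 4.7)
The plan is to argue by contradiction: assume some algorithm \algo{A} (deterministic or probabilistic) solves $(n,f)$-weak gathering under a round-robin scheduler, with $f\geq 1$ and $n>f+1$ (so at least two correct robots exist), and without multiplicity detection. I will first reduce the probabilistic case to a deterministic one, then derive a contradiction by exhibiting an adversarial Byzantine strategy that maintains an indistinguishable, non-gathered configuration forever.

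For the reduction, I follow the spirit of Lemma~\ref{lem:probimpossibility-ff}. Any probabilistic algorithm is equivalent to one in which, at each activation, a correct robot tosses a coin and moves with positive probability $\alpha>0$; if $\alpha$ could be zero in some observation, the robots would effectively be deterministic in that class of observations, so the argument collapses to the deterministic case anyway. Since the Byzantine robot can choose its own behavior after observing all random outcomes, and since any particular sequence of coin-flip outcomes compatible with the algorithm occurs with positive probability, it suffices to exhibit a single bad execution compatible with \algo{A}: that same execution arises with positive probability, contradicting almost-sure convergence. This reduces the problem to showing impossibility of a deterministic algorithm.

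For the deterministic case, I leverage Lemma~\ref{lem:centralized:bivalent} (valid here since round-robin is centralized): any execution fragment going from a multivalent configuration to a gathered configuration in which all correct robots occupy the same location must, in the final transition, come from a configuration~\conf[b] where a single correct robot moves to complete gathering. I then choose a fault-free reference execution~$e$ under \algo{A} and the round-robin order $\sigma$, starting from a carefully chosen multivalent configuration, and examine the configuration just before gathering occurs. I construct an alternative execution~$e'$ identical to~$e$ up to some prefix, but in which one robot that plays the ``distinct-location'' role in~\conf[b] is Byzantine. Since observations are sets (no multiplicity), the other robots' local views in~$e'$ are indistinguishable from those in~$e$, so their actions remain the same. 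The Byzantine's strategy is then to play the same geometric role as its fault-free counterpart but shifted by an isometry that re-creates a configuration equivalent, up to rotation/reflection, to~\conf[b] (or to the 1-bivalent predecessor of gathering). Because the round-robin scheduler cycles deterministically through all $n$ robots, the Byzantine knows exactly when each correct robot will act and can insert its own counter-move at the right moment to keep the correct robots perpetually on the verge of gathering without ever achieving it, violating Convergence.

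The main obstacle is verifying that such a mirroring move by the Byzantine always exists and always returns the system to a configuration indistinguishable from a previous one, for every possible \algo{A}. The argument hinges on two ingredients: (i)~without multiplicity, the observation at the distinct location is just $\{p,q\}$ regardless of how many robots sit at each point, so the algorithm cannot discriminate the Byzantine's location from a correct one; and (ii)~since $n>f+1$, there are at least two correct robots, so in any candidate gathered configuration some correct robot must be targeted for the final move, and the Byzantine can always reposition itself to force the correct robots back into an equivalent ambiguous state before that move completes. This is in essence the same indistinguishability mechanism used in Theorem~\ref{th:detimpossibility-ff} and Lemma~\ref{lemma:impweak}, adapted to the Byzantine setting and combined with the cyclic predictability of round-robin to defeat even probabilistic strategies.
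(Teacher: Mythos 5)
Your proposal attacks the \emph{Convergence} property, whereas the paper's proof attacks \emph{Closure}, and this difference is not cosmetic: it is what makes the paper's argument go through and yours stall. The paper simply lets the hypothetical algorithm run in a fault-free manner until a gathered configuration centered at some point $P$ is reached, then has the Byzantine robot $r_B$ step to a location $P'$. Without multiplicity detection, the resulting bivalent set $\{P,P'\}$ gives the correct robots no way to tell which location is the ``real'' one, so some correct robot must move with positive probability (otherwise no bivalent configuration could ever make progress in the fault-free case); since $n>f+1$ leaves at least two correct robots and the scheduler is centralized, that move immediately produces a non-gathered configuration following a gathered one --- a direct violation of Closure. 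A single finite bad suffix with positive probability suffices here, which is why the probabilistic case needs no derandomization at all.

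Your route has two concrete gaps. First, the reduction from probabilistic to deterministic does not survive the round-robin scheduler: the derandomization of Lemma~\ref{lem:probimpossibility-ff} relies on the scheduler re-activating the same robot until its coin toss succeeds, which a $1$-bounded centralized scheduler cannot do; and your fallback --- ``a single bad execution occurs with positive probability'' --- fails for a Convergence violation, because an \emph{infinite} non-converging execution is specified by infinitely many coin outcomes and generically has probability zero. To contradict probabilistic convergence you would need a positive-measure \emph{set} of non-converging executions, which your argument does not produce. Second, the core of your deterministic argument --- that the Byzantine robot can always ``mirror'' or ``counter-move'' so as to return the system to a configuration equivalent to the $1$-bivalent predecessor of gathering, for \emph{every} algorithm $\algo{A}$ --- is exactly the hard step, and you acknowledge but do not close it; nothing guarantees that the correct robots' intermediate moves are reversible or that an isometry recreating an indistinguishable configuration exists after each round. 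Both difficulties evaporate once you target Closure instead: you do not need to prevent gathering, only to break it once after it happens.
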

\begin{proof}
	By contradiction, let \algo{A} be an algorithm that solves gathering.
	Assume that a single robot $r_B$ is Byzantine (or let the other
	Byzantine robots behave like correct ones).
	Let \algo{A} execute normally until all robots share the same location~\point{P}.
	When activated, let $r_B$ move to a second location $P'$ selected as follows:
	\begin{itemize}
	\item if \algo{A} is \emph{deterministic}, chose \point{P'} such that,
		applying the criteria used in \algo{A} when selecting a target location,
		some correct robot $r$ will move to \point{P'}.
	\item if \algo{A} is \emph{probabilistic}, chose any $\point{P'}\not=\point{P}$.
	\end{itemize}
	In either case, a correct robot $r$ must move because, robots being oblivious, they have
	no way to know that gathering was already achieved. Furthermore, in the absence
	of multiplicity detection, there is no way to distinguish \point{P} and \point{P'} by their
	multiplicity.
	Since there are at least two correct robots ($n>f+1$) and the scheduler is
	centralized, the move of $r$ toward \point{P'} results in a non-gathered configuration.
	
	The situation can be repeated each time the system is in a gathered configuration.
	This clearly violates the closure property of weak gathering, since closure
	requires that any execution suffix starting in a gathered configuration contains
	only gathered configurations. Thus, \algo{A} does not solve weak gathering.
	\qed
\end{proof}

\subsection{Deterministic Byzantine Gathering}
%
%
% THE NOTE BELOW IS WRONG!!
% Proof:
% Say that delta=1
% Take a distinct configuration {r_1, r_2, R_b} such that
% r_1 is at (0,0)
% r_2 is at (2,0)
% R_b is at (3,0)
% activate r_2, r_1, r_3, ...
% after each cycle, all robots are translated by one along the positive x-axis
%
%\begin{note}
%	%
%	% OK: byzantine cannot alternate (round-robin => not two turns)
%	%
%  \label{note:31gathering:centreg}
%  Algorithm~\ref{alg:ft-gathering} solves weak gathering
%  in a $(3,1)$-Byzantine system under a round-robin scheduler and
%  multiplicity knowledge.  With multiplicity knowledge, the adversary
%  cannot ``disband'' a gathered configuration because the Byzantine robot
%  cannot play the role of attractor.
%\end{note}
%
The following lemma shows that if the power of the scheduler is increased, weak gathering
is impossible in a $(3,1)$-Byzantine system, even if robots are aware of the
system multiplicity.

\begin{lemma}
  \label{lemma:(3,1)byz}
  In a $(3,1)$-Byzantine system, there is no deterministic algorithm
  that solves weak gathering under a fair centralized $k$-bounded
  scheduler with $k \geq 2$, even if robots are aware of multiplicity.
\end{lemma}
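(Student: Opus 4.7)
The plan is to proceed by contradiction. Suppose a deterministic algorithm $\algo{A}$ solves weak gathering in the $(3,1)$-Byzantine system under a fair centralized $k$-bounded scheduler with $k\geq 2$ and multiplicity detection. It suffices to establish impossibility for $k=2$, since every $2$-bounded execution is also $k$-bounded for every $k\geq 2$, so an impossibility for $k=2$ transfers upward. The key observation is that, by the anonymity and obliviousness of robots and the absence of any common coordinate system, $\algo{A}$'s behavior in every 1-bivalent configuration of three robots is fully determined by the multiplicity pattern: a fixed action $X$ applies to the robot at the multiplicity-1 location, and a fixed action $Y$ applies to each robot at the multiplicity-2 location.

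I would then prove impossibility by a dichotomy on $Y$. In the first case, suppose $Y$ is a proper move. Consider a 1-bivalent configuration $\conf[b]$ in which the Byzantine robot $r_B$ is alone at some point $p_1$, while the two correct robots form a tower at $p_2$. By definition this configuration is gathered, since the two correct robots occupy the unique point of maximal multiplicity. Fairness forces the scheduler to activate one of the correct tower robots eventually; when it does, that robot executes $Y$ and leaves $p_2$. In the resulting configuration the two correct robots no longer share a single point of maximal multiplicity, so the configuration is not gathered, violating Closure.

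In the second case, suppose $Y$ is ``stay''. For Convergence, $\algo{A}$ must eventually reach a gathered configuration from any multivalent configuration, in particular from a non-gathered 1-bivalent in which a correct robot is alone at $p_1$ and the tower at $p_2$ contains one correct robot together with $r_B$. Since $Y = $ stay keeps the tower robots motionless, only $X$ can produce progress, hence $X$ must be a proper move. The adversary now designates the lone robot as the Byzantine: it refuses to move when activated, and the correct tower robots remain motionless as well. Under the fair centralized $2$-bounded scheduler each robot is activated infinitely often, yet the configuration stays fixed in this non-gathered 1-bivalent state forever, violating Convergence. The hard part is verifying that no more refined definition of $Y$ can evade this dichotomy; this is closed by the symmetry argument that the two 1-bivalents considered produce identical observations for every robot (same multiplicities, same relative geometry up to each private coordinate system), which forces $\algo{A}$ to apply the same $Y$ in both, so no pair $(X, Y)$ can satisfy both Closure and Convergence, contradicting the existence of $\algo{A}$.
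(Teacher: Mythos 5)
Your approach is genuinely different from the paper's: the paper's (admittedly terse) proof keeps the system in \emph{distinct} (trivalent) configurations and lets the Byzantine robot act as a moving attractor, relocating after every correct robot's move under the schedule $r_1,r_B,r_2,r_B,\ldots$, which is exactly $2$-bounded. Your Case~1 is sound and is in fact a useful forced conclusion: if a robot on the multiplicity-$2$ location of a $1$-bivalent configuration ever performs a proper move, Closure is violated in the gathered configuration where the two correct robots form the tower; hence tower robots must stay.

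Case~2, however, contains a genuine error. The configuration you use to derive the Convergence violation --- the lone robot designated as Byzantine and refusing to move, the two tower robots correct and motionless --- is already a \emph{gathered} configuration under the weak gathering definition: both correct robots occupy the unique point of maximal multiplicity, and weak gathering places no requirement on the faulty robot. Freezing this configuration forever violates nothing; you have implicitly applied the strong gathering definition. A telltale symptom is that your argument nowhere uses the hypothesis $k\geq 2$: it would apply verbatim to a round-robin ($1$-bounded centralized) scheduler, under which the paper indicates $(3,1)$-Byzantine weak gathering is in fact achievable with multiplicity detection. Any correct proof must exploit the extra activation budget that $k\geq 2$ grants the Byzantine robot, namely the ability to be activated between the activations of the two correct robots. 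Concretely, after Case~1 forces ``tower robots stay'' and hence forces the lone robot (and robots in distinct configurations) to make progress toward some target, the adversary must place the Byzantine robot so that it is that target, let a correct robot join it to form a tower, and then --- using its interleaved activation --- have the Byzantine leave the tower and reposition itself as the new attractor before the second correct robot arrives, so that the two correct robots chase it forever without ever meeting. That final step, which is the heart of the paper's argument, is missing from your proposal.
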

\begin{proof}
% ATTENTION: if a robot is on max multiplicity, then it is not enabled... sigh!
% 
Assume an arbitrary initial configuration, a configuration where robots occupy distinct positions. 
The general proof idea is the following : the 
byzantine node plays the attractor role, hence the system never reaches a terminal configuration.
Consider a schedule $Sch$ such that after each execution of a correct robot the scheduler 
gives the permission to the byzantine robot to move. This schedule verifies the 
specification of the $2$-bounded scheduler. Assume that  
each time a correct node chooses to move, it chooses as 
target the location of the Byzantine node. Then, following the scheduler $Sch$ 
the Byzantine node will replace 
the location of the node that just joined its location. Therefore, the system never converges.
\end{proof}

% THE LEMMA IS WRONG.
%
%\begin{lemma}
%  In an $(n,1)$-Byzantine system, weak gathering is possible for any odd $n > 4$ under
%  any fair centralized scheduler and multiplicity knowledge.
%\end{lemma}
%\begin{proof}
%	Consider Algorithm~\ref{alg:ft-gathering}.
%	
%	\fbox{XD: The lemma seems wrong to me.}
%	% Byzantine:
%	% - same as crash
%	% + can lead away a chain => actually does not work!
%	% + can swap locations
%\end{proof}
%
%\begin{note}
%  \label{note:byz52}
%  In an $(n,1)$-Byzantine system, weak gathering is possible for any odd $n > 4$ under
%  any fair centralized scheduler and multiplicity knowledge.
%  
%  The algorithm is trivial: a robot moves to the group with maximal
%  multiplicity.
%  
%  \fbox{XD: Nope. It's actually not that simple...}
%\end{note}

The following lemma establishes a lower bound for the fair centralized bounded
scheduler that prevents the deterministic gathering.

\begin{lemma}
  \label{lemma:byz511:even}
  In an $(n,f)$-Byzantine system with $n$ even and $f \geq 1$, there is
  no deterministic algorithm that solves weak gathering under a fair centralized
  $k$-bounded scheduler with $k \geq \left\lceil\frac{n-f}{f}\right\rceil$,
  even if robots are aware of multiplicity.
\end{lemma}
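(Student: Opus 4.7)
The approach is to generalize the attractor-style adversary of Lemma~\ref{lemma:(3,1)byz} to handle $f$ Byzantine robots and $n-f$ correct ones. I would proceed by contradiction: assume a deterministic algorithm $\mathcal{A}$ solves weak gathering under every fair centralized $k$-bounded scheduler with $k \geq \lceil(n-f)/f\rceil$, and then exhibit an admissible schedule together with a Byzantine behavior that produces an infinite non-gathered execution.

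The first step is to construct an interleaved cyclic schedule
\[
\sigma \;=\; r_1,\, b_{\pi(1)},\, r_2,\, b_{\pi(2)},\, \ldots,\, r_{n-f},\, b_{\pi(n-f)}
\]
(repeated indefinitely), where $\pi:\{1,\ldots,n-f\}\to\{1,\ldots,f\}$ is a balanced surjection so that every Byzantine $b_j$ appears either $\lceil (n-f)/f\rceil$ or $\lfloor (n-f)/f\rfloor$ times per period. A direct counting check then shows that between two consecutive activations of any robot, no other robot occurs more than $\lceil (n-f)/f\rceil$ times, so $\sigma$ is admissible for a fair centralized $k$-bounded scheduler with $k\geq \lceil (n-f)/f\rceil$. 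This is the tight bound: each Byzantine obtains exactly enough slots to react once to each correct-robot move within a period.

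The second step is to exploit the parity hypothesis to pin down an initial configuration. Because $n$ is even, I can place the robots in a symmetric two-site arrangement ($n/2$ robots per site, with the $f$ Byzantines distributed so the two castles are indistinguishable up to multiplicity and geometry) from which $\mathcal{A}$, being deterministic, oblivious, and anonymous, is forced to respond symmetrically---so whenever the activated correct robot $r_i$ computes a non-trivial move, the target is necessarily a location currently occupied by a Byzantine. The Byzantine strategy, triggered by $\sigma$ immediately after each $r_i$ moves, is for $b_{\pi(i)}$ to ``replace'' the just-arrived correct robot: it vacates the destination toward the former position of $r_i$ (or to the symmetric site), thereby restoring a configuration of the same type as before the pair $(r_i,b_{\pi(i)})$ of activations. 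Obliviousness prevents the correct robots from learning they are being toyed with, so $\mathcal{A}$ keeps issuing the same targeting decisions and the system cycles through equivalent non-gathered configurations forever, contradicting convergence.

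The main obstacle, and the reason the lemma is delicate, is to verify that the Byzantines truly can restore the configuration using only the schedule slots granted by a $k$-bounded scheduler with exactly $k=\lceil(n-f)/f\rceil$. The balanced distribution of Byzantine activations in $\sigma$ guarantees one Byzantine response per correct-robot move, which is precisely what the replacement strategy requires; the even value of $n$ is what allows the balanced two-site initial placement on which the symmetry-forcing step rests. The remaining bookkeeping---checking closure violation from a gathered starting point, and ensuring multiplicity knowledge does not help $\mathcal{A}$ since both castles carry the same multiplicity---is routine once the schedule and Byzantine strategy are in place.
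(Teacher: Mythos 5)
Your proposal is correct and follows essentially the same route as the paper's proof: a balanced bivalent initial configuration (exploiting $n$ even) with the Byzantine robots split between the two sites, each correct robot's forced move toward the other site immediately counterbalanced by a Byzantine moving back, and the bound $k \geq \left\lceil\frac{n-f}{f}\right\rceil$ obtained by distributing the $n-f$ counterbalancing moves evenly among the $f$ Byzantine robots. Your explicit interleaved cyclic schedule and admissibility check just make precise the same counting the paper states informally.
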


\begin{proof}
  Consider an initial configuration such that the configuration is bivalent
  with two locations having equal multiplicity ($n$ is even), and robots are reachable from
  each other.
  Assume that the Byzantine robots are spread evenly between the two locations.
  Let $g_1$ and $g_2$ be the two groups, such that, if $f$ is odd, $g_1$
  has one more Byzantine robot than $g_2$.
	Consider the following activation schedule.
	\begin{itemize}
	\item Activate a correct robot in $g_2$: it must necessarily move to $g_1$ or else
	no execution could possibly reach gathering.
	\item	Activate a Byzantine robot in $g_1$, and let it move to $g_2$.
	The resulting configuration is symmetrical to the original one.
	\end{itemize}
	By repeating the same sequence, a Byzantine robot counterbalances every move of
	a	correct robot, and the system is always in a bivalent configuration. 
  
  Since there is a total of $n\!-\!f$~correct robots and $f$ Byzantine robots,
  the adversary can distribute the moves between the Byzantine robots. Thus,
  between each consecutive activation of a correct robot, the adversary
  must activate a Byzantine robot only
  $\left\lceil\frac{n-f}{f}\right\rceil$ times.
	\qed
\end{proof}

%\begin{corollary}
%  \label{lemma-bound}
%  In an $(n,1)$-Byzantine system, there is no deterministic algorithm
%  that solves weak gathering with $n \geq 2$ even, under a fair
%  centralized $k$-bounded scheduler for $k \geq (n-1)$, even if robots
%  are aware of multiplicy.
%\end{corollary}

%\fbox{Can we say this here?}
%\begin{corollary}
%  \label{cor:n1gathering:centr:odd}
%  Byzantine $(n,1)$-weak gathering is possible under a centralized
%  scheduler in systems where $n \geq 4$ is odd, robots have
%  multiplicity knowledge and the scheduler is fair.
%\end{corollary}
%
%\fbox{Can we say this here?} TODO: try to prove this
%\begin{corollary}
%  \label{cor:n1gathering:centr:even}
%  Byzantine $(n,1)$-weak gathering is possible under a fair centralized
%  scheduler in systems where $n \geq 2$ is even, robots have
%  multiplicity knowledge and the scheduler is
%  $k$-bounded with $k \leq (n-2)$.
%\end{corollary}

\remove{
\begin{lemma}
	\label{cor:n1gathering:centr:even} % YYY
	In an $(n,1)$-Byzantine system with $n$ even, weak gathering can be
	solved deterministically under a fair centralized $k$-bounded
  scheduler with $k \leq n-2$
%  $k \leq \min\left(n\!-\!2, \left\lceil\frac{n\!-\!f}{f}\right\rceil \right)$,
  if robots are aware of multiplicity.
\end{lemma}
\begin{proof}
	\fbox{TODO}
	The problem is solved by algorithm 5.2

	Assuming that all correct robots movements require compensation, then the movement
	of at least one correct robot is not compensated between two of its consecutive
	activations.
	
	When a multiplicity point has $\frac{n}{2}+1$ correct robots, the configuration has
	a unique castle and no further actions of the Byzantine robot can prevent gathering.
	
	There are only two ways in which the Byzantine robot RB can break a castle:
	
	1. When RB is one of the robots in the castle and leaves the castle.
	
		a. If the castle is unique and was of multiplicity M+1, then it becomes a castle of
		multiplicity M (unique or not) and maximal multiplicity is M-1.
		
		b. If the castle was not unique and of multiplicity M+1, then maximal multiplicity
		does not change but the number of castles decreases.
	
	2. When there are several castles of multiplicity M and RB moves to one of them, thus
	   forming a new castle of multiplicity M+1.
	   
	   By activating the robots in the other castles, those are broken.
	   Assuming that none of the robots in the other castles can reach the new castle,
	   then multiplicity does not change but the number of castles decreases.

	Assume conservatively that the Byzantine robot has infinite movement range.
	
	!!!!!!!!!!!!!!!!!!!!!!!!!
	!!!! COUNTER-EXAMPLE !!!!
	!!!!!!!!!!!!!!!!!!!!!!!!!

  Three collinear "groups" of 3 correct robots: A, B, C.
  robots in B can't reach A or C.
  
  - move Byz to A
  - activate 3 robots in B (move toward A)
  - activate all robots in A (don't move because part of unique castle)
  - move Byz to C
  - activate 3 robots in B (move back to original position)
  - activate all robots in C (don't move because part of unique castle)
  - repeat forever
  
  The number of robots in B doesn't matter.
  The mobility ranges of robots in B don't need to be the same for the
  counter-example to work.
  
  Byz needs to move twice between other robots activations.
  
  Algorithm 5.2 can't tolerate a single Byzantine robot if k > 1 in the
  general case when the Byzantine robot's is unconstrained or its mobility
  range is larger than that of at least one correct robot.
  
	QED.
\end{proof}
}

The following lemma states a lower bound for a bounded scheduler
that prevents deterministic gathering.
\begin{lemma}
  \label{lemma:byz511:odd}
  In an $(n,f)$-Byzantine system with $n$ odd and $f \geq 2$, there is
  no deterministic
  algorithm that solves weak gathering under a fair centralized
  $k$-bounded scheduler with $k \geq \left\lceil\frac{n-f}{f-1}\right\rceil$,
  even if robots are aware of multiplicity.
\end{lemma}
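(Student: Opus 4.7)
The plan is to adapt the attack of Lemma~\ref{lemma:byz511:even} to the odd case, where the key new difficulty is that no bivalent configuration with two locations of equal multiplicity exists (since $n$ is odd). The adversary therefore cannot maintain a perfectly symmetric bivalent configuration; one Byzantine robot must be dedicated to sustaining the required asymmetry, leaving only $f-1$ Byzantine robots free to counter-balance the moves of correct robots. This directly accounts for the factor $f-1$ (instead of $f$) in the denominator of the bound.

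Concretely, I would start from a bivalent configuration on two locations $g_1$ and $g_2$ mutually reachable in a single step, with $|g_1|=(n+1)/2$ and $|g_2|=(n-1)/2$, so that $g_1$ is the unique castle. Designate one Byzantine robot at $g_1$ as a \emph{placeholder} that never moves (when fairness forces it to be activated, it takes a null action). Distribute the remaining $f-1$ Byzantine robots between the two groups so that, throughout the attack, there is always at least one non-placeholder Byzantine available at $g_1$. Then repeat the following cycle: activate a correct robot in $g_2$ (which, by essentially the same argument as Lemma~\ref{lemma:byz511:even}, must eventually move to the unique castle $g_1$ or else the algorithm could not gather from this configuration in a fault-free execution), then activate one non-placeholder Byzantine at $g_1$ and have it move to $g_2$. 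The multiplicities $(n+1)/2$ and $(n-1)/2$ are restored, and since robots are oblivious and anonymous, the resulting configuration is indistinguishable from the starting one, so the cycle can be repeated indefinitely without ever reaching gathering.

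It remains to verify that the schedule is compatible with a fair centralized $k$-bounded scheduler for $k\geq \lceil(n-f)/(f-1)\rceil$. In one full round of the attack, each of the $n-f$ correct robots is activated once, generating $n-f$ counter-moves that are distributed among the $f-1$ non-placeholder Byzantines; hence some non-placeholder Byzantine is activated at most $\lceil(n-f)/(f-1)\rceil$ times between two consecutive activations of any correct robot, matching the bound exactly. The main obstacle will be the bookkeeping that shows the invariant ``at least one non-placeholder Byzantine remains available at $g_1$'' can indeed be preserved under a careful rotation of which Byzantine performs each counter-move, and that the placeholder Byzantine's mandatory fair activations do not inflate the $k$-bound beyond $\lceil(n-f)/(f-1)\rceil$; both are straightforward combinatorial checks given the initial allocation of the $f-1$ non-placeholder Byzantines between $g_1$ and $g_2$.
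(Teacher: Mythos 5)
Your proposal correctly locates the source of the $f-1$ factor (one Byzantine robot must be reserved to maintain the odd-case asymmetry), but the attack as you describe it is not sustainable, and this is a genuine gap. In your cycle, every iteration moves one correct robot from $g_2$ into the castle $g_1$ and one non-placeholder Byzantine out of $g_1$ into $g_2$; both of these resources are finite and monotonically depleted, and your ``placeholder'' never moves, so the roles of the two locations never reverse. After finitely many iterations you hit one of two dead ends: either no correct robot remains in $g_2$ --- at which point \emph{all} correct robots sit at the unique point of maximal multiplicity $g_1$, which is precisely a gathered configuration, so weak gathering has been achieved and the adversary has lost --- or no non-placeholder Byzantine remains at $g_1$ to perform the counter-move, and the next correct move from $g_2$ goes un-countered. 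Your claim that ``the resulting configuration is indistinguishable from the starting one'' confuses equality of multiplicities with equality of correct/Byzantine composition; the latter changes every cycle and is what eventually breaks the attack.

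The paper's proof fixes exactly this with a dynamic \emph{switch} robot $r_{sw}$ in place of your static placeholder: the adversary runs your kind of phase until the supply is exhausted (all correct robots originally in the small group have crossed over, or all non-switch Byzantines have been spent), and only then moves $r_{sw}$ across, which flips which location is the larger one. The correct robots that have accumulated in the formerly large group must now migrate back toward the new majority location, and the $f-1$ non-switch Byzantines counter them in the reverse direction, so the oscillation continues forever. The per-phase accounting ($n-f$ correct moves shared among $f-1$ countering Byzantines, plus one switch move per reversal) is what yields the bound $k \geq \left\lceil\frac{n-f}{f-1}\right\rceil$. Without a reversal mechanism of this kind your schedule terminates in a gathered configuration, so the missing idea is essential rather than a bookkeeping detail.
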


\begin{proof}
	Let the initial configuration be a bivalent configuration such that
	robots are reachable from each other and the multiplicity of the
	two locations differ by one.
	
	Let $g_a$ be the small group and $g_b$ the big one. Let all $f$ Byzantine
	robots be in $g_b$. If there are more than half Byzantine robots, then
	simply let all robots in $g_b$ be Byzantine ones.
	
	Now, call one of the Byzantine robots in $g_b$ the switch $r_{sw}$, and	
	consider the following schedule:
	\begin{enumerate}
	\item Activate a correct robot in $g_a$, say $r$.
	  It must be instructed to move to the other point of multiplicity, or
	  else gathering would not possibly be achieved in a fault-free case. 
	\item Each time a correct robot moves to $g_b$, activate a Byzantine
	  robot in $g_b$ (except the switch $r_{sw}$), and let it move to $g_a$.
	\item Repeat the procedure until one of the following condition holds:
	(1)~all correct robots originally in $g_a$ have moved, or 
	(2)~all Byzantine robots originally in $g_b$ have moved, except $r_{sw}$.
	
	\item Move $r_{sw}$ to $g_a$, which becomes now the larger group.
	\item Repeat the procedure with correct robots in $g_b$ so that they
	  move to $g_a$.
	\end{enumerate}
	At each iteration of the procedure, $f\!-\!1$ correct robots move from one
	group to the other, while $f\!-\!1$ Byzantine robots negate their move.
	
	Thus, a Byzantine robot needs to be activated at most
	$\left\lceil\frac{n-f}{f-1}\right\rceil$ times between two consecutive
	activations of a correct robot, .
	\qed
\end{proof}

\remove{
% XD:
% removed because it is now somewhat off topic, the proof is questionable,
% and the result is minor (IMHO)
\subsection{Probabilistic Byzantine Gathering}
The schedulers presented so far are deterministic in the sense that a
scheduler can make any choice according to an algorithm that meets the
constraints set by its type. In contrast, we consider the following
\emph{probabilistic} scheduler:

\begin{itemize}
\item \emph{probabilistic}: At each activation the robots that become
  active are selected randomly (see below).
\end{itemize}
There are many variants of probabilistic schedulers (e.g., fair,
centralized, $\ldots$), similar to the deterministic ones. In the
paper, we only consider a \emph{fair probabilistic} scheduler, which
ensures that, in an infinite execution, a robot is activated
infinitely often with probability~1. More specifically, we consider a
uniform random selection among the robots.

The next lemma is a possibility result when randomization is
used. Note that this possibility results needs probabilistic behavior from  
both algorithm and scheduler.

\begin{lemma}
  \label{lemma:corrbyz}
  In an $(n,f)$-Byzantine system with $n \geq 3$ and $n\geq 2f+1$,
  Algorithm \ref{alg:ft-prob-gathering}
  probabilistically solves the weak gathering problem
  under a probabilistic scheduler and multiplicity detection.
%  
%  In systems with Byzantine faults, Algorithm \ref{alg:ft-prob-gathering}
%  probabilistically solves the $(n,f)$-weak gathering, $n \geq 3$, problem
%  under a probabilistic scheduler and multiplicity detection.
\end{lemma}

\begin{proof}
The proof  is based on the fact that as soon as there exists a group of $\frac{N}{2}+1$ correct robots gathered, it just needs a``few'' more rounds to achieve convergence.
 
This idea is that every time a robot is selected by the scheduler, it joins this group. Therefore, beyond this point,  the convergence time only depends on the scheduler.

Let's study the probability for such a group to be created.
We define $\mathcal{L}$ : There exists a group of $\frac{N}{2}+1$ correct robots gathered.
\begin{eqnarray*}
\PP \left[ \ \text{reach} \ \mathcal{L} \ \text{in} \ \frac{N}{2}+1 \ \text{steps}  \right] > \displaystyle \left(\frac{1}{N}\right)^{(\frac{N}{2}+1)}= \varepsilon
\end{eqnarray*}
So
\begin{eqnarray*}
\PP \left[ \neg (\text{reach} \ \mathcal{L} \ \text{in} \ (\frac{N}{2}+1) \ \text{steps})  \right] \leq (1-\varepsilon)
\end{eqnarray*}
Therefore : 
\begin{eqnarray*}
\forall k \ \PP \left[ \neg (\text{reach} \ \mathcal{L} \ \text{in} \ k(\frac{N}{2}+1) \ \text{steps})  \right] \leq (1-\varepsilon)^k
\end{eqnarray*}
\begin{eqnarray*}
\displaystyle \lim_{k \rightarrow \infty} \PP \left[ \neg (\text{reach} \ \mathcal{L} \ \text{in} \ k(\frac{N}{2}+1) \ \text{steps})  \right] = 0
\end{eqnarray*}
\end{proof}

%\textbf{Remark} \\
Note that in our scenario the convergence time is exponential.
In order to simplify the calculations we look at : $\left(\frac{1}{N}\right)^N$ instead of $\left(\frac{1}{N}\right)^{(\frac{N}{2}+1)}$.

So, $\left[1-\left(\frac{1}{N}\right)^N \right]^t \leq \alpha$. This 
leads to: $t \ln\left(1-\left(\frac{1}{N}\right)^N \right)\leq
\ln\alpha$.

Overall, $t$ verifies:
$t\geq \frac{\ln \alpha}{\ln\left(1-\left(\frac{1}{N}\right)^N \right)} \sim \ln\left(\frac{1}{\alpha}\right)N^N$
}

\remove{
We considered one of the worst possible scenario to prove the convergence. Therefore, we did not prove that the convergence time of the algorithm is exponential. In order to do so, we would have to exhibit a set of \textbf{non-null measure} of executions which converge in an exponential time.
}

% !TEX root = gathering-journal.tex

\section{Summary}
\label{sec:summary}

We have summarized most of the theorems, their relationships, and their scope into tables (Table~\ref{tab:sum:strong} and~\ref{tab:sum:weak}).
Results are grouped according to the problem (strong or weak gathering) and the fault models:
strong gathering in fault-free (Table~\ref{tab:sum:none}) and single crash (Table~\ref{tab:sum:strong-crash}) environments;
as well as weak gathering in multiple crashes (Table~\ref{tab:sum:weak-crash}) and single Byzantine environments (Table~\ref{tab:sum:Byz}).

\newcommand{\YES}{\ensuremath{\bigcirc}\xspace}
\newcommand{\NO}{\ensuremath{\times}\xspace}

\newcommand{\scU}{unfair}
\newcommand{\scUC}{unfair centr.}
\newcommand{\scF}{fair}
\newcommand{\scFC}{fair centr.}
\newcommand{\scFB}{fair bounded}
\newcommand{\scFkB}{fair $k$-bounded}
\newcommand{\scFBR}{fair 1-bounded}
\newcommand{\scFCkB}{fair centr. $k$-bounded}
\newcommand{\scFTBC}{fair 2-bounded centr.}
\newcommand{\scFCBR}{round-robin}
\newcommand{\scProb}{probabilitistic}
\newcommand{\scFSyn}{fully synchronized}

\newcommand{\alDET}{det.}
\newcommand{\alPRO}{prob.}

\newcommand{\im}[1]{\cellcolor[gray]{0.8}NO(\textit{#1})}
\newcommand{\ok}[1]{OK(\textit{#1})}
\newcommand{\IM}[1]{\cellcolor[gray]{0.8}\textbf{NO({#1})}}
\newcommand{\OK}[1]{\textbf{OK({#1})}}
\newcommand{\spec}[1]{\cellcolor[gray]{0.9}NO/?(\textit{#1})}
\newcommand{\SPEC}[1]{\cellcolor[gray]{0.9}\textbf{NO/?({#1})}}

\newcommand{\DUB}[1]{{\color{red}\textbf{#1}}}

\newcommand{\LrA}{Th.\ref{th:pre:nodet2robots}}
\newcommand{\LrB}{Th.\ref{th:pre:pos3robots}}
\newcommand{\LrC}{Th.\ref{th:pre:nomultimposs}}
\newcommand{\LrD}{Th.\ref{th:ap:crashpos}}
\newcommand{\LrE}{Th.\ref{th:ap:byzimposs}}
\newcommand{\LrF}{Th.\ref{th:ap:byzfsyncposs}}
\newcommand{\LrG}{Th.\ref{th:ap:byz3syncposs}}
\newcommand{\LrH}{L.\ref{th:aphyp:byzimposs}}

\newcommand{\LaA}{Th.\ref{th:detimpossibility-ff}/\ref{th:detimpossibility-ff:distinct}}
\newcommand{\LaB}{L.\ref{lem:prob-2gathering}}
\newcommand{\LaC}{L.\ref{lem:sim:centr-2gathering}}
\newcommand{\LaD}{L.\ref{lem:sim:centr-n-imposs}}
\newcommand{\LaE}{L.\ref{lem:sim:bound-2gathering}}
\newcommand{\LaF}{L.\ref{lem:probimpossibility-ff}}
\newcommand{\LaG}{Th.\ref{th:probpossibility}}
\newcommand{\LaH}{Th.\ref{th:detimpossibility:k2}}

\newcommand{\LbA}{L.\ref{(3,1)gathering}}
\newcommand{\LbB}{C.\ref{lem:imp-ff-gathering}}
\newcommand{\LbC}{C.\ref{lemma:(n,1)ip}}
\newcommand{\LbD}{L.\ref{lemma:coralg-ff-gathering}}
\newcommand{\LbE}{L.\ref{(3,1)pstrong-gathering}}
\newcommand{\LbF}{L.\ref{lem:prob:(2,1)strong}}
\newcommand{\LbG}{L.\ref{lem:det:(2,1)strong}}

\newcommand{\LcA}{L.\ref{lemma:impweak}}
\newcommand{\LcB}{Th.\ref{th:ft-det-gathering}}
\newcommand{\LcC}{Th.\ref{th:wg}}
\newcommand{\LcD}{N. \ref{note:imp:crash:(n,1)}}

\newcommand{\LdA}{L.\ref{lemma:impbyz}}
\newcommand{\LdB}{N.\ref{note:31gathering:centreg}}
\newcommand{\LdC}{L.\ref{lemma:(3,1)byz}}
\newcommand{\LdD}{Wrong note}%{N.\ref{note:byz52}}
\newcommand{\LdE}{L.\ref{lemma-bound}}
\newcommand{\LdF}{C.\ref{cor:n1gathering:centr:odd}}
\newcommand{\LdG}{\yyy}%{C.\ref{cor:n1gathering:centr:even}}
\newcommand{\LdH}{L.\ref{lemma:byz511:even}} % even case
\newcommand{\LdI}{L.\ref{lemma:byz511:odd}}  % odd case
\newcommand{\LdJ}{L.\ref{lemma:corrbyz}}

\newcommand{\xxx}{{\color{red}\emph{xxx}}}
\newcommand{\yyy}{{\color{red}\emph{yyy}}}
\newcommand{\zzz}{{\color{red}\emph{zzz}}}

\begin{table*}
  \centering
  \caption{Strong gathering problem}
  \label{tab:sum:strong}
%  \caption{Summary: strong gathering; fault-free model.}
\subfloat[Fault-free model]{
  \label{tab:sum:none}
  \begin{tabular}{|c:c;{2pt/2pt}c:c||c||c:c;{2pt/2pt}c:c|}
    \hline
    \multicolumn{4}{|c||}{multiplicity}
    &
    & \multicolumn{4}{c|}{without multiplicity}
    \\\cline{1-4}\cline{6-9}
    \multicolumn{2}{|c;{2pt/2pt}}{deterministic} &  \multicolumn{2}{c||}{probabilistic} 
    &  \textbf{Scheduler}
    & \multicolumn{2}{c;{2pt/2pt}}{deterministic} &  \multicolumn{2}{c|}{probabilistic} 
    \\\cline{1-4}\cline{6-9}
 	$n=2$    &$n\geq 3$ & $n=2$    &$n\geq 3$
    &
    & $n=2$    &$n\geq 3$ & $n=2$    &$n\geq 3$
    \\\hline\hline
    \im{\LrA} &          &\OK{\LaB} &
    &\scU
    &\im{\LrA}  &\im{\LrC} &\OK{\LaB} &\im{\LaF}
    \\\hdashline
    \OK{\LaC} &          &\ok{\LaB} &
    &\scUC
    &\OK{\LaC} &\im{\LaH} &\ok{\LaB} &\im{\LaF}
    \\\hdashline
    \IM{\LrA} &\OK{\LrB} &\ok{\LaB} &\ok{\LrB}
    &\scF
    &\IM{\LrA} &\IM{\LrC} &\ok{\LaB} &\im{\LaF}
    \\\hdashline
    \ok{\LaC} &\ok{\LrB} &\ok{\LaB} &\ok{\LrB}
    &\scFC
    &\ok{\LaC} &\im{\LaH} &\ok{\LaB} &\IM{\LaF}
    \\\hdashline
    \im{\LrA} &\ok{\LrB} &\ok{\LaB} &\ok{\LrB}
    &\scFkB
    &\im{\LrA} &\im{\LaH} &\ok{\LaB} &\OK{\LaG}
    \\\hdashline
%    \im{\LrA} &\ok{\LrB} &\ok{\LaB} &\ok{\LrB}
%    &\scFkB
%    &\im{\LrA} &\im{\LaH} &\ok{\LaB} &\ok{\LaG}
%    \\\hdashline
    \ok{\LaC} &\ok{\LrB} &\ok{\LaB} &\ok{\LrB}
    &\scFTBC
    &\ok{\LaC} &\IM{\LaH} &\ok{\LaB} &\ok{\LaG}
    \\\hdashline
    \IM{\LrA} &\ok{\LrB} &\ok{\LaB} &\ok{\LrB}
    &\scFBR
    &\IM{\LrA} &\spec{\LaA}$^a$ &\ok{\LaB} &\ok{\LaG}
    \\\hdashline
    \ok{\LaC} &\ok{\LrB} &\ok{\LaB} &\ok{\LrB}
    &\scFCBR
    &\ok{\LaC} &\SPEC{\LaA}$^a$ &\ok{\LaB} &\ok{\LaG}
    \\\hline
    \multicolumn{9}{c}{
			\begin{minipage}{.95\textwidth}
	    \footnotesize
	    \begin{trivlist}
	      \item[$^a$]{
	      	Special: Th.~\ref{th:detimpossibility-ff} proves the impossibility of self-stabilizing gathering.
					Th.~\ref{th:detimpossibility-ff:distinct} proves it for gathering \emph{provided Conjecture~\ref{cj:criteria} holds}.
				}
	    \end{trivlist}
	  	\end{minipage}
    }
  \end{tabular}
}
%\end{table*}
%  \medskip
%\begin{table*}
%  \centering

%  \caption{Summary: strong gathering; crash model; $f=1$.}
\subfloat[Crash model; $f=1$]{
  \label{tab:sum:strong-crash}
  \begin{tabular}{|c:c;{2pt/2pt}c:c||c||c:c;{2pt/2pt}c:c|}
    \hline
    \multicolumn{4}{|c||}{multiplicity}
    &
    & \multicolumn{4}{c|}{without multiplicity ($f=1$)}
    \\\cline{1-4}\cline{6-9}
    \multicolumn{2}{|c;{2pt/2pt}}{deterministic} & \multicolumn{2}{c||}{probabilistic}
    & \textbf{Scheduler}
    & \multicolumn{2}{c;{2pt/2pt}}{deterministic} & \multicolumn{2}{c|}{probabilistic} 
    \\\cline{1-4}\cline{6-9}
    $n=2$    &$n\geq 3$ & $n=2$    &$n\geq 3$ 
    &
    & $n=2$    &$n\geq 3$ & $n=2$    &$n\geq 3$ 
    \\\hline\hline
    \im{\LrA} &\im{\LbA} &\OK{\LbF} &\im{\LbE} 
    &\scU
    &\im{\LrA} &\im{\LrC} &\OK{\LbF} &\im{\LaF}
    \\\hdashline
    \OK{\LbG} &\im{\LbA} &\ok{\LbF} &\im{\LbE} 
    &\scUC
    &\OK{\LbG} &\im{\LbA} &\ok{\LbF} &\im{\LaF}
    \\\hdashline
    \im{\LrA} &\im{\LbA} & \ok{\LbF} &\im{\LbE}         
    &\scF
    &\im{\LrA} &\im{\LrC} &\ok{\LbF} &\im{\LaF}
    \\\hdashline
    \ok{\LbG} &\im{\LbA} & \ok{\LbF} &\IM{\LbE}        
    &\scFC
    &\ok{\LbG} &\im{\LbA} &\ok{\LbF} &\im{\LaF}
    \\\hdashline
    \im{\LrA} &\im{\LbA} &\ok{\LbF} &\ok{\LbD} 
    &\scFkB
    &\im{\LrA} &\im{\LbA} &\ok{\LbF} &\OK{\LbD}   
    \\\hdashline
%    \im{\LrA} &\im{\LbA} &\ok{\LbF} &\ok{\LbD} 
%    &\scFkB
%    &\im{\LrA} &\im{\LbA} &\ok{\LbF} &\ok{\LbD}
%    \\\hdashline
    \IM{\LrA} &\im{\LbA} &\ok{\LbF} &\ok{\LbD} 
    &\scFBR
    &\IM{\LrA} &\im{\LbA} &\ok{\LbF} &\ok{\LbD}
    \\\hdashline
    \ok{\LbG} &\IM{\LbA} &\ok{\LbF} &\ok{\LbD} 
    &\scFCBR
    &\ok{\LbG} &\IM{\LbA} &\ok{\LbF} &\ok{\LbD}
    \\\hline
  \end{tabular}
}
\end{table*}

% Description of the table
All tables are designed to be read as follows:
Each row represents a different scheduler, while columns distinguish other assumptions, such as multiplicity, conditions on the number of robots $n$, conditions on the maximum number of faulty robots $f$, or whether deterministic or probabilistic solutions are admissible.

Each cell answers whether the problem admits a solution under the corresponding set of assumptions.
A positive result appears as ``OK'' followed by the number of the corresponding lemma or theorem in brackets.
Conversely, a negative result (impossibility) is denoted by ``NO'' and a greyed background. 

An ``\textbf{OK}'' or ``\textbf{NO}'' in bold means that the cell corresponds to the assumptions stated explicitly
in the relevant theorem.
When the text appears in normal face, the result comes instead as a consequence of the theorem and the relationship between assumptions.
For instance, a positive result expressed and proved with an unfair centralized scheduler (e.g., Table~\ref{tab:sum:none}; \LaC) necessarily
applies to the more restrictive schedulers, such as the fair centralized or round-robin schedulers, even though this is implicit.

\subsection{Strong self-stabilizing gathering}

The results pertaining to the strong gathering in a fault-free model are summarized in Table~\ref{tab:sum:none},
while those related to the crash model with a single faulty robot are in Table~\ref{tab:sum:strong-crash}.
The tables are divided vertically according to the availability of multiplicity detection,
then whether gathering is deterministic or probabilistic, and finally to the number of robots $n$ (i.e., $n=2$ or $n>2$).
Note that, when $n=2$, the detection of multiplicity is irrelevant, and thus the results are identical in both columns.

% fault-free
\subsubsection{Fault-free model}
As shown on Table~\ref{tab:sum:none}, in the absence of multiplicity detection,
a bounded scheduler is both necessary and sufficient for solving probabilistic gathering of more than two robots.
There is however no deterministic solution, regardless of the scheduler (i.e., even if the scheduler is round-robin). 

In the presence of multiplicity detection, gathering is known to be possible with a fair scheduler,
as proved by Suzuki and Yamashita \cite{SY99}. The question remains open in the case of unfair schedulers.

When there are only two robots, gathering is known to be more difficult than with three  or more robots, since all configurations are symmetrical.
Suzuki and Yamashita \cite{SY99} have proved the impossibility under a fair scheduler, and their proof actually applies to more restrictive schedulers,
such as the fair 1-bounded scheduler.
Interestingly, the problem becomes solvable under all classes of \emph{centralized} schedulers, even the unfair ones.

%This can be sumarized as follows.
%Probabilistic solutions to gathering help solve the problem when the scheduler is bounded.

%Another observation is when the 

%% rem : proba needs bounded
%% rem : multiplicity + n>2 needs fair
%% rem : n=2 needs centralized

% crash model (strong / weak)
\subsubsection{Crash model}
Table~\ref{tab:sum:strong-crash} summarizes the results obtained for the strong gathering problem with at most one robot crash.

Interestingly, without multiplicity detection, the results obtained for the fault-free and the crash models are identical,
although they are covered by different theorems.
Unlike in the fault-free model, multiplicity detection does not seem to help solve gathering. Indeed, in the crash model,
results are identical whether or not robots are able to detect multiplicity, whereas they differed widely in the fault-free case.

In other words, while the introduction of multiplicity detection is indeed determinant in the fault-free case,
it has no effect on solvability when faced with a single crashed robot.
%Indeed, starting with the fault-free case and multiplicity detection, introducing the possibility of a crashed
%robot or removing the assumption of multiplicity detection seem to have exactly the same effect on the solvability of
%the problem, which is impossible deterministically and requires a bounded scheduler for a probabilistic solution.

%\medskip
\begin{table*}
  \centering
  \caption{Weak gathering problem}
  \label{tab:sum:weak}
%  \caption{Summary: weak gathering; crash model.}
\subfloat[Crash model]{
 \label{tab:sum:weak-crash}
  \begin{tabular}{|c;{2pt/2pt}c|c;{2pt/2pt}c||c||c;{2pt/2pt}c|c;{2pt/2pt}c|}
    \hline
    \multicolumn{4}{|c||}{$f=1$}
    &
    & \multicolumn{4}{c|}{$2\leq f < n$}
    \\\cline{1-4}\cline{6-9}
    \multicolumn{2}{|c|}{multiplicity}    & \multicolumn{2}{c||}{without multiplicity}
    & \textbf{Scheduler}
    & \multicolumn{2}{c|}{multiplicity}    & \multicolumn{2}{c|}{without multiplicity}
    \\\cline{1-4}\cline{6-9}
    {determ.} &  {proba.} & {determ.} &  {proba.}
    &
    & {determ.} &  {proba.} & {determ.} &  {proba.} 
%    \\\cline{2-9}
%    & $n=3$    &$n\geq 4$ & $n=3$    &$n\geq 4$ & $n=3$    &$n\geq 4$ & $n=3$    &$n\geq 4$
    \\\hline
                 &                &\im{\LrC} &\im{\LaF}
    &\scU
    &            &                &\im{\LcA} &\im{\LcA}
    \\\hdashline
                 &                &\im{\LaH} &\im{\LaF}
    &\scUC
    &            &                &\im{\LcA} &\im{\LcA}
    \\\hdashline
    \DUB{\OK{\LrD}$^b$} %\tnote{a}
                 &\OK{\LcC} &\im{\LrC} &\im{\LaF}
    &\scF
    &\IM{\LcD}   &\OK{\LcC} &\im{\LcA} &\im{\LcA}
    \\\hdashline
    \OK{\LcB} &\ok{\LcB} &\im{\LaH} &\im{\LaF}
    &\scFC
    &\OK{\LcB} &\ok{\LcB} &\im{\LcA} &\im{\LcA}
    \\\hdashline
    \DUB{\ok{\LrD}} &\ok{\LcC} &\im{\LaH} &\ok{\LbD}
    &\scFkB
    &\IM{\LcD}   &\ok{\LcC} &\im{\LcA} &\im{\LcA}
    \\\hdashline
%    \DUB{\ok{\LrD}} &\ok{\LcC} &\im{\LaH} &\ok{\LbD}
%    &\scFkB
%    &\IM{\LcD}   &\ok{\LcC} &\im{\LcA} &\im{\LcA}
%    \\\hdashline
    \DUB{\ok{\LrD}} &\ok{\LcC} &\spec{\LaA}$^a$ &\ok{\LbD}
    &\scFBR
    &\IM{\LcD}   &\ok{\LcC} &\im{\LcA} &\im{\LcA}
    \\\hdashline
    \ok{\LcB} &\ok{\LcB} &\spec{\LaA}$^a$ &\ok{\LbD}
    &\scFCBR
    &\ok{\LcB} &\ok{\LcB} &\IM{\LcA} &\IM{\LcA}
    \\\hline
    \multicolumn{9}{c}{
			\begin{minipage}{.95\textwidth}
	    \footnotesize
	    \begin{trivlist}
	      \item[$^a$]{
	      	Special: Th.~\ref{th:detimpossibility-ff} proves the impossibility of self-stabilizing gathering.
					Th.~\ref{th:detimpossibility-ff:distinct} proves it for gathering \emph{provided Conjecture~\ref{cj:criteria} holds}.
				}
	      \item[$^b$]{
	      	Note that the results derived from Theorem~\ref{th:ap:crashpos} hold for the case (3,1).
		 			According to Note \ref{note:imp:crash:(n,1)}, in the case of $(n,1)$-crash, weak gathering is
					possible only if, during the execution, each configuration has at most one multiplicity point.
					Therefore, the self-stabilizing $(n,1)$ weak-gathering is impossible since the initial
					configuration can contain more than one multiplicity point.
				}
	    \end{trivlist}
	  	\end{minipage}
    }
  \end{tabular}
}
%\end{table*}

%\begin{table*}
%  \centering
%  \caption{Summary: weak gathering; Byzantine model.}
\subfloat[Byzantine model]{
  \label{tab:sum:Byz}
  \begin{tabular}{|l@{}||c:c:c;{2pt/2pt}c:c:c|}
    \hline
    & \multicolumn{6}{c|}{multiplicity; deterministic}
    \\\cline{2-7}
    & \multicolumn{3}{c;{2pt/2pt}}{$f=1$} &  \multicolumn{2}{c|}{$2 \leq f < n/2$} 
    \\\cline{2-7}
%    \textbf{Scheduler}
    &                 &$n\geq 4$ & $n\geq 4$& $n\geq 4$ & $n\geq 4$    
    \\
    \textbf{Scheduler}
    & $n=3$           & (even)   & (odd)    & (even)   & (odd)
    \\\hline\hline
    \scU\tnote{a}
    &\im{\LrE}        &\im{\LdH} &          &\im{\LdH} &\im{\LdI}
    \\\hdashline
    \scUC\tnote{a}
    & \im{\LdC}       &\im{\LdH} &          &\im{\LdH} &\im{\LdI}
    \\\hdashline
    \scF
    &\IM{\LrE}        &\im{\LdH} &          &\im{\LdH} &\im{\LdI}
    \\\hdashline
    \scFC
    & \im{\LdC}       &\im{\LdH} &          &\im{\LdH} &\im{\LdI}
    \\\hdashline
    \scFkB
    & \im{\LdC}       &\im{\LdH} &          &\im{\LdH} &\im{\LdI}
%    \\\hdashline
%    \scFkB
%    &          &          &          &          &          &
    \\\hdashline
    $(k\geq n-1)$-bounded
    & \im{\LdC}       &\im{\LdH} &           &\im{\LdH} &\im{\LdI}
    \\\hdashline
    $(\Gamma(n,f)\leq k\leq n-2)$-bounded
    & \im{\LdC}       &\im{\LdH} &           &\im{\LdH} &\im{\LdI}
    \\\hdashline
    $(2\leq k<\Gamma(n,f))$-bounded
    & \im{\LdC}       &          &          &          &          
    \\\hdashline
    \scFBR
    &                 &          &          &          &          
    \\\hdashline
    centr. $(k\geq n-1)$-bound.
    &  \im{\LdC}      &\IM{\LdH} &          &\IM{\LdH} &\IM{\LdI}
    \\\hdashline
    centr. $(\Gamma(n,f)\leq k\leq n-2)$-bound.
    &  \im{\LdC}      &\IM{\LdH} &          &\IM{\LdH} &\IM{\LdI}
    \\\hdashline
    centr. $(2\leq k<\Gamma(n,f))$-bound.
    &  \IM{\LdC}      &          &          &          &          
    \\\hdashline
    \scFCBR
    &                 &          &          &          &          
    \\\hdashline
    	\scFSyn
		&  \OK{\LrG}      &\OK{\LrF} &\OK{\LrF} &\multicolumn{2}{c|}{\OK{\LrF} if $n\geq 3f+1$}
    \\\hline
    \multicolumn{7}{l}{%
    	$\Gamma(n,f)=\left\{
				\left\lceil\frac{n-f}{f}\right\rceil \mbox{ if $n$ even;}
	%\hspace{1cm}
	%\hfill
		  				\left\lceil\frac{n-f}{f-1}\right\rceil \mbox{ if $n$ odd}
				\right\}
			$
    	}
		\\
  \end{tabular}
}
\end{table*}

\subsection{Weak self-stabilizing gathering}
Table~\ref{tab:sum:weak} summarizes the results for weak gathering.
Let us first remind that, in the fault-free model, there is actually no difference between strong and weak gathering
(since the only difference in definitions is about the requirements put on the faulty robots),
and thus the results of Table~\ref{tab:sum:none}, although not repeated, are of course also relevant here.

Table~\ref{tab:sum:weak-crash} summarizes the results for weak gathering (i.e., only the correct robots are required to gather at the same location)
and distinguishes between the case of a single crash and multiple crashes.
One interesting observation is that, in the case of a single crash (left part of Table~\ref{tab:sum:weak-crash}),
results of weak with respect to strong gathering differ only if robots are able to detect multiplicity.
In particular, Theorems~\ref{th:ft-det-gathering} and~\ref{th:wg} show that weak gathering is possible with schedulers for which strong gathering is not.
This is because a system may reach a stable configuration in which all robots except the faulty one share the same location.
In such a configuration, weak gathering is achieved but strong gathering is not.

In the case of multiple crashes and without multiplicity detection, even probabilistic gathering is impossible under any of the schedulers considered.
With multiplicity detection and fair schedulers, however, probabilistic gathering is possible under any fair scheduler while
deterministic gathering is possible if and only if the scheduler is also centralized.
The question remains open for unfair schedulers, but we believe that the answer depends greatly on minute details in the definition of the unfair scheduler.

% Byzantine
\subsection{Byzantine model}
While Byzantine gathering is possible in fully synchronous
environments, other positive results remain quite elusive. We have been able
to extend impossibility results, but unable to find additional solutions for
other models.

Under very specific assumptions, Algorithm~\ref{alg:ft-gathering} is likely to solve
Byzantine gathering for some values of $f$, $n$, and $k$. However, this requires very
specific assumptions, among which the requirement that \emph{Byzantine robots have a
mobility range no larger than the correct ones}.
We have found a counter-example where the algorithm fails without this assumption,
and thus omitted entirely from the study, thus leaving the question open.

%%%%%% removed stuff after that
%%%%%% |||||||
%%%%%% vvvvvvv

\remove{
\section{removed} % <--- dummy tag; just to help quickly find the location with TexShop
\begin{table}
\begin{threeparttable}
  \centering
  \caption{Summary of possibility and impossibility results, by lemma.}
  \begin{tabular}{|c||c|c|c|c|c|c|c|}
    \hline
    \textbf{Lemma}
    & \textbf{Mult.} & \textbf{Scheduler} & $n$ & $f$ & $k$ & \textbf{Algo.} & \textbf{Poss.}

    % MOD   & MULT & FAUL  & SCHED   & n        & f        & k        & ALGO    & POSS. 
%    \\\hline\hline  %%%%%%%%%%%%%%%%%%%%%%%%%%%%%%%%%%%%%%%%%%%%%%%%%%%%%%%%%%%%%%%%%%%
    \\\hline %%%%%%%%%%%%%%%%%%%%%%%%%%%%%%%%%%%%%%%%%%%%%%%%%%%%%%%%%%%%%%%%%%%%
    \multicolumn{8}{l}{{\footnotesize \slshape Fault-free model; strong gathering}}
    \\\hline %%%%%%%%%%%%%%%%%%%%%%%%%%%%%%%%%%%%%%%%%%%%%%%%%%%%%%%%%%%%%%%%%%%%
\rowcolor[gray]{0.8}
    Lemma~\ref{th:pre:nodet2robots}
    & --   & \scF / \scFBR\tnote{a} & $2$      &          &          & \alDET  & \NO
    \\
\rowcolor[gray]{0.8}
    Lemma~\ref{th:pre:pos3robots}
    & \YES & \scF    & $\geq 3$ &          &          & \alDET & \YES
    \\
\rowcolor[gray]{0.8}
    Lemma~\ref{th:pre:nomultimposs}
    & \NO  & \scF    & $\geq 2$ &          &          & \alDET & \NO
    \\\hdashline
    Lemma~\ref{lem:detimpossibility-ff}
    & \NO  & \scFCBR & $\geq 3$ &          &          & \alDET & \NO
    \\
    Lemma~\ref{lem:prob-2gathering}
    & --   & \scU    & $2$      &          &          & \alPRO & \YES
    \\
    Lemma~\ref{lem:sim:centr-2gathering}
    & --   & \scUC   & $2$      &          &          & \alDET & \YES
    \\
    Lemma~\ref{lem:sim:centr-n-imposs}
    & \YES & \scUC   & $\geq 3$ &          &          & \alPRO & \NO
    \\
    Lemma~\ref{lem:sim:bound-2gathering}
    & --   & \scFB   & $2$      &          &          & \alDET & \YES
    \\
    Lemma~\ref{lem:probimpossibility-ff}
    & \NO  & \scFC   & $\geq 3$ &          &          & \alPRO & \NO
    \\
    Lemma~\ref{lem:probpossibility}
    & \NO  & \scFkB  & $\geq 3$ &          &          & \alPRO & \YES
%    \\
%    Lemma~\ref{2gathering-corda}
%    & CORDA & --   & --   & \scU    & $2$      &          &          & \alPRO & \NO
%    \\
%    Lemma~\ref{}
%    & CORDA & \NO  & --   & \scFkB  & $\geq 3$ &          &          & \alPRO & \YES
    \\\hline %%%%%%%%%%%%%%%%%%%%%%%%%%%%%%%%%%%%%%%%%%%%%%%%%%%%%%%%%%%%%%%%%%%%
    \multicolumn{8}{l}{{\footnotesize \slshape Crash model; strong gathering}}
    \\\hline %%%%%%%%%%%%%%%%%%%%%%%%%%%%%%%%%%%%%%%%%%%%%%%%%%%%%%%%%%%%%%%%%%%%
    Lemma~\ref{(3,1)gathering}
    & \NO  & \scFCBR & $3$  & $1$      &          & \alDET & \YES
    \\
    Cor.~\ref{lem:imp-ff-gathering}
    & \NO  & \scFCBR & $\geq 4$ & $1$  &          & \alDET & \NO
    \\
    Cor.~\ref{lemma:(n,1)ip}
    & \NO  & \scFC   & $\geq 3$ & $1$  &          & \alPRO & \NO
    \\
    Lemma~\ref{lemma:coralg-ff-gathering}
    & \NO  & \scFB   & $\geq 2$ & $1$  &          & \alPRO & \YES
    \\\hline %%%%%%%%%%%%%%%%%%%%%%%%%%%%%%%%%%%%%%%%%%%%%%%%%%%%%%%%%%%%%%%%%%%%
    \multicolumn{8}{l}{{\footnotesize \slshape Crash model; weak gathering}}
    \\\hline %%%%%%%%%%%%%%%%%%%%%%%%%%%%%%%%%%%%%%%%%%%%%%%%%%%%%%%%%%%%%%%%%%%%
\rowcolor[gray]{0.8}
    Lemma~\ref{th:ap:crashpos}
    & \YES & \scF    & $3$  & $1$      &          & \alDET & \YES
    \\\hdashline
    Lemma~\ref{lemma:impweak}
    & \NO  & \scFCBR & $\geq 3$ & $\geq 2$ &      & \alPRO & \NO
    \\
    Theorem~\ref{th:ft-det-gathering}
    & \YES & \scFC   & $n$      & $\geq 2$ &      & \alDET & \YES
    \\
    Theorem~\ref{th:wg}
    & \YES & \scU    & $n$      & $\geq 2$ &      & \alPRO & \YES
    \\\hline %%%%%%%%%%%%%%%%%%%%%%%%%%%%%%%%%%%%%%%%%%%%%%%%%%%%%%%%%%%%%%%%%%%%
    \multicolumn{8}{l}{{\footnotesize \slshape Byzantine model; weak gathering}}
    \\\hline %%%%%%%%%%%%%%%%%%%%%%%%%%%%%%%%%%%%%%%%%%%%%%%%%%%%%%%%%%%%%%%%%%%%
\rowcolor[gray]{0.8}
    Lemma~\ref{th:ap:byzimposs}
    & \YES & \scF    & $3$      & $1$ &      & \alDET & \NO
    \\
\rowcolor[gray]{0.8}
    Lemma~\ref{th:ap:byzfsyncposs}
    & \YES & \scFSyn & $\geq 3f+1$ & $f$   &          & \alDET & \YES
    \\\hdashline
    Lemma~\ref{lemma:impbyz}
    & \NO  & \scFCBR & $3$      & $1$ &      & \alDET & \NO
    \\
    Note~\ref{note:31gathering:centreg}
    & \YES & \scFCBR & $3$      & $1$ &      & \alDET & \YES
    \\
    Lemma~\ref{lemma:(3,1)byz}
    & \YES & \scFCkB & $3$      & $1$ & $\geq 2$  & \alDET & \NO
    \\
    Note~\ref{note:byz52}
    & \YES & \scFC   & $>4$, odd & $1$ &          & \alDET & \YES
    \\
    Lemma~\ref{lemma-bound}
    & \YES & \scFCkB & $\geq 2$, even  & $1$ & $\geq n-1$ & \alDET & \NO
    \\
    Corol.~\ref{cor:n1gathering:centr:odd}
    & \YES & \scF    & $>4$, odd & $1$ &          & \alDET & \YES
    \\
    Corol.~\ref{cor:n1gathering:centr:even}
    & \YES   & \scFkB  & $\geq 2$, even & $1$ & $\leq n-2$ & \alDET & \YES
    \\
    Lemma~\ref{lemma:byz511}
    & \YES & \scFCkB & $n$ even & $\geq 2$ & $\geq \left\lceil\frac{n-f}{f}\right\rceil$ & \alDET & \NO
    \\
    Lemma~\ref{lemma:byz511}
    & \YES & \scFCkB & $n$ odd  & $\geq 2$ & $\geq \left\lceil\frac{n-f}{f-1}\right\rceil$ & \alDET & \NO
    \\
    Lemma~\ref{lemma:corrbyz}
    & \YES & \scProb & $n \geq 3$ & $f$    &      & \alPRO & \YES
    \\\hline
  \end{tabular}
  \begin{tablenotes}
  \footnotesize
  \item[a]{Lemma~\ref{th:pre:nodet2robots} is expressed according to a fair
    scheduler, but its proof is compatible with a fair bounded regular
    scheduler.}
  \end{tablenotes}
\end{threeparttable}
\end{table}

\begin{table*}
  \centering
  \caption{Summary: weak gathering; crash model.}
 \label{tab:sum:weak-crash}
  \begin{tabular}{|l||c;{2pt/2pt}c|c;{2pt/2pt}c|}
    \hline
    & \multicolumn{2}{c|}{multiplicity}    & \multicolumn{2}{c|}{without multiplicity}
    \\\cline{2-5}
    \textbf{Scheduler}
    & {deterministic} &  {probabilistic} 
    & {deterministic} &  {probabilistic} 
%    \\\cline{2-9}
%    & $n=3$    &$n\geq 4$ & $n=3$    &$n\geq 4$ & $n=3$    &$n\geq 4$ & $n=3$    &$n\geq 4$
    \\\hline
    \multicolumn{1}{c}{} & \multicolumn{4}{c}{$f=1$}
    \\\hline
    \scU\tnote{b}
    &\OK{\LrD}   &\OK{\LcC} &\im{\LrC} &\im{\LaF}
    \\\hline
  \end{tabular}
\begin{tablenotes}
  \footnotesize
  \item[a]{Note that the results derived from Lemma~\ref{th:ap:crashpos}  hold for the case (3,1). According to Note \ref{note:imp:crash:(n,1)} in the case (n,1) weak gathering is possible only if during the execution each configuration has at most one multiplicity point. Therefore, the self-stabilizing (n,1) weak-gathering is impossible since the initial configuration can contain more than one multiplicity point.}
\item[b] {The unfair behaviour of the scheduler is restricted only to correct processes, otherwise the impossibility result follows trivially (the scheduler chooses only the crashed process).
}
  \end{tablenotes}
\end{table*}

}

%%% Local Variables: 
%%% mode: latex
%%% TeX-master: "gathering-journal"
%%% End: 

%\input{complexity-last}
% !TEX root = gathering-journal.tex

\section{Conclusion}
\label{conclusion}

The results presented in this paper extend prior work on the self-stabilizing
gathering problem in fault-free and fault-prone environments, by shading light
on the subtil line between possibility and impossibility. Most notably, we
identify the role that additional synchrony, embodied by schedulers, can play
toward making the problem possible.
So far, our work is the most extensive study on the combined roles that
randomization, multiplicity, and schedulers (centralized and bounded) can play
in allowing a solution to fault-free and fault-tolerant gathering.

In particular, we have strengthened several key impossibility
results on gathering, including Prencipe's~\cite{Pre05} impossibility of
fault-free gathering in the absence of multiplicity strengthened to cover
up to the round-robin or 2-bounded centralized schedulers (depending on the
definition of the problem), and Agmon and Peleg's~\cite{AP06} impossibility
of Byzantine gathering under a fair scheduler extended to cover bounded
centralized schedulers.

%The results presented here extend the prior work on the possibility and impossibility of gathering in fault-free and both crash-prone and Byzantine-prone systems. For instance, we strengthen several prior impossibility results by showing that they still hold against weaker schedulers, and under various failure models. We also mark out more accurately the limit between possibility and impossibility by deriving appropriate upper and lower bounds.

The main results of the paper are summarized in Table~\ref{tab:sum:none} for
fault-free systems; in Table~\ref{tab:sum:strong-crash} and Table~\ref{tab:sum:weak-crash}
for strong, resp. weak, gathering in crash-prone systems; and in Table~\ref{tab:sum:Byz}
for weak gathering problem in Byzantine-prone systems.

\remove{
\newcommand{\EXPL}{\ensuremath{\bullet}\xspace}
\newcommand{\IMPL}{\ensuremath{\circ}\xspace}

\begin{table}[p]
  %%%
  %%% FAULT-FREE
  %%%
  \centering
  \caption{Summary of the main results in fault-free environments.}
  \label{tab:sum:none}
  \setlength{\dashlinedash}{1pt}
  \setlength{\dashlinegap}{1pt}
  \begin{tabular}{|c:c||c:c||c:c:c:c:c||c:c:c|}
  	\hline
  	\rotatebox{90}{\scriptsize SYm}
	& \rotatebox{90}{\scriptsize CORDA}
%	& \rotatebox{90}{\scriptsize none}
%	& \rotatebox{90}{\scriptsize crash}
%	& \rotatebox{90}{\scriptsize Byzantine}
%	& \rotatebox{90}{\scriptsize probabilistic}
%	& \rotatebox{90}{\scriptsize deterministic}
	& \rotatebox{90}{\scriptsize mult.}
	& \rotatebox{90}{\scriptsize no mult.}
	& \rotatebox{90}{\scriptsize centralized}
	& \rotatebox{90}{\scriptsize regular}
	& \rotatebox{90}{\scriptsize $k$-bounded}
	& \rotatebox{90}{\scriptsize arbitrary}
	& \rotatebox{90}{\scriptsize unfair}
	& {\small Conditions}
	& {\small Solution}
	& {\small Source}
  	\\\hline\hline
    	\multicolumn{2}{|c||}{\EXPL}
%	  & \EXPL & \IMPL & \IMPL
	  &       & \EXPL
	  &       &       &       & \EXPL & \IMPL
	  & --
	  & \emph{Impossible}
	  & Prencipe~\cite{Pre05} (Note~4.1)
    \\\hline\hline
    \EXPL & \IMPL
%      & \EXPL & \IMPL & \IMPL
      &       & \EXPL
      & \EXPL & \EXPL & \IMPL & \IMPL & \IMPL
      & $n\geq 3$
      & \emph{No deterministic}
      & Lemma~4.1
    \\\hline
    \EXPL &
%    	  & \EXPL &   &
	  & \multicolumn{2}{c||}{\EXPL}
	  & \IMPL & \IMPL & \IMPL & \EXPL &
	  & $n=2$
	  & Probabilistic
	  & Lemma~4.2
    \\\hline
    \EXPL & \IMPL
%    	  & \EXPL & \IMPL & \IMPL
	  &       & \EXPL
	  & \EXPL &       &       &       &
	  & $n\geq 3$
	  & \emph{No probabilistic}
	  & Lemma~4.3
	\\\hline
	\EXPL &
%	  & \EXPL &   &
	  & \IMPL & \EXPL
	  &       & \IMPL & \EXPL &       &
	  & $n\geq 3$
	  & Probabilistic
	  & Lemma~4.4
    \\\hline
      & \EXPL
%      & \EXPL & \IMPL & \IMPL
      & \multicolumn{2}{c||}{\EXPL}
      &       &       &       & \EXPL & \IMPL
      & $n=2$
      & \emph{Impossible}
      & Lemma~4.6
   \\\hline
   \IMPL & \EXPL
%     & \EXPL &   &
     & \IMPL & \EXPL
     &       & \IMPL & \EXPL &       &
     & --
     & Probabilistic
     & Lemma~4.7
  \\\hline\hline
  \multicolumn{12}{|c|}{``\EXPL'' means explicit; ``\IMPL'' means implicit; negative results are in italic}
  \\\hline
  \end{tabular}
\end{table}
}
%To the best of our knowledge, this is actually the first extended 
%study that considers probabilistic solutions to solve the gathering problem in both fault-free and fault-prone models. 
%Here, we identify conditions under which a probabilistic solution exists, 
%as well as conditions for which not even a probabilistic solution exists.
\remove{
\begin{table}[p]
  \centering
  \caption{Summary of the main results in crash-prone systems.}
  \label{tab:sum:crash}
  \setlength{\dashlinedash}{1pt}
  \setlength{\dashlinegap}{1pt}
  \begin{tabular}{|c:c||c:c||c:c:c:c:c||c:c:c|}
  	\hline
  	\rotatebox{90}{\scriptsize SYm}
	& \rotatebox{90}{\scriptsize CORDA}
%	& \rotatebox{90}{\scriptsize none}
%	& \rotatebox{90}{\scriptsize crash}
%	& \rotatebox{90}{\scriptsize Byzantine}
%	& \rotatebox{90}{\scriptsize probabilistic}
%	& \rotatebox{90}{\scriptsize deterministic}
	& \rotatebox{90}{\scriptsize mult.}
	& \rotatebox{90}{\scriptsize no mult.}
	& \rotatebox{90}{\scriptsize centralized}
	& \rotatebox{90}{\scriptsize regular}
	& \rotatebox{90}{\scriptsize $k$-bounded}
	& \rotatebox{90}{\scriptsize arbitrary}
	& \rotatebox{90}{\scriptsize unfair}
	& {\small Conditions}
	& {\small Solution}
	& {\small Source}
  	\\\hline\hline
  \EXPL &
%    & \IMPL & \EXPL &
    & \multicolumn{2}{c||}{\EXPL}
    & \EXPL & \EXPL &       &       &
    & $n=3$, $f=1$
    & Deterministic
    & Lemma~5.1
  \\\hline
  \EXPL & \IMPL
%    &   & \EXPL & \IMPL
    &       & \EXPL
    & \EXPL & \EXPL & \IMPL & \IMPL & \IMPL
    & $n\geq 4$, $f\geq 1$
    & \emph{No deterministic}
    & Lemma~5.2
  \\\hline
  \EXPL & \IMPL
%    &   & \EXPL & \IMPL
    &       & \EXPL
    & \EXPL &       &       & \IMPL & \IMPL
    & $n\geq 3$, $f\geq 1$
    & \emph{No probabilistic}
    & Lemma~5.3
  \\\hline
  \EXPL &
%    & \IMPL & \EXPL &
    & \multicolumn{2}{c||}{\EXPL}
    &       & \EXPL & \EXPL &       &
    & $f=1$
    & Probabilistic
    & Lemma~5.4
  \\\hline
  \EXPL & \IMPL
%    &   & \EXPL & \IMPL
    &       & \EXPL
    & \EXPL & \EXPL & \IMPL & \IMPL & \IMPL
    & $n\geq 3$, $f\geq 2$, weak
    & \emph{Impossible}
    & Lemma~5.5
  \\\hline
  \EXPL &
%    & \IMPL & \EXPL &
    & \EXPL &
    & \EXPL &       &       &       &
    & $f\geq 2$, weak
    & Deterministic
    & Lemma~5.6
  \\\hline
  \EXPL &
%    & \IMPL & \EXPL &
    & \EXPL &
    & \IMPL & \IMPL & \IMPL & \IMPL & \EXPL
    & $f\geq 2$, weak
    & Probabilistic
    & Lemma~5.7
  \\\hline\hline
  \multicolumn{12}{|c|}{`\EXPL'' means explicit; ``\IMPL'' means implicit; negative results are in italic}
  \\\hline
  \end{tabular}
% \\\noindent Legend: (``x'' is explicit; ``-'' is implicit; negative results in italic)
\end{table}
}
The main results of the paper are summed up in Table~\ref{tab:sum:none} for fault-free systems; in Table~\ref{tab:sum:strong-crash} and Table~\ref{tab:sum:weak-crash} for strong respectivelly weak gathering in crash-prone systems; and in Table~\ref{tab:sum:Byz} for the weak gathering problem in Byzantine-prone systems.
\remove{
\begin{table}[p]
  \centering
  \caption{Summary of the main results in Byzantine-prone systems.}
  \label{tab:sum:Byz}
  \setlength{\dashlinedash}{1pt}
  \setlength{\dashlinegap}{1pt}
  \begin{tabular}{|c:c||c:c||c:c:c:c:c||c:c:c|}
  	\hline
  	\rotatebox{90}{\scriptsize SYm}
	& \rotatebox{90}{\scriptsize CORDA}
%	& \rotatebox{90}{\scriptsize none}
%	& \rotatebox{90}{\scriptsize crash}
%	& \rotatebox{90}{\scriptsize Byzantine}
%	& \rotatebox{90}{\scriptsize probabilistic}
%	& \rotatebox{90}{\scriptsize deterministic}
	& \rotatebox{90}{\scriptsize mult.}
	& \rotatebox{90}{\scriptsize no mult.}
	& \rotatebox{90}{\scriptsize centralized}
	& \rotatebox{90}{\scriptsize regular}
	& \rotatebox{90}{\scriptsize $k$-bounded}
	& \rotatebox{90}{\scriptsize arbitrary}
	& \rotatebox{90}{\scriptsize unfair}
	& {\small Conditions}
	& {\small Solution}
	& {\small Source}
  	\\\hline\hline
  \EXPL & \IMPL
%    &   &   & \EXPL
    &       & \EXPL
    &       &       &       & \EXPL & \IMPL
    & $n=3$, $f=1$
    & \emph{No deterministic}
    & Agmon--Peleg \cite{AP06}
  \\\hline\hline
  \EXPL & \IMPL
%    &   &   & \EXPL
    &       & \EXPL
    & \EXPL & \EXPL & \IMPL & \IMPL & \IMPL
    & $n=3$, $f=1$
    & \emph{No deterministic}
    & Lemma~5.8
  \\\hline
  \EXPL &
%    & \IMPL & \IMPL & \EXPL
    & \EXPL &
    & \EXPL & \EXPL &       &       &
    & $n=3$, $f=1$
    & Deterministic
    & Note~5.1
  \\\hline
  \EXPL & \IMPL
%    &   &   & \EXPL
    & \EXPL & \IMPL
    & \EXPL &       & \EXPL & \IMPL & \IMPL
    & $n=3$, $f=1$, $k\geq 2$
    & \emph{No deterministic}
    & Lemma~5.9
  \\\hline
  \EXPL &
%    & \IMPL & \IMPL & \EXPL
    & \EXPL &
    & \EXPL &       &       &       &
    & $n$ odd, $n>4$, $f=1$
    & Deterministic
    & Note 5.2
  \\\hline
  \EXPL & \IMPL
%    &   &   & \EXPL
    & \EXPL & \IMPL
    & \EXPL &       & \EXPL & \IMPL & \IMPL
    & $n$ even $n\!\geq\!2$, $f\!=\!1$, $k\!\geq\!n\!-\!1$
    & \emph{No deterministic}
    & Lemma~5.10
  \\\hline
  \EXPL & \IMPL
%    &   &   & \EXPL
    & \EXPL & \IMPL
    & \EXPL &       & \EXPL & \IMPL & \IMPL
%    & $f\!\geq\!2$, $\left(\mbox{$n$ even and $k\!\geq\!\left\lceil\!\frac{n\!-\!f}{f}\!\right\rceil$}\!\right)$ or $\left(\mbox{$n$ odd and $k\!\geq\!\left\lceil\!\frac{n\!-\!f}{f\!-\!1}\!\right\rceil$}\!\right)$; No deterministic
    & $f\!\geq\!2$, $k\!\geq\!\left\{\begin{array}{ll}
    \left\lceil\frac{n-f}{f}\right\rceil   & \mbox{if $n$ even}\\
    \left\lceil\frac{n-f}{f-1}\right\rceil & \mbox{if $n$ odd}
    \end{array}
    \right.$
    & \emph{No deterministic}
    & Lemma~5.11
  \\\hline
  \EXPL &
%    & \IMPL & \IMPL & \EXPL
    & \EXPL &
    &       & \IMPL & \EXPL &       &
    & $n\geq 3$, weak
    & Probabilistic
    & Lemma~5.12
  \\\hline\hline
  \multicolumn{12}{|c|}{``\EXPL'' means explicit; ``\IMPL'' means implicit; negative results are in italic}
  \\\hline
  \end{tabular}
% \\\noindent Legend: (``x'' means explicit; ``-'' means implicit; negative results are in italic)
\end{table}
}

\remove{
Several research directions are opended by our study:
\begin{enumerate}
\item Our work is only a first step in investigating the Byzantine tolerant gathering. 
It should be noted that the only probabilistic possibility result is derived only under a probabilistic scheduler. This open a vaste field of investigation related to the possibility of probabilistic gathering under different schedulers. Also the investigation of the deterministic case is not completly closed.

\item Some of the proofs proposed in the curent document only consider the use of randomization for determining whether a robot takes actions or not when it is activated. One can argue that using randomization in a different way may possibly change some of the lower bounds presented here. 
\item The impossibility results and the scheduler lower bounds are computed only for the memoriless robots. An interesting exercise would be to revisit these lower bounds and impossibility results 
when robots are enhanced with memory and communication skils. 
\item In order to close the problem a similar study can be conducted for the CORDA model. Note that the impossibility results we proposed for the SYm model also hold for the CORDA model. As far as the possibility results are concerned we conjecture that our probabilistic algorithms verify the gathering specification in the CORDA model under bounded schedulers.  
\end{enumerate}
}
%We conjecture that the bounds will hold even if randomization is used differently.

%%% Local Variables: 
%%% mode: latex
%%% TeX-master: "gathering"
%%% End: 

\section*{Acknowledgments}
We are grateful to Fran\c{c}ois Bonnet, the editor, and the reviewers for their
insightful and valuable comments.

Research partly supported by JSPS KAKENHI Grants No.\,23500060 and No.\,26330020.

\bibliographystyle{plain}
\bibliography{gathering}

\begin{thebibliography}{10}

\bibitem{AP06}
N.~Agmon and D.~Peleg.
\newblock Fault-tolerant gathering algorithms for autonomous mobile robots.
\newblock {\em SIAM Journal of Computing}, 36(1):56--82, 2006.

\bibitem{AOS+99}
H.~Ando, Y.~Oasa, I.~Suzuki, and M.~Yamashita.
\newblock Distributed memoryless point convergence algorithm for mobile robots
  with limited visibility.
\newblock {\em IEEE Trans. on Robotics and Automation}, 15(5):818--828, October
  1999.

\bibitem{BDT13}
Zohir Bouzid, Shantanu Das, and S{\'{e}}bastien Tixeuil.
\newblock Gathering of mobile robots tolerating multiple crash faults.
\newblock In {\em {IEEE} 33rd International Conference on Distributed Computing
  Systems, {ICDCS} 2013, 8-11 July, 2013, Philadelphia, Pennsylvania, {USA}},
  pages 337--346, 2013.

\bibitem{BPT09}
Zohir Bouzid, Maria~Gradinariu Potop{-}Butucaru, and S{\'{e}}bastien Tixeuil.
\newblock Byzantine convergence in robot networks: The price of asynchrony.
\newblock In {\em Principles of Distributed Systems, 13th International
  Conference, {OPODIS} 2009, N{\^{\i}}mes, France, December 15-18, 2009.
  Proceedings}, pages 54--70, 2009.

\bibitem{BPT10}
Zohir Bouzid, Maria~Gradinariu Potop{-}Butucaru, and S{\'{e}}bastien Tixeuil.
\newblock Optimal byzantine-resilient convergence in uni-dimensional robot
  networks.
\newblock {\em Theor. Comput. Sci.}, 411(34-36):3154--3168, 2010.

\bibitem{CP06}
R.~Cohen and D.~Peleg.
\newblock Convergence of autonomous mobile robots with inaccurate sensors and
  movements.
\newblock In B.~Durand and W.~Thomas, editors, {\em 23rd Annual Symposium on
  Theoretical Aspects of Computer Science (STACS'06)}, volume 3884 of {\em
  LNCS}, pages 549--560, Marseille, France, February 2006. Springer.

\bibitem{DGM+06}
X.~D\'efago, M.~Gradinariu Potop-Butucaru, S.~Messika, and P.~Raipin-Parv\'edy.
\newblock Fault-tolerant and self-stabilizing mobile robots gathering:
  Feasibility study.
\newblock {\em DISC'06}, pages 46--60, 2006.

\bibitem{DP12}
Yoann Dieudonn{\'{e}} and Franck Petit.
\newblock Self-stabilizing gathering with strong multiplicity detection.
\newblock {\em Theor. Comput. Sci.}, 428:47--57, 2012.

\bibitem{Dol00}
S.~Dolev.
\newblock {\em Self-Stabilization}.
\newblock MIT Press, 2000.

\bibitem{FPS12}
P.~Flocchini, G.~Prencipe, and N.~Santoro.
\newblock {\em Distributed Computing by Oblivious Mobile Robots}.
\newblock Synthesis Lectures on Distributed Computing Theory. Morgan \&
  Claypool Publishers, 2012.

\bibitem{FPS+05}
P.~Flocchini, G.~Prencipe, N.~Santoro, and P.~Widmayer.
\newblock Gathering of asynchronous mobile robots with limited visibility.
\newblock {\em Theoretical Computer Science}, 337:147--168, 2005.

\bibitem{IIK+13}
Ta. Izumi, To. Izumi, S.~Kamei, and F.~Ooshita.
\newblock Feasibility of polynomial-time randomized gathering for oblivious
  mobile robots.
\newblock {\em IEEE Trans. Parallel Distrib. Syst.}, 24(4):716--723, 2013.

\bibitem{Lyn96}
N.~A. Lynch.
\newblock {\em Distributed Algorithms}.
\newblock Morgan Kaufmann, San Francisco, CA, USA, 1996.

\bibitem{LSV03}
N.~A. Lynch, R.~Segala, and F.~W. Vaandrager.
\newblock Hybrid {I/O} automata.
\newblock {\em Information and Computation}, 185(1):105--157, August 2003.

\bibitem{Meg83}
N.~Megido.
\newblock Linear-time algorithms for linear programming in $\mathbb{R}^3$ and
  related problems.
\newblock {\em SIAM Journal of Computing}, 12(4):759--776, 1983.

\bibitem{Pre01}
G.~Prencipe.
\newblock \textsc{Corda}: Distributed coordination of a set of autonomous
  mobile robots.
\newblock In {\em Proc. 4th European Research Seminar on Advances in
  Distributed Systems (ERSADS'01)}, pages 185--190, Bertinoro, Italy, May 2001.

\bibitem{Pre05}
G.~Prencipe.
\newblock On the feasibility of gathering by autonomous mobile robots.
\newblock In A.~Pelc and M.~Raynal, editors, {\em Proc. Structural Information
  and Communication Complexity, 12th Intl Coll., SIROCCO 2005}, volume 3499 of
  {\em LNCS}, pages 246--261, Mont Saint-Michel, France, May 2005. Springer.

\bibitem{SDY09}
S.~Souissi, X.~D\'efago, and M.~Yamashita.
\newblock Using eventually consistent compasses to gather memory-less mobile
  robots with limited visibility.
\newblock {\em ACM Trans. Autonomous and Adaptive Systems}, 4(1):9:1--27, 2009.

\bibitem{SY99}
I.~Suzuki and M.~Yamashita.
\newblock Distributed anonymous mobile robots: Formation of geometric patterns.
\newblock {\em SIAM Journal of Computing}, 28(4):1347--1363, 1999.

\end{thebibliography}

% !TEX root = gathering-journal.tex

\appendix
\section{Appendix}
\subsection{Necessity of the Side Move for Algorithm~\ref{alg:ft-gathering}}
\label{apx:need:side-move}
We must now show the necessity of introducing a side move in Algorithm~\ref{alg:ft-gathering}.
Assuming that robots execute the naive algorithm (Algorithm~\ref{alg:ft-gathering} without
the clause executing the side move), we exhibit a situation in which the robots are unable
to gather (depicted in Fig.~\ref{fig:2castles}):

\tikzset{%
	robot/.style={
		circle,
		inner sep=0pt,
		minimum size=5pt,
		fill=black,
		draw=black
	},
	active_robot/.style={
		robot,
		fill=red!50,
		draw=red,
		thick
	},
	configuration/.style={
		draw=gray,
		rectangle,
		rounded corners,
		minimum height=1.4cm,
		minimum width=2cm,
		very thin
	},
	transition/.style={
		->,
		>=latex,
		draw=black,
		rounded corners
	},
	move/.style={
		->,
		>=stealth,
		densely dashed,
		thick,
		red
	},
	conf/.style={
		draw=black,
		ellipse,
		thin
	},
	trans/.style={
		->,
		>=stealth,
%		shorten >=2pt,
		looseness=.5,
		auto,
		draw=black
	},
	point/.style={
		draw=gray,
		cross out,
		scale=0.75,
		minimum height=1pt,
		minimum width=1pt,
		ultra thin		
	}
}

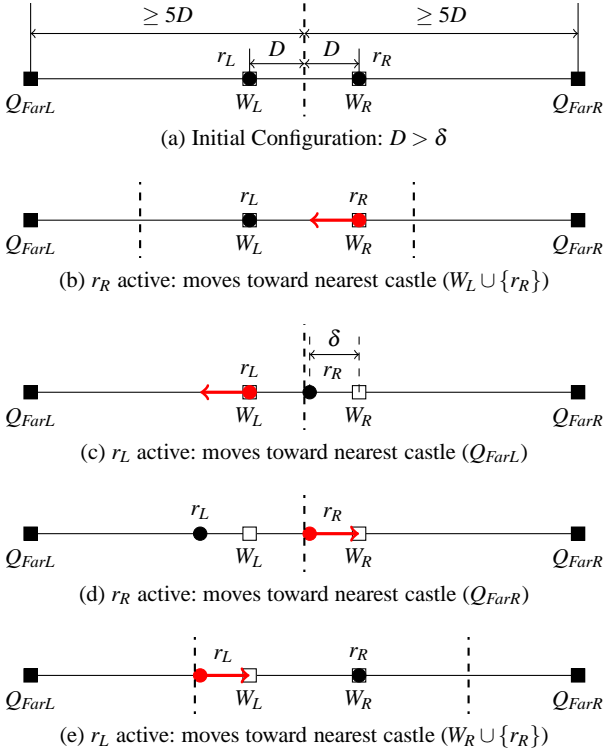
\begin{figure}
\subfloat[Initial Configuration: $D>\delta$]{
	\label{fig:2castles:initial}
	\begin{tikzpicture}
		\def\vdelta{0.65}
		\def\epsilon{0}
		\def\step{0.72}
		\coordinate (A) at (-5*\step, 0);
		\coordinate (B) at (5*\step, 0);
		
		\coordinate (L) at (-\step, 0);
		\coordinate (R) at ( \step, 0);
		
		\coordinate (Lr) at (-\step-\vdelta, 0);
		\coordinate (Rr) at ( \step-\vdelta, 0);

		\clip ($(A)+(-0.5,-0.5)$) rectangle ($(B)+(0.5,1)$);
		
		\draw (A) -- (B);
		
		% Castles, Towers
		\node[robot,rectangle,label=below:{$\castle{Q}_\mathit{FarL}$}] () at (A) {};
		\node[robot,rectangle,label=below:{$\castle{Q}_\mathit{FarR}$}] () at (B) {};
		\node[robot,rectangle,fill=white,label=below:{$W_L$}] () at (L) {};
		\node[robot,rectangle,fill=white,label=below:{$W_R$}] () at (R) {};

		\node[robot,label=above left:{$r_L$}] () at (L) {};
		\node[robot,label=above right:{$r_R$}] () at (R) {};

		% Dimensions
		\draw[thick,dashed] ($(A)!.5!(B)$) +(0,1) -- +(0,-0.5);
		\draw[thin] (R) -- +(0,0.4);
		\draw[thin] (L) -- +(0,0.4);
		\draw[thin] (A) -- +(0,1.1);
		\draw[thin] (B) -- +(0,1.1);
		\draw[<->] ($(R)+(0,0.2)$) -- node[sloped,above]{$D$} (0,0.2);
		\draw[<->] ($(B)+(0,0.6)$) -- node[sloped,above]{$\geq 5D$} (0,0.6);
		\draw[<->] ($(L)+(0,0.2)$) -- node[sloped,above]{$D$} (0,0.2);
		\draw[<->] ($(A)+(0,0.6)$) -- node[sloped,above]{$\geq 5D$} (0,0.6);
	\end{tikzpicture}
}

\subfloat[$r_R$ active: moves toward nearest castle ($W_L \cup \{r_R\}$)]{
	\label{fig:2castles:step1}
	\begin{tikzpicture}
		\def\vdelta{0.65}
		\def\epsilon{0}
		\def\step{0.72}
		\coordinate (A) at (-5*\step, 0);
		\coordinate (B) at (5*\step, 0);
		
		\coordinate (L) at (-\step, 0);
		\coordinate (R) at ( \step, 0);
		
		\coordinate (Lr) at (-\step-\vdelta, 0);
		\coordinate (Rr) at ( \step-\vdelta, 0);

		\clip ($(A)+(-0.5,-0.5)$) rectangle ($(B)+(0.5,0.5)$);
		
		\draw (A) -- (B);
		
		% Castles, Towers
		\node[robot,rectangle,label=below:{$\castle{Q}_\mathit{FarL}$}] () at (A) {};
		\node[robot,rectangle,label=below:{$\castle{Q}_\mathit{FarR}$}] () at (B) {};
		\node[robot,rectangle,fill=white,label=below:{$W_L$}] () at (L) {};
		\node[robot,rectangle,fill=white,label=below:{$W_R$}] () at (R) {};

		% Movement
		\draw[very thick,red,->] (R) -- (Rr);

		\node[robot,label=above:{$r_L$}] () at (L) {};
		\node[robot,red,label=above:{$r_R$}] () at (R) {};

		% Vcell
		\draw[thick,dashed] ($(A)!.5!(L)$) +(0,1) -- +(0,-1);
		\draw[thick,dashed] ($(L)!.5!(B)$) +(0,1) -- +(0,-1);
	\end{tikzpicture}
}

\subfloat[$r_L$ active: moves toward nearest castle ($Q_\mathit{FarL}$)]{	
	\label{fig:2castles:step2}
	\begin{tikzpicture}
		\def\vdelta{0.65}
		\def\epsilon{0}
		\def\step{0.72}
		\coordinate (A) at (-5*\step, 0);
		\coordinate (B) at (5*\step, 0);
		
		\coordinate (L) at (-\step, 0);
		\coordinate (R) at ( \step, 0);
		
		\coordinate (Lr) at (-\step-\vdelta, 0);
		\coordinate (Rr) at ( \step-\vdelta, 0);

		\clip ($(A)+(-0.5,-0.5)$) rectangle ($(B)+(0.5,0.9)$);
		
		\draw (A) -- (B);
		
		% Castles, Towers
		\node[robot,rectangle,label=below:{$\castle{Q}_\mathit{FarL}$}] () at (A) {};
		\node[robot,rectangle,label=below:{$\castle{Q}_\mathit{FarR}$}] () at (B) {};
		\node[robot,rectangle,fill=white,label=below:{$W_L$}] () at (L) {};
		\node[robot,rectangle,fill=white,label=below:{$W_R$}] () at (R) {};

		% Movement
		\draw[very thick,red,->] (L) -- (Lr);

		\node[robot,red,label=above:{$r_L$}] () at (L) {};
		\node[robot,label=above right:{$r_R$}] () at (Rr) {};

		% Dimensions: distance delta
		\draw[dashed] (R) -- +(0,0.75);
		\draw[dashed] (Rr) -- +(0,0.75);
		\draw[<->] ($(R)+(0,0.5)$) -- node[sloped,above]{$\delta$} ($(Rr)+(0,0.5)$);
		
		% Vcell
		\draw[thick,dashed] ($(A)!.5!(B)$) +(0,1) -- +(0,-1);
	\end{tikzpicture}
}
	
\subfloat[$r_R$ active: moves toward nearest castle ($Q_\mathit{FarR}$)]{	
	\label{fig:2castles:step3}
	\begin{tikzpicture}
		\def\vdelta{0.65}
		\def\epsilon{0}
		\def\step{0.72}
		\coordinate (A) at (-5*\step, 0);
		\coordinate (B) at (5*\step, 0);
		
		\coordinate (L) at (-\step, 0);
		\coordinate (R) at ( \step, 0);
		
		\coordinate (Lr) at (-\step-\vdelta, 0);
		\coordinate (Rr) at ( \step-\vdelta, 0);

		\clip ($(A)+(-0.5,-0.5)$) rectangle ($(B)+(0.5,0.5)$);
		
		\draw (A) -- (B);
		
		% Castles, Towers
		\node[robot,rectangle,label=below:{$\castle{Q}_\mathit{FarL}$}] () at (A) {};
		\node[robot,rectangle,label=below:{$\castle{Q}_\mathit{FarR}$}] () at (B) {};
		\node[robot,rectangle,fill=white,label=below:{$W_L$}] () at (L) {};
		\node[robot,rectangle,fill=white,label=below:{$W_R$}] () at (R) {};

		% Movement
		\draw[very thick,red,->] (Rr) -- (R);

		\node[robot,label=above:{$r_L$}] () at (Lr) {};
		\node[robot,red,label=above right:{$r_R$}] () at (Rr) {};

		% Vcell
		\draw[thick,dashed] ($(A)!.5!(B)$) +(0,1) -- +(0,-1);
	\end{tikzpicture}
}

\subfloat[$r_L$ active: moves toward nearest castle ($W_R \cup \{r_R\}$)]{	
	\label{fig:2castles:step4}
	\begin{tikzpicture}
		\def\vdelta{0.65}
		\def\epsilon{0}
		\def\step{0.72}
		\coordinate (A) at (-5*\step, 0);
		\coordinate (B) at (5*\step, 0);
		
		\coordinate (L) at (-\step, 0);
		\coordinate (R) at ( \step, 0);
		
		\coordinate (Lr) at (-\step-\vdelta, 0);
		\coordinate (Rr) at ( \step-\vdelta, 0);

		\clip ($(A)+(-0.5,-0.5)$) rectangle ($(B)+(0.5,0.5)$);
		
		\draw (A) -- (B);
		
		% Castles, Towers
		\node[robot,rectangle,label=below:{$\castle{Q}_\mathit{FarL}$}] () at (A) {};
		\node[robot,rectangle,label=below:{$\castle{Q}_\mathit{FarR}$}] () at (B) {};
		\node[robot,rectangle,fill=white,label=below:{$W_L$}] () at (L) {};
		\node[robot,rectangle,fill=white,label=below:{$W_R$}] () at (R) {};

		% Movement
		\draw[very thick,red,->] (Lr) -- (L);

		\node[robot,red,label=above right:{$r_L$}] () at (Lr) {};
		\node[robot,label=above:{$r_R$}] () at (R) {};

		% Vcell
		\draw[thick,dashed] ($(A)!.5!(R)$) +(0,1) -- +(0,-1);
		\draw[thick,dashed] ($(R)!.5!(B)$) +(0,1) -- +(0,-1);
	\end{tikzpicture}
}
	\caption{Illustration of the necessity of introducing the side move. The naive algorithm (without side move) can result in and endless cycle.}
	\label{fig:2castles}
\end{figure}

Consider the initial configuration depicted in Figure~\ref{fig:2castles:initial}.
Assume that the reachable distance of all robots is the same and call it $\delta$.
Let $D$ be some arbitrary distance strictly larger than $\delta$.
The robots (or a subset thereof) are initially located such that they form four castles on a segment.
Let $\castle{Q}_\mathit{FarL},\castle{Q}_\mathit{FarR}$ be the two castles at both ends of the segment and assume that they consist only of crashed robots.
Let $W_L,W_R$ be two towers such that they become a castle by adding one robot. For simplicity, assume again that they also consist only of crashed robots.
Let $r_L,r_R$ be two correct robots initially with $W_L,W_R$ respectively.
The location of the four castles is symmetric such that the midpoint between $\castle{Q}_\mathit{FarL}$ and $\castle{Q}_\mathit{FarR}$ is also the midpoint between $W_L$ and $W_R$.
The distance between $W_L$ and $W_R$ is $2D$ and the distance between $\castle{Q}_\mathit{FarL}$ and $\castle{Q}_\mathit{FarR}$ is at least $10D$.

Consider the scheduler as an adversary following a round-robin policy.
First, $r_R$ is active (Fig.~\ref{fig:2castles:step1}).
According to the naive algorithm, $r_R$ must move toward the nearest castle, which is the castle formed by $W_L$ and $r_L$.
The dashed lines on the figure represent the boundaries of the Voronoi cells of each of the three castles: $\left\{\castle{Q}_\mathit{FarL}, \castle{Q}_\mathit{FarR}, W_L\cup\left\{r_L\right\} \right\}$.
Since $r_R$ is located inside the Voronoi cell of castle $W_L\cup\left\{r_L\right\}$, it moves toward it.

Second, $r_L$ is active (Fig.~\ref{fig:2castles:step2}).
Since $r_R$ has moved in the previous step, $W_R$ is no longer the location of a castle.
Now, $r_L$ is located inside the Voronoi cell of $\castle{Q}_\mathit{FarL}$ and moves toward it.

Third, $r_R$ is active again (Fig.~\ref{fig:2castles:step3}).
There are only two castles left on the configuration, namely $\castle{Q}_\mathit{FarL}$ and $\castle{Q}_\mathit{FarR}$.
Since $D>\delta$, $r_R$ is located to the right of the midpoint between $W_L$ and $W_R$,
which is also the midpoint between $Q_\mathit{FarL}$ and $\castle{Q}_\mathit{FarR}$.
This means that $r_R$ is in the Voronoi cell of $\castle{Q}_\mathit{FarR}$ and hence moves toward it.
But, because $r_R$ is at distance $\delta$ to $W_R$, it ends its movement exactly at $W_R$, thus forming a castle again.

Fourth, $r_L$ is active and there are three castles (Fig.~\ref{fig:2castles:step4}).
By construction, $W_R$ is located at a distance at least $6D$ from $\castle{Q}_\mathit{FarL}$,
and hence $r_L$ remains inside the Voronoi cell of the castle formed by $W_R$ and $r_R$.
$r_L$ is also at a distance $\delta$ from $W_L$, and hence ends its movement exactly at $W_L$, forming a castle.
This leads back to the initial configuration (Fig.~\ref{fig:2castles:initial}), and thus the cycle continues forever.

%% Discussion: side move prevents the robots from returning to tower.

\subsection{Disambiguation of Side Move for Algorithm~\ref{alg:ft-prob-gathering}}
\label{sec:side-move:disamb}

\begin{algorithm}
	\caption{Disambiguation of side move (robot at \point{p}).}
  \label{alg:sidemove}
  \begin{small}
  	\textbf{Procedure}:\\
		\IND $\textsc{SideMove}(\point{p},\point{q},\multiset{\Pi})$ $\longrightarrow$ \\
		\IND\IND set origin at \point{q}\\
		\IND\IND let $\multiset{\Pi}_\point{q} \subset \multiset{\Pi}$ be all robots in \Vcell{\point{q}} or on its boundary.\\
		\IND\IND let $r_\textsc{cw}$ be the first robot clockwise in $\multiset{\Pi}_\point{q}$ starting from \point{p}.\\
		\IND\IND let $\theta_\textsc{cw}$ be angle $\angle \point{p}\point{q}r_\textsc{cw}$ or $\pi$, whichever is smaller.\\
		\IND\IND let $\theta^+$ be one third of $\theta_\textsc{cw}$.\\
		\IND\IND let $\point{v}_\point{p}$ be the intersection of $\overline{\point{q}\point{p}}$ and the boundary of \Vcell{\point{q}}.\\
		\IND\IND let $\mathit{ray}$ be the ray from \point{q} with clockwise angle $\theta^+$ from $\overline{\point{q}\point{p}}$.\\
		\IND\IND let $\point{V}_a$ be the intersection of $\mathit{ray}$ with \Vcell{\point{q}}.\\
		\IND\IND let $\point{V}_b$ be the intersection of $\mathit{ray}$ with the circumference of $D$.\\
		\IND\IND let $\point{V'}$ be $\point{V}_a$ or $\point{V}_b$, whichever is nearest \point{q}.\\
		\IND\IND let vector $\vec{\mathit{target}} = \frac{\dist{\point{q}}{\point{p}}}{\dist{\point{q}}{\point{v}_\point{p}}} \vec{\point{q}\point{v'}}$.\\
		\IND\IND move toward point $\vec{\mathit{target}}$.
%		\IND\IND Let $v_p$ be the intersection of the ray 
	\end{small}
\end{algorithm}

\begin{figure}
	\centering
	\begin{tikzpicture}
		\def\horizK{6}
		\def\angle{40}
		\def\halfalpha{2}
	
		\clip (-.5,-.5) rectangle (2*\halfalpha,3*\halfalpha+0.5);
		
		\coordinate (q) at (0,0);
		\coordinate (p) at (0,2*\halfalpha);
		\coordinate (vp) at (0,3*\halfalpha);
		\coordinate (k) at (\horizK,0);
		\coordinate (farV) at (\angle:8);
		
		\draw[name path=D]   (0,\halfalpha) circle (\halfalpha);
		\draw[name path=ray,dotted] ($(q)!-0.1!(farV)$) -- (farV);
		\draw[name path=Vcell,thick] ($(vp)!-0.1!(k)$) -- ($(vp)!1.1!(k)$);
		\draw[dotted] ($(q)!-.1!(vp)$) -- ($(q)!1.1!(vp)$);
		\draw (q) -- node[left] {$a$} (p) -- node[left] {$b$} (vp);
		\draw[dotted] (p) arc (90:0:2*\halfalpha) ;
		
		\draw (90:0.5) arc (90:\angle:0.5) node[above,sloped] {$\theta^+$};
		
		\coordinate[name intersections={of=ray and Vcell,by=Va}];
		\coordinate[name intersections={of=ray and D,by={dummy,Vb}}];

		\coordinate (p') at ($(q)!0.66667!(Vb)$);
		
		\node[robot,label=below:{\castle{Q}}] (nq) at (q) {};
		\node[robot,label=above left:{\point{P}}] (np) at (p) {};
		\node[cross out,draw] (nvp) at (vp) {};
		\node[cross out,draw] (nk) at (k) {};

		\node[cross out,draw,label=right:{$\point{V}_a$}] at (Va) {};
		\node[cross out,draw,label=right:{$\point{V}_b$}] at (Vb) {};
		\draw(vp) -- node[above,sloped] {\Vcell{q} : $y=mx+a+b$} (Va);

		\draw (q) -- node[below,sloped] {$a'$} (p') -- node[below,sloped] {$b'$} (Vb);

		\node[robot,red,fill=white,text=black,label=left:{\point{P'}}] (np') at (p') {};
		\draw[move] (np) -- (np');		
	\end{tikzpicture}
	\caption{Disambiguated side move. $\frac{a}{a+b} = \frac{a'}{a'+b'}$.}
	\label{fig:side-move:extended}
\end{figure}
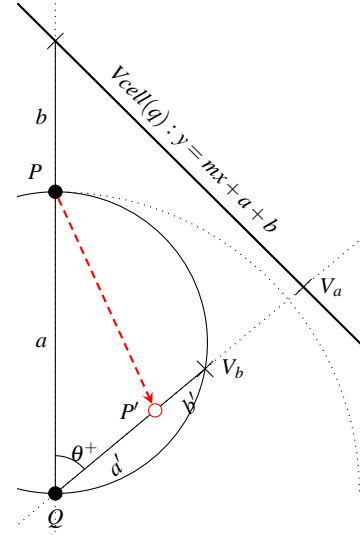

Algorithm~\ref{alg:sidemove} describes one method to disambiguate the side move, and is
illustrated in Figure~\ref{fig:side-move:extended}.
The lengths $a$ and $b$ depend on the position of robot \point{P} on the ray
from castle \castle{Q} relative to the boundary of the Voronoi cell of \castle{Q}.

The construction uses a trisection of the sector $S$ calculated in the original side move.
This ensures that a robot moving from a different ray does not end up at the same location.

In addition, taking the minimum between points $\point{V}_a$ and $\point{V}_b$ ensures that the segment
$\overline{\castle{Q}\point{V'}}$ lies entirely within the zone desired for a side move. Since the zone is
convex (intersection of three convex areas), segment $\point{P}\point{V'}$ lies entirely inside
the zone.

\begin{eqnarray}
	\alpha &=& \frac{a}{a+b} \in \left(0;1\right]
	\\
	a'(\alpha) &=&
	 \min \left(
		\alpha^2 \cos\theta^+
	,
		\frac{
			\alpha
		}{
			\cos\theta^+
			- \frac{1}{m}\sin\theta^+
		}
	\right)
\end{eqnarray}

Since $a'(\alpha)$ is taken as the minimum of two functions that are
both monotonic increasing in $\alpha$ over the range considered, $a'(\alpha)$
is itself monotonic increasing. It follows that, for two values $\alpha_1$ and
$\alpha_2$ with $\alpha_1\not=\alpha_2$,
the segments from $P(\alpha)$ to $P'(\alpha)$ do not cross.

\end{document}